\useunder{\uline}{\ul}{}
\theoremstyle{plain}
\newtheorem{theorem}{Theorem}[section]
\newtheorem{observation}{Observation}[section]
\newtheorem{lemma}[theorem]{Lemma}
\newtheorem{corollary}[theorem]{Corollary}
\theoremstyle{definition}
\newtheorem{definition}[theorem]{Definition}
\newtheorem{example}[theorem]{Example}
\newtheorem*{examplecontinued}{Example 4.2 [Continued]}
\theoremstyle{remark}
\tikzset{main node/.style={circle,fill=none,draw,minimum size=1cm,inner sep=0pt},}
\newcommand{\benchmarkrelaxedstronger}[1]{\beta_{#1}^{\text{tol}}}
\newcommand{\benchmarkrelaxedweaker}[1]{\beta_{#1}^\text{self-tol}}
\newcommand{\benchmark}[1]{\beta_{#1}^{\text{orig}}}
\newcommand{\benchmarkfunction}{\beta}
\newcommand{\ALG}{\texttt{ALG}}
\newcommand{\ExploreThenCommit}{\texttt{ExploreThenCommit}}
\newcommand{\ExploreThenCommitThrowOut}{\texttt{ExploreThenCommitThrowOut}}
\newcommand{\ActiveArmElimination}{\texttt{ActiveArmElimination}}
\newcommand{\PhasedUCB}{\texttt{PhasedUCB}}
\newcommand{\ComputeQuantities}{\texttt{ComputeActiveArms}}
\newcommand{\ExploreThenUCB}{\texttt{ExploreThenUCB}}
\newcommand{\LipschitzUCB}{\texttt{LipschitzUCB}}
\newcommand{\LipschitzUCBGeneralized}{\texttt{LipschitzUCBGen}}
\newcommand{\Setting}{\text{DSG}}
\newcommand{\StrongSetting}{\text{StrongDSG}}
\newcommand{\WeakSetting}{\text{WeakDSG}}
\newcommand{\Instance}{\mathcal{I}}
\newcommand{\A}{\mathcal{A}}
\newcommand{\B}{\mathcal{B}}
\newcommand{\Event}{G}
\newcommand{\RR}[4]{r_{#1, #4}(#2, #3)}
\newcommand{\RRTilde}[4]{\tilde{r}_{#1, #4}(#2, #3)}
\newcommand{\MR}[3]{v_{#1}(#2, #3)}
\newcommand{\hatMR}[3]{\hat{v}_{#1}(#2, #3)}
\newcommand{\UCBMR}[3]{v^{\text{UCB}}_{#1}(#2, #3)}
\newcommand{\hatMRTimeStep}[4]{\hat{v}_{#1, #4}(#2, #3)}  
\newcommand{\UCBMRTimeStep}[4]{v^{\text{UCB}}_{#1, #4}(#2, #3)} 
\newcommand{\MRTilde}[3]{\tilde{v}_{#1}(#2, #3)} 
\newcommand{\hatMROneArg}[2]{\hat{v}_{#1}(#2)}
\newcommand{\UCBOneArg}[2]{v^{\text{UCB}}_{#1}(#2)}
\newcommand{\hatMROneArgTimeStep}[3]{\hat{v}_{#1, #3}(#2)}
\newcommand{\UCBOneArgTimeStep}[3]{v^{\text{UCB}}_{#1, #3}(#2)}
\newcommand{\NumPullsOneArg}[2]{n_{#2}(#1)}
\newcommand{\NumPulls}[3]{n_{#3}(#1, #2)}
\newcommand{\NumPullsTwoTimesOneArg}[3]{n_{#2, #3}(#1)}
\newcommand{\hatMROneArgNoPlayer}[1]{\hat{v}(#1)}
\newcommand{\Hist}{H}
\newcommand{\CentHist}{\Hist^{\text{C}}}
\newcommand{\p}[1]{\left( #1 \right)}
\newcommand{\br}[1]{\left[ #1 \right]}
\newcommand{\argmax}[0]{\text{argmax}}
\newcommand{\argmin}[0]{\text{argmin}}
\newcommand{\cd}[0]{\cdot}
\newcommand{\abs}[1]{\left \vert #1 \right \vert}
\title{\Large{Impact of Decentralized Learning on Player Utilities in Stackelberg Games}
\footnote{Authors in alphabetical order. Part of this work was conducted while KD and MJ were at Microsoft Research.}}
\author[1]{Kate Donahue}
\author[2]{Nicole Immorlica}
\author[3]{Meena Jagadeesan}
\author[2]{Brendan Lucier}
\author[2]{Aleksandrs Slivkins}
\affil[1]{\textit{Cornell}}
\affil[2]{\textit{Microsoft Research}}
\affil[3]{\textit{University of California, Berkeley}}
\date{}
\begin{document}
\maketitle

\begin{abstract}
When deployed in the world, a learning agent such as a recommender system or a chatbot often repeatedly interacts with another learning agent (such as a user) over time. In many such two-agent systems, each agent learns separately and the rewards of the two agents are not perfectly aligned. To better understand such cases, we examine the learning dynamics of the two-agent system and the implications for each agent's objective. We model these systems as Stackelberg games with decentralized learning and show that standard regret benchmarks (such as Stackelberg equilibrium payoffs) result in worst-case linear regret for at least one player. To better capture these systems, we construct a relaxed regret benchmark that is tolerant to small learning errors by agents. We show that standard learning algorithms fail to provide sublinear regret, and we develop algorithms to achieve near-optimal $\mathcal{O}(T^{2/3})$ regret for both players with respect to these benchmarks. We further design relaxed environments under which faster learning ($\mathcal{O}(\sqrt{T})$) is possible. Altogether, our results take a step towards assessing how two-agent interactions in sequential and decentralized learning environments affect the utility of both agents.  
\end{abstract}

\newpage 

\doparttoc 
\faketableofcontents 

\part{} 
\parttoc 

\newpage

\section{Introduction}

When learning agents such as recommender systems or chatbots are deployed into the world, these learning agents often repeatedly interact with other learning agents (such as humans). For example, a recommender system---through repeated interactions with a user---learns which content to suggest to the user, while the user simultaneously learns their own preferences over content (Example \ref{example:recsys}). As another example, a chatbot such as ChatGPT---during a chat session---can iteratively refine its generated content to the user's stylistic preferences, while the user (or prompt engineering agent) simultaneously learns how to best interact with the chatbot (Example \ref{example:llm}). 

These two-agent systems---among many others---share the following structural features. The environments are \emph{decentralized} (both agents operate autonomously, without central coordination of their actions). Furthermore, the environments are \textit{sequential} (one agent always chooses their action first\footnote{E.g., a recommender system recommends a slate of items and the user picks among them; in a chatbot session, the user picks a prompt that the LLM responds to.}) and \emph{misaligned} (the agents can obtain different utilities for the same pair of actions\footnote{Misalignment could arise from fundamental differences in agent preferences: the engagement metrics of recommender systems rarely align with user welfare (e.g., \citep{MBH21}); or a chatbot might be trained to optimize societal preferences or cultural norms (e.g., avoiding violent language) which conflict with individual user preferences (e.g., \citep{BCSTCBMGAB22}). Misalignment could also arise from misspecification, if the metrics that the AI system optimizes do not perfectly capture user preferences (e.g., \citep{ZH20}).}). Finally, the environments exhibit \textit{learning} (both agents learn from repeated interactions about which actions to take). For such two-agent environments, core questions of interest include: \textit{how quickly does this two-agent system learn, and what are the implications for each agent's objective?}

In the absence of learning, the interaction between 
misaligned agents taking sequential actions is formalized by Stackelberg games. In this setting, the leader chooses an action first and the follower chooses an action to respond. The two agents are allowed to have distinct utility functions over pairs of actions. The standard benchmark is the \textit{Stackelberg equilibrium}, where the leader picks the best action they can, assuming that the follower will pick their own best response. However, this classical solution concept is tailored to the full-information setting where both players know their own utilities and the leader knows how the follower will best respond; in fact, the static Stackelberg game framework breaks down when agents instead must \emph{learn} these utilities from noisy feedback.

Our focus is on a decentralized Stackelberg learning environment. In this setting, the leader and the follower repeatedly interact and each make decisions about which actions to take, where each agent only observes their own realized stochastic rewards. In this environment, it is natural to model each player's learning process as a multi-armed bandit algorithm\footnote{See \citet{S19, L20} for textbook treatments of multi-armed bandits.} which learns over time  which arms (actions) to pull. A unique feature of this sequential two-player learning environment is that agents must learn in two separate ways---first, both agents learn their own (fixed) preferences from stochastic observations, and secondly, the leader needs to learn and adapt to the follower's (evolving) responses to the leader's actions---which complicates the design of learning algorithms. 

In this paper, we initiate the study of how this learning environment impacts \emph{both} the leader and follower's utility, motivated by how both objectives are of societal interest in natural real-world settings (see Example \ref{example:llm} and Example \ref{example:recsys}). Rather than only focusing on the regret of the leader as is typical in learning in Stackelberg games, we thus examine the \textit{maximum regret} of the two agents.
Our main contributions are to design appropriate benchmarks for each agent and to construct algorithms which achieve near-optimal regret against these benchmarks.  Our results apply to the most general setting which allows for \textit{arbitrary relationships between the two player's utilities}.
\begin{itemize}
    \item \textbf{Linear regret for Stackelberg benchmarks:} We first show that the player utilities in the Stackelberg equilibrium are fundamentally unachievable and necessarily lead to linear regret for at least one agent (Theorem \ref{thrm:worstlinear}). 
    \item \textbf{Relaxed benchmarks:} The possibility of linear regret motivates us to design relaxed benchmarks which are more tolerant to the other agent being suboptimal. We thus define \textit{$\gamma$-tolerant benchmarks} (Definition \ref{def:relaxedbenchmarks}), which account for incomplete learning: the benchmark captures an agent's worst-case utility if the other agent is up to $\gamma$-suboptimal.\footnote{Section \ref{sec:benchmarks} describes our benchmark and $\gamma$ in greater detail, and Section \ref{subsec:relatedwork} compares our benchmarks to prior work.} 
    \item \textbf{Regret bounds:} Using the $\gamma$-tolerant benchmarks, we construct algorithms for the leader and follower that achieve $O(T^{2/3})$ regret (Theorem \ref{thm:explorethenucb}). Surprisingly, this dependence on $T$ is unavoidable, and \textit{any} pair of algorithms achieves $\Omega(T^{2/3})$ regret (Theorem \ref{thrm:dlower}). Nonetheless, under relaxed settings---either with a weaker benchmark or when players agree on which pairs of actions are meaningfully different\footnote{We formalize this as a continuity requirement on the utilities (Section \ref{sec:relaxed}). This requirement allows players to be misaligned (e.g. different preferences), but requires them to agree on which outcomes are \textit{substantially different} from each other.}---we show that faster learning (i.e., $O(\sqrt{T})$ regret) is possible (Section \ref{sec:relaxed}). 
\end{itemize}

From an algorithmic perspective, our results provide insight into which bandit algorithms for the leader allow for low regret for both players. Out-of-box stochastic algorithms do not provide this guarantee: for example, both agents choosing \texttt{ExploreThenCommit} can lead to linear regret even for the $\gamma$-tolerant benchmarks (Proposition \ref{prop:decentralizedlower}). The intuition is that since the follower's actions can change between time steps, the leader is not operating in a stochastic environment; as a result, the follower's exploration phase can distort the leader's learning. This motivates us to design algorithms where \textit{the leader waits for the follower to partially converge before starting to learn}: $\ExploreThenCommitThrowOut$ (Algorithm \ref{algo:explorethencommitthrowout}) and $\ExploreThenUCB$ (Algorithm \ref{algo:explorethenucb}). The more sophisticated of these two algorithms, $\ExploreThenUCB$ (Algorithm \ref{algo:explorethenucb}),  guarantees a $T^{2/3}$ regret bound when the follower applies any algorithm with certain properties (i.e., high-probability instantaneous regret bounds) (Theorem \ref{thm:explorethenucb}). We then consider two relaxed environments where the leader no longer needs to worry about being overly distorted by the follower; in these environments, \textit{the leader can start learning before the follower has partially converged}, which enables $O(\sqrt{T})$ regret bounds (Theorems \ref{thrm:dpL} and \ref{thm:UCBweaker}).

More broadly, our work takes a step towards assessing the utility of \textit{both learning agents} in decentralized, misaligned environments. Our model and results capture the general setting where the player utilities are arbitrarily related, where players might not even agree upon which pairs of actions give similar or different rewards. This motivated us to design benchmarks which are tolerant to small errors in the other player. 
We hope that our benchmarks and algorithms serve as a starting point for assessing when two-agent learning systems in misaligned environments can ensure high utility for both agents.

\subsection{Related Work}\label{subsec:relatedwork}

Most closely related is the work on learning in Stackelberg games (SGs) where both players incur stochastic rewards. \citet{bai2021sample, gan2023robust} focus on the centralized setting where the learner controls the actions and observes the rewards of both players; in contrast, we study a decentralized setting where each player controls their own actions and only observes their own rewards. Nonetheless, the benchmarks proposed in these papers are related to the $\gamma$-tolerant benchmarks that we consider, but with some key differences. For the leader's utility, their benchmark is equivalent to our $\gamma$-tolerant benchmark with a fixed value of $\epsilon$ (rather than an $\inf$ over $\epsilon \le \gamma$ with a regularizer). For the follower's utility, their benchmark only ensures $\epsilon$-optimality with respect to the leader's selected action; in contrast, we consider a different style of follower benchmark that is more conceptually similar to the benchmark for the leader. Also, we study regret, whereas they study the speed of convergence.

Several papers study \emph{decentralized} online learning in SGs. 
\citet{camara2020mechanisms, collina2023efficient} 
posit that the follower runs a no-counterfactual-internal-regret algorithm and design no-regret algorithms for the leader. 
However, they assume \emph{strong alignment} between the players' rewards: \citet{camara2020mechanisms} requires that a follower choosing an $\epsilon$-suboptimal action only results in an $O(\epsilon)$ utility loss for the leader.\footnote{See Assumption 2 in \citet{camara2020mechanisms}. Appendix D therein considers some relaxations, but they lead to $\Omega(T)$ worst-case regret.} \citet{collina2023efficient} partially relax this assumption, but still require the existence of \textit{stable} actions for the leader. In contrast, we do not place any alignment conditions: in fact, 
misalignement is the driver of our linear regret result for the original Stackelberg benchmarks (Theorem \ref{thrm:worstlinear}). Other differences are that we focus on stochastic, rather than adversarial, rewards, and our benchmark is independent of the follower's choice of learning algorithm.\footnote{However, our regret bounds assume that the follower's algorithm gracefully improves over time, see Section \ref{subsec:assumptionsplayer2}.} \citet{haghtalab2023calibrated} takes a different perspective and considers the follower running a \emph{calibrated} algorithm. They design a leader algorithm which waits for the follower to partially converge, and show that the Stackelberg value is obtained in the limit as $T \rightarrow \infty$. In contrast, we focus on \textit{instance-independent regret bounds} for a fixed time horizon $T$, which requires us to relax the benchmark. Other differences are we focus on stochastic, rather than deterministic, rewards, we assume the follower observes the leader's action, and we consider the follower's utility in addition to the leader's utility.

The literature on learning in SGs is vast and includes many other variations. Many works (e.g., \citep{LCM09, balcan2015commitment, zhao2023online, lauffer2023no}) consider the leader performing (offline or) online learning and followers myopically best-responding. Other model variants studied include the leader strategizing against a follower who is running a no-regret learning algorithm \citep{BMSW18, deng2019strategizing, GKSTGWW24, brown2024learning, lin2024persuading}, the leader and follower both running gradient-based algorithms \citep{fiez2019convergence, GZG22}, non-myopic followers who best-respond to a discounted utility over future time steps \citep{haghtalab2022learning, hajiaghayi2023regret}, repeated game formulations under complete information \citep{ZT15, CAK23} the leader offering a menu of actions to the follower \citep{HAX23}, the (human) follower having cognitive biases in responding \citep{pita2009effective}, and both players having side information \citep{HWB24}. Other works have studied learning in structured SGs, including delegated choice (e.g., \citep{kleinberg2018delegated, hajiaghayi2023regret}), strategic classification (e.g., \citep{dong18, chen2019, zrnic2021, ABBN21}), performative prediction (e.g., \citep{perdomo2020performative, miller2021outside}), pricing under buyer and seller uncertainty (e.g., \citep{GHKV23}), contract theory (e.g., \citep{ZBYWJJ23}), cake cutting (e.g., \citep{branzei2024dueling}), and aligned utilities (e.g., \citep{KWS22}). 

Our work also connects to a broader literature on interacting learners. This literature examines interactions between \textit{multiple bandit learners}, studying aspects such as the convergence of systems of no-regret learners to coarse correlated and correlated equilibrium (e.g. \citep{DDK11, DFG21, ADFF22}), multiple bandit learners competing for market share (e.g., \citep{AMSW20, JJH23}), and multiple autobidding algorithms competing in an auction (e.g., \citep{BCIJEM07, BG19, LPSZ23}). Most closely related to this paper is \emph{corralling bandit algorithms} (e.g., \citep{ALNS17, PPA0ZLS20}), where a ``master algorithm" dynamically chooses among several ``base algorithms'': our decentralized learning environment in the case of aligned player utilities is essentially an instance of corralling bandits, with the ``base algorithms'' corresponding to different leader actions. The interacting learner literature also examines \emph{human-algorithm collaboration} studying aspects such as misalignment between engagement metrics and user welfare (e.g., \citep{EW16, MBH21, SVNAM21, KMR22}), impact of underspecification on human-AI misalignment (e.g., \citep{ZH20}), and \enquote{assistive} algorithmic tools (e.g. \citep{chan2019assistive}). Most closely related to this paper is work on online learning in subset selection and conformal prediction, where goals often revolve around selecting a subset of items to present to a learning user \citep{straitouri2023designing, corvelo2024human, straitouri2023improving, wang2022improving, donahue2023two, agarwal2024online, yao2022learning}, often with the goal of achieving complementarity \citep{BWZFNKRW21}. The related area of \textit{human-AI interaction} (see \citep{preece1994human, kim2015human, mackenzie2024human, lazar2017research} for textbook treatments) studies similar questions, often from a more behavioral angle. More broadly, the interacting learner literature also studies applied domains including \textit{multi-agent reinforcement learning} (see \citet{ZYB19} for a survey) and \textit{federated learning} (see \citet{YLCT19} for a survey).


\section{Model and assumptions}\label{sec:setup}
In this section we describe our formal model. We first define an instance $\Instance = (\A, \B, v_1, v_2)$ in our setup, which captures the setup of the underlying static Stackelberg game. Let $\mathcal{A}$ be a finite action set for the  leader (Player 1) and let $\mathcal{B}$ be a finite action set for the follower (Player 2). Let $\MR{i}{a}{b} \in [0, 1]$ denote Player $i$'s value (i.e., mean reward) for the leader choosing $a$ and the follower choosing $b$. The Stackelberg equilibrium takes the following form. Let $b^*(a)$ be the best-response with respect to the follower's rewards:\footnote{In case of ties in follower utility, $b^*(a)$ is the best-response with lowest leader utility.}
 $b^*(a) = \argmax_{b \in \mathcal{B}} \MR{2}{a}{b}$. The Stackelberg equilibrium $(a^*, b^*)$ is defined to be the best action the leader can take, assuming that the follower will exactly best-respond: $$a^* = \argmax_{a \in \mathcal{A}} \MR{1}{a}{b^*(a)}\text{ and } b^* = b^*(a^*).$$  Note that for simplicity, we restrict both players to pure strategies.\footnote{Other works (e.g., \citet{bai2021sample}) similarly restricts both players to pure strategies. }

In this paper, we move from the static Stackelberg Equilibrium environment to a repeated dynamic environment which we call a \textit{decentralized Stackelberg game ($\Setting$)}.  A $\Setting$ operates over $T$ time steps where each player selects actions using a multi-armed bandit algorithm. A $\Setting$ is \textit{sequential}: at each time step $t$, the leader chooses an action $a_t \in \mathcal{A}$ and then the follower chooses an action $b_t \in \mathcal{B}$. A $\Setting$ is also \emph{decentralized}: each player $i$ can observe their own stochastic rewards but not the stochastic rewards of the other player. We measure \textit{regret} for each player $i$ by their cumulative reward across all time steps relative to a benchmark.

\subsection{Interaction between players}\label{subsec:interaction}

In a $\Setting$, the interaction between the leader and follower proceeds as follows. The leader chooses an algorithm $\ALG_1$ mapping their history (formalized below) of observed actions and rewards to a distributions over actions $\mathcal{A}$, and the follower similarly chooses an algorithm $\ALG_{2}$ mapping the leader's action and the follower's history to a distribution over actions $\mathcal{B}$. After the players select algorithms, the interaction between the leader and the follower proceeds as follows at each time step $t$: 
\begin{enumerate}
    \item The leader chooses action $a_t \sim \ALG_1(\Hist_{1,t})$ as a function of their history $\Hist_{1,t}$ and reveals $a_t$ to the follower. 
    \item After observing $a_t$, the follower chooses action $b_t \sim \ALG_{2}(a_t, \Hist_{2,t})$ as a function of their history $\Hist_{2,t}$.  
    \item Players 1 and 2 incur stochastic rewards $\RR{1}{a_t}{b_t}{t} \sim \mathcal{N}(\MR{1}{a_t}{b_t}, 1)$ and $\RR{2}{a_t}{b_t}{t}  \sim \mathcal{N}(\MR{2}{a_t}{b_t}, 1)$. The noise distribution is Gaussian with unit variance\footnote{We assume the reward distributions are independent across time steps and players. We make the Gaussian assumption for simplicity, and we expect that our results would likely extend to subgaussian  Bernoulli distributions.}. 
\end{enumerate}

This interaction captures that the players are \textit{dynamic} in their learning: in particular, this framework is sufficiently general to capture a wide range of learning strategies. However, we do not study the \textit{meta-game} where agents strategically pick learning algorithms (e.g., see \cite{kolumbus2022and} for an example of a work studying the meta-game).  We believe that our model captures many real-world environments such as user-chatbot interactions and recommender system-user interactions, where agents learn about their incurred rewards from past interactions even if they do not actively optimize the higher-order manner in which they learn.
See \Cref{subsec:examples} (Example \ref{example:llm} and Example \ref{example:recsys}) for more details of how these real-world examples are captured by our model.  

\paragraph{Information structures.} Having described how the players interact, we next discuss the players' history, which further enforces decentralization. Each player $i$ can only observe their own reward $\RR{i}{a_t}{b_t}{t}$ (and cannot observe the reward of the other player). In a \textit{strongly decentralized Stackelberg game ($\StrongSetting$)}, the follower can observe the leader's action $a_t$, but the leader cannot observe the follower's action $b_t$, whereas in a \textit{weakly decentralized Stackelberg game ($\WeakSetting$)}, the leader can additionally observe the follower's action $b_t$. Notation for each player's histories is presented in \Cref{app:hist}.

Note that a strongly decentralized Stackelberg game ($\StrongSetting$) restricts what information the leader has access to, and is thus \emph{more challenging} than a weakly decentralized Stackelberg game ($\WeakSetting$). One motivation for a strongly decentralized Stackelberg game is interpretability: the leader and follower may be taking actions in spaces that are not mutually understandable (e.g. a chatbot's representation of human preferences may not be interpretable).  Most of our positive results (i.e., algorithm constructions) focus on $\StrongSetting$s, whereas our negative results apply to both $\StrongSetting$s and $\WeakSetting$s.

\subsection{Measuring regret}\label{subsec:regret}

As is typical in multi-armed bandits, we measure performance by the \textit{regret} of each player with respect to a benchmark, where higher benchmarks make learning more challenging (further detail on benchmarks is in Sections \ref{sec:stackelbergimposs} and \ref{sec:benchmarks}). For each player $i \in \left\{1,2\right\}$, given a benchmark $\benchmarkfunction_i$, we define the (expected) regret of Player $i$ on instance $\Instance$ to be: $$R_i(T; \Instance) = \benchmarkfunction_i \cdot T - \left(\sum_{t=1}^T \mathbb{E}[\RR{i}{a_t}{b_t}{t}] \right)$$ where the expectation is over randomness in the algorithm and in the stochastic rewards. Given action sets $\mathcal{A}$ and $\mathcal{B}$, we let $R_i(T)$ denote the worst-case regret across all value functions $v_1$ and $v_2$ on instances of the form $\Instance = (\A, \B, v_1, v_2)$.

Our goal is to obtain sublinear worst-case regret for \emph{both} players: that is, we will assess $\max(R_1(T), R_2(T))$. We note that this challenging goal is a departure from previous work which has typically focused solely on sublinear regret for the leader (see Section \ref{subsec:relatedwork}). Our motivation for selecting this objective is that (a) a human could be either the leader or the follower, and (b) societal welfare may demand that we care about the utility of both the leader and the follower (discussed further in \Cref{subsec:examples}). 

\subsection{Real-world examples}\label{subsec:examples}

We describe two real-world examples which fit into our framework. 

\begin{example}[User-chatbot interaction]
\label{example:llm}
Consider user-chatbot interactions where the user (e.g., a human or a prompt engineer) selects a prompt and the chatbot (e.g., an LLM-based application such as chatGPT) selects a response. We model the user as the leader and the chatbot as the follower: the user picks a (perhaps high-level) prompt or prompt engineering technique $a \in \mathcal{A}$, and the chatbot picks a response or style of response $b \in \mathcal{B}$. Repeated interactions may occur within a single chatbot session, such as with ChatGPT, where sessions can be resumed when the user logs in at a later time. An example of such an interaction is where the user repeatedly asks for help with similar queries (e.g. content generation or help with technical tasks) and learns better prompt engineering techniques \citep{CZLZ23}, while the chatbot learns how to best respond to this user by using the session history as its context \citep{HLD23, PJJS24}. The user and chatbot may have misaligned rewards for each prompt-output pair: this misalignment could arise from fundamental differences in preferences if the chatbot is trained to optimize societal preferences or cultural norms (e.g., avoiding violent language) which conflict with individual user preferences \citep{BCSTCBMGAB22}. Misalignment could also arise from unintentional misspecification if chatbot optimizes a metric which does not fully capture user preferences (e.g., if the user has an imperfect ability to communicate preferences \citep{ZH20}). 
\end{example}

\begin{example}[User-recommender system interaction]
\label{example:recsys}
Consider interactions between a recommender system and a user, where the recommender system gives a slate (or subset) of items $a \in \mathcal{A}$ to the user, and the user picks an action $b \in \mathcal{B}$ from the slate. When the user returns to the same content recommendation system (e.g. a Netflix/Hulu user with a profile) many times, this becomes a repeated game where both the recommendation system and user learn about their preferences \citep{hajiaghayi2023regret}. Again, misalignment could occur from the engagement metric being misaligned with user welfare \citep{MBH21} or for unintentional reasons (e.g., misspecification due to discrete \textit{thumbs up/thumbs down} user feedback, since true preferences are more nuanced). 
\end{example}

Examples \ref{example:llm}-\ref{example:recsys} motivate why our objective is to minimize regret for both the leader and the follower. First, we may inherently care about utility for the human, who could be either the leader (Example \ref{example:llm}) or the follower (Example \ref{example:recsys}). Secondly, we may also care about utility for the algorithmic tools: for example, a recommendation system that fails to make money may go out of business, or in certain cases, the chatbot/recommender system may better capture societal objectives than certain humans. 

\subsection{Assumptions on the follower's algorithm $\ALG_2$}\label{subsec:assumptionsplayer2}
Finally, we present some technical assumptions on the follower's algorithm. When we analyze regret in Sections \ref{sec:benchmarks}-\ref{sec:relaxed}, many of our constructions do not require that the follower run a particular algorithm, but instead allow the follower to run any algorithm that has sufficiently good performance along certain fine-grained performance metrics that capture the extent to which an algorithm's performance gracefully improves over time.  

These fine-grained performance metrics capture the follower's errors while learning. These errors are captured by the difference between $\MR{2}{a_t}{b_t}$ (the follower's realized mean reward) and the $\max_{b \in \mathcal{B}} \MR{2}{a_t}{b}$ (the best mean reward that the follower could achieve for the leader's action $a_t$). Note that this measure of suboptimality $\max_{b \in \mathcal{B}} \MR{2}{a_t}{b} - \MR{2}{a_t}{b_t}$ captures how well the follower is best-responding to the leader's action. This differs from our main notion of regret in Section \ref{subsec:regret}, which captures the follower's level of discontent relative to a fixed benchmark.

For intuition, we first describe these performance metrics---\textit{high-probability instantaneous regret} and \textit{high-probability anytime regret}---for a typical single-bandit learner which acts in isolation. For the single learner setting, instances $\mathcal{I}=(\mathcal{C},v)$ capture a single action set and a single value function. \emph{High-probability instantaneous regret} measures the suboptimality of the arms that the algorithm pick arms at each time step. More formally, an algorithm acting over an action space $\mathcal{C}$  satisfies high-probability instantaneous regret $g$ if for any instance $\mathcal{I} = (\mathcal{C},v)$: 
\[\mathbb{P}\left[\forall t \in [T]\mid v(c_{t}) \ge \max_{c \in \mathcal{C}} v(c) - g(t, T, \mathcal{C})\right] \ge 1 - T^{-3},\]
where the probability captures randomness in the algorithm and in the stochastic rewards. A \emph{high-probability anytime regret} bound guarantees that the regret bound for the algorithm holds with high probability at every time step $t$. More formally, an algorithm acting over an action space $\mathcal{C}$ satisfies high-probability anytime regret bound $h$ if for any instance $\mathcal{I}=(\mathcal{C},v)$, it holds that:
\[\mathbb{P}\left[\forall t \in [T] \mid  \sum_{t' \le t} v(c_{t'}) \ge \sum_{t' \le t} \max_{c \in \mathcal{C}} v(c) - h(t, T, \mathcal{C})\right] \ge 1 - T^{-3},\]
where the randomness is over the algorithm.\footnote{Compared with standard definitions of instaneous and anytime regret, we require a high-probability bound (rather than expectation). For anytime regret, we also require the bound for all $t$ for a given $T$ (rather than for all $T$).}

In a $\Setting$, we will require similar properties to hold for the follower's algorithm $\ALG_2$, but we take account how the algorithm $\ALG_2$ depends on the action $a_t$ which is selected by the leader's algorithm $\ALG_1$. Let $\NumPullsOneArg{a}{t+1}$ be the number of times that arm $a$ has been pulled prior to the $(t+1)$th time step. An algorithm $\ALG_2$ satisfies a high-probability instantaneous regret bound of $g$ if for any $\ALG_1$ chosen by the leader and any $\Instance = (\mathcal{A}, \mathcal{B}, v_1, v_2)$, it holds that: 
\[\mathbb{P}\left[\forall t \in [T], a \in \mathcal{A} \mid \MR{2}{a_t}{b_{t}} \ge \max_{b \in \mathcal{B}} \MR{2}{a_t}{b} - g(\NumPullsOneArg{a}{t+1}, T,  \mathcal{B})\right] \ge 1 - |\mathcal{A}| \cdot T^{-3}.\]
An algorithm $\ALG_2$ satisfies a high-probability anytime regret bound of $h$ if for any $\ALG_1$ chosen by the leader and any instance $\Instance = (\mathcal{A}, \mathcal{B}, v_1, v_2)$, it holds that: 
\[\mathbb{P}\left[\forall t \in [T], a \in \mathcal{A} \mid   \sum_{t' \le t \mid a_{t'} = a} \MR{2}{a}{b_{t'}} \ge \left(\sum_{t' \le t \mid a_{t'} = a} \max_{b \in \mathcal{B}} \MR{2}{a}{b}\right) - h(\NumPullsOneArg{a}{t+1}, T, \mathcal{B})\right] \ge 1 - |\mathcal{A}| \cdot T^{-3}.\]

In \Cref{sec:algorithmsassumptions}, we discuss the relationship between these metrics, the performance of standard algorithms for the follower on these metrics, and algorithms for the leader for more general $g$ and $h$. As an example, if the follower runs a separate instantiation of $\ActiveArmElimination$ (Algorithm \ref{algo:AAE}) on every arm $a \in \mathcal{A}$, this satisfies high-probability instantaneous regret $g(t, T, \mathcal{B}) = O(\sqrt{|\mathcal{B}| \cdot \log T / t})$ and high-probability anytime regret $h(t, T, \mathcal{B}) = O(\sqrt{|\mathcal{B}| \cdot t \cdot \log T})$ (\Cref{prop:AAE}).

\section{Stackelberg value is unachievable}\label{sec:stackelbergimposs}

In this section, we show that the natural benchmark given by the players' utilities in the underlying static Stackelberg game is unachievable. More formally, given an  instance $\mathcal{I} = (\mathcal{A}, \mathcal{B}, v_1, v_2)$, let $(a^*, b^*)$ be the Stackelberg equilibrium. We define the \textit{Stackelberg benchmarks} to be each player's utility at $(a^*, b^*)$, that is: 
$\benchmark{1} = \MR{1}{a^*}{b^*}$ and $ \benchmark{2} = \MR{2}{a^*}{b^*}$ (where the superscript \enquote{orig} denotes that this is the benchmark for original offline Stackelberg games). The following result illustrates that it is not possible to simultaneously achieve sublinear regret with respect to both players' regret.\footnote{There exists a simple algorithm in the centralized environment that achieves sublinear regret for Player $i$: run a sublinear regret multi-armed bandit algorithm on the arms $(a,b)$ using Player $i$'s stochastic rewards (ignoring the rewards of the other player).} 

\begin{restatable}{theorem}{worstlinear}\label{thrm:worstlinear}
Consider $\StrongSetting$s or $\WeakSetting$s. For any algorithms $\ALG_1$ and $\ALG_2$, there exists an instance $\Instance^*$ with $|\mathcal{A}| = |\mathcal{B}| = 2$ where at least one of the players incurs linear regret with respect to the Stackleberg benchmarks $\benchmark{1}$ and $\benchmark{2}$, that is: $\max(R_1(T; \Instance^*), R_2(T; \Instance^*)) = \Omega(T)$.
\end{restatable}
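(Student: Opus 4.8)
The plan is to exhibit two instances $\Instance^+$ and $\Instance^-$ that are statistically indistinguishable to both players but have incompatible requirements, and then let the adversary keep whichever one is bad for the given pair $(\ALG_1,\ALG_2)$. Fix $\epsilon = \Theta(1/\sqrt{T})$, take $\mathcal{A}=\{a_1,a_2\}$, $\mathcal{B}=\{b_1,b_2\}$, and use the single leader utility (identical across both instances) $v_1(a_1,b_1)=1$, $v_1(a_1,b_2)=0$, $v_1(a_2,b_1)=v_1(a_2,b_2)=1/2$. The two instances share $v_2(a_2,b_1)=3/4$, $v_2(a_2,b_2)=1/4$ and differ only by a sign flip at $a_1$: on $\Instance^+$ set $v_2(a_1,b_1)=1/2+\epsilon>v_2(a_1,b_2)=1/2-\epsilon$, and swap these on $\Instance^-$. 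All best responses are strict, so the tie-breaking convention is irrelevant, and a direct check gives $a^*=a_1$ with $\benchmark{1}=1$ on $\Instance^+$, while $a^*=a_2$ with follower benchmark $\benchmark{2}=v_2(a_2,b_1)=3/4$ on $\Instance^-$. The design is deliberately asymmetric: at $a_1$ the follower's two arms are nearly tied (within-action gap $2\epsilon$), but across actions the follower's best attainable value jumps from $\approx 1/2$ at $a_1$ to $3/4$ at $a_2$.

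First I would establish indistinguishability. The two instances differ only in the follower's reward law at $a_1$ (unit-variance Gaussians with means $1/2\pm\epsilon$ versus $1/2\mp\epsilon$), while the leader's reward law and both players' action rules have identical conditional laws given the history. Hence the divergence-decomposition/chain rule bounds the KL between the induced transcript distributions by $(\text{expected number of }a_1\text{-pulls})\cdot 2\epsilon^2 \le 2T\epsilon^2$. Choosing $\epsilon$ a sufficiently small constant over $\sqrt{T}$ makes this KL, and hence by Pinsker the total variation $\delta$ between the two transcript laws, at most a small constant; consequently any $[0,T]$-valued function of the transcript has expectations agreeing across $\Instance^+$ and $\Instance^-$ up to $\delta T$.

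The core is a squeeze on one quantity: let $A$ be the expected number of rounds in which the leader plays $a_1$ on $\Instance^+$. Writing regrets via expected action-pair counts $N_{ij}$, the leader's regret on $\Instance^+$ (benchmark $1$) equals $N_{12}^+ + \tfrac12(N_{21}^+ + N_{22}^+)\ge \tfrac12(T-A)$, since every round at $a_2$ costs $\tfrac12$ against a benchmark of $1$. On $\Instance^-$ the follower's benchmark $3/4$ is attainable only when the leader plays $a_2$: whenever the leader plays $a_1$ the follower collects at most $\max_b v_2(a_1,b)=1/2+\epsilon$, so its regret is at least $(\tfrac14-\epsilon)\cdot(\#\,a_1\text{-pulls on }\Instance^-)$. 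Transferring the $a_1$-count from $\Instance^+$ using the total-variation bound gives at least $(\tfrac14-o(1))(A-2\delta T)$. Thus if $A\le T/2$ the leader already has $\Omega(T)$ regret on $\Instance^+$, and if $A>T/2$ the follower has $\Omega(T)$ regret on $\Instance^-$; taking $\Instance^*$ to be the corresponding instance yields $\max(R_1(T;\Instance^*),R_2(T;\Instance^*))=\Omega(T)$. Nothing uses whether the leader observes $b_t$, so the same $\Instance^*$ works for both $\StrongSetting$s and $\WeakSetting$s.

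The step I expect to be the main obstacle is ruling out the tempting cooperative escape, in which the follower cheaply commits to the leader-favorable arm at $a_1$ (paying only the $2\epsilon$ within-action gap) so that the leader hits its benchmark on both instances simultaneously. The resolution — and the point to verify with care — is that the follower's loss on $\Instance^-$ is pinned not by the within-action gap but by the large constant across-action gap ($3/4$ versus $1/2$): once the leader plays $a_1$, the follower is simply helpless, attaining at most $\approx 1/2$ regardless of which arm it picks, so no choice of $\ALG_2$ can rescue it. Keeping the within-action gap small (for indistinguishability) while the across-action gap is a constant (for the follower's unavoidable loss) is exactly what makes the squeeze airtight; the remaining work is bookkeeping the $\delta T$ and $\epsilon T$ slacks so that a fixed constant fraction of $T$ survives.
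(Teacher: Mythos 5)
Your proposal is correct and follows essentially the same route as the paper's proof: a pair of instances whose follower utilities differ by $\Theta(1/\sqrt{T})$ at $a_1$ (hence statistically indistinguishable, by the divergence decomposition), with the leader's Stackelberg benchmark demanding $a_1$ on one instance while the follower's benchmark demands $a_2$ on the other, so that any algorithm pair incurs $\Omega(T)$ regret on at least one of them. The remaining differences are immaterial bookkeeping: you perturb both follower cells at $a_1$ and transfer the expected $a_1$-count between instances via Pinsker's inequality, whereas the paper perturbs only the cell $(a_1,b_2)$ and applies the Bretagnolle--Huber inequality to the event that $(a_1,b_1)$ is pulled at most $T/2$ times.
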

\begin{proof}[Proof sketch of Theorem \ref{thrm:worstlinear}]
It suffices to prove this lower bound in a \textit{centralized} environment where a single learner can choose action pairs $(a,b)$ and observes rewards for both players (Lemma \ref{lemma:centralizedimpliesdecentralized}). 
We show that on the instances $\Instance$ and $\tilde{\Instance}$ in Table \ref{tab:main_1} (with $\delta = O(1/\sqrt{T})$), at least one of the players incurs linear regret on at least one of the instances. The small value of $\delta$ means that the algorithm fails to distinguish between these instances with constant probability. Nonetheless, the benchmarks are very different: on instance $\Instance$,   $(a^*, b^*) = (a_1, b_1)$, $\benchmark{1} = 0.6$ and $\benchmark{2} = \delta > 0$; on instance $\tilde{\Instance}$, $(a^*, b^*) =  (a_2, b_1)$, $\benchmark{1} = 0.5$, and $\benchmark{2} = 0.6$. Intuitively, when the algorithm fails to distinguish between these instances, then it must  choose the same distribution over $\mathcal{A} \times \mathcal{B}$ on both $\Instance$ and $\tilde{\Instance}$. However, any such distribution either incurs constant loss for the leader on $\Instance$ or constant loss for the follower on $\tilde{\Instance}$. We formalize this proof in \Cref{appendix:worstlinear}. 
\end{proof}

The linear regret in Theorem \ref{thrm:worstlinear} is driven by \textit{misalignment} between the leader's utilities and the follower's utilities: small differences in the follower's utilities can lead to arbitrarily large differences in the leader's utilities. As a result, the suboptimal actions that the follower takes while learning are amplified in the leader's regret. This motivates the design of relaxed benchmarks that take into account the suboptimal actions players inevitably take while learning.

\begin{table}[ht!]
\centering
\begin{subtable}[b]{0.45\linewidth}
\centering
\begin{tabular}{|c|c|c|}
\hline
      & $b_1$                  & $b_2$                  \\ \hline
$a_1$ & $(0.6, \delta)$ & $(0.2, \textbf{0})$             \\ \hline
$a_2$ & $(0.5, 0.6)$       & $(0.4, 0.4)$ \\ \hline
\end{tabular}
\caption{Mean rewards $(\MR{1}{a}{b}, \MR{2}{a}{b})$ for $\Instance$}
\end{subtable}
\hfill
\begin{subtable}[b]{0.45\linewidth}
\centering
\begin{tabular}{|c|c|c|}
\hline
      & $b_1$                  & $b_2$                  \\ \hline
$a_1$ & $(0.6, \delta)$ & $(0.2, $\boldmath{$2\delta$}$)$             \\ \hline
$a_2$ & $(0.5, 0.6)$       & $(0.4, 0.4)$ \\ \hline
\end{tabular}
\caption{Mean rewards $(\MRTilde{1}{a}{b}, \MRTilde{2}{a}{b})$  for $\tilde{\Instance}$}
\end{subtable}
\caption{Two instances $\Instance$ (left) and $\tilde{\Instance}$ (right), which differ solely in the follower's reward for $(a_1, b_2)$ (shown in \textbf{bold}). 
For $\delta$ sufficiently small, the instances $\Instance$ and $\tilde{\Instance}$ are hard to distinguish and turn out to imply a $\Omega(T)$ lower bound on regret with respect to the original Stackelberg benchmarks (Theorem \ref{thrm:worstlinear}).
}
\label{tab:main_1}
\end{table}

\section{$\gamma$-tolerant benchmark and regret bounds}\label{sec:benchmarks}
Having shown that the Stackelberg equilibrium is unattainable, we next propose a novel benchmark and give tight sublinear regret bounds with respect to it.

\subsection{$\gamma$-tolerant benchmark}

Our benchmark is related to the Stackelberg Equilibrium, but adapted to account for the fact that both players are learning and cannot be counted on to exactly best respond. This benchmark is a function of the instance $\Instance = (\mathcal{A}, \mathcal{B}, v_1, v_2)$ but \textit{independent} of the learning algorithms for either player. At a high-level, we construct a set of \textit{approximate best responses} for each player and use these sets to construct more realistic benchmarks; within these sets, our benchmark will be \emph{tolerant} to suboptimality with respect to the other player.  

If the leader takes action $a$, then we define the follower's $\epsilon$-best-response set $\mathcal{B}_{\epsilon}(a)$ as:
\[\mathcal{B}_{\epsilon}(a) := \{b \in \mathcal{B} \mid \MR{2}{a}{b} \geq \max_{b' \in \mathcal{B}} \MR{2}{a}{b'} - \epsilon\}.\]
Defining the $\epsilon$-best response set $\mathcal{A}_{\epsilon}$ for the leader is more subtle. Informally, we define this set to include any action $a \in \mathcal{A}$ which has ``any chance'' of doing at least as well as the the leader's best action if the follower is $\epsilon$-best responding. Specifically, this includes actions $a$ where \textit{some} action $b \in \mathcal{B}_{\epsilon}(a)$ achieves utility close to $\max_{a' \in \mathcal{A}} \min_{b' \in \mathcal{B}_{\epsilon}(a)}\MR{1}{a'}{b'}$ for the leader:\footnote{At first glance, it might appear more natural to instead define $\mathcal{A}_{\epsilon}$ to be all actions where the follower's worst-case approximate best-response yields high utility for the leader, that is: $\{a \in \mathcal{A} \mid \textbf{min}_{b \in \mathcal{B}_{\epsilon}(a)}\MR{1}{a}{b} \ge \max_{a' \in \mathcal{A}} \min_{b' \in \mathcal{B}_{\epsilon}(a')} \MR{1}{a'}{b'} -\epsilon\}$.  However, this set does not necessarily contain the leader's best-response set $\left\{a \in \argmax_{a' \in \mathcal{A}} v_1(a', b^*(a'))\right\}$, which makes it a less natural definition of an approximate best-response set.}  
\[\mathcal{A}_{\epsilon} = \{a \in \mathcal{A} \mid \max_{b \in \mathcal{B}_{\epsilon}(a)}\MR{1}{a}{b} \ge \max_{a' \in \mathcal{A}} \min_{b' \in \mathcal{B}_{\epsilon}(a')} \MR{1}{a'}{b'} -\epsilon\}.\]
Observe that the set $\mathcal{B}_{\epsilon}(a)$ always contains the follower's best-response set $\left\{b \in \argmax_{b' \in \mathcal{B}} v_1(a, b')\right\}$, and furthermore approaches this best-response set in the limit as $\epsilon \rightarrow 0$; similarly, the set $\mathcal{A}_{\epsilon}$ always contains the leader's best-response set $\left\{a \in \argmax_{a' \in \mathcal{A}} v_1(a', b^*(a'))\right\}$ where ties are broken in favor of the follower, and furthermore approaches this best-response set in the limit as $\epsilon \rightarrow 0$.

We use these $\epsilon$-best-response sets to create the relaxed benchmarks for the leader and follower. In these benchmarks, we add an \textit{$\epsilon$-relaxed Stackelberg utility} term with a \textit{$\epsilon$-regularizer} term, and then take an infimum over all possible values $\epsilon \le \gamma$. In particular, the $\epsilon$-relaxed Stackelberg utility takes a $\max$ over the player's actions and a $\min$ over the other player's $\epsilon$-best response set; the regularizer adds a $\epsilon$ penalty for errors made by the other player. 
\begin{definition}
\label{def:relaxedbenchmarks}
Given a maximum tolerance $\gamma > 0$, we define the \textit{$\gamma$-tolerant benchmarks} $\benchmarkrelaxedstronger{1}$ and $\benchmarkrelaxedstronger{2}$ to be: 
\[\benchmarkrelaxedstronger{1} = \inf_{\epsilon \le \gamma} \Big(\underbrace{\max_{a \in \mathcal{A}} \min_{b \in \mathcal{B}_{\epsilon}(a)} \MR{1}{a}{b}}_{\text{$\epsilon$-relaxed Stackelberg utility}} + \underbrace{\epsilon}_{\text{$\epsilon$-regularizer}} \Big)\]
\[\benchmarkrelaxedstronger{2} = \inf_{\epsilon \le \gamma} \Big(\underbrace{\min_{a \in \mathcal{A}_{\epsilon}} \max_{b \in \mathcal{B}} \MR{2}{a}{b}}_{\text{$\epsilon$-relaxed Stackelberg utility}} + \underbrace{\epsilon}_{\text{$\epsilon$-regularizer}} \Big).  \]   
\end{definition}
We provide some high-level intuition for why our benchmark might be appropriate for learning environments. For small values of $\epsilon$, the $\epsilon$-best-response sets describe actions that (from the player's perspective) are similar and difficult to distinguish between while learning. The $\epsilon$-relaxed Stackelberg utility takes a worst-case perspective and takes a minimum over the other player's $\epsilon$-best-response set, since the player's choices within this set can be unpredictable while learning.\footnote{For the leader, \citet{bai2021sample} also takes a similar worst-case perspective over the $\epsilon$-best-response set, but does not introduce a regularizer or take a minimum over $\epsilon$.} The $\epsilon$-regularizer captures the player's intolerance of suboptimality of the other player (see \Cref{appendix:regularizers} for a discussion of regularizers and $\gamma$).

We illustrate these benchmarks in the following example. 

\begin{example}
\label{example:gammatolerant}
Consider the example in Table \ref{tab:main_2} (with $ 0.4> \gamma \geq 4 \delta$). In this case, the leader's benchmark is equal to the Stackelberg utility ($\benchmark{1} = \benchmarkrelaxedstronger{1} = 0.5+\delta$), while the \textit{follower's} benchmark is weaker ($\benchmark{2} = 0.4 >  3 \delta + \delta =\benchmarkrelaxedstronger{2}$), where the second $\delta$ comes from the regularizer. The intuition is that the leader's $\epsilon$-best-response set $\mathcal{A}_{\delta} = \{a_1, a_2\}$ contains both actions, even though $a_2$ is not a Stackelberg equilibrium, which noticeably lowers the follower's $\epsilon$-relaxed Stackelberg utility. In \Cref{app:benchmarkdiscuss}, we provide more detailed discussions of examples. 
\end{example}

\begin{table}[ht]
\centering
\begin{tabular}{|c|c|c|}
\hline
      & $b_1$                  & $b_2$                  \\ \hline
$a_1$ & $(0.5+\delta, 0.4)$ & $(0.2, 0)$             \\ \hline
$a_2$ & $(0.5, 3\delta)$       & $(0.4, 2\delta)$ \\ \hline
\end{tabular}
\caption{A single instance, illustrating the $\gamma$-tolerant benchmark (Example \ref{example:gammatolerant}). }
\label{tab:main_2}
\end{table}

\subsection{Linear regret for ExploreThenCommit}\label{subsec:explorethencommit}

We first show that out-of-box stochastic bandit algorithms do not directly provide sublinear regret against the $\gamma$-tolerant benchmark, where the challenge is that the leader's learning gets distorted when both players simultaneously learn. To demonstrate this, we consider \texttt{ExploreThenCommit} (Algorithm \ref{algo:explorethencommit}) and show that if both the leader and follower are running this algorithm, the regret could be linear for \textit{both players}. 

\paragraph{\ExploreThenCommit($E, \mathcal{C})$ (Algorithm \ref{algo:explorethencommit}).} The algorithm $\ALG = \ExploreThenCommit(E, \mathcal{C})$ takes as input $E \in [T]$ and a set of arms $\mathcal{C}$.\footnote{By setting $\mathcal{C} = \mathcal{A}$, this algorithm can be instantiated as $\ALG_1$ for the leader; by setting $\mathcal{C} = \mathcal{B}$, this algorithm can be instantiated as $\ALG_2$ for the follower.} When $\ALG$ is applied to an instance, for the first $|\mathcal{C}| \cdot E$ rounds, the algorithm $\ALG$ pulls each arm in $\mathcal{C}$ a total of $E$ times in a round-robin fashion. For the remaining $T - |\mathcal{C}| \cdot E$ rounds, the algorithm commits to the optimal empirical mean from the first $|\mathcal{C}| \cdot E$ rounds. This is a standard algorithm for multi-armed bandits \citep{S19, L20}.

\begin{algorithm2e}[htbp]
\caption{$\ExploreThenCommit(E, \mathcal{C})$ applied to history $\Hist$ (see e.g., \citep{S19, L20})}
\label{algo:explorethencommit}
\DontPrintSemicolon
\LinesNumbered
Fix an arbitrary ordering $\mathcal{C} = \left\{c^1, \ldots, c^{|\mathcal{C}|} \right\}$.\\
Let $t = |\Hist|$.\\
\tcc{Explore for the first $E \cdot |\mathcal{C}|$ rounds}
\If{$t \le E \cdot |\mathcal{C}|$}{
    Let $i = {t \pmod {|\mathcal{C}|}} + 1$ be the index of the action that should be pulled.\\
    \Return{point mass at $c^i$} \\
}
\tcc{Commit for the remaining rounds}
\If{$t > E \cdot |\mathcal{C}|$}{
    \tcc{Discard history all but the first $E \cdot |\mathcal{C}|$ rounds.}
    $\Hist^* = \left\{ (t', c_{t'}, r) \mid \exists (t', b_{t'}, r)  \in \Hist \text{ s.t. } t' \le E \cdot |\mathcal{C}| \right\}$ \\
    \tcc{Choose highest empirical mean.}
    \For{$c \in \mathcal{C}$}{
        Set $S(c) := \left\{r \mid \exists (t',  c_{t'}, r) \in\Hist^* \text{ s.t. } c = c_{t'} \right\}$ \tcp*{observed rewards}
        $\hatMROneArgNoPlayer{c} \leftarrow (\sum_{r \in S(c)} r)/|S(c)|$ \tcp*{compute empirical mean}
    }
    \Return{point mass at $\argmax_{c \in \mathcal{C}} \hatMROneArgNoPlayer{c}$} 
}
\end{algorithm2e}

When both players run $\ExploreThenCommit$, we show that if the leader's exploration phase ends before the follower's exploration phase, then both players can incur linear regret.  This lower bound holds for any maximum tolerance $\gamma \le 1$. 
\begin{restatable}{proposition}{decentralizedlower}
\label{prop:decentralizedlower}
Consider $\StrongSetting$s where the follower runs a separate instantiation of \\
$\texttt{ExploreThenCommit}(E, \mathcal{B})$ for every $a \in \mathcal{A}$. Moreover, suppose that the leader runs $\texttt{ExploreThenCommit}(E' \cdot |\mathcal{B}|, \mathcal{A})$ for any $E' \le E$ (i.e., the leader's exploration phase ends before the follower's exploration phase). Then, there exists an instance $\Instance^*$ such that both players incur linear regret with respect to the $\gamma$-tolerant benchmarks $\benchmarkrelaxedstronger{1}$ and $\benchmarkrelaxedstronger{2}$: that is, $\min(R_1(T; \Instance^*), R_2(T; \Instance^*)) = \Omega(T)$. 
\end{restatable}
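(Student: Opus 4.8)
The plan is to exploit the ordering $E' \le E$: throughout the leader's entire exploration phase the follower is still in its round-robin exploration phase for \emph{every} arm $a$, because the leader pulls each arm only $E' \cdot |\mathcal{B}| \le E \cdot |\mathcal{B}|$ times during exploration. Hence each time the leader plays $a$ the follower cycles deterministically through $\mathcal{B}$, so over the $E' \cdot |\mathcal{B}|$ exploratory pulls of $a$ every follower arm is played exactly $E'$ times and the leader's empirical mean $\hatMROneArg{1}{a}$ is an unbiased estimate of the follower-uniform average $\bar v_1(a) := \frac{1}{|\mathcal{B}|}\sum_{b \in \mathcal{B}} \MR{1}{a}{b}$, rather than of any Stackelberg-relevant quantity. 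The leader then commits to the arm with the largest $\hatMROneArg{1}{a}$, and I would design $\Instance^*$ so that this arm is bad for both players.

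I would take $|\mathcal{A}| = |\mathcal{B}| = 2$ with $(\MR{1}{a_1}{b_1}, \MR{2}{a_1}{b_1}) = (\MR{1}{a_1}{b_2}, \MR{2}{a_1}{b_2}) = (0.3, 0.2)$, $(\MR{1}{a_2}{b_1}, \MR{2}{a_2}{b_1}) = (0.5, 0.5)$, and $(\MR{1}{a_2}{b_2}, \MR{2}{a_2}{b_2}) = (0, 0)$. The crucial feature is that arm $a_1$ gives rewards that are \emph{constant} in the follower's action, so once the leader commits to $a_1$ the realized pair is $(0.3, 0.2)$ no matter what the follower does; this decouples the bound from whether the follower's own learning has converged. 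Here $\bar v_1(a_1) = 0.3 > 0.25 = \bar v_1(a_2)$, so the distorted exploration steers the leader toward $a_1$; meanwhile a direct evaluation of Definition~\ref{def:relaxedbenchmarks} gives $\benchmarkrelaxedstronger{1} = 0.5$ (attained as $\epsilon \to 0$ at $a_2$, whose follower best response $b_1$ yields leader value $0.5$) and $\benchmarkrelaxedstronger{2} \ge 0.4$ for every $\gamma \le 1$ (arm $a_1$ enters $\mathcal{A}_\epsilon$ only once $\epsilon \ge 0.2$, at which point the regularizer already contributes $0.2$).

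The regret accounting is where the quantifier ``any $E' \le E$'' is the main obstacle: for small $E'$ the leader's empirical means barely concentrate, so I cannot assert that the leader commits to $a_1$ with high probability. I would sidestep this with a symmetry argument. Since $\hatMROneArg{1}{a_1} - \hatMROneArg{1}{a_2}$ is Gaussian with strictly positive mean $\bar v_1(a_1) - \bar v_1(a_2) = 0.05$, the leader commits to $a_1$ with probability $p > \tfrac12$ for \emph{every} $E' \ge 1$. Conditioning on the committed arm, committing to $a_1$ deterministically yields per-round rewards $(0.3, 0.2)$ (by the constant structure), whereas committing to $a_2$ gives the follower at most its best-response value $0.5$ (any follower miscommitment only lowers this, which only helps the bound). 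Discarding the $o(T)$ rounds spent in the two exploration phases, the leader's expected per-round regret is at least $0.5 - (0.3p + 0.5(1-p)) = 0.2p \ge 0.1$ and the follower's is at least $0.4 - (0.2p + 0.5(1-p)) = 0.3p - 0.1 \ge 0.05$; both are bounded below by a positive constant, giving $\min(R_1(T; \Instance^*), R_2(T; \Instance^*)) = \Omega(T)$.

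The genuinely delicate point---and the reason the instance cannot simply reuse the construction of Theorem~\ref{thrm:worstlinear}---is that the leader commits to the ``good'' benchmark arm $a_2$ with probability as large as nearly $\tfrac12$, and there the follower earns its high best-response value $0.5 > \benchmarkrelaxedstronger{2}$. For the follower's expected regret to remain positive after averaging against a commitment probability $p$ that can be arbitrarily close to $\tfrac12$, I need the follower benchmark to sit strictly above the midpoint of the follower's two committable best-response values, i.e.\ $\benchmarkrelaxedstronger{2} > \tfrac12(0.2 + 0.5) = 0.35$; the reward values above are chosen precisely so that $\benchmarkrelaxedstronger{2} \ge 0.4$ clears this margin. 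Establishing this inequality from Definition~\ref{def:relaxedbenchmarks}, together with the constant-reward decoupling at $a_1$ (which makes both players' regret at the committed arm independent of the follower's convergence), is the crux of the argument.
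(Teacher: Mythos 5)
Your proposal is correct, but it takes a genuinely different route from the paper's proof (\Cref{appendix:proofetcetcfails}). The paper uses the instance of Table \ref{tab:main_2} with $\delta = 0.1$, where the follower-uniform averages are $0.4$ for $a_1$ versus $0.45$ for $a_2$, so the distorted exploration steers the leader toward $a_2$ --- a row that is \emph{pointwise} bad for both players relative to the benchmarks (leader reward $\le 0.5 < 0.6 = \benchmarkrelaxedstronger{1}$ and follower reward $\le 0.3 < 0.4 = \benchmarkrelaxedstronger{2}$ no matter what the follower does); since arbitrary $E'$ rules out concentration, the paper invokes symmetry of the two independent Gaussian empirical means to get this bad commitment with probability at least $1/4$, and because its benchmarks equal the maximal entries, per-round regret is pointwise nonnegative and no offsetting between branches needs to be controlled. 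You instead make the bad arm $a_1$ \emph{constant in $b$} (decoupling the post-commit reward from the follower's convergence state), extract commitment probability $p > 1/2$ from the positive mean $0.05$ of the Gaussian difference of empirical means, and absorb the problematic good branch --- where the follower can earn $0.5 > \benchmarkrelaxedstronger{2}$, i.e.\ negative regret --- via the margin condition $\benchmarkrelaxedstronger{2} \ge 0.4 > \tfrac{1}{2}(0.2 + 0.5)$, which your payoffs are chosen to satisfy. Your benchmark computations check out ($\benchmarkrelaxedstronger{1} = 0.5$ for all $\gamma$; $a_1 \in \mathcal{A}_{\epsilon}$ only once $\epsilon \ge 0.2$, giving $\benchmarkrelaxedstronger{2} \ge 0.4$ for every $\gamma \le 1$), and your construction has the mild advantage of handling all $\gamma \le 1$ uniformly, whereas the paper's write-up fixes $\gamma = 0.1$ and remarks that the construction generalizes. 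One loose end to repair: ``discarding the $o(T)$ rounds spent in the two exploration phases'' is unjustified, since $E$ and $E'$ are arbitrary and the exploration phases may occupy a constant fraction of $T$; unlike the paper's instance, yours does not have pointwise nonnegative per-round regret (the cell $(a_2, b_1)$ pays the follower $0.5 > 0.4$), so linear-length exploration cannot simply be dropped. The fix is immediate within your own instance: during the leader's exploration the four cells are visited in exactly equal proportion, so per-round expected rewards are $0.275$ for the leader and $0.225$ for the follower (per-round regrets $0.225$ and $0.175$, up to $O(1)$ boundary rounds), and after the leader commits, your bound of at most $0.5$ per round on the $a_2$ branch already covers the intermediate window in which the follower is still exploring that arm.
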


\begin{proof}[Proof sketch of Proposition \ref{prop:decentralizedlower}]
 The intuition is that in the leader's exploration phase, the follower alternates uniformly between all actions $\mathcal{B}$. This distorts the leader's learning during the leader's exploration phase, and as a result, the leader can choose a highly suboptimal arm during the commit phase. This can lead to linear regret for both players. We construct a single instance (Table \ref{tab:main_2}, with $\delta = 0.1$) where both players incur linear regret. The full proof is deferred to \Cref{appendix:proofetcetcfails}.
\end{proof}

\subsection{Warmup Algorithm}

The constant regret in Proposition \ref{prop:decentralizedlower} motivates the design of more sophisticated algorithms where the leader waits for the follower to partially converge before starting to learn. As a warmup, we show that a simple modification of the setup of Proposition \ref{prop:decentralizedlower} guarantees sublinear regret (i.e., $O\left(|\mathcal{A}|^{1/3} |\mathcal{B}|^{1/3} (\log T)^{1/3} T^{2/3} \right)$ regret for both players). In this algorithm, both players run \texttt{ExploreThenCommit}-based algorithms, but the leader waits for the follower to finish exploring before starting to explore. More specifically, the leader runs \ExploreThenCommitThrowOut, which acts similar to \ExploreThenCommit, but with an extra exploration phase at the start, after which all rewards are thrown out. This initial phase is to allow the follower to partially converge. 

\paragraph{\ExploreThenCommitThrowOut($E, E', \mathcal{C})$ (Algorithm \ref{algo:explorethencommitthrowout}).} The algorithm 
$\ALG_1$ equal to \\ $\ExploreThenCommitThrowOut(E, E', \mathcal{C})$ takes as input $E, E' \in [T]$ and a set of arms $\mathcal{C}$. It throws out the first $E' \cdot |\mathcal{C}|$ rounds and then runs $\ExploreThenCommit(E, \mathcal{C})$.

\begin{algorithm2e}[htbp]
\caption{$\ExploreThenCommitThrowOut(E, E', \mathcal{C})$ applied to history $\Hist$}
\label{algo:explorethencommitthrowout}
\DontPrintSemicolon
\LinesNumbered
Fix an arbitrary ordering $\mathcal{C} = \left\{c^1, \ldots, c^{|\mathcal{C}|} \right\}$.
\tcc{Explore for first $E' \cdot |\mathcal{C}|$ rounds} 
\If{$t \le E' \cdot |\mathcal{C}|$}{
    Let  $i = {t \pmod {|\mathcal{C}|}} + 1$ be the index of the action that should be pulled. \\
    \Return{point mass at $c^i$} 
}
\tcc{Run ETC for the remainder of time, throwing out first $E' \cdot |\mathcal{C}|$ rounds} 
\If{$t > E' \cdot |\mathcal{C}|$}{
$\Hist^* = \left\{ (t' - E' \cdot |\mathcal{C}|, c_{t'}, r) \mid \exists (t', c_{t'}, r)  \in \Hist \text{ s.t. } t' > E' \cdot |\mathcal{C}| \right\}$ \tcp*{Throw out first $E' \cdot |\mathcal{C}|$ rounds of history} 
 \Return{$\ExploreThenCommit(E, \mathcal{C})$ applied to $\Hist^*$}
 }
\end{algorithm2e}

We show that if the follower runs $\ExploreThenCommit$ and the leader runs $\ExploreThenCommitThrowOut$, then both players achieve sublinear regret. For this result, we require that $\gamma$ is not \textit{too small}: $\gamma =  \omega\p{T^{-1/3}\p{\abs{\mathcal{A}} \cd  \abs{\mathcal{B}}}^{1/3}}$ (see \Cref{appendix:regularizers} for a discussion).
\begin{restatable}{theorem}{etcetc}
\label{thm:ETCETC} Consider a $\StrongSetting$ where the follower runs a separate instantiation of \\
$\ExploreThenCommit(E_2, \mathcal{B})$ for every $a \in \mathcal{A}$, and where the leader runs $\ExploreThenCommitThrowOut(E_1, E_2 \cdot |\mathcal{B}|, \mathcal{A})$. If $E_2 = \Theta(|\mathcal{A}|^{-2/3} |\mathcal{B}|^{-2/3} \cdot (\log T)^{1/3} T^{2/3})$, and $E_1 = \Theta(|\mathcal{A}|^{-2/3} \cdot (\log T)^{1/3} T^{2/3})$, then, the regret with respect to the $\gamma$-tolerant benchmarks is bounded as: 
\[\max(R_1(T), R_2(T)) = O\left(|\mathcal{A}|^{1/3} |\mathcal{B}|^{1/3} (\log T)^{1/3} T^{2/3} \right). \]
\end{restatable}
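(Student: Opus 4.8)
The plan is to decompose the horizon into three stages determined by the leader's $\ExploreThenCommitThrowOut$ schedule and bound the regret of each player stage by stage. In the \emph{throw-out stage} (the first $E_2 \abs{\B}\abs{\A}$ rounds) the leader cycles through $\A$ in round-robin, pulling each arm exactly $E_2\abs{\B}$ times; since the follower runs an independent $\ExploreThenCommit(E_2,\B)$ for each leader arm, this is \emph{exactly} the number of pulls the follower needs to finish exploring and commit, for every $a$. Hence by the end of this stage the follower has, for each $a$, committed to a fixed arm $\hat b(a)$, and in every later round in which the leader plays $a$ the follower deterministically responds with $\hat b(a)$. The \emph{leader-exploration stage} (the next $E_1\abs{\A}$ rounds) then places the leader in an effectively stationary stochastic bandit over $\A$ with per-arm mean $\MR{1}{a}{\hat b(a)}$, after which the leader commits to $\hat a = \argmax_a \hat{v}(a)$ for the remaining \emph{commit stage}. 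I would first fix two high-probability (at least $1-\mathrm{poly}(T^{-1})$) events: (i) for every $a$ the follower's committed arm is an approximate best response, $\hat b(a)\in\mathcal{B}_{\epsilon_0}(a)$ with $\epsilon_0 = \Theta(\sqrt{\log T/E_2})$; and (ii) the leader's empirical means are accurate, $\abs{\hat{v}(a) - \MR{1}{a}{\hat b(a)}}\le \Delta_1$ for all $a$ with $\Delta_1 = \Theta(\sqrt{\log T/E_1})$. Both follow from Gaussian tails and a union bound over the $\abs{\A}$, $\abs{\B}$ arms.

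For the \textbf{leader's regret}, on event (ii) a standard $\ExploreThenCommit$ argument gives $\MR{1}{\hat a}{\hat b(\hat a)}\ge \max_a \MR{1}{a}{\hat b(a)} - 2\Delta_1$; since $\hat b(a)\in\mathcal{B}_{\epsilon_0}(a)$ by (i), we have $\MR{1}{a}{\hat b(a)}\ge \min_{b\in\mathcal{B}_{\epsilon_0}(a)}\MR{1}{a}{b}$, and maximizing over $a$ yields $\MR{1}{\hat a}{\hat b(\hat a)}\ge \max_a\min_{b\in\mathcal{B}_{\epsilon_0}(a)}\MR{1}{a}{b} - 2\Delta_1$. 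Plugging $\epsilon=\epsilon_0$ into the infimum defining $\benchmarkrelaxedstronger{1}$ (legitimate once $\epsilon_0\le\gamma$) bounds the benchmark above by $\max_a\min_{b\in\mathcal{B}_{\epsilon_0}(a)}\MR{1}{a}{b} + \epsilon_0$, so the leader's per-round commit-stage regret is at most $\epsilon_0 + 2\Delta_1$.

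The \textbf{follower's regret} is the crux and the main obstacle, because the follower's benchmark is governed by the leader's \emph{realized} committed arm $\hat a$ through the max-min set $\mathcal{A}_\epsilon$. The key step is to show $\hat a\in\mathcal{A}_\epsilon$ for $\epsilon:=\max(\epsilon_0,2\Delta_1)$. Using (i), $\max_{b\in\mathcal{B}_\epsilon(\hat a)}\MR{1}{\hat a}{b}\ge \MR{1}{\hat a}{\hat b(\hat a)}$ (as $\hat b(\hat a)\in\mathcal{B}_{\epsilon_0}(\hat a)\subseteq\mathcal{B}_\epsilon(\hat a)$), and chaining with the leader bound above together with the monotonicities $\mathcal{B}_{\epsilon_0}(a)\subseteq\mathcal{B}_\epsilon(a)$ and $\min_{b\in\mathcal{B}_\epsilon(a)}\MR{1}{a}{b}\le\min_{b\in\mathcal{B}_{\epsilon_0}(a)}\MR{1}{a}{b}$ gives $\max_{b\in\mathcal{B}_\epsilon(\hat a)}\MR{1}{\hat a}{b}\ge \max_a\min_{b\in\mathcal{B}_\epsilon(a)}\MR{1}{a}{b} - \epsilon$, which is exactly the membership condition for $\mathcal{A}_\epsilon$. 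Consequently $\max_b\MR{2}{\hat a}{b}\ge \min_{a\in\mathcal{A}_\epsilon}\max_b\MR{2}{a}{b}$, and since the follower commits within $\mathcal{B}_{\epsilon_0}(\hat a)$ its reward satisfies $\MR{2}{\hat a}{\hat b(\hat a)}\ge \max_b\MR{2}{\hat a}{b} - \epsilon_0$; comparing against $\benchmarkrelaxedstronger{2}\le \min_{a\in\mathcal{A}_\epsilon}\max_b\MR{2}{a}{b} + \epsilon$ (plugging $\epsilon$ into the infimum) bounds the follower's per-round commit-stage regret by $\epsilon + \epsilon_0 = O(\epsilon)$.

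Finally I would assemble the total regret. The first two stages last $E_2\abs{\B}\abs{\A}$ and $E_1\abs{\A}$ rounds and, since rewards and benchmarks lie in $[0,1]$, contribute at most a constant per round; the commit stage contributes at most $T$ times the per-round bounds above. Substituting $E_2=\Theta(\abs{\A}^{-2/3}\abs{\B}^{-2/3}(\log T)^{1/3}T^{2/3})$ and $E_1=\Theta(\abs{\A}^{-2/3}(\log T)^{1/3}T^{2/3})$ makes $\epsilon_0,\Delta_1 = O(T^{-1/3}(\abs{\A}\abs{\B})^{1/3}(\log T)^{1/3})$, so each of the four resulting terms ($E_2\abs{\A}\abs{\B}$, $E_1\abs{\A}$, $T\epsilon_0$, $T\Delta_1$) equals $O(\abs{\A}^{1/3}\abs{\B}^{1/3}(\log T)^{1/3}T^{2/3})$, giving the claimed bound for both players. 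The hypothesis $\gamma=\omega(T^{-1/3}(\abs{\A}\abs{\B})^{1/3})$ is precisely what guarantees $\epsilon\le\gamma$, validating the two infimum evaluations, and I would check that the failure probability of the concentration events times the trivial $O(T)$ regret there is polynomially small, hence $o(T^{2/3})$.
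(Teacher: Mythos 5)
Your proposal is correct and follows essentially the same route as the paper's proof: a clean-event analysis (follower commits to an $\epsilon_0$-best response with $\epsilon_0=\Theta(\sqrt{\log T/E_2})$ after the throw-out phase, leader's empirical means are $\Theta(\sqrt{\log T/E_1})$-accurate), followed by exactly the paper's key chaining step showing the leader's committed arm lies in $\mathcal{A}_{\epsilon}$, and then plugging the resulting $\epsilon$ into the infimum defining the $\gamma$-tolerant benchmarks (the paper's Lemmas \ref{lemma:cleanETCETC} and \ref{lemma:mainlemmaETCETC}). The only cosmetic difference is that you track $\epsilon_0$ and $\Delta_1$ separately and take $\epsilon=\max(\epsilon_0,2\Delta_1)$, whereas the paper folds both into a single $\epsilon^*=\Theta\bigl(\max\bigl(\sqrt{\log T/E_1},\sqrt{\log T/E_2}\bigr)\bigr)$.
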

\begin{proof}[Proof sketch for Theorem \ref{thm:ETCETC}]
The ``throw out'' phase for $\ALG_1$ enables $\ALG_2$ to learn and commit to near-optimal actions. The meaningful exploration for $\ALG_1$ thus begins \textit{after} the follower has committed to actions. This enables $\ALG_1$ to identify a near-optimal action given the arms that $\ALG_2$ has already committed to after the first phase of exploration. Returning to our $\gamma$-tolerant benchmarks, for each player, we can upper bound regret by setting $\epsilon$ to be the suboptimality of the other player and achieve the desired regret bound. The full proof is deferred to Appendix \ref{appendix:proofetcetc}.

\end{proof}

One drawback of Theorem \ref{thm:ETCETC} is that requiring the follower to run a single algorithm is relatively restrictive. In the next subsection, we allow for a rich class of follower algorithms.

\subsection{Main Algorithm}

Our main result in this section is an adaptive algorithm for the leader (\ExploreThenUCB, Algorithm \ref{algo:explorethenucb}) that achieves the same regret bounds while permitting greater flexibility for the follower (Theorem \ref{thm:explorethenucb}). Specifically, we only require that the follower converges to $\epsilon$-optimal responses quickly, which we formalize through high-probability instantaneous regret (Section \ref{subsec:assumptionsplayer2}). Since the leader's algorithm needs to be robust to a broader range of follower behaviors, we replace the commit phase of $\ExploreThenCommit$ with an adaptive algorithm. This motivates $\ExploreThenUCB$, which explores in the first phase, and then runs a version of UCB on the arms $\mathcal{A}$. The initial exploration phase in $\ExploreThenUCB$, similar to the initial exploration phase in $\ExploreThenCommitThrowOut$, ensures that the leader waits for the follower to partially converge before starting to learn.

\paragraph{$\ExploreThenUCB(E)$ (Algorithm \ref{algo:explorethenucb}).} The algorithm $\ALG_1 = \ExploreThenUCB(E)$ takes as input $E \in [T/\abs{\mathcal{A}}]$. When $\ALG_1$ applied to an instance, for the first $|\mathcal{A}| \cdot E$ rounds, the algorithm $\ALG_1$ pulls each arm in $\mathcal{A}$ a total of $E$ times, and then discards all history from these rounds. For the remaining $T - |\mathcal{A}| \cdot E$ rounds, the algorithm runs UCB, computing the upper confidence bound $\UCBOneArg{1}{a} = \hatMROneArgTimeStep{1}{a}{t}  + \alpha_t(a)$ using confidence bound $\alpha_t(a) = \Theta\left(\sqrt{(\log T)/\NumPullsTwoTimesOneArg{a}{E \cdot |\mathcal{A}|}{t}}\right)$, where $\hatMROneArgTimeStep{1}{a}{t}$ is the empirical mean and $\NumPullsTwoTimesOneArg{a}{E \cdot |\mathcal{A}|}{t}$ is the number of times that action $a$ is chosen in the UCB phase after time step $E \cdot |\mathcal{A}|$ and prior to time step $t$. The algorithm then chooses the arm with maximum upper confidence bound. 

\begin{algorithm2e}[htbp]
\caption{$\ExploreThenUCB(E)$ applied to $\Hist$} 
\label{algo:explorethenucb}
\DontPrintSemicolon
\LinesNumbered
Fix an arbitrary ordering $\mathcal{A} = \left\{a^1, \ldots, a^{|\mathcal{A}|} \right\}$.\\
Let $t = |\Hist|$.\\
\tcc{Explore for the first $E \cdot |\mathcal{A}|$ rounds}
\If{$t \le E \cdot |\mathcal{A}|$}{
    Let $i = \lceil \frac{t}{E} \rceil$ be the index of the action that should be pulled.\\
    \Return{point mass at $a^i$} \\
}
\If{$t > E \cdot |\mathcal{A}|$} {
$\Hist^* = \left\{ (t' - E \cdot |\mathcal{A}|, a_{t'}, r) \mid \exists (t', a_{t'}, r)  \in \Hist \text{ s.t. } t' > E \cdot |\mathcal{A}| \right\}$ \tcp*{Throw out first $E \cdot |\mathcal{A}|$ rounds of history} 
Initialize $\hatMROneArg{1}{a} = 1$ for $a \in \mathcal{A}$. \tcp*{Initialize empirical means.} 
Initialize $\UCBOneArg{1}{a} = 1$ for $a \in \mathcal{A}$. \tcp*{Initialize UCB.}
\For{$a \in \mathcal{A}$}{
Set $S(a) := \left\{r \mid \exists (t', a_{t'}, \RR{1}{a_{t'}}{b_{t'}}{t'}) \in \Hist^* \text{ s.t. } a = a_{t'},  \RR{1}{a_{t'}}{b_{t'}}{t'} = r \right\}$  \tcp*{Observed rewards}
\If{$S(a) \neq \emptyset$} {
$\hatMROneArg{1}{a} \leftarrow (\sum_{r \in S(a)} r)/|S(a)|$ \tcp*{Empirical mean}
$\alpha(a) \leftarrow 10 \cdot \frac{\sqrt{\log T}}{\sqrt{|S(a)|}}$ \tcp*{confidence bound width}
$\UCBOneArg{1}{a} \leftarrow \min(1, \hatMROneArg{1}{a} + \alpha(a))$ 
}
}
Let $a^* = \argmax_{a \in \mathcal{A}} \left(\UCBOneArg{1}{a}\right)$. \tcp*{arm with max upper confidence bound} 
\Return{point mass at $a^*$}
}
\end{algorithm2e}

 Even though the rewards observed by the leader are \textit{not} stochastic (since the follower can pick different arms over time), we show if the leader runs $\ExploreThenUCB$ and 
the follower runs algorithms with sufficiently low high-probability instantaneous regret, then both players achieve $O\left(|\mathcal{A}|^{1/3} |\mathcal{B}|^{1/3} (\log T)^{1/3} T^{2/3} \right)$ regret. The assumptions on the follower's algorithm are satisfied by standard algorithms such as $\ActiveArmElimination$ (Algorithm \ref{algo:AAE}; Proposition \ref{prop:AAE}) and $\ExploreThenCommit$ (Algorithm \ref{algo:explorethencommit}; Proposition \ref{prop:ETC}). 
For this result, we require that the maximum tolerance $\gamma$ is not \textit{too small}: $\gamma =  \omega\p{T^{-1/3}\p{\abs{\mathcal{A}} \cd  \abs{\mathcal{B}}}^{1/3}}$ (see \Cref{appendix:regularizers} for a discussion).
\begin{restatable}{theorem}{explorethenucb}
\label{thm:explorethenucb}
Let $E = \Theta( |\mathcal{A}|^{-2/3} (|\mathcal{B}| \log T)^{1/3} T^{2/3})$. Consider a $\StrongSetting$, where $\ALG_{2}$ is any algorithm with high-probability instantaneous regret $g(t, T, \mathcal{B}) = O\left((|\mathcal{A}| |\mathcal{B}| \log T)^{1/3} T^{-1/3} \right)$ for $t > E$ and $g(t, T, \mathcal{B}) = 1$ for $t \le E$, and where $\ALG_1 = \ExploreThenUCB(E)$. Then, it holds that the regret with respect to the $\gamma$-tolerant benchmarks $\benchmarkrelaxedstronger{1}$ and $\benchmarkrelaxedstronger{2}$ is bounded as: 
\[\max(R_1(T), R_2(T)) = O\left(|\mathcal{A}|^{1/3} |\mathcal{B}|^{1/3} (\log T)^{1/3} T^{2/3} \right).\]
\end{restatable}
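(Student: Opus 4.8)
The plan is to condition on two high-probability good events and analyze the leader's exploration phase and UCB phase separately for each player. First I would set $\epsilon^* := \sup_{t > E} g(t, T, \mathcal{B}) = O\p{(|\mathcal{A}||\mathcal{B}|\log T)^{1/3} T^{-1/3}}$ and record the central structural fact: since the leader pulls every arm exactly $E$ times in the initial exploration phase, every arm $a$ satisfies $\NumPullsOneArg{a}{t+1} > E$ throughout the UCB phase, so on the event $G_2$ that the follower's high-probability instantaneous regret bound holds (probability $\ge 1 - |\mathcal{A}|T^{-3}$) the follower plays an $\epsilon^*$-best response $b_t \in \mathcal{B}_{\epsilon^*}(a_t)$ at every UCB-phase step. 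I would also introduce a concentration event $G_1$: conditioned on the history, the leader's reward for playing $a$ is Gaussian with conditional mean $\MR{1}{a}{b_t} \in [v_1^{\min}(a), v_1^{\max}(a)]$, where $v_1^{\min}(a) := \min_{b\in\mathcal{B}_{\epsilon^*}(a)}\MR{1}{a}{b}$ and $v_1^{\max}(a) := \max_{b\in\mathcal{B}_{\epsilon^*}(a)}\MR{1}{a}{b}$, so a time-uniform martingale (Azuma--Hoeffding) bound keeps $\hatMROneArg{1}{a}$ within $\alpha(a)$ of the running average $\bar\mu(a)$ of realized conditional means, simultaneously over all arms and times, with probability $\ge 1 - |\mathcal{A}|T^{-2}$. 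Off $G_1 \cap G_2$ I bound regret trivially by $T$, contributing only $O(1)$ in expectation. Throughout I will exploit $v_1^{\min}(a) \le \bar\mu(a) \le v_1^{\max}(a)$ and set $\epsilon = \epsilon^*$ in the infimum defining each benchmark (admissible since $\epsilon^* \le \gamma$ under the stated lower bound on $\gamma$), giving $\benchmarkrelaxedstronger{1} \le V_1(\epsilon^*) + \epsilon^*$ and $\benchmarkrelaxedstronger{2} \le V_2(\epsilon^*) + \epsilon^*$, where $V_1(\epsilon) := \max_a \min_{b\in\mathcal{B}_\epsilon(a)}\MR{1}{a}{b}$ and $V_2(\epsilon) := \min_{a\in\mathcal{A}_\epsilon}\max_b\MR{2}{a}{b}$.

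For the leader, I would charge each of the $E|\mathcal{A}|$ exploration steps at most $1$ and then run the standard UCB argument against the reference value $V_1(\epsilon^*)$, treating arm $a$ as having drifting-but-bounded mean $\bar\mu(a)$. Letting $a^\dagger$ attain $v_1^{\min}(a^\dagger) = V_1(\epsilon^*)$, optimism gives $\UCBOneArg{1}{a^\dagger} \ge \bar\mu(a^\dagger) \ge V_1(\epsilon^*)$ on $G_1$; and whenever a suboptimal arm is played, $\bar\mu(a) + 2\alpha(a) \ge V_1(\epsilon^*)$, which bounds its pull count and yields the usual $O\p{\sqrt{|\mathcal{A}| T \log T}}$ bound on $\sum_a n_a \p{V_1(\epsilon^*) - \bar\mu(a)}$. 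Since the leader's realized reward summed over the $n_a$ pulls of $a$ equals $n_a \bar\mu(a)$, this directly bounds the UCB-phase reward deficit against $V_1(\epsilon^*)\cd T$. Adding the regularizer contribution $\epsilon^* T$ and substituting $E|\mathcal{A}| = \Theta\p{|\mathcal{A}|^{1/3}|\mathcal{B}|^{1/3}(\log T)^{1/3}T^{2/3}}$ and $\epsilon^* T = O\p{|\mathcal{A}|^{1/3}|\mathcal{B}|^{1/3}(\log T)^{1/3}T^{2/3}}$, while noting $\sqrt{|\mathcal{A}|T\log T}$ is lower order, gives $R_1(T) = O\p{|\mathcal{A}|^{1/3}|\mathcal{B}|^{1/3}(\log T)^{1/3}T^{2/3}}$.

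For the follower, I would again pay at most $E|\mathcal{A}|$ for exploration and then partition UCB-phase steps by whether $a_t \in \mathcal{A}_{\epsilon^*}$. If $a_t \in \mathcal{A}_{\epsilon^*}$, then $\max_b\MR{2}{a_t}{b} \ge V_2(\epsilon^*)$ by definition of $V_2$, and since the follower $\epsilon^*$-best-responds, $\MR{2}{a_t}{b_t} \ge V_2(\epsilon^*) - \epsilon^*$; with the regularizer this is $O(\epsilon^*)$ per step and $O(\epsilon^* T)$ total. If $a_t \notin \mathcal{A}_{\epsilon^*}$, then $v_1^{\max}(a_t) < V_1(\epsilon^*) - \epsilon^*$, so reusing the leader's inequality $v_1^{\max}(a_t) + 2\alpha(a_t) \ge V_1(\epsilon^*)$ forces $\alpha(a_t) > \epsilon^*/2$, whence each such arm is pulled at most $O\p{\log T / (\epsilon^*)^2}$ times, contributing at most $O\p{|\mathcal{A}|\log T / (\epsilon^*)^2} = O\p{|\mathcal{A}|^{1/3}|\mathcal{B}|^{-2/3}(\log T)^{1/3}T^{2/3}}$ in total (bounding each such step by $1$). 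Summing the three pieces gives $R_2(T) = O\p{|\mathcal{A}|^{1/3}|\mathcal{B}|^{1/3}(\log T)^{1/3}T^{2/3}}$, and taking the maximum with $R_1$ completes the argument.

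The main obstacle will be the leader's concentration step and the accompanying UCB analysis: because the follower's responses are adaptive and the per-arm mean $\bar\mu(a)$ drifts within $[v_1^{\min}(a), v_1^{\max}(a)]$, the leader is \emph{not} facing a stochastic bandit, so I must replace the i.i.d.\ concentration underlying textbook UCB with a time-uniform martingale bound and verify that both optimism and the ``suboptimal-pull implies wide confidence interval'' step survive when the reference is the $\max\min$ quantity $V_1(\epsilon^*)$ rather than a fixed arm mean. The remaining delicacy is carefully tracking the two roles of $v_1^{\min}$ and $v_1^{\max}$ (the former to lower-bound the leader's realized reward, the latter to classify arms for the follower's analysis) and confirming $\epsilon^* \le \gamma$ so that $\epsilon^*$ is admissible in the infimum defining both benchmarks.
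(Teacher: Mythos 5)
Your proposal is correct and takes essentially the same route as the paper's proof: the same two clean events (the follower $\epsilon^*$-best-responding at every post-exploration step, plus time-uniform concentration of the leader's empirical means around the running average of realized conditional means, which is how the paper handles the non-stochastic rewards), the same optimism argument against the reference $\max_{a \in \mathcal{A}} \min_{b \in \mathcal{B}_{\epsilon^*}(a)} \MR{1}{a}{b}$ (the paper's Lemma \ref{lemma:UCBbound}), and the same follower decomposition that caps pulls of each arm outside $\mathcal{A}_{\epsilon^*}$ at $O\p{\log T/(\epsilon^*)^2}$. The only bookkeeping detail you elide, which the paper handles by excluding the last pull of each arm at cost $|\mathcal{A}|$, is that the UCB inequality at an arm's final pull controls the running average over its first $n_a - 1$ pulls rather than all $n_a$; this is negligible.
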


\begin{proof}[Proof sketch of Theorem \ref{thm:explorethenucb}]
The intuition is that the exploration phase of \texttt{ExploreThenUCB} ensures that all of the follower's actions have bounded suboptimality, and the UCB phase accounts for the follower changing which action they choose over time. In more detail, high-probability instantaneous regret guarantees that after the explore phase, all actions that the follower's chooses are within the $\epsilon$-best-response set $\mathcal{B}_{\epsilon^*}(a)$ for $\epsilon^* = \Theta((|\mathcal{A}| \cdot |\mathcal{B}| \cdot \log T)^{1/3} T^{-1/3})$. For the UCB phase, the main lemma (Lemma \ref{lemma:UCBbound}) is that if an arm $a \in \mathcal{A}$ is pulled, the empirical mean is at least $\max_{a \in \mathcal{A}} \min_{b \in \mathcal{B}_{\epsilon^*}(a)} \MR{1}{a}{b} - \Theta\left(\sqrt{\log T / \NumPullsTwoTimesOneArg{a}{E \cdot |\mathcal{A}|}{t}} \right)$ (the optimal utility for the leader when the follower worst-case $\epsilon$-best-responds minus the confidence set size). Lemma \ref{lemma:UCBbound} enables us to analyze the leader's cumulative reward from each arm $a \in \mathcal{A}$ and thus bound the leader's regret. For the follower's regret, Lemma \ref{lemma:UCBbound} enables us to bound the number of times that arms outside of $\mathcal{A}_{\epsilon}$ are chosen, which enables us to bound the follower's regret. We defer the full proof to \Cref{appendix:proofexplorethenucb}.  
\end{proof}

\subsection{Regret lower bound}\label{subsec:regretlowerbound}

A natural question is whether the regret bound in Theorem \ref{thm:explorethenucb} can be improved from $\tilde{O}(T^{2/3})$ to $\tilde{O}(\sqrt{T})$, given that such dependence is possible in single-player bandit problems. Interestingly, we show a \textit{lower bound} of $T^{2/3}$ with respect to the $\gamma$-tolerant benchmarks, thus demonstrating that the dependence on $T$ in Theorem \ref{thrm:dlower} is near-optimal. This lower bound holds for any maximum tolerance $\gamma \le 1$. 
\begin{restatable}{theorem}{dlower}\label{thrm:dlower}
Consider $\StrongSetting$s or $\WeakSetting$s with action sets  $\mathcal{A}$ and $\mathcal{B}$ such that $|\mathcal{A}| \ge 2$ and $|\mathcal{B}| \ge 2$. For any algorithms $\ALG_1$ and $\ALG_2$, there exists an instance $\Instance^* = (\mathcal{A}, \mathcal{B}, v_1, v_2)$ such that at least one of the players incurs $\Omega(T^{2/3} \cd \p{\abs{\mathcal{B}}}^{1/3})$ regret with respect to the $\gamma$-tolerant benchmarks $\benchmarkrelaxedstronger{1}$ and $\benchmarkrelaxedstronger{2}$:
\[\max(R_1(T; \Instance^*), R_2(T; \Instance^*)) = \Omega(T^{2/3} \cd \p{\abs{\mathcal{B}}}^{1/3}).\]
\end{restatable}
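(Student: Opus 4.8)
The plan is to exhibit a single ``hard'' leader action whose payoff is unlocked only after the follower solves a $\abs{\mathcal{B}}$-armed needle-in-a-haystack problem, and to place the needle adversarially against the \emph{given} $\ALG_2$. Fix $\Delta = 1/4$ and a gap $\delta = \Theta\big((\abs{\mathcal{B}}/T)^{1/3}\big)$ (so $\delta < \Delta$ for $T$ large relative to $\abs{\mathcal B}$). For each $j$ define an instance $\Instance_j$ with $\mathcal{A} = \{a_1,a_2\}$. Action $a_1$ is \emph{safe}: $\MR{1}{a_1}{b} = 1/2$ for all $b$, with follower rewards at $a_1$ bounded apart (e.g.\ $\MR{2}{a_1}{b_1}=1$, $\MR{2}{a_1}{b}=0$ otherwise) so that $a_1$ carries no information about $j$. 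Action $a_2$ hides the needle at $b_j$: the follower's rewards are $\MR{2}{a_2}{b_j} = 1/2 + \delta$ and $\MR{2}{a_2}{b} = 1/2$ for $b \neq b_j$, so $b^*(a_2) = b_j$ with gap $\delta$; the leader's rewards are $\MR{1}{a_2}{b_j} = 1/2 + \delta$ and $\MR{1}{a_2}{b} = 1/2 - \Delta$ for $b \neq b_j$. The key design feature is that the good leader payoff $1/2+\delta$ beats the safe payoff by only $\delta$, while every off-needle payoff sits a \emph{constant} $\Delta$ below it; thus the follower's convergence costs the leader at a constant rate but yields nothing to recoup once achieved.

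\textbf{Benchmark and an exact regret identity.} A direct computation gives $\benchmarkrelaxedstronger{1} = 1/2 + \delta$ in every $\Instance_j$, for all $\gamma \le 1$: for $\epsilon < \delta$ we have $\mathcal{B}_{\epsilon}(a_2) = \{b_j\}$, so the $\epsilon$-relaxed utility is $\max(1/2,\,1/2+\delta) = 1/2+\delta$; for $\epsilon \ge \delta$ the set $\mathcal{B}_{\epsilon}(a_2)$ absorbs the off-needle actions and the relaxed utility drops to $1/2$, but the regularizer then contributes $\epsilon \ge \delta$. Both branches of the infimum equal $1/2+\delta$. Let $m$ be the number of rounds in which the leader plays $a_2$ and $s$ the number of those rounds in which the follower plays off-needle ($b_t \neq b_j$). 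Since each safe round costs $\delta$, each off-needle round on $a_2$ costs $\delta + \Delta$, and each on-needle round on $a_2$ exactly meets the benchmark, summing the per-round deficits yields the exact identity
\[ R_1(T;\Instance_j) = \delta\,\mathbb{E}[T - m] + (\delta + \Delta)\,\mathbb{E}[s]. \]
Crucially this identity is a function of the realized action pairs only, so it holds verbatim in both $\StrongSetting$s and $\WeakSetting$s regardless of what the leader observes.

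\textbf{Lower bounding $s$ and balancing.} The quantity $s$ is precisely the follower's cumulative number of suboptimal pulls on the $\abs{\mathcal{B}}$-armed bandit it faces along the $a_2$-subsequence. Because the follower never sees the leader's rewards and the $a_1$-rounds are distributed identically across all $\Instance_j$, the follower's only information about $j$ comes from its own rewards during $a_2$-plays; averaging over $j \sim \mathrm{Unif}\{1,\dots,\abs{\mathcal B}\}$ and applying the standard multi-armed bandit lower bound to this subsequence, I expect $\frac{1}{\abs{\mathcal B}}\sum_j \mathbb{E}[s] \ge \tfrac{1}{2}\,\mathbb{E}\big[\min(m,\, c\,\abs{\mathcal{B}}/\delta^2)\big]$ for an absolute constant $c$ — with fewer than $\Theta(\abs{\mathcal B}/\delta^2)$ samples the follower cannot localize the needle among $\abs{\mathcal B}$ nearly-tied arms and so plays $b_j$ only a $\approx 1/\abs{\mathcal B}$ fraction of the time. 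Substituting into the identity and splitting on whether $\mathbb{E}[m] \le T/2$ (in the first case the $\delta\mathbb{E}[T-m]$ term already gives $\delta T/2$; in the second, $\mathbb{P}(m \ge c\abs{\mathcal B}/\delta^2) \ge 1/4$ forces $\Delta\,\mathbb{E}[s] = \Omega(\Delta\abs{\mathcal B}/\delta^2)$) gives $\frac{1}{\abs{\mathcal B}}\sum_j R_1(T;\Instance_j) = \Omega\big(\min(\delta T,\ \abs{\mathcal{B}}/\delta^2)\big)$, and $\delta = \Theta((\abs{\mathcal B}/T)^{1/3})$ equalizes both terms at $\Theta(T^{2/3}\abs{\mathcal B}^{1/3})$. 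Hence some $\Instance_j$ achieves $R_1 = \Omega(T^{2/3}\abs{\mathcal B}^{1/3})$, and $\max(R_1,R_2) \ge R_1$ finishes. The main obstacle is making the averaged bandit bound rigorous despite the adaptive coupling: $m$ and the timing of the $a_2$-plays are chosen by $\ALG_1$ from leader rewards that \emph{do} depend on $j$. I would circumvent this by analyzing only a fixed prefix of the first $\lceil c\abs{\mathcal B}/\delta^2\rceil$ $a_2$-plays and using that the follower's law over $j$ is determined solely by its (at most that many) follower-reward samples, so a change-of-measure over the $\abs{\mathcal B}$ needle locations bounds its chance of playing $b_j$ independently of how $\ALG_1$ interleaves $a_1$ and $a_2$.
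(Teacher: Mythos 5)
There is a genuine gap, and it is fatal to the construction rather than just to the ``fixed prefix'' patch. Your instances are not hard: because the leader's reward at $(a_2,b)$ jumps by the \emph{constant} $\Delta+\delta\approx 1/4$ between off-needle and on-needle, the leader's own reward stream identifies $j$ at constant KL rate ($\Theta(\Delta^2)$ per $a_2$-round), not at rate $\Theta(\delta^2)$. Since the theorem quantifies over \emph{all} pairs $(\ALG_1,\ALG_2)$, it must defeat jointly designed, cooperating pairs, and here such a pair gets tiny regret: let the follower cycle through $\mathcal{B}$ on a fixed, publicly known schedule during $a_2$-plays (so even in a $\StrongSetting$ the leader knows which $b$ is being tested by counting $a_2$-rounds), let the leader test each candidate with $O(\log T)$ pulls of $a_2$ (distinguishing $\mathcal{N}(1/2+\delta,1)$ from $\mathcal{N}(1/2-\Delta,1)$ is a constant-gap test), and then have both commit to $(a_2,b_j)$ — which the leader can announce to the follower through its action sequence. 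Against your (correctly computed) benchmarks $\benchmarkrelaxedstronger{1}=\benchmarkrelaxedstronger{2}=1/2+\delta$, the committed rounds have zero deficit, so $\max(R_1,R_2)=O(\abs{\mathcal{B}}\log T)=o(T^{2/3}\abs{\mathcal{B}}^{1/3})$. Your proposed fix — a change of measure over needle locations using only the follower's reward samples — is precisely where the argument breaks: the follower's history also contains the leader's action sequence, which is a function of the leader's rewards, which depend on $j$ with constant per-round KL. Information about $j$ flows from leader rewards through the interleaving pattern back to the follower, so the follower's posterior over $j$ is \emph{not} determined by its own samples, and no restriction to a prefix of $a_2$-plays repairs this.

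The paper's proof is built exactly to close this leak. First, it reduces to the centralized model (Lemma~\ref{lemma:centralizedimpliesdecentralized}), which dominates any action-signaling scheme, so cooperation cannot be an escape hatch. Second, in its instance family (Table~\ref{tab:dlower}) the leader's rewards are \emph{identical across all instances}; the only distinguishing observations are the follower's rewards on the pairs $(a\neq a_1, b\neq b_1)$, each pull of which simultaneously (i) carries only $\Theta(\delta^2)$ KL and (ii) costs the leader a constant $\approx 0.5$ against the benchmark. Bretagnolle--Huber plus the divergence decomposition then forces either $\Omega(\delta T)$ leader regret on $\mathcal{I}_{b_1}$ or $\Omega(\delta T)$ follower regret on $\mathcal{I}_{b_m}$ unless $\Omega(\abs{\mathcal{B}}/\delta^2)$ constant-cost pulls are made, and $\delta=\Theta(T^{-1/3}\abs{\mathcal{B}}^{1/3})$ balances the two — note also that the paper's loss is split across two instances and two players, whereas you try to extract the whole bound from $R_1$ on one family. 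If you want to retain your single-needle skeleton, you would need every channel that distinguishes instances — including the leader's rewards — to have per-round KL $O(\delta^2)$, and you would still need either the centralized reduction or an explicit argument that action-signaling cannot help, which in effect reproduces the paper's approach.
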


\begin{proof}[Proof sketch]
In this sketch, we give intuition for a weaker bound of $\Omega(T^{2/3})$, deferring the strengthening to $\Omega(T^{2/3} \cd \p{\abs{\mathcal{B}}}^{1/3})$ to \Cref{appendix:dlower}. Like in the proof of Theorem \ref{thrm:worstlinear}, it suffices to consider a centralized environment (Lemma \ref{lemma:centralizedimpliesdecentralized}).
We show that on the  $\Instance$ and $\tilde{\Instance}$ in Table \ref{tab:main_4} (with $\delta = \Theta(T^{-1/3})$), at least one player incurs $\Omega(T^{2/3})$ regret on at least one of these instances.
 The only way to distinguish the instances is to pull $(a_1, b_2)$ at least $\Omega(T^{2/3})$ times, which gives low utility for both players. Intuitively, when the algorithm fails to distinguish $\Instance$ and $\tilde{\Instance}$, the algorithm must choose the same distribution over $\mathcal{A} \times \mathcal{B}$, but this gives $\Theta(T^{-1/3})$ loss for the leader on $\Instance$ or $\Theta(T^{-1/3})$ loss for the follower on $\tilde{\Instance}$. The full proof, which relies on a KL-divergence argument, is deferred to \Cref{appendix:dlower}.        
\end{proof}

\begin{table}[ht!]
\centering
\begin{subtable}[b]{0.45\linewidth}
\centering
\begin{tabular}{|c|c|c|}
\hline
      & $b_1$                  & $b_2$                  \\ \hline
$a_1$ & $(0.5 + \delta, \delta)$ & $(0, \textbf{0})$             \\ \hline
$a_2$ & $(0.5, 3 \delta)$       & $(0.5, 3 \delta)$ \\ \hline
\end{tabular}
\caption{Mean rewards $(\MR{1}{a}{b}, \MR{2}{a}{b})$ for $\Instance$}
\label{tab:sub1_main4}
\end{subtable}
\hfill
\begin{subtable}[b]{0.45\linewidth}
\centering
\begin{tabular}{|c|c|c|}
\hline
      & $b_1$                  & $b_2$                  \\ \hline
$a_1$ & $(0.5 + \delta, \delta)$ & $(0, $\boldmath{$2\delta$}$)$             \\ \hline
$a_2$ & $(0.5, 3 \delta)$       & $(0.5, 3 \delta)$ \\ \hline
\end{tabular}
\caption{Mean rewards $(\MRTilde{1}{a}{b}, \MRTilde{2}{a}{b})$ for $\tilde{\Instance}$}
\label{tab:sub2_main4}
\end{subtable}
\caption{Two instances $\Instance$ (left) and $\tilde{\Instance}$ (right), which differ solely in the follower's reward for $(a_1, b_2)$ (shown in \textbf{bold}). For $\delta$ sufficiently small, the instances $\Instance$ and $\tilde{\Instance}$ are hard to distinguish and turn out to imply a $\Omega(T^{2/3})$ lower bound on regret with respect to the $\gamma$-tolerant benchmark (Theorem \ref{thrm:dlower}). }
\label{tab:main_4}
\end{table}

At a high-level, the $T^{2/3}$ regret bound in Theorem \ref{thrm:dlower} is driven by the need to obtain precise estimates of \textit{highly suboptimal action pairs} in order to learn to distinguish between two instances. This is fundamentally different from single learner environments, where the learner only needs to obtain precise estimates of \textit{near-optimal} arms. Our regret upper bound (Theorem \ref{thm:explorethenucb}) and lower bound (Theorem \ref{thrm:dlower}) have near-matching dependence on $T$ and $|\mathcal{B}|$, but a gap in dependence on $|\mathcal{A}|$ (the upper bound scales with $|\mathcal{A}|^{1/3}$ while the lower bound is independent of $|\mathcal{A}|^{1/3}$). An interesting direction for future work is to close this gap. 

\section{Relaxed Settings with Faster Learning}\label{sec:relaxed}

The lower bound in the previous section showed that $\Theta(T^{2/3})$ regret is optimal for the benchmarks $\benchmarkrelaxedstronger{1}$ and $\benchmarkrelaxedstronger{2}$ for general instances. Since a $T^{2/3}$ lower bound is atypical for $K$-armed bandits problems, we next consider relaxed environments under which faster learning----i.e., $O(\sqrt{T})$ regret---is possible. In the first environment, we consider well-behaved instances (Section \ref{subsec:lipschitz}) and in the second environment, we weaken the benchmarks (Section \ref{subsec:weakerbenchmark}). In both environments, we show that the learner does not need to worry about their learning being overly distorted by the follower; thus, the leader can start learning immediately, even before the follower's actions have partially converged, which leads to improved regret bounds. The algorithms that we design for the leader are variants of UCB.

\subsection{Continuity condition on utilities}\label{subsec:lipschitz}

We first show that improved regret bounds are possible with a continuity condition on the player utilities. For intuition, the example in Table \ref{tab:main_1} gave a ``hard'' example resulting in linear regret in Theorem \ref{thrm:worstlinear} and the related example in Table \ref{tab:main_4} resulted in $\Omega(T^{2/3})$ regret in Theorem \ref{thrm:dlower}. These examples relied on two outcomes with nearly identical utilities for the follower having significantly different utilities for the leader, which could be viewed as a violation of continuity. This suggests that if arms that are \textit{sufficiently different} for the leader were also \textit{sufficiently different} for the follower, then it might be possible to beat the regret lower bound from Theorem \ref{thrm:worstlinear} and Theorem \ref{thrm:dlower}. 

We formalize continuity as follows: given an instance $\Instance = (\mathcal{A}, \mathcal{B}, v_1, v_2)$, we define the Lipschitz constant $L^*$ by\footnote{In the case of ties in rewards, if the numerator and denominator are both 0, we define $\frac{|\MR{i}{a}{b} - \MR{i}{a'}{b'}|}{|\MR{j}{a}{b} - \MR{j}{a'}{b'}|}$ to be 1 (because both players agree the elements are equivalent). If the denominator is 0 and numerator is nonzero, we define this fraction to be $\infty$ (because the items are indistinguishable to one player, while they give different rewards to the other).}: 
\[ L^* = \sup_{i \neq j \in \left\{1, 2\right\}} \sup_{(a, b) \neq (a', b')} \frac{|\MR{i}{a}{b} - \MR{i}{a'}{b'}|}{|\MR{j}{a}{b} - \MR{j}{a'}{b'}|}. \]
For example, when the two players have the same utilities (i.e., $v_1 = v_2$), then $L^* = 1$. More generally, our continuity condition captures the extent to which players agree on which outcomes are different from each other (a more detailed discussion is given in \Cref{app:continous}). Returning to the examples in Tables \ref{tab:main_1}, \ref{tab:main_4}, the ``hard'' instances yielding  linear regret for the original Stackelberg benchmarks (Theorem \ref{thrm:worstlinear}; Table \ref{tab:main_1}) have $L^* = \Theta(T^{-1/2})$ 
and the corresponding ``hard'' instances for $T^{2/3}$ regret for the $\gamma$-tolerant benchmarks (Theorem \ref{thrm:dlower}; Table \ref{tab:main_4}) require that $L^* = \Theta(T^{-1/3})$; in contrast, we focus on utility functions where $L^*$ is a constant.

When $L^*$ is bounded, we show that it is possible for both players to achieve $\tilde{O}(\sqrt{T})$ regret even with respect to the \textit{original Stackelberg benchmarks}. The follower can run any algorithm $\ALG_2$ with sufficiently low high-probability anytime regret (e.g., $\ActiveArmElimination$ as in \Cref{prop:AAE} or UCB as in \Cref{prop:UCB}). 
We construct another UCB-based algorithm $\LipschitzUCB$ (Algorithm \ref{algo:lipschitzucb}) for the leader, which expands the confidence sets based on the Lipschitz constant $L^*$.  

\paragraph{$\LipschitzUCB(L, C)$ (Algorithm \ref{algo:lipschitzucb}).} The algorithm $\ALG_1 = \LipschitzUCB(L, C)$ takes as inputs parameters $L$ and $C$. (The parameter $L$ is intended to be an upper bound on the Lipschitz constant $L^*$, and the parameter $C'$ is intended to be such that $\ALG_2$ satisfies anytime regret bound $h(t, T, \mathcal{B}) = \sqrt{C t \log T}$, where $C = C' \cdot \sqrt{|\mathcal{B}|}$ for a constant $C'$.) For each arm $a\in \mathcal{A}$, the algorithm computes UCB estimates $\UCBOneArg{1}{a}$ of the quantity $\max_{b \in \mathcal{B}} \MR{1}{a}{b}$ using the high-probability anytime regret bounds of $\ALG_2$ as well as the upper bound on the Lipschitz constant. The algorithm then chooses the arm $a_t = \argmax_{a \in \mathcal{A}} \max_{b \in \mathcal{B}'(a)} \UCBOneArg{1}{a}$. 

\begin{algorithm2e}[htbp]
\caption{$\LipschitzUCB(L, C)$ applied to $\Hist$} 
\label{algo:lipschitzucb}
\DontPrintSemicolon
\LinesNumbered
Initialize $\hatMROneArg{1}{a} = 1$ for $a \in \mathcal{A}$. \tcp*{Initialize empirical means for $\max_{b \in \mathcal{B}} \MR{1}{a}{b}$.} 
Initialize $\UCBOneArg{1}{a} = 1$ for $a \in \mathcal{A}$. \tcp*{Initialize UCB for $\max_{b \in \mathcal{B}} \MR{1}{a}{b}$}
\For{$a \in \mathcal{A}$}{
Set $S(a) := \left\{r \mid \exists (t', a_{t'}, r) \in \Hist \text{ s.t. } a = a_{t'} \right\}$ \tcp*{Observed rewards}
\If{$S(a) \neq \emptyset$} {
$\hatMROneArg{1}{a} \leftarrow (\sum_{r \in S(a)} r)/|S(a)|$ \tcp*{Empirical mean}
$\alpha(a) \leftarrow \frac{10 \sqrt{\mathcal{B} \log T}}{\sqrt{|S(a)|}} + C \cdot L \cdot \frac{\sqrt{\log T}}{\sqrt{|S(a)|}}$ \tcp*{confidence bound width}
$\UCBOneArg{1}{a} \leftarrow \min(1, \hatMROneArg{1}{a} + \alpha(a))$ 
}
}
Let $a^* = \argmax_{a \in \mathcal{A}} \left(\UCBOneArg{1}{a} \right)$. \tcp*{arm with max upper confidence bound} 
\Return{point mass at $a^*$}.
\end{algorithm2e}

We obtain the following regret bound with respect to the Stackelberg benchmark, our strongest benchmark. 
\begin{restatable}{theorem}{dpL}\label{thrm:dpL}
Consider a $\StrongSetting$ where $\Instance = (\mathcal{A}, \mathcal{B}, v_1, v_2)$ has Lipschitz constant $L^*$. Let $\ALG_2$ be any algorithm satisfying high-probability anytime regret $h(t, T, \mathcal{B}) = C' \sqrt{|\mathcal{B}| t \log T}$ where $C'$ is a constant, and let $\ALG_1 = \LipschitzUCB(L, C' \sqrt{|\mathcal{B}|})$ for any $L \ge L^*$. Then both players achieve the following regret bounds with respect to the original Stackelberg benchmarks $\benchmark{1}$ and $\benchmark{2}$: that is, $R_1(T; \Instance) = O\left(L \sqrt{T |\mathcal{A}| |\mathcal{B}| \log T}\right)$ and $R_2(T; \Instance) = O\left(L^2 \sqrt{T  |\mathcal{A}| \cdot |\mathcal{B}| \log T} \right)$. 
\end{restatable}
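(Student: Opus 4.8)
The plan is to run an optimism (UCB) argument for the leader, using the Lipschitz condition to convert the follower's anytime regret into control over how far the leader's average observed reward on an arm can drift from its best-response value. First I would introduce the shorthand $\mu_1(a) := \MR{1}{a}{b^*(a)}$ for the leader's utility when the follower best-responds to $a$, and note that the Stackelberg benchmark is exactly $\benchmark{1} = \max_{a \in \mathcal{A}} \mu_1(a)$, attained at $a^*$. I would then condition on a single high-probability good event $\Event$ on which (i) the follower's anytime regret bound $h(t,T,\mathcal{B}) = C'\sqrt{|\mathcal{B}|\,t\log T}$ holds simultaneously for every arm $a$ and time $t$, and (ii) the leader's empirical mean $\hatMROneArg{1}{a}$ concentrates (via a martingale bound, since the $b_{t'}$ are history-dependent) around the realized average $\bar{v}_1(a) := \frac{1}{|S(a)|}\sum_{t': a_{t'}=a}\MR{1}{a}{b_{t'}}$, i.e.\ $\abs{\hatMROneArg{1}{a} - \bar{v}_1(a)} \le 10\sqrt{\log T / |S(a)|}$. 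Since both failure probabilities are $\mathrm{poly}(1/T)$ and $\benchmark{i} - \MR{i}{a_t}{b_t} \in [-1,1]$, the contribution of $\Event^c$ to expected regret is negligible.

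The crux is a Lipschitz transfer bounding $\abs{\bar{v}_1(a) - \mu_1(a)}$. For each pull $t'$ of $a$, the Lipschitz condition (direction $i=1,j=2$) gives $\abs{\MR{1}{a}{b_{t'}} - \mu_1(a)} \le L^*\p{\max_{b}\MR{2}{a}{b} - \MR{2}{a}{b_{t'}}}$; averaging over the pulls of $a$ and invoking the follower's anytime regret yields $\abs{\bar{v}_1(a) - \mu_1(a)} \le L^* h(|S(a)|, T, \mathcal{B})/|S(a)| = L^*C'\sqrt{|\mathcal{B}|\log T/|S(a)|}$. Combined with concentration, this shows that the width $\alpha(a)$ used by $\LipschitzUCB$ is large enough for optimism: its second term equals $C' \sqrt{|\mathcal{B}|}\,L\sqrt{\log T/|S(a)|} \ge L^* C'\sqrt{|\mathcal{B}|\log T/|S(a)|}$ because $L \ge L^*$, and its first term dominates the concentration error, so $\UCBOneArg{1}{a} \ge \mu_1(a)$ for all $a,t$, while a two-sided version of the same bound gives $\UCBOneArg{1}{a} \le \mu_1(a) + O(\alpha(a))$ with $\alpha(a) = O(L\sqrt{|\mathcal{B}|\log T/|S(a)|})$.

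For the leader's regret, since $a_t = \argmax_a \UCBOneArg{1}{a}$, optimism gives $\benchmark{1} = \mu_1(a^*) \le \UCBOneArg{1}{a^*} \le \UCBOneArg{1}{a_t}$, hence the per-round best-response gap obeys $\benchmark{1} - \mu_1(a_t) \le \UCBOneArg{1}{a_t} - \mu_1(a_t) = O(L\sqrt{|\mathcal{B}|\log T/n_t(a_t)})$. Standard UCB bookkeeping ($\sum_{t':a_{t'}=a} 1/\sqrt{n_{t'}(a)} = O(\sqrt{n_T(a)})$ per arm, then Cauchy--Schwarz over $\mathcal{A}$) bounds $\sum_t\p{\benchmark{1}-\mu_1(a_t)} = O(L\sqrt{T|\mathcal{A}||\mathcal{B}|\log T})$. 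The remaining within-round slack $\mu_1(a_t) - \MR{1}{a_t}{b_t} \le L^*\p{\max_b \MR{2}{a_t}{b} - \MR{2}{a_t}{b_t}}$ sums, again via the follower's anytime regret and Cauchy--Schwarz, to $O(L\sqrt{T|\mathcal{A}||\mathcal{B}|\log T})$, giving $R_1(T;\Instance) = O(L\sqrt{T|\mathcal{A}||\mathcal{B}|\log T})$.

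The follower's regret is where I expect the only real difficulty. I would decompose $\benchmark{2} - \MR{2}{a_t}{b_t} = \p{\benchmark{2} - \max_b \MR{2}{a_t}{b}} + \p{\max_b \MR{2}{a_t}{b} - \MR{2}{a_t}{b_t}}$; the second term sums directly to $\sum_a h(n_T(a),T,\mathcal{B}) = O(\sqrt{T|\mathcal{A}||\mathcal{B}|\log T})$. The subtle term is the first, and the key idea is to apply the Lipschitz condition in the opposite direction ($i=2,j=1$) to the two \emph{outcomes} $(a^*, b^*)$ and $(a_t, b^*(a_t))$, whose leader utilities are exactly $\mu_1(a^*) = \benchmark{1}$ and $\mu_1(a_t)$: this yields $\benchmark{2} - \max_b \MR{2}{a_t}{b} \le L^*\p{\benchmark{1} - \mu_1(a_t)}$. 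Reusing the leader sum already established bounds this term by $O(L^2\sqrt{T|\mathcal{A}||\mathcal{B}|\log T})$, and since $L \ge L^* \ge 1$ this dominates, giving $R_2(T;\Instance) = O(L^2\sqrt{T|\mathcal{A}||\mathcal{B}|\log T})$. The hard part is precisely this transfer step: it is what introduces the extra factor of $L$ for the follower and rests on the continuity fact that a leader action that is near-optimal in leader utility is also near-optimal in best-response value for the follower.
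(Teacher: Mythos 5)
Your proposal is correct and matches the paper's proof essentially step for step: the same clean event (follower anytime regret plus leader concentration), the same key lemma that $\hatMROneArg{1}{a}$ lies within $O\left(L\sqrt{|\mathcal{B}|\log T/\NumPullsOneArg{a}{t}}\right)$ of $\MR{1}{a}{b^*(a)}$ (the paper's Lemma \ref{lemma:UCBconfidenceboundscorrect}), the same leader decomposition into $2B_1 + L \cdot B_2$, and---the step you rightly identify as the crux---the same reverse-direction Lipschitz transfer applied to the best-response pairs $(a^*, b^*(a^*))$ and $(a_t, b^*(a_t))$, made one-sided by Stackelberg optimality of $a^*$ and then bounded by reusing the leader's optimism sum, which is exactly what produces the extra factor of $L$ in $R_2$. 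The only (immaterial) technical variant is your concentration step: you control the aggregate empirical mean around the realized average $\frac{1}{|S(a)|}\sum_{t': a_{t'}=a}\MR{1}{a}{b_{t'}}$ via a martingale bound, whereas the paper concentrates each per-pair mean $\hatMR{1}{a}{b}$ around $\MR{1}{a}{b}$ and aggregates via Jensen's inequality over $b \in \mathcal{B}$; both are valid and yield the same lemma.
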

\begin{proof}[Proof sketch for Theorem \ref{thrm:dpL}]
The intuition is the continuity conditions imply that small errors by the follower translate into bounded suboptimality for the leader (and vice versa); moreover, the high-probability anytime regret requirements bound the follower's errors. Together, these properties guarantee that the leader's empirical mean $\hatMROneArg{1}{a}$ for each arm $a \in \mathcal{A}$ is close to the mean reward $\MR{1}{a}{b^*(a)}$ that they would receive if the follower best-responded: in more detail, the main lemma (Lemma \ref{lemma:UCBconfidenceboundscorrect}) is that the empirical mean $\hatMROneArg{1}{a}$ is at least $\MR{1}{a}{b^*(a)} - \Theta(L \sqrt{\log T}/ \sqrt{\NumPullsOneArg{a}{t}})$, where $\NumPullsOneArg{a}{t}$ is the number of times that arm $a$ has been pulled prior to time step $t$. Using 
Lemma \ref{lemma:UCBconfidenceboundscorrect} to bound the suboptimality of the leader's choice of actions $a_t \in \mathcal{A}$ and using the anytime regret requirements to bound the follower's suboptimality, we can bound both the leader's regret and the follower's regret. We defer the full proof to \Cref{appendix:lipschitzproof}.
\end{proof}

Finally, we compare our continuity condition and results with those in other works. Our continuity condition bears resemblance to the restrictions on utilities in \citet{camara2020mechanisms, collina2023efficient}: in fact, our conditions are conceptually stronger since we require Lipschitz continuity across \textit{all} pairs of actions rather only for near-optimal actions. However, Theorem \ref{thrm:dpL} is not directly comparable with the results in \citet{camara2020mechanisms, collina2023efficient} since we consider a stronger benchmark (the original Stackelberg benchmark) and also restrict to stochastic rewards.  An interesting direction for future work would be to relax the Lipschitz continuity assumptions in our work, perhaps borrowing intuition from the stable action requirement of \citet{collina2023efficient}.

\subsection{Weaker benchmark}\label{subsec:weakerbenchmark}
Finally, we will consider the case where utilities are allowed to be arbitrary ($L^*$ can be unbounded), but where we compete with weakened benchmarks, which we call \textit{self-$\gamma$-tolerant}. These benchmarks capture the case where the player is not only tolerant of suboptimality the other player, but also tolerant of their own suboptimality. We thus take a min over the $\epsilon$-best-response sets of \textit{both} players. 
\begin{definition}
\label{def:weakerrelaxedbenchmarks}
Given a maximum tolerance $\gamma > 0$, we define the \textit{self-$\gamma$-tolerant benchmarks}, $\benchmarkrelaxedweaker{1}$ and $\benchmarkrelaxedweaker{2}$, to be: 
\[\benchmarkrelaxedweaker{1} = \inf_{\epsilon \le \gamma} \Big(\min_{a \in \mathcal{A}_{\epsilon}}\min_{b \in \mathcal{B}_{\epsilon}(a)} \MR{1}{a}{b} + \epsilon \Big)\]
\[\benchmarkrelaxedweaker{2} = \inf_{\epsilon \le \gamma} \Big(\min_{a \in \mathcal{A}_{\epsilon}}\min_{b \in \mathcal{B}_{\epsilon}(a)} \MR{2}{a}{b} + \epsilon \Big).  \]   
\end{definition}

The tolerance of a player to their own suboptimality is the key difference from the $\gamma$-tolerant benchmarks from Section \ref{sec:benchmarks}. For the follower, the benchmarks behave similarly: for a given value of $\epsilon$, moving from $\max_{b \in \mathcal{B}}\MR{2}{a}{b}$ to $\min_{b \in \mathcal{B}_{\epsilon}(a)}\MR{2}{a}{b}$ 
differs by only an additive value of $\epsilon$. However for the leader, there is a conceptual difference: the value $\textbf{min}_{a \in \mathcal{A}_{\epsilon}} \min_{b \in \mathcal{B}_{\epsilon}(a)} \MR{1}{a}{b} + \epsilon$ is \textit{not} necessarily within $\epsilon$ of $\textbf{max}_{a \in \mathcal{A}} \min_{b \in \mathcal{B}_{\epsilon}(a)} \MR{1}{a}{b} + \epsilon $. This is because $\mathcal{A}_{\epsilon}$ includes any action $a$ that achieves high reward for \textit{some} (near-optimal) actions by the follower, even if the worst-case (near-optimal) action by the follower yields arbitrarily low reward for the leader. As an illustration, the ``hard'' instances specified in Table \ref{tab:main_4} with $\delta = \Theta(T^{-1/3})$ led to the $T^{2/3}$ regret bound. The self-tolerant benchmark $\benchmarkrelaxedweaker{1}$ reduces to $0.2 + \delta$ (rather than 0.5), so choosing $(a_1, b_2)$ no longer results in constant loss for the leader.

\begin{examplecontinued}
\label{example:selftolerantdetailed}
Let's again consider $\Instance$ in Table \ref{tab:main_2}, which we also used to illustrate the $\gamma$-tolerant benchmark. The minimum is again attained at $\epsilon = \delta$, but the benchmark values change to $\benchmarkrelaxedweaker{1} = 0.4+\delta$ and $\benchmarkrelaxedweaker{2} = 2 \cd \delta + \delta$. The intuition is that the self-$\gamma$-tolerance benchmark only requires each agent to compete with the \emph{worst} element within the product set $\mathcal{A}_{\delta} \times \mathcal{B}_{\delta}(a)$. Note that the resulting benchmark differs from the $\gamma$-tolerant benchmark for the follower only by $\delta$, but differs by $0.4$ (a constant) for the leader. We provide a detailed derivation of this example  along with diagrams illustrating richer examples
in \Cref{app:benchmarkdiscuss}. 
\end{examplecontinued}

For the self-tolerant benchmarks, we show it is possible to achieve $\tilde{O}(\sqrt{T |\mathcal{A}| |\mathcal{B}|})$ regret for both players (Theorem \ref{thm:UCBweaker}), which outperforms the $T^{2/3}$ lower bound for the stronger benchmark shown in Theorem \ref{thrm:dlower}. To demonstrate this is feasible,  we focus on $\WeakSetting$s, and we construct a specific pair of algorithms that achieve a $O(\sqrt{T})$ regret upper bound. For the follower, we take the algorithm $\ALG_2$ to be \texttt{ActiveArmElimination} (Algorithm \ref{algo:AAE}), which cycles through phases of exploration, after which all \textit{sufficiently suboptimal arms} are eliminated. For the leader, we construct a UCB-based algorithm $\PhasedUCB$ (Algorithm \ref{algo:phaseducb})  which constructs confidence bounds for every pair of actions  $(a,b)$.

\paragraph{$\PhasedUCB(M_1, \ldots, M_P)$ (Algorithm \ref{algo:phaseducb}).} The algorithm $\ALG_1 = \PhasedUCB(M_1, \ldots, M_P)$ takes as input the parameters $M_1, \ldots, M_P \ge 0$. (The parameter $M_i$ is intended to capture the number of times that an arm is pulled in phase $i$ by the instantiation of $\ActiveArmElimination$ specified by $\ALG_2$.) The algorithm $\ALG_1$ computes UCB estimates $\UCBMR{1}{a}{b}$ for $\MR{1}{a}{b}$, computes the set of active arms $\mathcal{B}'(a)$ in the previous phase of $\ALG_2$'s instantiation of $\ActiveArmElimination$ for each arm $a$ ($\ComputeQuantities$, Algorithm \ref{algo:computequantities}), and chooses the arm with maximum UCB: $a_t = \argmax_{a \in \mathcal{A}} \max_{b \in \mathcal{B}'(a)} \UCBMR{1}{a}{b}$. $\ComputeQuantities$ computes the active arms $\mathcal{B}'(a)$ by iterating through $\Hist$ and keeping track of whenever a new phase is entered using the parameters $M_1, \ldots, M_P$.

\begin{algorithm2e}[htbp]
\caption{$\PhasedUCB(M_1, \ldots, M_P)$ applied to $\Hist$}
\label{algo:phaseducb}
\DontPrintSemicolon
\LinesNumbered
Let $\hatMR{1}{a}{b} = 0$ for $a \in \mathcal{A}$ and $b \in \mathcal{B}$.\tcp*{initialize empirical mean of $\MR{1}{a}{b}$}
Let $\UCBMR{1}{a}{b}  = 1$ for $a \in \mathcal{A}$ and $b \in \mathcal{B}$. \tcp*{initialize UCB for $\MR{1}{a}{b}$} 
Let $\mathcal{B}'(a) = \ComputeQuantities(M_1, \ldots, M_P, \Hist)$. \tcp*{active arms in previous phase for $\ALG_2$}
\For{$a \in \mathcal{A}$}{
    \For{$b \in \mathcal{B}$}{
        Set $S(a,b) := \left\{r \mid \exists (t', a_{t'}, b_{t'}, r) \in \Hist \text{ s.t. } a = a_{t'}, b = b_{t'} \right\}$ \tcp*{observed rewards}
        \If{$S(a,b) \neq \emptyset$}{
            $\hatMR{1}{a}{b} \leftarrow (\sum_{r \in S(a,b)} r)/|S(a,b)|$ \tcp*{compute empirical mean}
            $\alpha(a,b) := 10  \cdot \sqrt{\frac{\log T}{|S(a,b)|}}$ \tcp*{confidence bound width}
            $\UCBMR{1}{a}{b} \leftarrow \min\left(1,  \hatMR{1}{a}{b} + \alpha(a,b) \right)$ \tcp*{compute UCB}  
        }
    }
}
Let $a^* = \argmax_{a \in \mathcal{A}} \max_{b \in \mathcal{B}'(a)} \left(\UCBMR{1}{a}{b} \right)$. \tcp*{arm with max upper confidence bound for any valid $b$} 
\Return{point mass at $a^i$} 
\end{algorithm2e}

\begin{algorithm2e}[htbp]
\caption{$\ComputeQuantities(M_1, \ldots, M_P, \Hist)$} 
\label{algo:computequantities}
\DontPrintSemicolon
\LinesNumbered
Initialize $s'(a) = 0$ for $a \in \mathcal{A}$. \tcp*{Index of the last completed phase for $\ALG_2$ on arm $a$.}
Initialize $t'(a) = 1$ for $a \in \mathcal{A}$. \tcp*{Time step marking beginning of phase $s'+1$ for $\ALG_2$ on arm $a$.}
Initialize $\mathcal{B}'(a) = \mathcal{B}$. \tcp*{Active arms in phase $s'$ for $\ALG_2$ on arm $a$.}
Initialize $newphase_a = \text{False}$ for $a \in \mathcal{A}$. \tcp*{Boolean for first time step in phase for $\ALG_2$ on  $a$.}
Let $t = |\Hist|$. \\
\For{$t'' = 1$ \KwTo $t$}{
    \For{$a \in \mathcal{A}$}{
        \For{$b \in \mathcal{B}$}{
            Let $n(a, b) := \left|\left\{(t'', a_{t''}, b_{t''}, \RR{1}{a_{t''}}{b_{t''}}{t''}) \in \Hist \mid a_{t''} = a, b_{t''} = b, t'' \ge t'_a \right\}\right|$. \\
            \If {$n(a,b) > M_{s'_a + 1}$}{
                $newphase_a = \text{True}$.
            }
        }
        \If{$newphase_a = \text{True}$}{
            Update $\mathcal{B}'(a) \leftarrow \left\{b \in \mathcal{B} \mid \exists (t'', a_{t''}, b_{t''}, \RR{1}{a_{t''}}{b_{t''}}{t''}) \in \Hist \text{ s.t. } t'_a \le t'' < t,  a_{t''} = a, b_{t''} = b \right\}$ \\
            Update $s'(a) \leftarrow s'(a) + 1$. \\
            Update $t'(a) \leftarrow t$. \\
            $newphase_a = \text{False}$.
        }
    }
}
\Return{$\left\{\mathcal{B}'(a)\right\}_{a \in \mathcal{A}}$}.
\end{algorithm2e}

We show that both players achieve $O(\sqrt{T})$ regret. For this result, we require that the $\gamma$ is not \textit{too small}: $\gamma =  \Omega(T^{-1/4} (|\mathcal{A}| \cd  |\mathcal{B}| \cdot \log T)^{1/2}))$ (see \Cref{appendix:regularizers} for a discussion).
\begin{restatable}{theorem}{UCBweaker}\label{thm:UCBweaker}
Consider a $\WeakSetting$, where for each $a \in \mathcal{A}$, the algorithm $\ALG_2$ runs a separate instantiation of $\ActiveArmElimination$ with parameters $M_1, \ldots, M_P$ (where $M_i = \Theta(\log T \cdot 2^{2i})$ denotes the number of times that each arm is pulled in phase $i$). Let $\ALG_1 = \PhasedUCB(M_1, \ldots, M_P)$. Then it holds that the regret with respect to the self-$\gamma$-tolerant benchmarks $\benchmarkrelaxedweaker{1}$ and $\benchmarkrelaxedweaker{2}$ is bounded as:
\[\max(R_1(T), R_2(T)) = O\left(\sqrt{|\mathcal{A}| \cdot |\mathcal{B}| \cdot T \cdot \log T}\right).\] 
\end{restatable}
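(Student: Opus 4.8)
I plan to condition on a high-probability good event and exploit two structural facts that the $\WeakSetting$ makes available: because the leader observes $b_t$, it maintains per-pair confidence bounds, and because it restricts attention to the follower's current active set via $\ComputeQuantities$, it never chases responses the follower has already discarded. First I would define a good event $G$ on which (i) every leader bound is valid, $\MR{1}{a}{b}\le\UCBMR{1}{a}{b}\le\MR{1}{a}{b}+2\alpha(a,b)$ for all pairs and all times, and (ii) each per-arm instantiation of $\ActiveArmElimination$ behaves correctly; a union bound over the $|\mathcal{A}|\cdot|\mathcal{B}|$ pairs and $T$ steps gives $\Pr[\neg G]=O(T^{-1})$, so $\neg G$ contributes only $O(1)$ to expected regret. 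On $G$, using the standard analysis of $\ActiveArmElimination$ (Proposition \ref{prop:AAE}) with the doubling schedule $M_i=\Theta(\log T\cdot 2^{2i})$, I would show that after arm $a$ has been pulled $\NumPullsOneArg{a}{t}$ times its active set obeys $\mathcal{B}'(a)\subseteq \mathcal{B}_{\epsilon_t(a)}(a)$ with $\epsilon_t(a)=O(\sqrt{|\mathcal{B}|\log T/\NumPullsOneArg{a}{t}})$, always contains the best response $b^*(a)$, and that for each such $b$ the leader width is calibrated to the same scale, $\alpha(a,b)=O(\epsilon_t(a))$.

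The heart of the argument is a per-step dichotomy driven by the worst-case-over-the-region shape of the self-$\gamma$-tolerant benchmark. Let $\epsilon^*\le\gamma$ (nearly) attain the infimum defining $\benchmarkrelaxedweaker{1}$. Because $b^*(a)$ is always active and bounds are valid, optimism against the safe arm $\bar a=\argmax_a\min_{b\in\mathcal{B}_{\epsilon^*}(a)}\MR{1}{a}{b}$ yields $\max_{b\in\mathcal{B}'(a_t)}\UCBMR{1}{a_t}{b}\ge \benchmarkrelaxedweaker{1}-\epsilon^*$ at every step. I then split on whether $a_t\in\mathcal{A}_{\epsilon_t(a_t)}$. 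In the good case $a_t\in\mathcal{A}_{\epsilon_t(a_t)}$: since $b_t\in\mathcal{B}'(a_t)\subseteq\mathcal{B}_{\epsilon_t(a_t)}(a_t)$, Definition \ref{def:weakerrelaxedbenchmarks} gives $\MR{1}{a_t}{b_t}\ge \min_{a\in\mathcal{A}_{\epsilon_t(a_t)}}\min_{b\in\mathcal{B}_{\epsilon_t(a_t)}(a)}\MR{1}{a}{b}\ge \benchmarkrelaxedweaker{1}-\epsilon_t(a_t)$, so the instantaneous leader regret is at most $\epsilon_t(a_t)$. The key point is that this holds \emph{even when the realized utility is small}, because on such arms the benchmark is itself small — this is precisely the slack the self-tolerant benchmark buys over the $\gamma$-tolerant one, and is what makes $O(\sqrt T)$ attainable. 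The symmetric statement for the follower is that, in the good case, $\max_b\MR{2}{a_t}{b}\ge \benchmarkrelaxedweaker{2}-\epsilon_t(a_t)$, and the gap to $\MR{2}{a_t}{b_t}$ is then controlled by the follower's high-probability anytime regret.

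Summing the good-case instantaneous regret over the horizon is routine. The leader's contribution is $\sum_t\epsilon_t(a_t)=O\p{\sqrt{|\mathcal{B}|\log T}\,\sum_t \NumPullsOneArg{a_t}{t}^{-1/2}}$, and $\sum_t \NumPullsOneArg{a_t}{t}^{-1/2}\le \sum_a 2\sqrt{\NumPullsOneArg{a}{T}}\le 2\sqrt{|\mathcal{A}|\,T}$ by Cauchy--Schwarz, giving $O(\sqrt{|\mathcal{A}|\,|\mathcal{B}|\,T\log T})$. The follower's best-response regret $\sum_t(\max_b\MR{2}{a_t}{b}-\MR{2}{a_t}{b_t})$ equals $\sum_a h(\NumPullsOneArg{a}{T},T,\mathcal{B})=O\p{\sqrt{|\mathcal{B}|\log T}\,\sum_a\sqrt{\NumPullsOneArg{a}{T}}}$, again $O(\sqrt{|\mathcal{A}|\,|\mathcal{B}|\,T\log T})$ by Cauchy--Schwarz; together with the $\epsilon_t(a_t)$ terms this bounds both players' regret on good-case steps.

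The main obstacle — and where I would spend essentially all the care — is the remaining case $a_t\notin\mathcal{A}_{\epsilon_t(a_t)}$, where the realized pair can drop below the benchmark and a single step may cost $\Theta(1)$; I must show these steps are rare. When $a_t\notin\mathcal{A}_{\epsilon_t(a_t)}$, the definition of $\mathcal{A}_\epsilon$ forces $\max_{b\in\mathcal{B}_{\epsilon_t(a_t)}(a_t)}\MR{1}{a_t}{b}<\max_{a}\min_{b\in\mathcal{B}_{\epsilon_t(a_t)}(a)}\MR{1}{a}{b}-\epsilon_t(a_t)$, so no leader-favorable response survives inside the follower's active set; combined with $\mathcal{B}'(a_t)\subseteq\mathcal{B}_{\epsilon_t(a_t)}(a_t)$ and $\alpha=O(\epsilon_t(a_t))$, the leader's optimistic value for $a_t$ is driven below that of the safe arm once the relevant follower-gap is resolved. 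The delicate point is that the follower only discards a leader-favorable-but-follower-suboptimal response after a number of pulls inversely proportional to the square of its follower-gap, and the leader keeps $a_t$'s optimistic value high (hence keeps pulling $a_t$) until that response leaves $\mathcal{B}'(a_t)$. I would bound the number of such ``chasing'' pulls per arm by $O(|\mathcal{B}|\log T/\gamma^2)$, invoking the hypothesis $\gamma=\Omega(T^{-1/4}(|\mathcal{A}|\,|\mathcal{B}|\log T)^{1/2})$ exactly so that the total $O(|\mathcal{A}|\,|\mathcal{B}|\log T/\gamma^2)$ is $O(\sqrt T)=O(\sqrt{|\mathcal{A}|\,|\mathcal{B}|\,T\log T})$. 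Adding this bad-step count (times its $O(1)$ per-step cost) to the good-step sums, and accounting for $\neg G$, yields $\max(R_1(T),R_2(T))=O(\sqrt{|\mathcal{A}|\,|\mathcal{B}|\,T\log T})$. Reconciling the follower's elimination schedule with the leader's active-set-restricted optimism and the moving tolerance $\epsilon_t(a)$ is the crux of the whole proof.
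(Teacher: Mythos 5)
Your scaffolding---the clean event, the containments $\mathcal{B}'(a)\subseteq\mathcal{B}_{\epsilon_t(a)}(a)$ with $b^*(a)$ never eliminated, the calibration of confidence widths to $\epsilon_t(a)=\Theta\bigl(\sqrt{|\mathcal{B}|\log T/\NumPullsOneArg{a}{t}}\bigr)$ via the phase lengths, and summing $\sum_t\epsilon_t(a_t)$ by Cauchy--Schwarz---is exactly the paper's argument. But your case split misplaces the one delicate step. Your ``bad case'' $a_t\notin\mathcal{A}_{\epsilon_t(a_t)}$ is \emph{vacuous} on the good event: the optimism chain you only invoke in the good case already proves membership at every step. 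Concretely (this is the paper's Lemma \ref{lemma:UCBweakermain}),
\[\max_{a'\in\mathcal{A}}\min_{b'\in\mathcal{B}_{\epsilon_t}(a')}\MR{1}{a'}{b'}\;\le\;\MR{1}{a^*}{b^*}\;\le\;\max_{b\in\mathcal{B}'(a^*)}\UCBMR{1}{a^*}{b}\;\le\;\max_{b\in\mathcal{B}'(a_t)}\UCBMR{1}{a_t}{b}\;\le\;\max_{b\in\mathcal{B}'(a_t)}\MR{1}{a_t}{b}+\epsilon_t,\]
where the last step uses that every active response under $a_t$ has at least $\sum_{i\le s}M_i$ pulls while $\NumPullsOneArg{a_t}{t}=O\bigl(|\mathcal{B}|\sum_{i\le s}M_i\bigr)$; combined with $\mathcal{B}'(a_t)\subseteq\mathcal{B}_{\epsilon_t}(a_t)$, this is precisely $a_t\in\mathcal{A}_{\epsilon_t(a_t)}$. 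Moreover, your budget for the ``chasing'' pulls is unsound as stated: the chased leader-favorable response has some follower-gap $\eta$ that can be arbitrarily smaller than $\gamma$, so $O(|\mathcal{B}|\log T/\gamma^2)$ pulls per arm does not follow. The correct resolution is that while such a response remains active it certifies $a_t\in\mathcal{A}_{\epsilon_t(a_t)}$, so those steps are good-case steps costing $\epsilon_t(a_t)$ each, not $\Theta(1)$.

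The actual gap is elsewhere: your good-case inequality $\MR{1}{a_t}{b_t}\ge\benchmarkrelaxedweaker{1}-\epsilon_t(a_t)$ instantiates the infimum in Definition \ref{def:weakerrelaxedbenchmarks} at $\epsilon=\epsilon_t(a_t)$, which is only legitimate when $\epsilon_t(a_t)\le\gamma$. Early pulls of each arm have $\epsilon_t(a_t)>\gamma$, and each such step can cost $\Theta(1)$ against the benchmark; this is where the hypothesis $\gamma=\Omega\bigl(T^{-1/4}\sqrt{|\mathcal{A}|\,|\mathcal{B}|\log T}\bigr)$ is actually consumed. The number of such steps is $O(|\mathcal{A}|\,|\mathcal{B}|\log T/\gamma^2)=O\bigl(\sqrt{|\mathcal{A}|\,|\mathcal{B}|\,T\log T}\bigr)$---exactly the arithmetic you prepared, attached to the wrong event. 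Relocating that accounting from your empty bad case to the steps with $\epsilon_t(a_t)>\gamma$, and dropping the unnecessary anytime-regret detour for the follower (on the good event the one-line bound $\MR{2}{a_t}{b_t}\ge\benchmarkrelaxedweaker{2}-\epsilon_t(a_t)$ applies directly, since $(a_t,b_t)\in\mathcal{A}_{\epsilon_t}\times\mathcal{B}_{\epsilon_t}(a_t)$; the paper's clean event only needs the follower's \emph{instantaneous} regret guarantee), recovers the paper's proof.
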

\begin{proof}[Proof sketch for Theorem \ref{thm:UCBweaker}]
The intuition is that the benchmark allows the leader to choose any $a \in \mathcal{A}_{\epsilon}$. The definition of $\mathcal{A}_{\epsilon}$ means that the leader can take an optimistic perspective on the follower's choice of action $\mathcal{B}_{\epsilon}(a)$ (and not have to prepare for the worst-case action in $\mathcal{B}_{\epsilon}(a)$). This optimistic perspective surfaces in $\PhasedUCB$ in terms of how the leader evaluates an action $a$ based on the maximum UCB $\max_{b \in \mathcal{B}'(a)} \UCBMR{1}{a}{b}$ across  all active arms $b \in \mathcal{B}'(a)$. To analyze this pair of algorithms, we show a bound $\epsilon_t$ for each time step $t$ such that $a_t$ is an $\epsilon_t$-best-response for the leader and $b_t$ is an $\epsilon_t$-best-response for the follower: the main lemma (Lemma \ref{lemma:UCBweakermain}) shows that we can set $\epsilon_t$ to be $\Theta(\sqrt{|\mathcal{B} \cdot \log T / \NumPullsOneArg{a_t}{t}})$ where $\NumPullsOneArg{a_t}{t}$ is the number of times that arm $a_t$ has been pulled prior to time step $t$.  The full proof is deferred to \Cref{appendix:proofUCBweaker}.
\end{proof}

The regret bound in Theorem \ref{thm:UCBweaker} is nearly optimal, as we show in the following $\Omega(\sqrt{T |\mathcal{A}| \cd |\mathcal{B}|})$ lower bound for self-$\gamma$-tolerant benchmarks, which holds for any maximum tolerance $\gamma \le 1$. 
\begin{restatable}{proposition}{lowerboundsqrtt}\label{prop:lowerboundsqrtt}
Consider $\StrongSetting$s or $\WeakSetting$s with actions sets $\mathcal{A}$ and $\mathcal{B}$ such that $|\mathcal{A}| \ge 2$ and $|\mathcal{B}| \ge 2$. For any algorithms $\ALG_1$ and $\ALG_2$, there exists an instance $\Instance^* = (\mathcal{A}, \mathcal{B}, v_1, v_2)$ such that at least one of the players incurs $\Omega(\sqrt{T \cd (\abs{\mathcal{A}} - 1) \cd \abs{\mathcal{B}}})$ regret with respect to the self-$\gamma$-tolerant benchmarks $\benchmarkrelaxedweaker{1}$ and $\benchmarkrelaxedweaker{2}$, that is: $\max(R_1(T; \Instance^*), R_2(T; \Instance^*)) = \Omega(\sqrt{T \cd (\abs{\mathcal{A}} - 1) \cd \abs{\mathcal{B}}})$. 
\end{restatable}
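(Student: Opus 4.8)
The plan is to combine the centralized reduction (Lemma~\ref{lemma:centralizedimpliesdecentralized}) with a textbook multi-armed bandit lower bound, embedded into a family of Stackelberg instances whose self-$\gamma$-tolerant benchmarks collapse to the value of a single hidden optimal action pair. By the reduction it suffices to lower-bound the regret of an arbitrary \emph{centralized} learner that selects pairs $(a,b) \in \A \times \B$ and observes both players' rewards. I would take the two players' utilities to be aligned, $v_1 = v_2 =: v$, so that $R_1(T;\Instance) = R_2(T;\Instance)$ and bounding a single player's regret suffices. Reserve a reference leader action $a_1$ and set $v(a_1, b) = 1/2$ for all $b \in \B$. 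For each pair $p = (a^*, b^*)$ with $a^* \in \A \setminus \{a_1\}$ and $b^* \in \B$---there are $M := (|\A|-1)\,|\B|$ such pairs---define $\Instance_p$ by setting $v(a^*, b^*) = 1/2 + \delta$ and $v(a,b) = 1/2$ on every other pair, for $\delta = \Theta(\sqrt{M/T})$; let $\Instance_0$ be the base instance with $v \equiv 1/2$.

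The key non-routine step is to verify that $\benchmarkrelaxedweaker{1} = \benchmarkrelaxedweaker{2} = 1/2 + \delta$ on each $\Instance_p$. For $\epsilon < \delta$, the follower's $\epsilon$-best-response set to $a^*$ is the singleton $\{b^*\}$ while $\mathcal{B}_\epsilon(a) = \B$ for $a \neq a^*$; one checks that this forces $\mathcal{A}_\epsilon = \{a^*\}$, so the inner term $\min_{a \in \mathcal{A}_\epsilon}\min_{b \in \mathcal{B}_\epsilon(a)} v(a,b)$ equals $v(a^*, b^*) = 1/2 + \delta$. For $\epsilon \ge \delta$ the regularizer alone already contributes at least $\delta$, so the infimum over $\epsilon \le \gamma$ is attained as $\epsilon \to 0$ and equals $1/2 + \delta$ for every $\gamma \ge 0$ (in particular for all $\gamma \le 1$). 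Consequently any learner's per-step regret on $\Instance_p$ is exactly $\delta$ whenever it does not play $p$, giving $R_i(T; \Instance_p) = \delta\,(T - \mathbb{E}_p[N_p])$, where $N_p$ is the number of pulls of the pair $p$.

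It then remains to run the standard change-of-measure argument over the family $\{\Instance_p\}_p$. Since $\Instance_0$ and $\Instance_p$ differ only at $p$ and the per-pull KL of two unit-variance Gaussians with gap $\delta$ is $\delta^2/2$, the divergence decomposition gives $\mathrm{KL}(\mathbb{P}_0 \| \mathbb{P}_p) = (\delta^2/2)\,\mathbb{E}_0[N_p]$, and Pinsker's inequality yields $\mathbb{E}_p[N_p] \le \mathbb{E}_0[N_p] + (T\delta/2)\sqrt{\mathbb{E}_0[N_p]}$. Averaging the regret over the $M$ instances, bounding $\sum_p \mathbb{E}_0[N_p] \le T$ (the hidden pairs are disjoint arms) and $\sum_p \sqrt{\mathbb{E}_0[N_p]} \le \sqrt{MT}$ by Cauchy--Schwarz, and tuning the constant in $\delta = \Theta(\sqrt{M/T})$ (which is valid since $M \ge 2$ under $|\A|,|\B| \ge 2$, so $\delta < 1/2$ for large $T$), produces an average---hence worst-case---regret of $\Omega(\sqrt{MT}) = \Omega(\sqrt{(|\A|-1)\,|\B|\,T})$. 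Because the utilities are aligned, this simultaneously lower-bounds $\max(R_1, R_2)$, and the centralized reduction transfers the bound to $\StrongSetting$s and $\WeakSetting$s.

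The main obstacle is the benchmark computation rather than the information-theoretic core: unlike a plain bandit lower bound, I must confirm that the nested $\min$ over $\mathcal{A}_\epsilon$ and $\mathcal{B}_\epsilon(a)$ together with $\inf_{\epsilon \le \gamma}$ collapses to the value of the single hidden pair, so that the family genuinely behaves like a clean $M$-armed bandit. This is precisely where the \emph{self}-tolerance matters: the extra $\min$ over $\mathcal{A}_\epsilon$ removes the worst-case-follower penalty that drove the $T^{2/3}$ bound of Theorem~\ref{thrm:dlower}, leaving a standard $\sqrt{T}$ difficulty. Reserving the action $a_1$ is what yields the $(|\A|-1)$ rather than $|\A|$ factor; hiding the optimum among all $|\A|\,|\B|$ pairs would give a marginally stronger bound, but the reserved-action family matches the stated bound exactly and keeps the benchmark verification transparent.
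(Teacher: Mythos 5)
Your proof is correct, and its skeleton matches the paper's: reduce to the centralized learner via Lemma~\ref{lemma:centralizedimpliesdecentralized}, take aligned utilities $v_1=v_2$ so that $R_1=R_2$ and one regret bound suffices, hide a single $\delta$-better pair among $(|\A|-1)\cdot|\B|$ candidate pairs with a reserved baseline row, and verify that $\benchmarkrelaxedweaker{1}=\benchmarkrelaxedweaker{2}$ collapses to the hidden pair's value for every tolerance $\gamma$ --- your benchmark computation ($\mathcal{A}_\epsilon=\{a^*\}$ and $\mathcal{B}_\epsilon(a^*)=\{b^*\}$ for $\epsilon<\delta$, regularizer dominating for $\epsilon\ge\delta$) is sound and plays the same role as the paper's case analysis. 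Where you genuinely diverge is in the two finishing ingredients. First, the instance values: the paper sets the reserved row to $\delta$, all other non-hidden entries to $0$, and the hidden pair to $2\delta$, so that even the base instance carries regret (the learner must find row $a_1$) and the lower bound can be extracted from just \emph{two} instances; your all-$1/2$ base instance is regret-free and serves purely as a reference measure. Second, the information-theoretic engine: the paper applies the Bretagnolle--Huber inequality (Theorem~\ref{thm:bhinequality}) to the base instance and the \emph{least-pulled} perturbed instance $\Instance_{(a_m,b_m)}$, whereas you run the Auer-style averaged argument with Pinsker over the whole family. Both are standard and yield the same $\Omega\bigl(\sqrt{T\,(|\A|-1)\,|\B|}\bigr)$ rate; the averaging route avoids selecting a minimizing arm but requires the Cauchy--Schwarz bookkeeping, while the two-point route needs no averaging and degrades more gracefully when $\mathrm{KL}$ is large. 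One small slip, affecting constants only: in the centralized model the learner observes \emph{both} players' rewards, and since $v_1=v_2$ both channels shift at the hidden pair, so the per-pull divergence is $\delta^2$ rather than $\delta^2/2$ (this is exactly the sum over players in Corollary~\ref{cor:divergencedecompositiongaussian}); your choice $\delta=\Theta(\sqrt{M/T})$ absorbs this after retuning the constant. Your closing observation is also right: your construction would go through with the optimum hidden among all $|\A|\cdot|\B|$ pairs, giving a marginally stronger bound than the stated one, which the paper's specific value structure (where the reserved row is strictly better than the other base rows) does not permit as directly.
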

\noindent Taken together, Theorem \ref{thm:UCBweaker} and Proposition \ref{prop:lowerboundsqrtt} demonstrate the self-$\gamma$-tolerant benchmarks lead to $\tilde{\Theta}(\sqrt{T |\mathcal{A}| \mathcal{B}|})$ regret bounds for each player.

We note that Theorem \ref{thm:UCBweaker} requires  the follower to run a specific algorithm: this contrasts with our results for the $\gamma$-tolerant benchmark (Theorem \ref{thm:explorethenucb}) and the Lipschitz benchmark (Theorem \ref{thrm:dpL}) which allowed for greater flexibility in the follower algorithm. An interesting direction for future work would be to design a leader algorithm for the self-$\gamma$-tolerant benchmark that permits a richer family of follower behaviors.

\section{Discussion of Benchmark Parameters}\label{appendix:regularizers}

Our relaxed benchmarks---the $\gamma$-tolerant benchmarks (Definition \ref{def:relaxedbenchmarks}) and the self-$\gamma$-tolerant benchmarks (Definition \ref{def:weakerrelaxedbenchmarks})---depend on two parameters: (1) the maximum tolerance $\gamma$ and (2) the $\epsilon$-regularizer. In this section, we discuss the role of each parameter and describe extensions of our results to alternate settings of these parameters. 

\subsection{Maximum Tolerance $\gamma$}\label{subsec:maxtol}

The value $\gamma$ intuitively captures the players' maximum tolerance for suboptimality. Taking $\gamma$ to be small makes our benchmarks more challenging, because it reduces the space of permissible suboptimality levels $\epsilon$ over which the infimum is taken. In contrast, taking $\gamma$ to be large can make our benchmarks \textit{too easy}: for example, consider Table \ref{tab:highgamma}, which shows a case where setting $\gamma = 0.05$ reduces the benchmark for the follower, but the instance has rewards that are sufficiently far apart that for large $T$ the Stackelberg equilibrium should intuitively be learnable. 

\begin{table}[]
\centering 
\begin{tabular}{|c|c|c|}
\hline
      & $b_1$       & $b_2$       \\ \hline
$a_1$ & (0.6, 0.05) & (0.2, 0.1)  \\ \hline
$a_2$ & (0.5, 0.2)  & (0.4, 0.15) \\ \hline
\end{tabular}
\caption{Taking $\gamma$ to be too small makes the benchmark too easy: for $\gamma=0$, we have $\benchmarkrelaxedstronger{1} = 0.5, \benchmarkrelaxedstronger{2} = 0.2$, but for $\gamma = 0.05$ we have $\benchmarkrelaxedstronger{1}= 0.5$ and $\benchmarkrelaxedstronger{2} = 0.15$ (see Section \ref{subsec:maxtol})}
\label{tab:highgamma}
\end{table}

We briefly discuss how our results extend to different maximum tolerances $\gamma$. First, we prove our lower bounds (Theorem \ref{thrm:dlower}, Proposition \ref{prop:lowerboundsqrtt}) for the ``hardest case'' of $\gamma = 1$, which means that these lower bounds hold for \emph{all} maximum tolerances $\gamma$. 

On the other hand, our upper bounds require sufficiently large $\gamma$. For some intuition, all of our analyses require that $\gamma = \omega(1/\sqrt{T})$, since followers with high-probability instantaneous regret rates of $\Theta(\sqrt{\abs{\mathcal{B}} \cd \log(T)/t})$ require $\Omega(T)$ rounds to find a $O(1/\sqrt{T})$-optimal solution. As to what specific values of $\gamma$ that each result requires, Theorems \ref{thm:ETCETC} and \ref{thm:explorethenucb} hold for any $\gamma = \omega\p{T^{-1/3}\abs{\mathcal{A}}^{1/3} \abs{\mathcal{B}}^{1/3} \cd (\log (T)^{1/3})}$, while Theorem \ref{thm:UCBweaker} assumes that $\gamma = \Omega\left( T^{-1/4} \sqrt{|\mathcal{A}|  |\mathcal{B}| \cdot \log T} \right)$.

\subsection{$\epsilon$-Regularizer}

Since the $\epsilon$-regularizer adds an implicit penalty for increasing $\epsilon$ in the benchmark, a natural question is how our benchmark would change if we changed the regularizer from $\epsilon$ to other functional forms $f(\epsilon)$. To provide some preliminary intuition for this, we consider $f(\epsilon) = c \cd \epsilon^d$ regularizer, which leads to the following generalized $\gamma$-tolerant benchmarks.
\begin{definition}[Generalization of Definition \ref{def:relaxedbenchmarks}]
\label{def:relaxedbenchmarksgeneralized}
Given a maximum tolerance $\gamma > 0$ and parameters $c > 0$, and $d > 0$, we define the \textit{generalized $(c, d, \gamma)$-tolerant benchmarks} $\benchmarkrelaxedstronger{1}$ and $\benchmarkrelaxedstronger{2}$ to be: 
\[\benchmarkrelaxedstronger{1} = \inf_{\epsilon \le \gamma} \Big(\underbrace{\max_{a \in \mathcal{A}} \min_{b \in \mathcal{B}_{\epsilon}(a)} \MR{1}{a}{b}}_{\text{$\epsilon$-relaxed Stackelberg utility}} + \underbrace{c \cd \epsilon^d}_{\text{$\epsilon$-regularizer}} \Big)\]
\[\benchmarkrelaxedstronger{2} = \inf_{\epsilon \le \gamma} \Big(\underbrace{\min_{a \in \mathcal{A}_{\epsilon}} \max_{b \in \mathcal{B}} \MR{2}{a}{b}}_{\text{$\epsilon$-relaxed Stackelberg utility}} + \underbrace{c \cd \epsilon^d}_{\text{$\epsilon$-regularizer}} \Big).  \]   
\end{definition}
At a conceptual level, different settings of $c$ and $d$ capture different levels of tolerance that a player has for sub-optimality in the other player. Higher values of $c$ and smaller values of $d$ capture greater intolerance, and thus lead to harsher penalties. The resulting changes in the benchmarks capture that if a player is less tolerant, we might expect them to experience a higher regret for a given suboptimality level of the other player. 

We show how our two main upper bounds in Section \ref{sec:relaxed} generalize to these new benchmarks, focusing on the case of $c \ge 1$ and $d \le 1$ (where the benchmark becomes harder). We first show the following generalization of Theorem \ref{thm:ETCETC}  by adjusting the explore phase length to depend on $c$ and $d$. 
\begin{restatable}{theorem}{etcetcregularizer}
\label{thm:ETCETCregularizer} 
Suppose that $c \ge 1$ and $d \le 1$, and let $\eta := 2/(2+d)$. Consider a $\StrongSetting$, where the follower runs a separate instantiation of $\ExploreThenCommit(E_2, \mathcal{B})$ for every $a \in \mathcal{A}$, and the leader runs $\ExploreThenCommitThrowOut(E_1, E_2 \cdot |\mathcal{B}|, \mathcal{A})$. If $E_2 = \Theta(|\mathcal{A}|^{-\eta} |\mathcal{B}|^{-\eta} \cdot (\log T)^{1 - \eta} (c\cd T)^{\eta})$, and $E_1 = \Theta(|\mathcal{A}|^{-\eta} \cdot (\log T)^{1 - \eta} (c\cd T)^{\eta})$, then the leader and follower regret with respect to the generalized $(c, d, \gamma)$-tolerant benchmarks are both at most:
\[\max(R_1(T), R_2(T)) = O\big( (|\mathcal{A}|\cd |\mathcal{B}| \cd (\log T))^{1-\eta} \cd (c \cdot T)^{\eta} \big). \]
\end{restatable}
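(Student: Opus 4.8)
The plan is to follow the same three-phase analysis as in the proof of Theorem \ref{thm:ETCETC}, re-deriving the per-round commit regret in terms of the new regularizer $c \cdot \epsilon^d$ and then re-optimizing the exploration lengths $E_1, E_2$. As a first step I would set up the high-probability ``good event'': since the follower runs a separate $\ExploreThenCommit(E_2, \mathcal{B})$ on each arm $a$, and the leader plays each $a$ exactly $E_2 |\mathcal{B}|$ times during its throw-out phase, each follower instantiation finishes its exploration during the throw-out phase and commits to an arm $\hat b(a) \in \mathcal{B}_{\epsilon_2}(a)$ with $\epsilon_2 = \Theta(\sqrt{\log T / E_2})$; simultaneously, the leader's empirical means $\hat v_1(a)$, estimated over $E_1$ clean samples per arm during the explore phase (in which the follower is frozen at $\hat b(a)$), are accurate to $\epsilon_1 = \Theta(\sqrt{\log T / E_1})$. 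A union bound over the $|\mathcal{A}|$ instantiations and arms establishes this event with probability $1 - \mathrm{poly}(|\mathcal{A}|)\, T^{-3}$, and on its complement regret is at most $T$, contributing a negligible additive term.

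Next I would decompose each player's regret into the throw-out phase ($E_2 |\mathcal{A}| |\mathcal{B}|$ rounds), the explore phase ($E_1 |\mathcal{A}|$ rounds), and the commit phase. The first two phases contribute at most $O(E_2 |\mathcal{A}| |\mathcal{B}| + E_1 |\mathcal{A}|)$ regret since rewards lie in $[0,1]$. The heart of the argument is the commit phase, where the leader plays a fixed $a^\dagger = \argmax_a \hat v_1(a)$ and the follower plays $\hat b(a^\dagger) \in \mathcal{B}_{\epsilon_2}(a^\dagger)$. Exactly as in Theorem \ref{thm:ETCETC}, I would show $\MR{1}{a^\dagger}{\hat b(a^\dagger)} \ge \max_a \min_{b \in \mathcal{B}_{\epsilon_2}(a)} \MR{1}{a}{b} - 2\epsilon_1$, and that $a^\dagger \in \mathcal{A}_{\epsilon^\star}$ for $\epsilon^\star := \max(\epsilon_2, 2\epsilon_1)$ (using monotonicity of $\mathcal{B}_\epsilon$ and of the relaxed Stackelberg value in $\epsilon$). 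Evaluating the \emph{generalized} benchmarks of Definition \ref{def:relaxedbenchmarksgeneralized} at $\epsilon = \epsilon^\star$ --- which requires $\epsilon^\star \le \gamma$, the source of the mild lower bound on $\gamma$ --- then yields per-round commit regret at most $c \cdot \epsilon_2^d + 2\epsilon_1$ for the leader and at most $c \cdot (\epsilon^\star)^d + \epsilon_2$ for the follower.

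Putting the phases together, each player's regret is bounded by $O\!\big(E_2 |\mathcal{A}| |\mathcal{B}| + E_1 |\mathcal{A}| + T c (\epsilon^\star)^d + T(\epsilon_1 + \epsilon_2)\big)$. Using $d \le 1$ I would invoke subadditivity, $(\epsilon^\star)^d \le \epsilon_2^d + 2\epsilon_1^d$, and $c \ge 1$ to absorb the unregularized terms $T\epsilon_1, T\epsilon_2$ into $Tc\epsilon_1^d, Tc\epsilon_2^d$. Substituting $\epsilon_2 = \Theta(\sqrt{\log T/E_2})$ and $\epsilon_1 = \Theta(\sqrt{\log T/E_1})$ leaves the two balances $E_2 |\mathcal{A}| |\mathcal{B}| \asymp T c (\log T / E_2)^{d/2}$ and $E_1 |\mathcal{A}| \asymp Tc (\log T/E_1)^{d/2}$. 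Solving each (i.e.\ turning $E^{1+d/2}|\mathcal{A}||\mathcal{B}| \asymp cT(\log T)^{d/2}$ into the $\eta$-form) gives precisely $E_2 = \Theta(|\mathcal{A}|^{-\eta} |\mathcal{B}|^{-\eta} (\log T)^{1-\eta} (cT)^\eta)$ and $E_1 = \Theta(|\mathcal{A}|^{-\eta} (\log T)^{1-\eta} (cT)^\eta)$ with $\eta = 2/(2+d)$, and the dominant (throw-out / follower-commit) term evaluates to $O\big((|\mathcal{A}||\mathcal{B}|\log T)^{1-\eta}(cT)^\eta\big)$, with the $E_1$ terms smaller by a factor $|\mathcal{B}|^{1-\eta}$.

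I expect the main obstacle to be the commit-phase benchmark comparison --- specifically verifying $a^\dagger \in \mathcal{A}_{\epsilon^\star}$ and correctly tracking how the single tolerance parameter $\epsilon^\star$ must simultaneously bound the follower's residual suboptimality $\epsilon_2$ and (twice) the leader's estimation error $\epsilon_1$ inside the definition of $\mathcal{A}_\epsilon$. The regularizer generalization is then a routine re-optimization; the only genuinely new ingredient relative to Theorem \ref{thm:ETCETC} is the exponent arithmetic producing the $\eta$-form, together with the $c \ge 1$, $d \le 1$ conventions that let the regularized terms dominate their unregularized counterparts.
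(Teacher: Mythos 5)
Your proposal is correct and takes essentially the same route as the paper's proof: the paper likewise reuses the clean event (Lemma \ref{lemma:cleanETCETC}) and the commit-phase guarantee (Lemma \ref{lemma:mainlemmaETCETC}) from Theorem \ref{thm:ETCETC}, evaluates the generalized benchmark at the resulting tolerance (your separate $\epsilon_1, \epsilon_2$ bookkeeping is a slightly finer-grained version of its single $\epsilon^* = \Theta\big(\max\big(\sqrt{\log T / E_1}, \sqrt{\log T / E_2}\big)\big)$), absorbs the unregularized $T\epsilon^*$ term into $T c (\epsilon^*)^d$ using $c \ge 1$ and $d \le 1$, and re-balances $E_1, E_2$ against $T c (\epsilon^*)^d$ to obtain the $\eta$-form. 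Your exponent arithmetic (including $\eta d/2 = 1 - \eta$, so the dominant terms all evaluate to $(|\mathcal{A}||\mathcal{B}|\log T)^{1-\eta}(cT)^\eta$ with the $E_1$ terms smaller by $|\mathcal{B}|^{1-\eta}$) matches the paper exactly.
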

We next show the following generalizations of Theorem \ref{thm:explorethenucb}  by again adjusting the explore phase length to depend on $c$ and $d$. Like in Theorem \ref{thm:explorethenucb}, the assumptions on the follower's algorithm in this result are satisfied by standard algorithms such as $\ActiveArmElimination$ (Algorithm \ref{algo:AAE}; Proposition \ref{prop:AAE}) and $\ExploreThenCommit$ (Algorithm \ref{algo:explorethencommit}; Proposition \ref{prop:ETC}). 
\begin{restatable}{theorem}{explorethenucbregularizer}
\label{thm:explorethenucbregularizer}
Suppose that $c \ge 1$ and $d \le 1$, and let $\eta := 2/(2+d)$. Let $E = \Theta( |\mathcal{A}|^{-\eta} (|\mathcal{B}| \log T)^{1 - \eta} (c \cd T)^{\eta})$. Consider a $\StrongSetting$ where $\ALG_{2}$ is any algorithm with high-probability instantaneous regret \\
$g(t, T, \mathcal{B}) = O\left( (|\mathcal{A}| \cd |\mathcal{B}| \cd \log T)^{\eta/2} \cdot (c \cdot T)^{-\eta/2} \right)$ for $t > E$ and $g(t, T, \mathcal{B}) = 1$ for $t \le E$, and where $\ALG_1 = \ExploreThenUCB(E)$. Then, then the leader and follower regret with respect to the generalized $(c, d, \gamma)$-tolerant benchmarks are both bounded as: 
\[\max(R_1(T), R_2(T)) = O((|\mathcal{A}| \cdot |\mathcal{B}| \cdot (\log T))^{1 - \eta} \cdot (c \cdot T)^{\eta}).\]
\end{restatable}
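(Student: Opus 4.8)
The plan is to reuse the proof of Theorem \ref{thm:explorethenucb} almost verbatim: the leader's algorithm $\ExploreThenUCB(E)$ is identical, and only the explore length $E$, the follower's instantaneous-regret requirement, and the regularizer in the benchmark change. The single genuinely new step is the re-optimization of $E$ to balance the cost of exploring against the regularizer penalty, and the exponent $\eta = 2/(2+d)$ will emerge as exactly this balancing exponent.

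First I would set $\epsilon^* := g(E, T, \B)$, the follower's high-probability instantaneous regret once an arm has been pulled $E$ times. For the stated $E$ this equals $\Theta(\sqrt{\abs{\B}\log T / E}) = \Theta\p{(\abs{\A}\abs{\B}\log T)^{\eta/2}(c\cd T)^{-\eta/2}}$, matching the hypothesis on $\ALG_2$. Conditioning on the high-probability event (probability at least $1 - \abs{\A}T^{-3}$, together with the analogous event controlling the leader's confidence bounds) that this regret bound holds, every follower action $b_t$ played after the explore phase lies in $\B_{\epsilon^*}(a_t)$, since each arm has then been pulled at least $E$ times. The central lemma of Theorem \ref{thm:explorethenucb}, Lemma \ref{lemma:UCBbound}, now transfers without modification: whenever the leader pulls arm $a$ in the UCB phase, its empirical mean satisfies $\hatMROneArg{1}{a} \ge \max_{a'\in\A}\min_{b\in\B_{\epsilon^*}(a')}\MR{1}{a'}{b} - \Theta\p{\sqrt{\log T / \NumPullsTwoTimesOneArg{a}{E\cd\abs{\A}}{t}}}$. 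This lemma rests only on the confidence-bound geometry and on the follower $\epsilon^*$-best-responding, not on the form of the regularizer, so its proof is unchanged.

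Next I would bound the leader's regret by evaluating the infimum defining $\benchmarkrelaxedstronger{1}$ at $\epsilon = \epsilon^*$, giving $\benchmarkrelaxedstronger{1} \le \max_{a\in\A}\min_{b\in\B_{\epsilon^*}(a)}\MR{1}{a}{b} + c\cd(\epsilon^*)^d$. The regret $R_1(T)$ then decomposes into (i) the explore phase, costing $O\p{E\cd\abs{\A}}$; (ii) the regularizer, contributing $c\cd(\epsilon^*)^d\cd T$; and (iii) the UCB confidence sum, contributing $O\p{\sqrt{\abs{\A}\cd T\cd\log T}}$ by the usual pigeonhole bound over pull counts. Substituting $E$ and $\epsilon^*$ and invoking the identities $\eta\cd d/2 = 1-\eta$ and $1 - \eta\cd d/2 = \eta$ (both immediate from $\eta = 2/(2+d)$), terms (i) and (ii) each evaluate to $(\abs{\A}\abs{\B}\log T)^{1-\eta}(c\cd T)^{\eta}$, while (iii) is lower order in $T$ because $\eta \ge 2/3 > 1/2$. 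The follower's regret $R_2(T)$ is symmetric: Lemma \ref{lemma:UCBbound} bounds the number of pulls of arms outside $\A_{\epsilon^*}$, and evaluating $\benchmarkrelaxedstronger{2}$ at $\epsilon = \epsilon^*$ yields the same three terms with the same orders.

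The main obstacle is purely one of bookkeeping rather than new ideas: one must check that the re-optimized $E$ makes the explore cost $O(E\abs{\A})$ and the regularizer penalty $c(\epsilon^*)^d T$ balance, which comes down entirely to the algebraic identity $\eta\cd d/2 = 1-\eta$, and then verify that the exponents of $\abs{\A}$, $\abs{\B}$, $\log T$, $c$, and $T$ agree simultaneously in both terms. Once this identity is exploited, every remaining step is inherited directly from the proof of Theorem \ref{thm:explorethenucb}.
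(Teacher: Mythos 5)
Your proposal is correct and takes essentially the same route as the paper's own proof: the paper likewise reruns the argument for Theorem \ref{thm:explorethenucb} with the clean event (Lemma \ref{lemma:cleaneventexplorethenucb}) and Lemma \ref{lemma:UCBbound} unchanged, evaluates the benchmark at $\epsilon^* = \Theta\left((|\mathcal{A}||\mathcal{B}|\log T)^{\eta/2}(c\cdot T)^{-\eta/2}\right)$, and balances the explore cost $E\cdot|\mathcal{A}|$ against the regularizer penalty $T\cdot c\cdot(\epsilon^*)^d$ via exactly the identities $\eta d/2 = 1-\eta$ and $1-\eta d/2=\eta$ (also using $c\ge 1$, $d\le 1$ to absorb $T\epsilon^* \le T c (\epsilon^*)^d$ on the follower side, as you implicitly do). Your treatment of the $O(\sqrt{|\mathcal{A}|\,T\log T})$ confidence-sum term as lower order matches the paper's handling as well, so there are no gaps.
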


The proofs of Theorem \ref{thm:ETCETCregularizer} and Theorem \ref{thm:explorethenucbregularizer} follows from the same arguments as the proof of Theorem \ref{thm:ETCETC}  and Theorem \ref{thm:explorethenucb}, respectively, but with the values of $E_1, E_2$ modified (full proofs are deferred to Appendix \ref{appendix:proofsregularizers}). Note that as $d$ decreases, the regret bound worsens: this aligns with the intuition that smaller values of $d$ capture greater intolerance. Similarly, the regret increases with $c$. 

We defer a more extensive treatment of these generalized benchmarks to future work. Moreover, another interesting for future work would be to extend our model and results to more general functions $f(\epsilon)$ and also allow the two players to have different regularizers.

\section{Discussion of Assumptions on the Follower's Algorithm}\label{sec:algorithmsassumptions}

Our algorithms for the leader placed assumptions on the fine-grained performance of the follower's algorithm. More specifically, the regret bound for $\ExploreThenUCB$ required an \textit{high-probability instantaneous regret bound} $g$ for the follower (Theorem \ref{thm:explorethenucb}), and the regret bound for $\LipschitzUCB$ required an \textit{high-probability anytime regret bound} $h$ for the follower (Theorem \ref{thrm:dpL}). 

In this section, we examine these two conditions in more detail. First, we relate these two conditions and show that many standard algorithms satisfy the conditions on $g$ and $h$ in Theorem \ref{thm:explorethenucb} and Theorem \ref{thrm:dpL} (Section \ref{subsec:algosassumptions}). Then, we extend our analysis of $\ExploreThenUCB$ and $\LipschitzUCB$ to more general conditions on $g$ and $h$, respectively (Section \ref{subsec:generalizationfollower}).

\subsection{Algorithms Satisfying These Fine-Grained Regret Guarantees}\label{subsec:algosassumptions}

As a warmup, we first observe that high probability instantaneous regret bounds immediately translate to high-probability anytime regret bounds. 
\begin{observation}
\label{observation:conversion}
Suppose that $\ALG_2$ satisfies a high-probability instantaneous regret bound of $g$. Then it holds that $\ALG_2$ satisfies an anytime regret bound of $h$ defined as $h(t,T) := \sum_{t'=1}^t g(t', T)$. 
\end{observation}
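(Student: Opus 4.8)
The plan is to exploit the fact that the instantaneous and anytime guarantees concern the \emph{same} randomness and, more importantly, hold on a common high-probability event, so that the implication between them is purely deterministic on that event. First I would fix an arbitrary leader algorithm $\ALG_1$ and instance $\Instance = (\mathcal{A}, \mathcal{B}, v_1, v_2)$, and let $\Event$ denote the event that the instantaneous regret guarantee holds, i.e. that for every $t \in [T]$ the follower satisfies $\MR{2}{a_t}{b_t} \ge \max_{b \in \mathcal{B}} \MR{2}{a_t}{b} - g(\NumPullsOneArg{a_t}{t+1}, T, \mathcal{B})$. By hypothesis $\mathbb{P}[\Event] \ge 1 - |\mathcal{A}| \cdot T^{-3}$. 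The remainder of the argument is carried out deterministically on $\Event$, so no fresh union bound is incurred and the final probability $1 - |\mathcal{A}| \cdot T^{-3}$ is inherited verbatim.

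The core step is a re-indexing of the follower's cumulative per-arm regret by pull number. Fixing a leader arm $a \in \mathcal{A}$ and a time $t \in [T]$, the time steps $t' \le t$ with $a_{t'} = a$ are exactly the first $\NumPullsOneArg{a}{t+1}$ pulls of arm $a$; writing them as $t'_1 < t'_2 < \cdots$, the $k$-th satisfies $\NumPullsOneArg{a}{t'_k + 1} = k$. On $\Event$, the instantaneous bound applied at $t'_k$ gives $\max_{b \in \mathcal{B}} \MR{2}{a}{b} - \MR{2}{a}{b_{t'_k}} \le g(k, T, \mathcal{B})$. Summing over $k$ from $1$ to $\NumPullsOneArg{a}{t+1}$ then yields
\[ \sum_{t' \le t \mid a_{t'} = a} \left( \max_{b \in \mathcal{B}} \MR{2}{a}{b} - \MR{2}{a}{b_{t'}} \right) \le \sum_{k=1}^{\NumPullsOneArg{a}{t+1}} g(k, T, \mathcal{B}) = h(\NumPullsOneArg{a}{t+1}, T, \mathcal{B}), \]
where the last equality is precisely the definition $h(t, T) := \sum_{t'=1}^{t} g(t', T)$. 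Rearranging recovers the anytime regret bound in the stated form, and since the derivation holds for all $a \in \mathcal{A}$ and $t \in [T]$ simultaneously on $\Event$, the claim follows with the same probability.

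I do not expect a genuine obstacle here: the statement is essentially a summation identity, and the only care required is the bookkeeping that pairs the $k$-th pull of arm $a$ with the evaluation point $k$ of $g$, so that the sum of per-pull instantaneous bounds collapses into $h$ by its very definition, together with the observation that both guarantees are measured on a common event so that the probability transfers without loss. As a sanity check, the single-learner version stated in \Cref{subsec:assumptionsplayer2} is the special case $|\mathcal{A}| = 1$, where the re-indexing is trivial and the identical summation applies.
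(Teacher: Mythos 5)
Your proof is correct and takes essentially the same route the paper intends: the paper states this as an observation without spelling out a proof, and the canonical justification is exactly your argument—condition on the single high-probability event of the instantaneous guarantee, re-index each arm's pulls by pull count so that the $k$-th pull of arm $a$ incurs at most $g(k, T, \mathcal{B})$, and sum over $k \le \NumPullsOneArg{a}{t+1}$ to recover $h$ by definition, with the probability $1 - |\mathcal{A}| \cdot T^{-3}$ inherited without any fresh union bound. Your bookkeeping step $\NumPullsOneArg{a}{t'_k + 1} = k$ is the one point requiring care, and you handle it correctly.
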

\noindent As a consequence, if $\ALG_2$ satisfies the high-probability instantaneous regret bound in Theorem \ref{thm:explorethenucb} (i.e., $g(t, T, \mathcal{B}) = O\left( (|\mathcal{A}| |\mathcal{B}| \log T)^{1/3} T^{-1/3} \right)$ for $t > E := \Theta( |\mathcal{A}|^{-2/3} (|\mathcal{B}| \log T)^{1/3} T^{2/3})$ and $g(t, T, \mathcal{B}) = 1$ for $t \le E$), then $\ALG_2$ also satisfies an anytime regret bound of $h$ defined for $t > E$ as: 
\[h(t,T) := \sum_{t'=1}^t g(t', T) =  E  + \sum_{t' = E+1}^t O\left( (|\mathcal{A}| |\mathcal{B}| \log T)^{1/3} T^{-1/3} \right) =  O\left( (|\mathcal{A}| |\mathcal{B}| \log T)^{1/3} T^{2/3} \right). \]
However, this naive high-probability anytime regret bound is not strong enough for Theorem \ref{thrm:dpL}. We can nonetheless achieve the desired regret bound with additional assumptions on $\ALG_2$ as we describe below.

\begin{algorithm2e}[htbp]
\caption{$\ActiveArmElimination(M_1, \ldots, M_P)$ applied to $(a, \Hist)$ (adapted from \citep{even2002pac, L20})} 
\label{algo:AAE}
\DontPrintSemicolon
\LinesNumbered
Initialize $s' = 0$, $t' = 1$, $\mathcal{B}'= \mathcal{B}$ \tcp*{Index of the last completed phase, time step marking beginning of phase $s'+1$, active arms in phase $s'$.}
Initialize $newphase = \text{False}$. \tcp*{Boolean for first time step in phase.}
Let $t = |\Hist|$. \\
\For{$t'' = 1$ \KwTo $t$}{
      \For{$b \in \mathcal{B}'$}{
            Let $n(a,b) := \left|\left\{(t'', a_{t''}, b_{t''}, r) \in \Hist \mid a_{t''} = a, b_{t''} = b, t'' \ge t' \right\}\right|$. \\
            
        }
        \If {$n(a,b) = M_{s' + 1} \forall b \in \mathcal{B}'$}{
                $newphase = \text{True}$.
        }
    \If{$newphase = \text{True}$}{
        \For{$b \in \mathcal{B}'$}{
             Set $S(a, b) := \left\{r \mid \exists (t'',  a_{t''}, b_{t''}, r) \in \Hist \text{ s.t. } a_{t''} = a, b = b_{t''}, t'' \geq t'  \right\}$ \tcp*{observed rewards}
        $\hatMR{2}{a}{b} \leftarrow (\sum_{r \in S(a, b)} r)/|S(a, b)|$ \tcp*{compute empirical mean}
             }
        Update $\mathcal{B}' \leftarrow \{b \mid \hatMR{2}{a}{b} + \frac{ 20 \cdot \sqrt{\log T}}{\sqrt{M_{s'}}}
        \geq \max_{b \in \mathcal{B}'}\hatMR{2}{a}{b}\}$.\\
            Update $s' \leftarrow s' + 1$. \\
            Update $t' \leftarrow t$. \\
            $newphase = \text{False}$.
        }
}
    $i$  = $((t-t') \mod (\abs{\mathcal{B}'})) + 1$.\tcp*{Calculate next arm to be pulled} 
\Return{point mass at $b_i$}.
\end{algorithm2e}

By leveraging the structural properties of specific algorithms, we show that many standard algorithms achieve high-probability instantaneous regret $g$ and/or high-probability anytime regret $h$, where $g$ and $h$ are specified according to the functional forms in Theorem \ref{thm:explorethenucb} and Theorem \ref{thrm:dpL}. Proofs of these results are deferred to \Cref{appendix:proofsalgosassumptions}.

First, we show that $\ActiveArmElimination$ \citep{even2002pac} (Algorithm \ref{algo:AAE}, \\
see \citet{L20} for a textbook treatment) satisfies both the high-probability instantaneous regret bound required for Theorem \ref{thm:explorethenucb} and the high-probability anytime regret bound required for Theorem \ref{thrm:dpL}. 
\begin{restatable}{proposition}{AAE}\label{prop:AAE}
Suppose that for every $a \in \mathcal{A}$, the follower runs a separate instantiation of $\texttt{ActiveArmElimination}(M_1, \ldots, M_P)$ (Algorithm \ref{algo:AAE}) with $M_i = \Theta(\log T \cdot 2^{2i})$. Then the follower satisfies high-probability instantaneous regret $g(t, T, \mathcal{B}) = O(\sqrt{\abs{\mathcal{B}} \cdot \log(T) /t }$, which implies $g(t, T, \mathcal{B}) = O\left( (|\mathcal{A}| |\mathcal{B}| \log T)^{1/3} T^{-1/3} \right)$ for $t \ge \Theta( |\mathcal{A}|^{-2/3} (|\mathcal{B}| \log T)^{1/3} T^{2/3})$. Moreover, the follower satisfies high-probability anytime regret $h(t, T, \mathcal{B}) = O(\sqrt{\abs{\mathcal{B}} \cd \log(T) \cd t})$. 
\end{restatable}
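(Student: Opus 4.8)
The plan is to reduce everything to the analysis of a single $\ActiveArmElimination$ instance. Since the follower runs an independent copy for each leader action $a \in \mathcal{A}$, and the copy for $a$ advances only on time steps where the leader plays $a$, the behavior of that copy after $m$ of its own pulls translates directly into the instantaneous-regret guarantee at a step $t$ with $a_t = a$ by setting $m = \NumPullsOneArg{a}{t+1}$. It therefore suffices to prove that a single copy, run on a fixed $a$, selects an $O(\sqrt{|\mathcal{B}| \log T / m})$-suboptimal arm after $m$ pulls on an event of probability at least $1 - T^{-3}$; a union bound over the $|\mathcal{A}|$ copies then yields both the required $1 - |\mathcal{A}| \cdot T^{-3}$ confidence and the ``for all $t, a$'' quantifier in the definition.

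First I would set up the good event for one copy. With $M_i = \Theta(\log T \cdot 2^{2i})$, any arm active in phase $i$ is pulled $M_i$ times, so its phase-$i$ empirical mean $\hatMR{2}{a}{b}$ concentrates within radius $\Theta(\sqrt{\log T / M_i}) = \Theta(2^{-i})$ of $\MR{2}{a}{b}$. By Gaussian tail bounds each such deviation fails with probability at most $T^{-c}$ for a large constant $c$ (the confidence width carries the constant $20$), and there are only $O(|\mathcal{B}| \cdot P)$ phase-transition concentration events, where $P = O(\log T)$ is forced by $\sum_i M_i \le T$; a union bound then makes the per-copy failure probability at most $T^{-3}$.

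Next I would run the standard two-part $\ActiveArmElimination$ argument on this good event. The elimination rule retains an arm $b$ only when $\hatMR{2}{a}{b}$ plus its confidence width exceeds the maximal empirical mean, so combining with the concentration bound shows (i) a best response $b^* \in \argmax_{b} \MR{2}{a}{b}$ is never eliminated, and (ii) every arm surviving phase $i$ satisfies $\MR{2}{a}{b} \ge \max_{b'} \MR{2}{a}{b'} - O(2^{-i})$; hence every arm pulled while the copy is in phase $i+1$ has instantaneous suboptimality $O(2^{-i})$. To convert the phase index into a bound in $m$, I would bound the cumulative pulls through phase $i+1$ by the geometric sum $\sum_{j \le i+1} |\mathcal{B}| M_j = O(|\mathcal{B}| \log T \cdot 4^{i})$; being in phase $i+1$ after $m$ pulls forces $m = O(|\mathcal{B}| \log T \cdot 4^{i})$, i.e. $4^i = \Omega(m / (|\mathcal{B}| \log T))$ and $2^{-i} = O(\sqrt{|\mathcal{B}| \log T / m})$, which is exactly the claimed $g(m, T, \mathcal{B})$. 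The implication $g(t, T, \mathcal{B}) = O((|\mathcal{A}||\mathcal{B}| \log T)^{1/3} T^{-1/3})$ for $t \ge \Theta(|\mathcal{A}|^{-2/3} (|\mathcal{B}| \log T)^{1/3} T^{2/3})$ is then a direct substitution, using that $g$ is decreasing in $t$. The anytime bound follows by summing via Observation \ref{observation:conversion}: $h(t, T, \mathcal{B}) = \sum_{t' \le t} g(t', T, \mathcal{B}) = O(\sqrt{|\mathcal{B}| \log T}) \cdot \sum_{t' \le t} (t')^{-1/2} = O(\sqrt{|\mathcal{B}| \log T \cdot t})$, where the small-$m$ regime ($m \lesssim |\mathcal{B}| \log T$) is handled trivially since both $g$ and the realized suboptimality never exceed $1$.

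The main obstacle I anticipate is the bookkeeping that produces the exact functional form $\sqrt{|\mathcal{B}| \log T / m}$: the choice $M_i = \Theta(\log T \cdot 2^{2i})$ must simultaneously make the confidence width $\Theta(2^{-i})$ and the cumulative phase length $\Theta(|\mathcal{B}| \log T \cdot 4^i)$, so that their interplay yields the square-root rate, and one must track the phase-index offset (the width governing the phase-$i$ elimination decision versus the suboptimality guaranteed upon entering phase $i+1$). The concentration and elimination steps are themselves standard, so the care is concentrated in keeping the constants and the union-bound budget ($O(|\mathcal{A}| \cdot |\mathcal{B}| \cdot P)$ events against a $T^{-3}$ target) mutually consistent.
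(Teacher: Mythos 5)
Your proposal is correct and follows essentially the same route as the paper's proof: reduction to a single per-action instantiation (the paper's Lemma \ref{lemma:conversion}), a concentration clean event of probability $1 - T^{-3}$ per copy, the elimination-rule argument showing arms active in phase $i+1$ are $O(2^{-i})$-suboptimal, the geometric bound $m = O(|\mathcal{B}| \log T \cdot 4^i)$ converting phase index to pull count, and the anytime bound via Observation \ref{observation:conversion} with an integral/Jensen-style summation. Your additional bookkeeping (explicit survival of $b^*$, the small-$m$ regime, the union-bound budget) elaborates steps the paper treats as standard but does not change the argument.
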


Next, we show that $\ExploreThenCommit$  (Algorithm \ref{algo:explorethencommit}, see \citet{S19, L20} for a textbook treatment) satisfies the high-probability instantaneous regret bound required for Theorem \ref{thm:explorethenucb}. 
\begin{restatable}{proposition}{ETC}
\label{prop:ETC}
Suppose that the follower runs a separate instantiation of $\ExploreThenCommit(E, \mathcal{B})$ (Algorithm \ref{algo:explorethencommit}) for every $a \in \mathcal{A}$. Then, the follower satisfies high-probability instantaneous regret $g(t, T, \mathcal{B}) = \mathcal{O}(\sqrt{\log T/E})$ for all time steps $t\geq E \cdot |\mathcal{B}|$. If $E = \Theta( (|\mathcal{A} \cdot |\mathcal{B}|)^{-2/3} (\log T)^{1/3} T^{2/3})$, then $g(t, T, \mathcal{B}) = O\left( (|\mathcal{A}| |\mathcal{B}| \log T)^{1/3} T^{-1/3} \right)$ for $t \ge \Theta( |\mathcal{A}|^{-2/3} (|\mathcal{B}| \log T)^{1/3} T^{2/3})$.
\end{restatable}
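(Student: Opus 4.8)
The plan is to reduce the guarantee to the analysis of a \emph{single} instantiation of $\ExploreThenCommit$. Because the follower runs an independent copy of $\ExploreThenCommit(E, \mathcal{B})$ for each leader action $a \in \mathcal{A}$, and because the copy associated with $a$ advances by exactly one step each time the leader plays $a$, that copy has completed its exploration phase precisely once $\NumPullsOneArg{a}{t+1} \ge E \cd \abs{\mathcal{B}}$. It therefore suffices to show that, with high probability, the arm each copy commits to is $O(\sqrt{\log T / E})$-suboptimal for the corresponding $\MR{2}{a}{\cd}$; during the exploration phase the instantaneous regret is at most $1$, which is consistent with the statement, since it asserts a nontrivial bound only for $t \ge E \cd \abs{\mathcal{B}}$.

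The crux is a concentration bound on the empirical mean $\hatMR{2}{a}{b}$ formed from the $E$ exploration samples of arm $b$ in the copy for $a$. The main obstacle is that the global time steps at which $(a,b)$ is sampled are random: the leader selects $a_t$ adaptively and, even in a $\StrongSetting$, its choices can depend indirectly on the follower's past rewards (through the leader's own rewards, which are a function of $b_t$). I would resolve this with the standard filtration argument: since each reward is drawn after the actions are chosen and its noise is independent of the selection, the rewards collected for $(a,b)$, taken in collection order, have deviations from $\MR{2}{a}{b}$ that are unit-subgaussian martingale differences. Crucially, $\ExploreThenCommit$ commits after a \emph{deterministic} count of exactly $E$ samples per arm (not a random stopping time), so a standard subgaussian/Azuma bound gives, for a suitable constant $c$,
\[\mathbb{P}\br{\abs{\hatMR{2}{a}{b} - \MR{2}{a}{b}} > \sqrt{c \log T / E}} \le 2 T^{-c/2}.\]
A union bound over the $\abs{\mathcal{A}} \cd \abs{\mathcal{B}}$ pairs, with $c$ chosen large enough that the total failure probability is at most $\abs{\mathcal{A}} \cd T^{-3}$ (e.g. $c = 8$ with $T$ large relative to $\abs{\mathcal{B}}$), makes the favorable concentration event hold with probability at least $1 - \abs{\mathcal{A}} \cd T^{-3}$, matching the target in the definition of high-probability instantaneous regret.

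On this event, for any $t$ with $\NumPullsOneArg{a_t}{t+1} \ge E \cd \abs{\mathcal{B}}$ the copy for $a_t$ has committed to $\hat b = \argmax_{b \in \mathcal{B}} \hatMR{2}{a_t}{b}$, and the usual three-term chain---concentration at $\hat b$, empirical optimality of $\hat b$, and concentration at the true maximizer---yields $\MR{2}{a_t}{\hat b} \ge \max_{b \in \mathcal{B}} \MR{2}{a_t}{b} - 2\sqrt{c \log T / E}$. Since $b_t = \hat b$ in the commit phase, this is exactly the claimed bound $g(t, T, \mathcal{B}) = O(\sqrt{\log T / E})$ for $t \ge E \cd \abs{\mathcal{B}}$. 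The second part of the proposition is then pure algebra: substituting $E = \Theta\p{(\abs{\mathcal{A}} \cd \abs{\mathcal{B}})^{-2/3} (\log T)^{1/3} T^{2/3}}$ gives $\sqrt{\log T / E} = \Theta\p{(\abs{\mathcal{A}} \cd \abs{\mathcal{B}} \cd \log T)^{1/3} T^{-1/3}}$, while the exploration length becomes $E \cd \abs{\mathcal{B}} = \Theta\p{\abs{\mathcal{A}}^{-2/3} (\abs{\mathcal{B}} \log T)^{1/3} T^{2/3}}$, which is exactly the threshold on $t$ stated in the proposition. I expect the only delicate point to be the martingale framing in the concentration step; the commit analysis and the final substitution are routine.
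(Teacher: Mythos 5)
Your proposal is correct and takes essentially the same approach as the paper: both reduce the claim to the per-action instantiation of $\ExploreThenCommit$, apply concentration to the $E$ exploration samples of each pair with a union bound over $\abs{\mathcal{A}} \cdot \abs{\mathcal{B}}$ pairs, run the standard three-term chain for the committed arm, and finish with the same substitution of $E$. The only cosmetic difference is in how the adaptivity of the leader's choices is handled---you argue directly via a martingale/Azuma bound on the interleaved samples, while the paper factors this through a generic reduction (Lemma \ref{lemma:conversion}) and a Chernoff bound in the canonical bandit model of \citet{L20}---but these are two justifications of the same concentration step.
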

\noindent Note that $\ExploreThenCommit$ does \textit{not} satisfy the high-probability anytime regret bound required for Theorem \ref{thrm:dpL} due to the uniform exploration phase at the beginning of the algorithm.

Finally, we show that UCB \citep{ACF02} (see \citet{S19, L20} for a textbook treatment) satisfies the high-probability anytime regret bound required in Theorem \ref{thrm:dpL}. 
\begin{restatable}{proposition}{UCB}
\label{prop:UCB}
Suppose that the follower runs a separate instantiation of UCB for every $a \in \mathcal{A}$. Then, the follower satisfies high-probability anytime regret bound $h(t, T, \mathcal{B}) = O(\sqrt{\abs{\mathcal{B}} \cd t \cd \log(T)})$. 
\end{restatable}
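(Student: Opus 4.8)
The plan is to reduce the claim to the standard high-probability worst-case regret bound for a single stochastic multi-armed bandit, applied separately to each leader arm. Because the follower runs an independent UCB instance for every $a \in \A$, and that instance only ever updates on rounds where the leader plays $a$, the subsequence of follower plays on the rounds $\cb{t' : a_{t'} = a}$ is a self-contained stochastic bandit run over the action set $\B$ with mean rewards $\p{\MR{2}{a}{b}}_{b \in \B}$. I would formalize this by viewing the Gaussian rewards as drawn from an independent ``tape'' indexed by $(a, b, k)$: the $k$-th time the pair $(a,b)$ is played, the reward is the $k$-th fresh draw $\sim \mathcal{N}(\MR{2}{a}{b}, 1)$, independent of everything else. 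Thus the instance associated with $a$ reads a genuine i.i.d.\ stochastic environment in the order dictated by its own internal pull counter, whose value after $t$ global rounds is exactly $\NumPullsOneArg{a}{t+1}$. It therefore suffices to prove, for each fixed $a$, a bound of $O\p{\sqrt{\abs{\B} \cd n \cd \log T}}$ on the realized gap-regret $\sum_{k \le n} \p{\max_{b} \MR{2}{a}{b} - \MR{2}{a}{b_{(k)}}}$ holding simultaneously for all prefix lengths $n$ with probability at least $1 - T^{-3}$, where $b_{(k)}$ is the follower arm chosen at the $k$-th pull of $a$'s instance; a union bound over the $\abs{\A}$ arms then yields the overall failure probability $\abs{\A} \cd T^{-3}$ demanded by the definition.

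Fixing $a$, I would run the textbook high-probability UCB analysis. First define the good event $\Event_a$ on which, for every follower arm $b \in \B$ and every number of pulls, the empirical mean lies within its confidence width $c \cd \sqrt{(\log T)/N}$ of the true mean $\MR{2}{a}{b}$, where $N$ is the current pull count of $b$ and $c$ is the UCB constant. By the Gaussian (unit-variance, hence subgaussian) tail bound together with a union bound over the $\abs{\B}$ arms and the at most $T$ possible pull counts, $\Event_a$ holds with probability at least $1 - T^{-3}$ for $c$ a sufficiently large constant; the $\log T$ inside the confidence width is precisely what absorbs this union bound. On $\Event_a$, the standard optimism argument gives the per-arm pull bound: whenever a suboptimal arm $b$ with gap $\Delta_b = \max_{b'} \MR{2}{a}{b'} - \MR{2}{a}{b}$ is selected, its pull count satisfies $\Delta_b = O\p{\sqrt{(\log T)/N_b}}$, hence $b$ is pulled at most $O\p{(\log T)/\Delta_b^2}$ times over the whole run.

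To convert these pull-count bounds into the desired worst-case regret, I would use the usual threshold trick: for a free parameter $\Delta > 0$, arms with $\Delta_b \le \Delta$ contribute at most $\Delta \cd n$ to the cumulative gap-regret after $n$ pulls, while each arm with $\Delta_b > \Delta$ contributes $\Delta_b \cd N_b = O\p{(\log T)/\Delta_b} = O\p{(\log T)/\Delta}$, for a total of $O\p{\abs{\B} \cd (\log T)/\Delta}$ across the at most $\abs{\B}$ such arms. Balancing by choosing $\Delta = \sqrt{\abs{\B} \cd (\log T)/n}$ yields $\sum_{k \le n}\Delta_{b_{(k)}} = O\p{\sqrt{\abs{\B} \cd n \cd \log T}}$. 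Crucially, the pull-count bounds hold on $\Event_a$ uniformly, so this argument applies to every prefix length $n$ at once, delivering the anytime form $h(n, T, \B) = O\p{\sqrt{\abs{\B} \cd n \cd \log T}}$.

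The main obstacle is the decoupling step rather than the single-bandit calculation, which is entirely standard. The care required is in justifying that the follower's per-$a$ instance sees a bona fide stochastic environment even though the rounds on which $a$ is played are selected adaptively by the leader, who may react to the follower's earlier behavior. The tape construction above handles this: since rewards are independent across time steps and the instance for $a$ consumes only draws from the $(a, \cdot, \cdot)$ tape, the choice of which global rounds feed this instance does not alter the distribution of the rewards it observes as a function of its own pull counter. I would make this precise with a filtration argument showing that the $k$-th reward observed by the $a$-instance is conditionally $\mathcal{N}(\MR{2}{a}{b_{(k)}},1)$ given that instance's internal history, which is exactly the single-bandit assumption the UCB concentration bounds require.
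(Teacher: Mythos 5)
Your proposal is correct, and its skeleton matches the paper's: the decoupling of the follower's per-$a$ instances into independent single-bandit runs is exactly the paper's Lemma \ref{lemma:conversion} (your reward-tape/filtration construction is a more explicit justification of the adaptivity point that the paper delegates to the canonical bandit model of \citet{L20}, plus the same union bound over $a \in \mathcal{A}$ to get the $1 - |\mathcal{A}| \cdot T^{-3}$ guarantee), and your clean event is the paper's clean event verbatim. The one place you genuinely diverge is the conversion from concentration to the $O(\sqrt{|\mathcal{B}| \cdot t \cdot \log T})$ bound: the paper argues directly that on the clean event the arm played at each round has suboptimality at most $O(\sqrt{\log T / \NumPullsOneArg{b_t}{t}})$ and then sums these per-round widths using its Lemma \ref{lemma:boundsqrtexpression} (an integral bound plus Jensen over the $|\mathcal{B}|$ arms), which holds simultaneously for every prefix and so gives the anytime form immediately. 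You instead route through the gap-dependent pull-count bound $N_b = O((\log T)/\Delta_b^2)$ and the threshold trick with $\Delta = \sqrt{|\mathcal{B}| (\log T)/n}$ balanced separately at each prefix $n$. Both are standard and both deliver the same bound with the same failure probability; the paper's direct summation is slightly shorter and avoids the per-prefix balancing, while your gap decomposition has the advantage of also exhibiting the instance-dependent $\sum_b (\log T)/\Delta_b$ bound as an intermediate product, and your explicit treatment of the adaptively chosen subsequence is a point the paper's proof passes over quickly.
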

\noindent We do not expect that UCB satisfies the high-probability instantaneous regret bound required for Theorem \ref{thm:explorethenucb}, using the intuition that UCB does not provide final-iterate convergence guarantees.

\subsection{Generalized Analysis of $\ExploreThenUCB$ and $\LipschitzUCB$}\label{subsec:generalizationfollower}

While the specific instantations of $g$ and $h$ in Theorem \ref{thm:explorethenucb} and Theorem \ref{thrm:dpL} are tailored to standard algorithms (Section \ref{subsec:algosassumptions}), we generalize our analysis of $\ExploreThenUCB$ and $\LipschitzUCB$ to a richer class of functions $g$ and $h$, respectively.

We generalize Theorem \ref{thm:explorethenucb} to functions $g(t, T) = O(E^{-c_1}|\mathcal{B}|^{c_2} (\log T)^{c_3})$ for $t> E$, where $c_1 \in (0,1)$ and $c_2, c_3 > 0$ are arbitrary parameters and where $E = \Theta(|\mathcal{A}|^{-1/(1+c_1)} |\mathcal{B}|^{c_2/(1+c_1)} \log(T)^{c_2/(1+c_1)} \cdot T^{1/(1+c_1)})$.
\begin{restatable}{theorem}{explorethenucbgeneralized}
\label{thm:explorethenucbgeneralizedg}
Let $c_1 \in (0,1)$ and $c_2, c_3 > 0$. Let $E = \Theta(|\mathcal{A}|^{-1/(1+c_1)} |\mathcal{B}|^{c_2/(1+c_1)} (\log T)^{c_3/(1+c_1)} \cdot T^{1/(1+c_1)})$.  Consider a $\StrongSetting$ where
 $\ALG_{2}$ is any algorithm with high-probability instantaneous regret $g(t, T, \mathcal{B}) = O\left(E^{-c_1}|\mathcal{B}|^{c_2} (\log T)^{c_3} \right)$ for $t > E$ and $g(t, T, \mathcal{B}) = 1$ for $t \le E$, and where $\ALG_1 = \ExploreThenUCB(E)$. Then, it holds that the regret with respect to the $\gamma$-tolerant benchmarks $\benchmarkrelaxedstronger{1}$ and $\benchmarkrelaxedstronger{2}$ is bounded as: 
\[\max(R_1(T), R_2(T)) = O\left(T^{1/(1+c_1)} \cdot |\mathcal{A}|^{c_1/(1+c_1)} \cdot |\mathcal{B}|^{c_2/(1+c_1)} \cdot (\log T)^{c_3/(1+c_1)} \right) + \Theta\left(\sqrt{T |\mathcal{A}| \log T}\right).\]
\end{restatable}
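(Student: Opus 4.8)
The plan is to mirror the proof of Theorem~\ref{thm:explorethenucb} almost verbatim, substituting the generalized suboptimality level $\epsilon^* := g(E, T, \mathcal{B}) = O(E^{-c_1}|\mathcal{B}|^{c_2}(\log T)^{c_3})$ for the specific $\Theta((|\mathcal{A}||\mathcal{B}|\log T)^{1/3}T^{-1/3})$ used there, and re-optimizing the explore length. First I would observe that the prescribed $E = \Theta(|\mathcal{A}|^{-1/(1+c_1)}|\mathcal{B}|^{c_2/(1+c_1)}(\log T)^{c_3/(1+c_1)}T^{1/(1+c_1)})$ is exactly the minimizer of $E|\mathcal{A}| + \epsilon^* T$, and that at this point both terms equal the claimed leading term $L := T^{1/(1+c_1)}|\mathcal{A}|^{c_1/(1+c_1)}|\mathcal{B}|^{c_2/(1+c_1)}(\log T)^{c_3/(1+c_1)}$; this fixes the two dominant contributions. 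Since each leader arm is pulled exactly $E$ times in the explore phase, the high-probability instantaneous regret bound (applied with $\NumPullsOneArg{a}{t+1} \ge E$) forces every follower action in the UCB phase to lie in $\mathcal{B}_{\epsilon^*}(a_t)$, which is the only property of $\ALG_2$ the rest of the argument consumes.

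For the leader, I would invoke the main lemma of Theorem~\ref{thm:explorethenucb} (Lemma~\ref{lemma:UCBbound}) unchanged: whenever arm $a$ is pulled in the UCB phase, its empirical mean is at least $V^* - \Theta(\sqrt{\log T/\NumPullsTwoTimesOneArg{a}{E\cdot|\mathcal{A}|}{t}})$, where $V^* := \max_{a}\min_{b\in\mathcal{B}_{\epsilon^*}(a)}\MR{1}{a}{b}$. Summing arm-by-arm with $n_a := \NumPullsTwoTimesOneArg{a}{E\cdot|\mathcal{A}|}{t}$ and applying Cauchy--Schwarz ($\sum_a\sqrt{n_a}\le\sqrt{|\mathcal{A}|T}$) gives $\sum_{\text{UCB}}\MR{1}{a_t}{b_t}\ge V^*\cdot T_{\text{UCB}} - O(\sqrt{|\mathcal{A}|T\log T})$. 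Combining with the explore-phase loss of at most $E|\mathcal{A}|$ and the benchmark bound $\benchmarkrelaxedstronger{1}\le V^* + \epsilon^*$ (taking $\epsilon=\epsilon^*$ in the infimum, legitimate once $\epsilon^*\le\gamma$) yields $R_1 \le E|\mathcal{A}| + \epsilon^* T + O(\sqrt{|\mathcal{A}|T\log T}) = O(L) + O(\sqrt{|\mathcal{A}|T\log T})$.

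The main obstacle is the follower's regret, and this is where the generalization genuinely departs from the $c_1 = 1/2$ case. The naive argument---charging $O(1)$ for each round the leader pulls an arm outside $\mathcal{A}_{\epsilon^*}$, using that such arms are pulled at most $O(\log T/(\epsilon^*)^2)$ times---produces a term $|\mathcal{A}|\log T/(\epsilon^*)^2$ that is subsumed by $L$ when $c_1=1/2$ but, as the exponents reveal, strictly exceeds both $L$ and $\sqrt{|\mathcal{A}|T\log T}$ once $c_1>1/2$. To avoid this I would charge \emph{adaptively}: for arm $a$ pulled $n_a$ times, Lemma~\ref{lemma:UCBbound} implies its leader-gap $V^*-\max_{b\in\mathcal{B}_{\epsilon^*}(a)}\MR{1}{a}{b}$ is $O(\sqrt{\log T/n_a})$, so setting $\epsilon_a := \epsilon^* + \Theta(\sqrt{\log T/n_a})$ one shows $a\in\mathcal{A}_{\epsilon_a}$ (using that $\mathcal{A}_{\epsilon}$ grows and $\max_{a'}\min_{b'\in\mathcal{B}_{\epsilon}(a')}\MR{1}{a'}{b'}$ is non-increasing in $\epsilon$). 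Plugging $\epsilon=\epsilon_a$ into the follower's infimum gives $\max_b\MR{2}{a}{b}\ge\benchmarkrelaxedstronger{2}-\epsilon_a$; since $b_t\in\mathcal{B}_{\epsilon^*}(a_t)$, the follower's per-round loss is at most $2\epsilon_{a_t}$, and $\sum_t\epsilon_{a_t}\le\epsilon^* T + \Theta(\sum_a\sqrt{n_a\log T}) = \epsilon^* T + O(\sqrt{|\mathcal{A}|T\log T})$ recovers exactly the claimed bound and explains the explicit $\sqrt{T|\mathcal{A}|\log T}$ additive term.

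Finally I would collect the feasibility constraints: the infimum steps require $\epsilon^*\le\gamma$ and $\epsilon_a\le\gamma$, so rounds in which an arm has been pulled fewer than $\Theta(\log T/\gamma^2)$ times (where $\epsilon_a$ would exceed $\gamma$) must be handled separately and charged at $O(1)$ each, contributing $O(|\mathcal{A}|\log T/\gamma^2)$; the assumed lower bound on $\gamma$ is precisely what renders this lower-order. The remaining work---substituting the explicit $E$ and simplifying the exponents of $|\mathcal{A}|$, $|\mathcal{B}|$, $\log T$, and $T$ to match the displayed bound---is routine bookkeeping, identical in spirit to the $c_1=1/2$ computation underlying Theorem~\ref{thm:explorethenucb}.
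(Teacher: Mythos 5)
Your proposal is correct and takes essentially the same route as the paper's own proof: the leader's analysis is inherited verbatim from Theorem~\ref{thm:explorethenucb}, and for the follower the paper likewise abandons the pull-counting argument (which, as you correctly diagnose, blows up for $c_1 > 1/2$) in favor of an adaptive tolerance $\epsilon_t = \max\left(\epsilon^*, \Theta\left(\sqrt{\log T / n_t}\right)\right)$ with $a_t \in \mathcal{A}_{\epsilon_t}$, summed via Jensen to $\epsilon^* T + O\left(\sqrt{T |\mathcal{A}| \log T}\right)$ --- exactly your per-arm charge $\epsilon_a = \epsilon^* + \Theta\left(\sqrt{\log T / n_a}\right)$. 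The only differences are cosmetic: the paper establishes $a_t \in \mathcal{A}_{\epsilon_t}$ by an in-round UCB chain rather than invoking Lemma~\ref{lemma:UCBbound} at the horizon together with monotonicity in $\epsilon$, and it does not spell out the $\epsilon_t \le \gamma$ feasibility residual that you (more carefully) isolate.
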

\noindent Note that the special case of $c_1 = c_2 = c_3 = 1/2$ recovers the functional form of $g$ in Theorem \ref{thm:explorethenucb}. The proof, which follows similarly to the proof of Theorem \ref{thm:explorethenucb}, is deferred to Appendix \ref{appendix:proofsalgosgeneralized}. 

We similarly generalize Theorem \ref{thrm:dpL} to functions $h(t, T) = C' \cdot t^{c_1} \cdot |\mathcal{B}|^{c_2} \cdot (\log(T))^{c_3}$ for $t> E$, where $c_1, c_2, c_3 \in (0,1)$ are arbitrary parameters. This result requires the leader to instead run $\LipschitzUCBGeneralized$ (Algorithm \ref{algo:lipschitzucbgeneralized}), a generalized version of $\LipschitzUCB$ which adjusts the confidence set size based on the parameters $c_1$, $c_2$, and $c_3$. 
\begin{restatable}{theorem}{dpLgeneralized}\label{thrm:dpLgeneralized}
Let $c_1 \in (0,1)$, $c_2, c_3 > 0$, and $C' > 0$ be arbitrary constants.  Consider a $\StrongSetting$ where
$\Instance = (\mathcal{A}, \mathcal{B}, v_1, v_2)$ has Lipschitz constant $L^*$. Let $\ALG_2$ be any algorithm satisfying high-probability anytime regret $h(t, T, \mathcal{B}) = C' \cdot t^{c_1} \cdot |\mathcal{B}|^{c_2} \cdot (\log(T))^{c_3}$. Let $\ALG_1 = \LipschitzUCBGeneralized(L, C' B^{c_2}, c_1, c_3)$ for any $L \ge L^*$. Then both players achieve the following regret bounds with respect to the original Stackelberg benchmarks $\benchmark{1}$ and $\benchmark{2}$: that is, $R_1(T; \Instance) = O\left(\sqrt{T |\mathcal{A}| |\mathcal{B}| \log T} + L |\mathcal{A}|^{1-c_1} |\mathcal{B}|^{c_2} (\log T)^{c_3} T^{c_1}\right)$ and $R_2(T; \Instance) = O\left(L \sqrt{T |\mathcal{A}| |\mathcal{B}| \log T} + L^2 |\mathcal{A}|^{1-c_1} |\mathcal{B}|^{c_2} T^{c_1} (\log T)^{c_3} \right)$. 
\end{restatable}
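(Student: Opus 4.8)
The plan is to mirror the proof of Theorem \ref{thrm:dpL} essentially line-for-line, substituting the generalized anytime-regret rate $h(t,T,\mathcal{B}) = C' t^{c_1} |\mathcal{B}|^{c_2} (\log T)^{c_3}$ wherever the original proof used the $\sqrt{|\mathcal{B}| t \log T}$ rate. The core of the original argument is the confidence-bound lemma (Lemma \ref{lemma:UCBconfidenceboundscorrect}), which asserts that the leader's empirical mean $\hatMROneArg{1}{a}$ stays close to the ``best-response value'' $\MR{1}{a}{b^*(a)}$ up to an additive term governed by the follower's anytime regret divided by the number of pulls $\NumPullsOneArg{a}{t}$. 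Concretely, if arm $a$ has been pulled $n := \NumPullsOneArg{a}{t}$ times, the follower's cumulative suboptimality on those rounds is at most $h(n,T,\mathcal{B})$, so the average suboptimality is $h(n,T,\mathcal{B})/n$. Under the continuity condition with Lipschitz constant $L^*$, this follower suboptimality translates into a leader-value gap of at most $L^*$ times that quantity.

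First I would re-derive the per-arm confidence width. With the generalized $h$, the average follower suboptimality over $n$ pulls is $C' n^{c_1-1} |\mathcal{B}|^{c_2} (\log T)^{c_3}$, which multiplied by $L$ gives the Lipschitz-induced correction term; adding the standard sampling-error term $\Theta(\sqrt{\log T / n})$ (from the Gaussian concentration of $n$ i.i.d.\ reward samples) yields the total confidence width $\alpha(a)$ that $\LipschitzUCBGeneralized$ must use. This is exactly why the algorithm is parametrized by $(L, C'B^{c_2}, c_1, c_3)$: the confidence set must be inflated by $L \cdot C' B^{c_2} \cdot (\log T)^{c_3} \cdot n^{c_1 - 1}$ rather than the $L\sqrt{|\mathcal{B}|\log T}/\sqrt{n}$ used in the $c_1 = c_2 = c_3 = 1/2$ special case. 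I would verify that with this width the UCB estimates are valid upper confidence bounds on $\max_{b} \MR{1}{a}{b}$ with high probability (union-bounding over arms and time steps using the $T^{-3}$ failure rates), giving the analogue of Lemma \ref{lemma:UCBconfidenceboundscorrect}.

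Next I would run the standard UCB regret decomposition for the leader. Each suboptimal arm $a$ is pulled only until its confidence width shrinks below its suboptimality gap $\Delta_a$; summing $\Delta_a \cdot \NumPullsOneArg{a}{T}$ across arms produces two regret contributions, one from the sampling-error term and one from the Lipschitz/anytime-regret term. The sampling term contributes the familiar $\Theta(\sqrt{T|\mathcal{A}||\mathcal{B}|\log T})$ piece. The anytime-regret term, since it decays like $n^{c_1-1}$ with $c_1 \in (0,1)$, sums over $n$ up to $T/|\mathcal{A}|$ to give the $L |\mathcal{A}|^{1-c_1} |\mathcal{B}|^{c_2} (\log T)^{c_3} T^{c_1}$ contribution after accounting for the per-arm pull counts. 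For the follower's regret $R_2$, I would bound the number of rounds the leader spends on arms outside the exact best-response structure and then invoke the follower's anytime-regret guarantee directly on the chosen arm; the extra factor of $L$ (giving $L^2$ in the second term) arises because converting the leader's near-optimality back into a follower-value guarantee incurs another application of the Lipschitz constant, exactly as in Theorem \ref{thrm:dpL}.

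The main obstacle I anticipate is bookkeeping the exponents cleanly when summing the anytime-regret correction across arms and time, since $h$ now carries three independent exponents $c_1, c_2, c_3$ and the sum $\sum_{n=1}^{N} n^{c_1-1} = \Theta(N^{c_1})$ must be threaded carefully through the per-arm pull budget $N = \Theta(T/|\mathcal{A}|)$ to land on $|\mathcal{A}|^{1-c_1} T^{c_1}$. A secondary subtlety is ensuring that the confidence-bound validity argument still goes through for general $c_1 \in (0,1)$: the correction term $L \cdot C' B^{c_2}(\log T)^{c_3} n^{c_1-1}$ must dominate the \emph{average} follower error with high probability, which relies on the anytime (rather than instantaneous) guarantee holding simultaneously at every time step, so I would be careful to invoke the $\forall t$ form of the high-probability anytime-regret definition and confirm the union bound over $|\mathcal{A}|$ instantiations keeps the failure probability at $O(|\mathcal{A}| T^{-3})$. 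Beyond these two accounting points, the argument is a direct generalization and introduces no genuinely new ideas.
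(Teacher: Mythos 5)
Your proposal is correct and follows essentially the same route as the paper's proof: the paper likewise generalizes the clean event and the confidence-bound lemma to width $\frac{10\sqrt{|\mathcal{B}|\log T}}{\sqrt{n_t(a)}} + C' L \, (n_t(a))^{c_1-1}|\mathcal{B}|^{c_2}(\log T)^{c_3}$ (Lemma \ref{lemma:UCBconfidenceboundscorrectgeneralized}), sums the correction terms exactly as you anticipate via $\sum_{t: a_t=a} n_t(a)^{c_1-1} = O(n(a)^{c_1})$ and Jensen's inequality $\sum_a n(a)^{c_1} \le |\mathcal{A}|^{1-c_1}T^{c_1}$ (Lemma \ref{lemma:b1b2generalized}), and derives $R_1 \le 2B_1 + L\cdot B_2$ and $R_2 \le 2L\cdot B_1 + B_2$, with the $L^2$ arising from the second Lipschitz application just as you describe. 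The only cosmetic deviations are that the paper bounds the leader's per-round suboptimality directly by $2\alpha_t(a_t)$ and sums widths rather than running your gap-based pull-count argument (equivalent here, since the clean event gives $\Delta_{a_t}\le 2\alpha_t(a_t)$), and that the sampling term in the algorithm's width is $10\sqrt{|\mathcal{B}|\log T}/\sqrt{n}$ rather than your $\Theta(\sqrt{\log T/n})$ --- the $\sqrt{|\mathcal{B}|}$ factor comes from per-$(a,b)$ concentration aggregated via Jensen and is the source of the $\sqrt{T|\mathcal{A}||\mathcal{B}|\log T}$ term in the stated bound.
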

\noindent Again, note that the special case of $c_1 = c_2 = c_3 = 1/2$ recovers the functional form of $g$ in Theorem \ref{thrm:dpL}. The proof, which follows similarly to the proof of Theorem \ref{thrm:dpL}, is deferred to Appendix \ref{appendix:proofsalgosgeneralized}.

\begin{algorithm2e}[htbp]
\caption{$\LipschitzUCBGeneralized(L, C, c_1, c_3)$ applied to $\Hist$} 
\label{algo:lipschitzucbgeneralized}
\DontPrintSemicolon
\LinesNumbered
Initialize $\hatMROneArg{1}{a} = 1$ for $a \in \mathcal{A}$. \tcp*{Initialize empirical means for $\max_{b \in \mathcal{B}} \MR{1}{a}{b}$.} 
Initialize $\UCBOneArg{1}{a} = 1$ for $a \in \mathcal{A}$. \tcp*{Initialize UCB for $\max_{b \in \mathcal{B}} \MR{1}{a}{b}$}
\For{$a \in \mathcal{A}$}{
Set $S(a) := \left\{r \mid \exists (t', a_{t'}, r) \in \Hist \text{ s.t. } a = a_{t'} \right\}$ \tcp*{Observed rewards}
\If{$S(a) \neq \emptyset$} {
$\hatMROneArg{1}{a} \leftarrow (\sum_{r \in S(a)} r)/|S(a)|$ \tcp*{Empirical mean}
$\alpha(a) \leftarrow \frac{10 \sqrt{\mathcal{B} \log T}}{\sqrt{|S(a)|}} + C \cdot L \cdot (\log T)^{c_3} T^{c_1-1}$ \tcp*{confidence bound width}
$\UCBOneArg{1}{a} \leftarrow \min(1, \hatMROneArg{1}{a} + \alpha(a))$ 
}
}
Let $a^* = \argmax_{a \in \mathcal{A}} \left(\UCBOneArg{1}{a} \right)$. \tcp*{arm with max upper confidence bound} 
\Return{point mass at $a^*$}.
\end{algorithm2e}

\section{Discussion}
In this paper, we studied two-agent environments where interactions are \textit{sequential}, utilities are \textit{misaligned}, and each agent \textit{learns} their utilities over time.
We modeled these environments as decentralized Stackelberg games where both agents are bandit learners who only observe their own utilities, and we investigated the implications for each agent's cumulative utility over time. Motivated by the offline Stackelberg equilibrium benchmarks being infeasible (Theorem \ref{thrm:worstlinear}), we designed $\gamma$-tolerant benchmarks which allow for approximate best responses by the other agent.

We proved that both players can achieve $\tilde{\Theta}(T^{2/3})$ regret with respect to the $\gamma$-tolerant benchmarks. To achieve this regret bound, we designed an algorithm (i.e., $\ExploreThenUCB$; Algorithm \ref{algo:explorethenucb}) where the leader waits for the follower to partially converge before starting to learn; this algorithm achieves $\tilde{\Theta}(T^{2/3})$ regret for both players  under a rich class of follower learning algorithms (Theorem \ref{thm:explorethenucb}). We further show that $\tilde{\Theta}(T^{2/3})$ regret is unavoidable for any pair of algorithms (Theorem \ref{thrm:dlower}).
Furthermore, we showed that $O(\sqrt{T})$ regret is possible in two relaxed environments: i.e., under a relaxed benchmark that is (self-)tolerant of a player's own mistakes (Theorem \ref{thm:UCBweaker}) or when players agree on which pair of actions are different (Theorem \ref{thrm:dpL})

Our results have broader implications for \textit{designing} two-agent environments to achieve favorable utility for both agents. For example, given that our results illustrate that certain properties for the follower (such as high-probability instantaneous regret or high-probability anytime regret bounds) and certain properties for the leader (such as waiting for the follower to partially converge) are conducive to low regret, it may be helpful for a designer to engineer or encourage agents to follow these algorithmic principles. As another example, our continuity results in Section \ref{subsec:lipschitz} illustrate the importance of reducing \textit{near-ties} in utilities between different items, which could be achieved by allowing agents to express preferences between items in a nuanced fashion. 

More broadly, our benchmarks and regret analysis suggest several interesting avenues for future work. For example, while Theorem \ref{thm:explorethenucb} offered flexibility in the follower's choice of algorithm, we required that the leader follow a particular algorithm: it would be interesting to explore richer classes of leader algorithms which maintain low regret. Additionally, while our framework captures a range of real-world applications including chatbots (Example \ref{example:llm}  in \Cref{subsec:examples}) and recommender systems (Example \ref{example:recsys}  in \Cref{subsec:examples}), an interesting future direction would be to focus on a particular application and incorporate application-specific nuances (e.g., bidder learning rates in advertising auctions \citep{nekipelov2015econometrics, 10.1145/3442381.3449968, 10.1145/3038912.3052621}). Finally, while we study the role of continuity requirements that reflect alignment (Section \ref{subsec:lipschitz}), it would be interesting to consider other structured bandit environments such as linear utility functions and generalize our benchmarks and results accordingly. 

 \section*{Acknowledgements}

We thank Keegan Harris, Jason Hartline, Nika Haghtalab, Nick Wu, and Kunhe Yang for valuable comments and feedback. KD was partially supported by a Vannevar Bush Faculty Fellowship, a Simons Collaboration grant, and a grant from the MacArthur Foundation. MJ was partially supported by an Open Philanthropy AI Fellowship. 
\newpage 

\bibliographystyle{plainnat}

\bibliography{ref.bib}

\newpage
\appendix

\addcontentsline{toc}{section}{Appendix} 
\part{Appendix} 
\parttoc 

\section{Worked-out examples, auxiliary notation, and auxiliary lemmas}\label{app:benchmarkdiscuss}

\begin{table}[]
\centering 
\begin{tabular}{|c|c|c|c|}
\hline
                              & $b_1$                                         & $b_2$                                       & $b_3$                                        \\ \hline
\cellcolor[HTML]{FFCCC9}$a_1$ & \cellcolor[HTML]{DAE8FC}$(1, 0.5+2\delta)$    & \cellcolor[HTML]{DAE8FC}$(0.7, 0.5+\delta)$ & $(1.1, 0)$                                   \\ \hline
\rowcolor[HTML]{DAE8FC} 
\cellcolor[HTML]{FFCCC9}$a_2$ & {\color[HTML]{960095} $(0.8, 3.5\cd \delta)$} & $(1.2, 3\cd \delta)$                        & {\color[HTML]{036400} $(0.9, 4 \cd \delta)$} \\ \hline
$a_3$                          & \cellcolor[HTML]{DAE8FC}$(0.5, 0.5)$          & $(0.7, 0)$                                  & $(2, 0.1)$                                   \\ \hline
\end{tabular}
\caption{Calculating the $\delta$-tolerant benchmark: Note that $(a_1, b_1)$ is the Stackelberg equilibrium,  which by Theorem \ref{thrm:worstlinear} cannot in general be learned with sublinear regret. For each row, cells shaded in {\color{blue}blue} if they are within the $\delta$ best response for the follower ($\mathcal{B}_{\delta}(a_i)$). Entry $(a_2, b_1)$ (with {\color[HTML]{960095} purple} text) gives the leader's $\delta$-relaxed Stackelberg utility - the leader's best action, assuming the follower picks the worst item within the $\delta$-response ball. Rows $a_1, a_2$ (shaded in {\color{red}red}) are in $\mathcal{A}_{\delta}$, the set of actions where the leader has a chance of doing at least as well as the $\delta$-relaxed Stackelberg utility ($(a_2, b_1$)). Finally, $(a_2, b_3)$ (in {\color[HTML]{036400} green}) gives the follower's best response, assuming the leader picks the worst action for it within $\mathcal{A}_{\delta}$. }
\label{tab:workedex}
\end{table}

\begin{table}[]
\centering 
\begin{tabular}{|c|c|c|c|}
\hline
                              & $b_1$                                         & $b_2$                                       & $b_3$                                        \\ \hline
\cellcolor[HTML]{FFCCC9}$a_1$ & \cellcolor[HTML]{DAE8FC}$(1, 0.5+2\delta)$    & \cellcolor[HTML]{DAE8FC} {\color[HTML]{960095}$(0.7, 0.5+\delta)$} & $(1.1, 0)$                                   \\ \hline
\rowcolor[HTML]{DAE8FC} 
\cellcolor[HTML]{FFCCC9}$a_2$ & $(0.8, 3.5\cd \delta)$ & {\color[HTML]{036400}$(1.2, 3\cd \delta)$}                        &  $(0.9, 4 \cd \delta)$ \\ \hline
$a_3$                          & \cellcolor[HTML]{DAE8FC}$(0.5, 0.5)$          & $(0.7, 0)$                                  & $(2, 0.1)$                                   \\ \hline
\end{tabular}
\caption{Calculating the \textbf{self-}$\delta$-tolerant benchmark: Note that $\mathcal{B}_{\delta}, \mathcal{A}_{\delta}$ are defined the same as in the $\gamma$-tolerant benchmark in Table \ref{tab:workedex}, so the only difference is the location of the $\delta$-relaxed Stackelberg utility values for the leader and the follower, which are calculated by finding the \emph{worst} expected reward for each within the $\mathcal{B}_{\delta}, \mathcal{A}_{\delta}$ sets. Here, they occur for the leader in $(a_1, b_2)$ (in {\color[HTML]{960095}  purple}) and for the follower in $(a_2, b_2)$ (in {\color[HTML]{036400} green}). }
\label{tab:workedex_relax}
\end{table}

\subsection{Worked out version of Example \ref{example:gammatolerant} for $\gamma$-tolerant benchmark }\label{appendix:workedoutgammatolerantexample}

We work out the $\gamma$-tolerant benchmark for Example \ref{example:gammatolerant} in more detail. Consider instance $\Instance$ (leftmost table) in Table \ref{tab:main_2} (with $ 0.4> \gamma \geq 4 \delta$), which we will use to illustrate our benchmark. We show that  $\benchmarkrelaxedstronger{1} = 0.5+\delta$ and $\benchmarkrelaxedstronger{2} = 4 \delta$.
To calculate the benchmarks, we compute the sum of the $\epsilon$-relaxed Stackelberg value and $\epsilon$-regularizer for different values of $\epsilon$ and then take a minimum. We will show that the minimum turns out to be achieved at $\epsilon = \delta$. 

First, for $\epsilon=0$ this benchmark is equal to the Stackelberg equilibrium, which gives values $0.5+\delta, 0.4$ for the leader and follower respectively. For $\epsilon \in (0, \delta)$, the $\epsilon$-relaxed Stackelberg value stays the same while the regularizer increases. For $\epsilon = \delta$, the behavior of the $\epsilon$-Stackelberg utility becomes more complicated. 
\begin{itemize}
    \item \textbf{Follower $\epsilon$-best-response set:} In this instance, $\mathcal{B}_{\delta}(a_1) = \{a_1\}$: for arm $a_1$, because $0.4>\delta$, only $\{b_1\}$ is in the best-response set. However, $\mathcal{B}_{\delta}(a_2) = \{b_1, b_2\}$: both arms for the follower are within $\delta$ of optimal.
    
    \item \textbf{Leader $\epsilon$-relaxed Stackelberg utility:} This term captures the best utility that the leader can achieve if the follower worst-case $\epsilon$-best-responds according to $\argmin_{b \in \mathcal{B}_{\delta}}(a)$. Since $\mathcal{B}_{\delta}(a_1) = \{b_1\}$, we see that $\min_{b \in \mathcal{B}_{\delta}}(a_1) = 0.5+\delta$. However, for $a_2$, $\min_{b \in \mathcal{B}_{\delta}}(a) = \MR{1}{a_2}{b_2} = 0.4$. The leader's best action is to pick $a_1$, so the $\delta$-relaxed Stackelberg utility is equal to $0.5+\delta$. 
    \item \textbf{Leader $\epsilon$-best-response sets: } We construct the $\mathcal{A}_{\delta}$ set by considering all actions $a$ where the \emph{best-case} outcome within the $\mathcal{B}_{\delta}(a)$ gives reward at least within $\delta$ of our benchmark value of $0.5+\delta$. We can see $\mathcal{A}_{\delta} = \{a_1, a_2\}$ because they both contain an item within $\delta$ of the benchmark value ($(a_1, b_1)$ or $(a_2, b_1)$ respectively). 
    \item \textbf{Follower's $\epsilon$-relaxed Stackelberg utility:} This term considers the worst-case action within $\mathcal{A}_{\delta}$ for the follower. If the leader picks $a_1$, the only response is $b_1$ which gives value $0.4$, while if the leader picks $a_2$, the best response is $b_2$ which gives value $3\cd \delta$. The minimum of these, plus a regularizer term, gives a benchmark of $4 \cd \delta $. 
\end{itemize}
The above analysis shows that for $\epsilon = \delta$, the $\epsilon$-relaxed Stackelberg utility plus the $\epsilon$-regularizer  are equal to $(0.5+2 \delta, 4 \delta)$ for the leader and follower, respectively. For $\epsilon \in (\delta, \gamma)$, the best response sets will not change, but the penalty for $\epsilon$ will increase, so these will not affect the infimum. Taking the minimum over the calculated benchmarks for $\epsilon \in \{0,\gamma\}$ gives $0.5+\delta, 4 \delta$ for the leader and follower respectively.

\subsection{Worked out version of Example \ref{example:gammatolerant} for self-$\gamma$-tolerant benchmark }\label{appendix:workedoutselfgammatolerantexample}

We work out the self-$\gamma$-tolerant benchmark for Example \ref{example:gammatolerant} in more detail. Again, consider $\Instance$ in Table \ref{tab:main_2}, which we also used to illustrate the $\gamma$-tolerant benchmark in Example \ref{example:gammatolerant}. Recall that for $\epsilon = 0$, we recover the Stackelberg equilibrium benchmark of $(0.5+\delta, 0.1)$ for the leader and follower, respectively. For $\epsilon \in (0, \delta)$ the $\mathcal{B}_{\epsilon}(a), \mathcal{A}_{\epsilon}$ sets don't change, but the penalty increases, so this is irrelevant for the infimum. Recall that from that analysis, we found that $\mathcal{B}_{\delta}(a_1) =\{b_1\}, \mathcal{B}_{\delta}(a_2) = \{a_1, a_2\}$, and $\mathcal{A}_{\delta} = \{a_1, a_2\}$. The self$-\gamma$-tolerance benchmark only requires each agent to compete with the \emph{worst} element within the product set $\mathcal{A}_{\delta} \times  \mathcal{B}_{\delta}(a)$ (if we consider the instance where $\epsilon = \delta$). 

For the given instance, this gives the benchmarks for the leader and follower of $0.4+\delta$ and $2 \cd \delta + \delta$, where we have added a $\delta$ regularizer penalty to both. Finally, we note that for $\epsilon \in (\delta, 0.1)$, again the $\mathcal{B}_{\epsilon}(a), \mathcal{A}_{\epsilon}$ sets do not change but the penalty increases, so these are again irrelevant for the infimum. Taking the minimum of the benchmarks over $\epsilon \in \{0, \delta\}$ gives $0.4+\delta, 3\delta$ for the leader and follower respectively. Note that this differs from the $\gamma$-tolerant benchmark for the follower only by $\delta$, but differs by $0.1$ (a constant) for the leader. 

\subsection{Additional worked out example for the benchmark}

Tables \ref{tab:workedex} and \ref{tab:workedex_relax} contain worked examples of how the benchmarks are calculated for more complex examples.

\subsection{Additional Notation and Auxiliary Lemmas}\label{app:hist}

We introduce the following notation and auxiliary lemmas which will be convenient in our proofs.

\paragraph{Notation for Player Histories.}
First, we introduce the following notation for the player histories that will be convenient to use in algorithmic specifications and proofs. 

In a \textit{weakly decentralized Stackelberg game ($\WeakSetting$)}, let the leader's history up to time step $t$ be the set of arms that were pulled, as well as the reward for the leader at each time step: 
\[\Hist_{1, t} := \left\{(t', a_{t'}, b_{t'}, \RR{1}{a_{t'}}{b_{t'}}{t'})  \mid 1 \le t' < t \right\}.\]
In a \textit{strongly decentralized Stackelberg game ($\StrongSetting$)}, the leader cannot even observe the action chosen 
by the follower, but the follower's information remains unchanged. That is $\Hist_{1, t} := \left\{(t', a_{t'}, \RR{1}{a_{t'}}{b_{t'}}{t'}) \mid 1 \le t' < t \right\}$.

Let the follower's history be 
\[\Hist_{2, t} := \left\{(t', a_{t'}, b_{t'}, \RR{2}{a_{t'}}{b_{t'}}{t'}) \mid 1 \le t' < t \right\}.\] 
When the follower runs a separate algorithm on each choice of $a \in \mathcal{A}$ and does not share information across arms (e.g., in Proposition \ref{prop:decentralizedlower}, Theorem \ref{thm:ETCETC}, \Cref{prop:UCB}, and \Cref{prop:AAE}), then the follower's history for the arm $a \in \mathcal{A}$ is given by: 
\[\Hist_{2, t, a} := \left\{(\NumPullsOneArg{a}{t'+1}, b_{t'}, \RR{2}{a_{t'}}{b_{t'}}{t'}) \mid 1 \le t' < t, a_{t'} = a\right\},\]
where $\NumPullsOneArg{a}{t'+1}$ is the number of times that arm $a$ is pulled prior to the $(t'+1)$th time step. 

\paragraph{Auxiliary lemma for regret analysis.} Next, we introduce the following auxiliary lemma which will be useful in the regret analysis.
\begin{lemma}
\label{lemma:boundsqrtexpression}
Let $\mathcal{C}$ be a finite set of arms and let $T \ge 1$ be a  time horizon. Let $(c_1, \ldots, c_T) \in \mathcal{C}^T$ denote any history of arm pulls. Let $\NumPullsOneArg{c}{t} = \sum_{t=1}^{t-1} \mathbbm{1}[c_{t'} = c]$ denote the number of times that $c$ is pulled prior to time step $t$. Then it holds that:
\[\sum_{c \in \mathcal{C}} \frac{1}{\sqrt{\NumPullsOneArg{c}{t}}} \le O\left(\sqrt{T \cdot |\mathcal{C}|} \right)  \]
\end{lemma}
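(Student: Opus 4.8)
The plan is to recognize the left-hand side as the cumulative confidence-width sum that appears in standard bandit regret analysis and to bound it by regrouping the pulls according to which arm is selected. The quantity as written is most naturally read as the total accumulated over the $T$ pulls in the history, namely $\sum_{t=1}^{T} \frac{1}{\sqrt{\NumPullsOneArg{c_t}{t}}}$, which is exactly the sum of confidence widths that arises in the UCB-style bounds. First I would rewrite this by collecting the time steps at which each arm is pulled:
\[\sum_{t=1}^{T} \frac{1}{\sqrt{\NumPullsOneArg{c_t}{t}}} = \sum_{c \in \mathcal{C}} \sum_{t \,:\, c_t = c} \frac{1}{\sqrt{\NumPullsOneArg{c}{t}}}.\]
For a fixed $c$, as $t$ ranges over the steps where $c$ is pulled, the prior-pull count $\NumPullsOneArg{c}{t}$ runs through consecutive integers, so each inner sum is of the form $\sum_{k=1}^{N_c} k^{-1/2}$, where $N_c := \NumPullsOneArg{c}{T+1}$ denotes the total number of pulls of $c$ over the horizon.

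Next I would bound each inner sum by an integral comparison, $\sum_{k=1}^{N_c} k^{-1/2} \le 1 + \int_{1}^{N_c} x^{-1/2}\, dx \le 2\sqrt{N_c}$, and then sum over arms. Since the history contains exactly $T$ pulls we have $\sum_{c \in \mathcal{C}} N_c = T$, so a single application of the Cauchy–Schwarz inequality gives
\[\sum_{c \in \mathcal{C}} 2\sqrt{N_c} \le 2\sqrt{|\mathcal{C}|} \cdot \sqrt{\sum_{c \in \mathcal{C}} N_c} = 2 \sqrt{|\mathcal{C}| \cdot T},\]
which is the claimed $O(\sqrt{T \cdot |\mathcal{C}|})$ bound.

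The one step worth stating carefully—and the only place where any care is needed—is the first pull of each arm, where $\NumPullsOneArg{c}{t} = 0$ makes the summand formally infinite. I would dispense with this using the standard convention that the denominator counts the current pull as well (equivalently, reindexing so that the $k$-th pull of $c$ contributes $k^{-1/2}$ rather than $(k-1)^{-1/2}$); under this convention every summand is finite and the indices $k = 1, \ldots, N_c$ are precisely those used above. With that convention fixed, the remaining ingredients—the integral estimate and the single Cauchy–Schwarz step—are entirely routine, so I do not anticipate any genuine obstacle beyond pinning down this indexing.
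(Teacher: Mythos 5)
Your proof is correct and follows essentially the same route as the paper's: reinterpret the sum over time steps, regroup per arm so each inner sum becomes $\sum_{k=1}^{N_c} k^{-1/2}$, bound it by $O(\sqrt{N_c})$ via integral comparison, and combine across arms (your Cauchy--Schwarz step is the same inequality the paper obtains from Jensen applied to $\sqrt{\cdot}$). Your explicit handling of the first-pull indexing convention is a small point of added care that the paper's one-line proof glosses over.
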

\begin{proof}
We observe that 
\[\sum_{c \in \mathcal{C}} \frac{1}{\sqrt{\NumPullsOneArg{c}{t}}} = \sum_{c \in \mathcal{C}} \sum_{n=1}^{\NumPullsOneArg{c}{t}} \frac{1}{\sqrt{n}} \le_{(A)} \sum_{c \in \mathcal{C}} O \left(\sqrt{\NumPullsOneArg{c}{t} + 1} \right) \le_{(B)} 
O\left(\sqrt{T \cdot |\mathcal{C}|} \right) \le  ,\]
where (A) follows from an integral bound and (B) follows from Jensen's inequality.    
\end{proof}

\section{Proofs of regret lower bounds}

Our regret bounds analyze a centralized setting (\Cref{app:centralizedgame}) and build on standard tools \citep{L20} for regret lower bounds (\Cref{appendix:tools}). We prove Proposition \ref{prop:lowerboundsqrtt} in  \Cref{appendix:lowersqrtt},  Theorem \ref{thrm:worstlinear} in \Cref{appendix:worstlinear}, and Theorem \ref{thrm:dlower} in \Cref{appendix:dlower}.

\subsection{Centralized environment}\label{app:centralizedgame}

When analyzing regret lower bounds, it is also convenient to consider a \textit{centralized environment} where a single player controls the actions of both players and observes all past actions. While the centralized environment is not our primary focus, it can (informally speaking) be viewed as a limiting case of the decentralized setting with extremely sophisticated players who could communicate their strategies to each other. We define the history for the centralized environment to be: 
\[\CentHist_{t} = \left\{(t', a_{t'}, b_{t'}, \RR{1}{a_{t'}}{b_{t'}}{t'}, \RR{2}{a_{t'}}{b_{t'}}{t'}) \mid 1 \le t' \le t  \right\}.\] The centralized player chooses an algorithm $\ALG$ mapping a history to a joint distribution over pairs of actions. 

We show that centralized algorithms are strictly more general than decentralized environments, in that any rewards realized in a decentralized environment can also be realized in a centralized environment. 
\begin{lemma}
\label{lemma:centralizedimpliesdecentralized}
Consider a $\StrongSetting$ or $\WeakSetting$. Fix an instance $\Instance = (\mathcal{A}, \mathcal{B}, v_1, v_2)$ and time horizon $T$. 
For any pair of algorithms $\ALG_1$ and $\ALG_2$, there exists a centralized algorithm $\ALG$ such that the leader rewards $(r_{1,1}(a_1, b_1), \ldots, r_{1,T}(a_T, b_T))$
are identically distributed for $\ALG$ and $(\ALG_1, \ALG_2)$ and the follower rewards $(r_{2,1}(a_1, b_1), \ldots, r_{2,T}(a_T, b_T))$ are also  identically distributed for $\ALG$ and $(\ALG_1, \ALG_2)$. 
\end{lemma}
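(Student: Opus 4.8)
The plan is to have the centralized algorithm $\ALG$ faithfully \emph{simulate} the decentralized interaction internally, exploiting the fact that the centralized history $\CentHist_t$ records strictly more than either player observes. Concretely, given $\CentHist_t$, the algorithm $\ALG$ first \emph{deterministically} reconstructs the two player histories $\Hist_{1,t}$ and $\Hist_{2,t}$ by projecting $\CentHist_t$ onto the coordinates each player observes (e.g., in a $\StrongSetting$, $\Hist_{1,t}$ retains only the tuples $(t', a_{t'}, \RR{1}{a_{t'}}{b_{t'}}{t'})$, discarding $b_{t'}$ and the follower's reward; in a $\WeakSetting$ it additionally keeps $b_{t'}$). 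It then draws $a_t \sim \ALG_1(\Hist_{1,t})$ and, conditioned on $a_t$, draws $b_t \sim \ALG_2(a_t, \Hist_{2,t})$; the pair $(a_t, b_t)$ is the joint action returned by $\ALG$. These two sampling steps, viewed together, define a valid joint distribution over $\A \times \B$ as a function of $\CentHist_t$, so $\ALG$ is a legitimate centralized algorithm.

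The claim is then established by induction on $t$, showing that the trajectory $(a_{1:t}, b_{1:t}, \{\RR{1}{a_{t'}}{b_{t'}}{t'}\}_{t' \le t}, \{\RR{2}{a_{t'}}{b_{t'}}{t'}\}_{t' \le t})$ has the same distribution under $\ALG$ as under $(\ALG_1, \ALG_2)$. Conditioning on a common realized prefix through time $t-1$, the prefix determines identical reconstructed histories $\Hist_{1,t}$ and $\Hist_{2,t}$ in both processes. By construction the conditional law of $a_t$ (namely $\ALG_1(\Hist_{1,t})$), and then of $b_t$ given $a_t$ (namely $\ALG_2(a_t, \Hist_{2,t})$), coincide across the two processes, so $(a_t, b_t)$ has the same conditional distribution. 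Finally, in both settings the rewards $\RR{1}{a_t}{b_t}{t} \sim \mathcal{N}(\MR{1}{a_t}{b_t}, 1)$ and $\RR{2}{a_t}{b_t}{t} \sim \mathcal{N}(\MR{2}{a_t}{b_t}, 1)$ are drawn from the same distributions, independently of the past given $(a_t, b_t)$. Integrating over the prefix completes the induction, and marginalizing the matched trajectory distribution yields the two identically distributed reward sequences in the statement.

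The step requiring the most care is the bookkeeping of the information structure: I must verify that $\CentHist_t$ genuinely contains every coordinate appearing in $\Hist_{1,t}$ and $\Hist_{2,t}$ in \emph{both} the strong and weak settings, so that the projection (and hence the simulation) is well-defined, and that the internal randomness $\ALG$ uses to emulate $\ALG_1$ and $\ALG_2$ is fresh and mutually independent, matching the independence of the two players' algorithms and of the reward noise across time and players. Once this coupling is set up correctly, no genuine difficulty remains: the result is essentially the observation that a single player who sees everything can replay whatever two partially informed players would have done.
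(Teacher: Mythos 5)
Your proposal is correct and matches the paper's own argument: the paper proves this lemma in one line by having $\ALG$ ``simulate'' the leader's and follower's histories (projecting away the information unavailable to each player) and then apply $\ALG_1$ and $\ALG_2$ to those reconstructed histories. Your induction on the trajectory prefix and the check that $\CentHist_t$ contains every coordinate of $\Hist_{1,t}$ and $\Hist_{2,t}$ in both the $\StrongSetting$ and $\WeakSetting$ simply make explicit the coupling the paper leaves implicit, so there is nothing to correct.
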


Lemma \ref{lemma:centralizedimpliesdecentralized} follows immediately from designing $\ALG$ to ``simulate'' histories for the leader and the follower (by projecting away the information unavailable to each player) and then to choose arms by applying $\ALG_1$ and $\ALG_2$ on these histories. 

\subsection{Useful lemmas}\label{appendix:tools}

Our regret bounds leverage the following standard tools \citep{L20} which we restate for completeness. Like in \citet{L20}, we will use the Bretagnolle–Huber inequality. 
\begin{theorem}[paraphrased from Theorem 14.2 in \citep{L20}]
\label{thm:bhinequality}
Let $P$ and $Q$ be probability measures on the same measurable space $(\Omega, \mathcal{F})$, and let $E \in \mathcal{F}$ be an arbitrary event. Then it holds that:
\[ P(\Event) + Q(\Event^c) \ge \frac{1}{2} e^{-KL(P,Q)} \]
where $\Event^c = \Omega \setminus \Event$ is the complement of $\Event$ and $KL(P,Q)$ is the KL divergence between $P$ and $Q$. 
\end{theorem}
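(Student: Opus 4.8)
The plan is to follow the standard density-based argument, assuming the inequality as restated from \citet{L20}. First I would fix a common dominating measure $\mu$ (for instance $\mu = P + Q$) and write $p = dP/d\mu$ and $q = dQ/d\mu$ for the corresponding densities. The first step lower bounds the left-hand side by the ``overlap integral'' $\int \min(p,q)\, d\mu$: since $\int_{\Event} p \, d\mu \ge \int_{\Event} \min(p,q)\, d\mu$ and $\int_{\Event^c} q\, d\mu \ge \int_{\Event^c}\min(p,q)\,d\mu$, adding these two bounds gives $P(\Event) + Q(\Event^c) \ge \int \min(p,q)\, d\mu$. Crucially, this holds for \emph{every} event $\Event$, so it removes the dependence on the particular set and reduces the theorem to a set-free inequality between the overlap integral and the KL divergence.

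It then remains to prove the core inequality $\int \min(p,q)\, d\mu \ge \tfrac12 e^{-KL(P,Q)}$, which I would establish in two sub-steps passing through the Hellinger/Bhattacharyya affinity $\int \sqrt{pq}\, d\mu$. For the first sub-step, I would write $\sqrt{pq} = \sqrt{\min(p,q)\cd \max(p,q)}$ and apply Cauchy--Schwarz to obtain $\p{\int \sqrt{pq}\, d\mu}^2 \le \p{\int \min(p,q)\, d\mu}\p{\int \max(p,q)\, d\mu}$; since $\int \max(p,q)\, d\mu = 2 - \int \min(p,q)\, d\mu \le 2$, this rearranges to $\int \min(p,q)\, d\mu \ge \tfrac12 \p{\int \sqrt{pq}\, d\mu}^2$. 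For the second sub-step, I would lower bound the affinity by writing it as $\mathbb{E}_P[\sqrt{q/p}] = \mathbb{E}_P[\exp(\tfrac12 \log(q/p))]$ and applying Jensen's inequality to the convex function $\exp$, yielding $\int \sqrt{pq}\, d\mu \ge \exp\p{-\tfrac12 \mathbb{E}_P[\log(p/q)]} = e^{-KL(P,Q)/2}$. Chaining the two sub-steps gives $\int \min(p,q)\, d\mu \ge \tfrac12 e^{-KL(P,Q)}$, and combining with the first step completes the proof.

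Since this is a textbook result being restated for completeness, I do not expect genuine difficulty; the only substantive idea is the Cauchy--Schwarz-plus-Jensen detour through the affinity $\int \sqrt{pq}\,d\mu$ rather than attempting a direct comparison of $\int \min(p,q)\,d\mu$ with $KL(P,Q)$. The main thing to be careful about is the measure-theoretic bookkeeping: ensuring the densities are well defined under the common dominating measure, and handling the set where $p = 0$ (on which the $\log(p/q)$ terms and the $KL$ can be $+\infty$, rendering the bound vacuous) so that the Jensen step is applied only where it is valid.
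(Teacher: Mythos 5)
The paper never proves this statement---it is imported for completeness from Theorem 14.2 of \citet{L20}---and your argument is precisely the standard proof of that textbook result: lower bound the left-hand side by the overlap integral $\int \min(p,q)\,d\mu$ (valid for every event, which decouples the bound from $E$), pass through the Bhattacharyya affinity via Cauchy--Schwarz together with $\int \max(p,q)\,d\mu \le 2$, and finish with Jensen applied to $\exp$; every step is correct. One small correction to your closing caution: the degenerate set that threatens the Jensen step is not $\{p=0\}$ but $\{q=0\}$ having positive $P$-measure, since there $\log(p/q)=+\infty$ and $KL(P,Q)=+\infty$ (making the claimed bound vacuously true), whereas on $\{p=0\}$ the KL integrand vanishes under the convention $0\log 0=0$ and that set contributes nothing to $\int\sqrt{pq}\,d\mu$; when $P(q=0)=0$ your Jensen step is applied on a set of full $P$-measure and goes through as written.
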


We similarly work with the canonical bandit model (Section 4.6 in \citet{L20}) but with some modifications because there are two observed rewards (for the leader and the follower) in our setup. We call the analogous setup  in our setting the \textit{canonical centralized bandit model}. Note that the sample space of the probability space is now $(((\mathcal{A} \times \mathcal{B}) \times \mathbb{R} \times \mathbb{R})^T$ (instead of $([k]\times \mathbb{R})^T$, like in the typical canonical bandit model).  

We show an analogous \textit{divergence decomposition} (Lemma 15.1 in \citet{L20}) applies to our setting. For this result, fix $\mathcal{A}$ and $\mathcal{B}$, and let $v$ and $\tilde{v}$ be two different specifications of utilities. For $i \in \left\{1, 2\right\}$, let $r_i(a,b)$ denote the reward distribution $N(\MR{i}{a}{b}, 1)$ and let $\tilde{r}_i(a,b)$ denote the reward distribution $N(\MRTilde{i}{a}{b}, 1)$. 
\begin{theorem}[adapted from Lemma 15.1 in \citep{L20}]
\label{thm:divergencedecomposition}
Fix an algorithm $\ALG$ for the centralized environment.  Let $P$ (resp. $\tilde{P}$) denote the probability measure corresponding to the canonical centralized bandit model for $\ALG$ applied to $(\mathcal{A}, \mathcal{B}, v)$ (resp. $(\mathcal{A}, \mathcal{B}, \tilde{v})$). Let $\NumPulls{a}{b}{T} = \sum_{t=1}^T \mathbbm{1}[a_t = a, b_t = b]$ denote the number of times that arm $(a,b)$ is pulled. Then it holds that:
\[D(P, \tilde{P}) = \sum_{(a,b) \in \mathcal{A} \times \mathcal{B}} \mathbb{E}_{P}[\NumPulls{a}{b}{T}] \cdot (D(r_1(a,b), \tilde{r}_1(a,b)) + D(R_2(a,b), \tilde{r}_2(a,b)). \]
where $D(\cdot, \cdot)$ denotes the KL divergence, where $r_i(a,b)$  denotes the reward distribution $N(\MR{i}{a}{b}, 1)$ and $\tilde{r}_i(a,b)$ denotes the reward distribution $N(\MRTilde{i}{a}{b}, 1)$ for $i = 1,2$. 
\end{theorem}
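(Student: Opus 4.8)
The plan is to follow the standard chain-rule decomposition of the KL divergence over the canonical bandit model (the argument behind Lemma 15.1 in \citep{L20}), with the only modification being that each time step now carries \emph{two} conditionally independent reward coordinates—the leader's and the follower's—rather than one. First I would make the canonical centralized bandit model explicit: a trajectory is $\omega = ((a_1,b_1), x_1, y_1, \ldots, (a_T,b_T), x_T, y_T)$, where $(a_t,b_t)$ is the action pair selected at time $t$, and $x_t, y_t$ are the realized leader and follower rewards. Since $\ALG$ selects $(a_t,b_t)$ as a (possibly randomized) function $\pi_t$ of the past history $((a_1,b_1),x_1,y_1,\ldots,(a_{t-1},b_{t-1}),x_{t-1},y_{t-1})$, and since under $P$ the rewards are drawn independently as $x_t \sim N(\MR{1}{a_t}{b_t}, 1)$ and $y_t \sim N(\MR{2}{a_t}{b_t}, 1)$, the density of $P$ with respect to a common reference measure $\rho$ (counting measure on actions times Lebesgue measure on rewards) factorizes as
\[ \frac{dP}{d\rho}(\omega) = \prod_{t=1}^T \pi_t(a_t, b_t) \cdot p_{r_1(a_t, b_t)}(x_t) \cdot p_{r_2(a_t, b_t)}(y_t), \]
where $p_{r_i(a,b)}$ is the Gaussian density of $r_i(a,b) = N(\MR{i}{a}{b},1)$, and $\pi_t$ denotes the history-dependent action distribution induced by $\ALG$. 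The density of $\tilde{P}$ is identical except that each $r_i$ is replaced by $\tilde{r}_i = N(\MRTilde{i}{a}{b}, 1)$; crucially, the policy factors $\pi_t$ are the \emph{same} under $P$ and $\tilde{P}$, because the algorithm $\ALG$ is fixed and acts only on the observed trajectory.

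Next I would compute $D(P, \tilde{P}) = \mathbb{E}_P[\log(dP/d\tilde{P})]$. Taking the log-likelihood ratio, the policy factors $\pi_t$ cancel termwise, leaving
\[ \log \frac{dP}{d\tilde{P}}(\omega) = \sum_{t=1}^T \left( \log \frac{p_{r_1(a_t,b_t)}(x_t)}{p_{\tilde{r}_1(a_t,b_t)}(x_t)} + \log \frac{p_{r_2(a_t,b_t)}(y_t)}{p_{\tilde{r}_2(a_t,b_t)}(y_t)} \right). \]
The key step is then to take the expectation under $P$ and, for each $t$, condition on the action $(a_t,b_t)$ (equivalently, on the history up through the choice of action at time $t$). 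Under $P$, conditioned on $(a_t,b_t) = (a,b)$, the reward $x_t$ is distributed as $r_1(a,b)$, so the conditional expectation of the first log-ratio equals exactly $D(r_1(a,b), \tilde{r}_1(a,b))$, and likewise the second equals $D(r_2(a,b), \tilde{r}_2(a,b))$; both are deterministic functions of the realized action alone. Applying the tower property and summing over $t$ turns $\sum_{t=1}^T \mathbbm{1}[(a_t,b_t) = (a,b)]$ into $\NumPulls{a}{b}{T}$, yielding
\[ D(P, \tilde{P}) = \sum_{(a,b) \in \mathcal{A} \times \mathcal{B}} \mathbb{E}_P[\NumPulls{a}{b}{T}] \cdot \big( D(r_1(a,b), \tilde{r}_1(a,b)) + D(r_2(a,b), \tilde{r}_2(a,b)) \big), \]
which is the claimed identity.

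I expect the only real subtlety—rather than a genuine obstacle—to be the measure-theoretic bookkeeping: verifying the chain-rule factorization of $dP/d\rho$ rigorously (absolute continuity of $P$ with respect to $\tilde{P}$ on the relevant events, and the choice of reference measure $\rho$), and justifying that the per-step conditional KL is well defined and equals $D(r_i(a,b), \tilde{r}_i(a,b))$ independently of the past. This is entirely analogous to the single-reward argument in \citep{L20}; the presence of the second reward coordinate changes nothing structurally, since $x_t$ and $y_t$ are conditionally independent given $(a_t,b_t)$, so their contributions to the log-likelihood ratio simply add. Notably, no alignment assumption or Stackelberg structure is used—the decomposition holds for arbitrary $v$ and $\tilde{v}$.
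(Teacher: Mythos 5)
Your proposal is correct and follows essentially the same route as the paper: the paper invokes Lemma 15.1 of \citet{L20} as a black box by treating the pair $(x_t, y_t)$ as a single observation from the product distribution $r_1(a,b) \times r_2(a,b)$ and then splits the KL divergence of the product into the sum, whereas you simply inline the proof of that lemma (density factorization, cancellation of the policy terms, tower property) with the two Gaussian factors written out explicitly---which is the identical decomposition. Your closing remarks about the measure-theoretic bookkeeping and conditional independence given $(a_t,b_t)$ are exactly the points the cited lemma handles, so nothing is missing.
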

\begin{proof}
This follows from the exact same argument as the proof in \citet{L20}, where $X_t$ is interpreted as the \textit{pair} of rewards $(\RR{1}{a_t}{b_t}{t}, \RR{2}{a_t}{b_t}{t})$ (or $(\RRTilde{1}{a_t}{b_t}{t}, \RRTilde{2}{a_t}{b_t}{t})$ observed at time step $t$. Let $r(a,b)$ be the product distribution $r_1(a,b) \times r_2(a,b)$, and let $\tilde{r}(a,b)$ be the product distribution $\tilde{r}_1(a,b)\times \tilde{r}_2(a,b)$.
This yields:
\[D(P, \tilde{P}) = \sum_{(a,b) \in \mathcal{A} \times \mathcal{B}} \mathbb{E}_P[\NumPulls{a}{b}{T}] \cdot D(r(a,b), \tilde{r}(a,b)). \]
The result follows from applying the ``chain rule'' which implies that the KL divergence of a product distribution is the sum of KL divergences of the individual distributions: 
\[D(r(a,b), \tilde{r}(a,b)) = D(r_1(a,b), \tilde{r}_1(a,b)) + D(r_2(a,b), \tilde{r}_2(a,b).\]
\end{proof}

Recall that we assume Gaussian noise, which further simplifies Theorem \ref{thm:divergencedecomposition}. By applying standard KL divergence bounds for univariate Gaussians, we obtain the following corollary of Theorem \ref{thm:divergencedecomposition}. 
\begin{corollary}
\label{cor:divergencedecompositiongaussian}
Fix an algorithm $\ALG$ for the centralized environment.  Let $P$ (resp. $\tilde{P}$) denote the probability measure corresponding to the canonical centralized bandit model for $\ALG$ applied to $(\mathcal{A}, \mathcal{B}, v)$ (resp. $(\mathcal{A}, \mathcal{B}, \tilde{v})$). Let $\NumPulls{a}{b}{T} = \sum_{t=1}^T \mathbbm{1}[a_t = a, b_t = b]$ denote the number of times that arm $(a,b)$ is pulled. Then it holds that:
\[D(P, \tilde{P}) = \sum_{(a,b) \in \mathcal{A} \times \mathcal{B}} \mathbb{E}_{P}[\NumPulls{a}{b}{T}] \cdot \frac{(\MR{1}{a}{b} - \MRTilde{1}{a}{b})^2 + (\MR{2}{a}{b} - \MRTilde{2}{a}{b})^2 }{2}. \]
\end{corollary}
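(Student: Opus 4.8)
The plan is to apply Theorem~\ref{thm:divergencedecomposition} directly and then substitute the closed-form expression for the KL divergence between two univariate Gaussians sharing a common variance. Since Corollary~\ref{cor:divergencedecompositiongaussian} is simply the specialization of Theorem~\ref{thm:divergencedecomposition} to Gaussian rewards, the argument is a routine computation rather than a new idea.

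First I would invoke Theorem~\ref{thm:divergencedecomposition}, which expresses $D(P,\tilde P)$ as the weighted sum $\sum_{(a,b) \in \mathcal{A} \times \mathcal{B}} \mathbb{E}_P[\NumPulls{a}{b}{T}] \cdot \left(D(r_1(a,b), \tilde r_1(a,b)) + D(r_2(a,b), \tilde r_2(a,b))\right)$, where the weights are the expected pull counts. Next, I would recall from the model in Section~\ref{sec:setup} that each reward distribution is Gaussian with unit variance, so that $r_i(a,b) = N(\MR{i}{a}{b}, 1)$ and $\tilde r_i(a,b) = N(\MRTilde{i}{a}{b}, 1)$ for $i \in \{1,2\}$.

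The key step is the standard identity that for two univariate Gaussians with a common variance $\sigma^2$, the KL divergence is $D(N(\mu, \sigma^2), N(\tilde\mu, \sigma^2)) = (\mu - \tilde\mu)^2 / (2\sigma^2)$; with $\sigma^2 = 1$ this becomes $(\mu - \tilde\mu)^2 / 2$. Applying this to each of the two reward components gives $D(r_i(a,b), \tilde r_i(a,b)) = (\MR{i}{a}{b} - \MRTilde{i}{a}{b})^2 / 2$, and substituting into the decomposition collapses the two KL terms into $\frac{(\MR{1}{a}{b} - \MRTilde{1}{a}{b})^2 + (\MR{2}{a}{b} - \MRTilde{2}{a}{b})^2}{2}$, which is exactly the claimed formula. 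There is no genuine obstacle here: the only fact requiring verification is the Gaussian KL formula, which is elementary (derivable by a direct Gaussian integral) and standard, so I would either include a one-line derivation or cite it as a known identity.
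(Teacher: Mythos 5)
Your proposal is correct and matches the paper exactly: the paper also obtains the corollary by specializing Theorem \ref{thm:divergencedecomposition} to the unit-variance Gaussian rewards of the model and applying the standard identity $D(N(\mu,1), N(\tilde{\mu},1)) = (\mu - \tilde{\mu})^2/2$ to each of the two reward components. Nothing further is needed.
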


\subsection{Proof for Proposition \ref{prop:lowerboundsqrtt}}\label{appendix:lowersqrtt}

We prove Proposition \ref{prop:lowerboundsqrtt}, restated below.
\lowerboundsqrtt*

\begin{proof}[Proof of Proposition \ref{prop:lowerboundsqrtt}]

\begin{table}[]
\centering 
\begin{tabular}{|c|c|c|c|c|}
\hline
         & $b_1$              & $\ldots$           & $b'$               & $\ldots$           \\ \hline
$a_1$    & $(\delta, \delta)$ & $(\delta, \delta)$ & $(\delta, \delta)$ & $(\delta, \delta)$ \\ \hline
$\vdots$ & $(0, 0)$           & $(0, 0)$           & $(0, 0)$           & $(0, 0)$           \\ \hline
$a'$     & $(0, 0)$           & $(0, 0)$           & *                  & $(0, 0)$           \\ \hline
$\vdots$ & $(0, 0)$           & $(0, 0)$           & $(0, 0)$           & $(0, 0)$           \\ \hline
\end{tabular}
\caption{Hard instance for Proposition \ref{prop:lowerboundsqrtt}, where $*$ is equal to $(0, 0)$ for instance $\mathcal{I}_{a_1, b_1}$, and $(2\delta, 2\delta)$ otherwise. }
\label{tab:sqrtboundex}
\end{table}

Fix $\mathcal{A}$ and $\mathcal{B}$ such that $|\mathcal{A}| \ge 2$ and $|\mathcal{B}| \ge 1$. 

 It suffices to prove this lower bound in a \textit{centralized} environment where a single learner can choose action pairs $(a,b)$ and observes rewards for both players (Lemma \ref{lemma:centralizedimpliesdecentralized}).
We define a family of instances in the centralized game and evaluate the self-tolerant benchmarks on this family of instances.  Arbitrarily pick some $a_1\in \mathcal{A}$ to be the ``base'' action. Let $\mathcal{F}_{\delta, \mathcal{A}, \mathcal{B}}$ be the family of $(|\mathcal{A}| - 1) \cdot |\mathcal{B}| + 1$ instances of the form $(\mathcal{A}, \mathcal{B}, v_1, v_2)$ for varying settings of $v_1$ and $v_2$, where we index the instances by $(a',b') \in ((\mathcal{A} \setminus \left\{a_1\right\}) \times \mathcal{B}) \cup \left\{(a_1, b_1) \right\}$. 
The utility functions for the instance $\mathcal{I}_{(a', b')}$ are equal to the terms below (illustrated in Table \ref{tab:sqrtboundex}): 
\[
\MR{1}{a}{b} = \MR{2}{a}{b} = 
\begin{cases}
  \delta & \text{ if } a = a_1  \\
  0 & \text{ if } (a', b') \neq (a,b), a \neq a_1  \\
  2 \delta  & \text{ if } (a', b') = (a, b), a \neq a_1
\end{cases}
\]

We claim that the $\benchmarkrelaxedweaker{1} = \benchmarkrelaxedweaker{2} = \delta$ for the instance $\mathcal{I}_{(a_1, b_1)}$ and $\benchmarkrelaxedweaker{1} = \benchmarkrelaxedweaker{2} = 2 \delta$ for the instances $\mathcal{I}_{(a', b')}$ where $(a', b') \neq (a_1, b_1)$. To see this, observe that on the instance $\mathcal{I}_{(a_1, b_1)}$, it holds that $\mathcal{B}_{\epsilon}(a_1) = \mathcal{B}$ and $\mathcal{A}_{\epsilon} = \left\{a_1 \right\}$ if $\epsilon < \delta$. Thus, it holds that $\min_{a \in \mathcal{A}_{\epsilon}} \min_{b \in \mathcal{B}_{\epsilon}(a)} \MR{1}{a}{b} + \epsilon \ge \delta$ for all $\epsilon$, so the benchmark is equal to 
\[ \benchmarkrelaxedweaker{1} = \benchmarkrelaxedweaker{2} = \delta,\]
as desired. 
On instances $\mathcal{I}_{(a', b')}$ where $(a', b') \neq (a_1, b_1)$, it holds that $\mathcal{B}_{\epsilon}(a') = \left\{ b' \right\}$ if $\epsilon < 2 \delta$ and $\mathcal{A}_{\epsilon} = \left\{a' \right\}$ if $\epsilon < \delta$. If $\epsilon < \delta$ or $\epsilon \ge 2 \delta$, then $\min_{a \in \mathcal{A}_{\epsilon}} \min_{b \in \mathcal{B}_{\epsilon}(a)} \MR{1}{a}{b} + \epsilon \ge 2 \delta$. If $\delta \le \epsilon < 2\delta$, then $\mathcal{A}_{\epsilon} = \left\{a', a_1 \right\}$ and it also holds that $\min_{a \in \mathcal{A}_{\epsilon}} \min_{b \in \mathcal{B}_{\epsilon}(a)} \MR{1}{a}{b} + \epsilon \ge 2 \delta$. 
This means that the self-tolerant benchmarks are equal to: 
\[ \benchmarkrelaxedweaker{1} = \benchmarkrelaxedweaker{2} = 2 \delta,\]
as desired. 

Because the utilities in $\mathcal{F}_{\delta, \mathcal{A}, \mathcal{B}}$ and the benchmarks are the same for the leader and follower, we see that the regret is also the same for both players. Thus, for the remainder of the analysis, we do not need to distinguish between the regret of the leader and the regret of the follower. Let $R(T; \mathcal{I})$ denote the regret incurred on instance $\Instance$. Since the benchmarks are equal to the maximum reward across all pairs of arms, the expected regret is always nonnegative. 

Fix any $\ALG$ for the centralized environment. For each $(a,b) \in ((\mathcal{A} \setminus \left\{a_1\right\}) \times \mathcal{B}) \cup \left\{(a_1, b_1) \right\}$, let $P_{a,b}$ denote the probability measure over canonical centralized bandit model when $\ALG$ is applied to the instance $\mathcal{I}_{a,b}$ (see \Cref{appendix:tools}). 
Let $\NumPulls{a}{b}{T} = \sum_{t=1}^T \mathbbm{1}[a_t = a, b_t = b]$ be the random variable denoting the number of times that $(a,b)$ is pulled. We define:  
\[(a_m, b_m) := \argmin_{(a,b) \in \mathcal{A} \times \mathcal{B} \mid a \neq a_1 } \mathbb{E}_{P_{(a_1, b_1)}}[ \NumPulls{a}{b}{T}]\]
to be the arm pulled the minimum number of times in expectation over $P_{(a_1, b_1)}$ (i.e., the expectation when $\ALG$ is applied to the instance $\mathcal{I}_{a_1,b_1}$). This means that 
\[\mathbb{E}_{P_{(a_1, b_1)}}[\NumPulls{a_m}{b_m}{T}] \le \frac{T}{(|\mathcal{A}| - 1) \cdot |\mathcal{B}|}.\] We will construct $\delta$ such that the regret is high on at least one of the instances $\mathcal{I}_{(a_1, b_1)}$ and $\mathcal{I}_{(a_m, b_m)}$. 

Now, let $\Event$ denote the event that $\sum_{b \in \mathcal{B}} \NumPulls{a_1}{b}{T} \le T/2$ (i.e., the arm $a_1$ is pulled less than $T/2$ times). It is easy to see that the regret satisfies:
\[ R(T; \mathcal{I}_{a_1, b_1}) \ge \frac{\delta \cdot T}{2} \cdot P_{a_1,b_1}[\Event]\]
\[ R(T; \mathcal{I}_{a_m, b_m}) \ge \frac{\delta \cdot T}{2} \cdot P_{a_m,b_m}[\Event^c]\]
where $\Event^c$ is the complement of $\Event$. We apply Theorem \ref{thm:bhinequality} to see that:
\begin{align*}
    R(T; \mathcal{I}_{a_1, b_1}) + R(T; \mathcal{I}_{a_m, b_m}) &= \frac{\delta \cdot T}{2} \left(P_{a_1,b_1}[\Event] + P_{a_m,b_m}[\Event^c]\right) \\
    &\ge_{(1)} \frac{\delta \cdot T}{2} \cdot \frac{1}{2} \text{exp}\left(-KL(P_{a_1,b_1},P_{a_m,b_m}) \right) \\
   &\ge_{(2)} \frac{\delta \cdot T}{2} \cdot \frac{1}{2} \text{exp}\left(- \mathbb{E}_{P_{a_1, b_1}}[n_T(a_m, b_m)] \cdot (2 \delta)^2 \right) \\
   &\ge_{(3)} \frac{\delta \cdot T}{4} \cdot \text{exp}\left(- \frac{4 \cdot \delta^2 \cdot T}{(|\mathcal{A} - 1) |\mathcal{B}|} \right). 
\end{align*}
where (1) applies Theorem \ref{thm:bhinequality} and (2) applies Corollary \ref{cor:divergencedecompositiongaussian}, and (3) applies the fact that $\NumPulls{a_m}{b_m}{T} \le \frac{T}{(|\mathcal{A}| - 1) \cdot |\mathcal{B}|}$. If we set $\delta = \Theta(\sqrt{\frac{|\mathcal{A} - 1) |\mathcal{B}|}{T}})$, then we obtain a bound of $\Theta(\sqrt{T \cdot (|\mathcal{A}| - 1) \cdot |\mathcal{B}|})$. Since expected regret is nonnegative for these instances (see discussion above), this implies that either $R(T; \mathcal{I}_{a_1, b_1})  = \Omega(\sqrt{T \cdot (|\mathcal{A}| - 1) \cdot |\mathcal{B}|})$ or $R(T; \mathcal{I}_{a_m, b_m})  = \Omega(\sqrt{T \cdot (|\mathcal{A}| - 1) \cdot |\mathcal{B}|})$ as desired.

\end{proof}

\subsection{Proof of Theorem \ref{thrm:worstlinear}}\label{appendix:worstlinear}

\worstlinear*
\begin{proof}
 It suffices to prove this lower bound in a \textit{centralized} environment where a single learner can choose action pairs $(a,b)$ and observes rewards for both players (Lemma \ref{lemma:centralizedimpliesdecentralized}). We construct a pair of instances $\Instance$ and $\tilde{\Instance}$ such that at least one of the players incurs linear regret on at least one of the instances. In particular, we take $\Instance$ and $\tilde{\Instance}$ to be the instances depicted in Table \ref{tab:main_1} with $\delta = O(1/\sqrt{T})$ (reproduced  here for convenience). 

 \begin{table*}[ht!]
\centering
\begin{subtable}[b]{0.45\linewidth}
\centering
\begin{tabular}{|c|c|c|}
\hline
      & $b_1$                  & $b_2$                  \\ \hline
$a_1$ & $(0.6, \delta)$ & $(0.2, \textbf{0})$             \\ \hline
$a_2$ & $(0.5, 0.6)$       & $(0.4, 0.4)$ \\ \hline
\end{tabular}
\caption{Mean rewards $(\MR{1}{a}{b}, \MR{2}{a}{b})$ for $\Instance$}
\end{subtable}
\hfill
\begin{subtable}[b]{0.45\linewidth}
\centering
\begin{tabular}{|c|c|c|}
\hline
      & $b_1$                  & $b_2$                  \\ \hline
$a_1$ & $(0.6, \delta)$ & $(0.2, $\boldmath{$2\delta$}$)$             \\ \hline
$a_2$ & $(0.5, 0.6)$       & $(0.4, 0.4)$ \\ \hline
\end{tabular}
\caption{Mean rewards $(\MRTilde{1}{a}{b}, \MRTilde{2}{a}{b})$  for $\tilde{\Instance}$}
\end{subtable}
\end{table*}

We first compute the benchmarks on these two instances. On instance $\Instance$, it holds that  $(a^*, b^*) = (a_1, b_1)$, $\benchmark{1} = 0.6$ and $\benchmark{2} = \delta \ge 0$. On the other hand, on instance $\tilde{\Instance}$, it holds that $(a^*, b^*) =  (a_2, b_1)$, $\benchmark{1} = 0.5$, and $\benchmark{2} = 0.6$.
It is easy to see that $R_1(T; \mathcal{I})$ and $R_2(T; \tilde{\mathcal{I}})$ are always \textit{nonnegative}.  

Fix any $\ALG$ for the centralized environment. Let $P$ (resp. $\tilde{P}$) denote the probability measure over canonical centralized bandit model when $\ALG$ is applied to the instance $\Instance$ (resp. $\tilde{\Instance}$) (see \Cref{appendix:tools}).  We will show that the regret is high on at least one of the instances $\Instance$ and $\tilde{\Instance}$. 

Now let $\NumPulls{a}{b}{T} = \sum_{t=1}^T \mathbbm{1}[a_t = a, b_t = b]$ be the random variable denoting the number of times that $(a,b)$ is pulled, and let $\Event$ denote the event that $\NumPulls{a_1}{b_1}{T} \le T/2$ (i.e., the arm $(a_1, b_1)$ is pulled less than $T/2$ times). It is easy to see that the regret satisfies:
\[ R_1(T; \Instance) \ge \frac{0.1 \cdot T}{2} \cdot P[\Event]\]
\[ R_2(T; \tilde{\Instance}) \ge \frac{(0.6 - \delta) \cdot T}{2} \cdot \tilde{P}[\Event^c]\]
where $\Event^c$ is the complement of $\Event$. We apply Theorem \ref{thm:bhinequality} to see that:
\begin{align*}
    R_1(T; \Instance) + R_2(T; \tilde{\Instance}) &= 
    \frac{0.1 \cdot T}{2} \cdot P[\Event] + \frac{(0.6 - \delta) \cdot T}{2} \cdot \tilde{P}[\Event^c] \\
    &\ge \frac{0.1 \cdot T}{2} \cdot \left( P[\Event]  + \tilde{P}[\Event^c] \right)\\
    &\ge_{(1)}  \frac{0.1 \cdot T}{2} \cdot \frac{1}{2} \text{exp}\left(-KL(P,\tilde{P}) \right) \\
   &\ge_{(2)} \frac{0.1 \cdot T}{2} \cdot \frac{1}{2} \text{exp}\left(- \mathbb{E}_{P}[\NumPulls{a_1}{b_2}{T}] \cdot \frac{(2 \cd \delta)^2}{2} \right) \\
   &\ge_{(3)} \frac{0.1 \cdot T}{4}\cdot \text{exp}\left(- 2 \cdot \delta^2 \cdot T\right). 
\end{align*}
where (1) applies Theorem \ref{thm:bhinequality} and (2) applies Corollary \ref{cor:divergencedecompositiongaussian}, and (3) uses the fact that $\NumPulls{a_1}{b_2}{T} \le T$. If we take $\delta = O(T^{-1/2})$, then we obtain a bound of $\Omega(T)$. Since these expected regrets are always nonnegative (see discussion above), this implies that either $R_1(T; \Instance)  = \Omega(T)$ or $R_2(T; \tilde{\Instance}) = \Omega(T)$ as desired. 
\end{proof}

\subsection{Proof of Theorem \ref{thrm:dlower}}\label{appendix:dlower}

\dlower*
\begin{proof}

\begin{table}[]
\centering 
\begin{tabular}{|c|c|c|c|c|}
\hline
         & $b_1$                  & $\ldots$              & $b'$                  & $\ldots$              \\ \hline
$a_1$    & $(0.5, 3 \cd \delta)$  & $(0.5, 3 \cd \delta)$ & $(0.5, 3 \cd \delta)$ & $(0.5, 3 \cd \delta)$ \\ \hline
$\vdots$ & $(0.5+\delta, \delta)$ & $(0, 0)$              & *                     & $(0, 0)$              \\ \hline
$\vdots$ & $(0.5+\delta, \delta)$ & $(0, 0)$              & *                     & $(0, 0)$              \\ \hline
$\vdots$ & $(0.5+\delta, \delta)$ & $(0, 0)$              & *                     & $(0, 0)$              \\ \hline
\end{tabular}
\caption{Hard instance for Theorem \ref{thrm:dlower}, where $*$ is equal to $(0, 0)$ for instance $\mathcal{I}_{a_1, b_1}$, and $(0, 2\delta)$ otherwise. Note that this example is structurally similar to the illustrative example in Table \ref{tab:main_4}, but with $\abs{\mathcal{A}}, \abs{\mathcal{B}}\geq 2$. }
\label{tab:dlower}
\end{table}

Fix $\mathcal{A}$ and $\mathcal{B}$ such that $|\mathcal{A}| \ge 2$ and $|\mathcal{B}| \ge 2$. 

 It suffices to prove this lower bound in a \textit{centralized} environment where a single learner can choose action pairs $(a,b)$ and observes rewards for both players (Lemma \ref{lemma:centralizedimpliesdecentralized}).
We define a family of instances in the centralized game and evaluate the self-tolerant benchmarks on this family of instances. Arbitrarily pick some $(a_1, b_1) \in \mathcal{A} \times \mathcal{B}$ to be the ``base'' action. Let $\mathcal{F}_{\delta, \mathcal{A}, \mathcal{B}}$ be the family of $|\mathcal{B}|$ instances of the form $(\mathcal{A}, \mathcal{B}, v_1, v_2)$ for varying settings of $v$, where we index the instances by $\mathcal{B}$. 
The utility functions for the instance $\mathcal{I}_{b'}$ are equal to terms below (illustrated in Table \ref{tab:dlower}): 
\[
\MR{1}{a}{b} = 
\begin{cases}
  0.5 & \text{ if } a = a_1  \\
 0.5 + \delta & \text{ if } a \neq a_1, b = b_1  \\
 0 & \text{ if } a \neq a_1, b \neq b_1.
\end{cases}
\]
\[
\MR{2}{a}{b} = 
\begin{cases}
 3 \delta & \text{ if } a = a_1  \\
 \delta & \text{ if } a \neq a_1, b = b_1  \\
 2 \delta & \text{ if } b = b', a \neq a_1, b \neq b_1 \\
 0 & \text{ if } b \neq b', a \neq a_1, b \neq b_1 
\end{cases}
\]

We claim that the $\benchmarkrelaxedstronger{1} = 0.5 + \delta$  and  $\benchmarkrelaxedstronger{2} = \delta$ for the instance $\mathcal{I}_{(a_1, b_1)}$ and $\benchmarkrelaxedstronger{1} = 0.5$ and $\benchmarkrelaxedstronger{2} = 3 \delta$ for the instances $\mathcal{I}_{(a', b')}$ where $(a', b') \neq (a_1, b_1)$. 
\begin{itemize}
    \item \textit{Instance $\mathcal{I}_{b_1}$:} If $\epsilon < \delta$, it holds that $\mathcal{B}_{\epsilon}(a) = \left\{b_1\right\}$ for $a \neq a_1$ and $\mathcal{A}_{\epsilon} = \mathcal{A} \setminus \left\{a_1 \right\}$. If $\epsilon \ge \delta$, then it holds that $\mathcal{B}_{\epsilon}(a) = \mathcal{B}$ and $\mathcal{A}_{\epsilon} = \mathcal{A}$. Altogether, this means that $\benchmarkrelaxedstronger{1} = 0.5 + \delta$ and $\benchmarkrelaxedstronger{2} = \delta$.
    \item \textit{Instances $\mathcal{I}_{b'}$ where $b' \neq b_1$:} If $\epsilon < \delta$, it holds that $\mathcal{B}_{\epsilon}(a) = \left\{b' \right\}$ for $a \neq a_1$ and $\mathcal{A}_{\epsilon} = \left\{a_1 \right\}$. If $\delta \le \epsilon < 2 \delta$, then it holds that $\mathcal{B}_{\epsilon}(a_1) = \mathcal{B}$ and $\mathcal{B}_{\epsilon}(a) = \{b', b_1\}$ for $a \neq a_1$, and $\mathcal{A}_{\epsilon} = \mathcal{A}$. If $\epsilon \ge 2 \delta$, then it holds that $\mathcal{B}_{\epsilon}(a) = \mathcal{B}$ and $\mathcal{A}_{\epsilon} = \mathcal{A}$. Altogether, this means that $\benchmarkrelaxedstronger{1} = 0.5$ and $\benchmarkrelaxedstronger{2} = 3 \delta$. 
\end{itemize}
It is easy to see that the regret $R_1(T; \mathcal{I}_{b_1})$ and the regret $R_2(T; \mathcal{I}_{b})$ for $b \neq b_1$ are always nonnegative. 

Fix an $\ALG$ be an algorithm for the centralized environment. For each $b \in \mathcal{B}$, let $P_{b}$ denote the probability measure over canonical centralized bandit model when $\ALG$ is applied to the instance $\mathcal{I}_{b}$ (see \Cref{appendix:tools}). 
Let $\NumPulls{a}{b}{T} = \sum_{t=1}^T \mathbbm{1}[a_t = a, b_t = b]$ be the random variable denoting the number of times that $(a,b)$ is pulled. We define:  
\[b_m := \argmin_{b \in \mathcal{B} \mid b \neq b_1} \mathbb{E}_{P_{b_1}} \left[\sum_{a \neq a_1} \NumPulls{a}{b}{T}\right]\]
to be the arm $b$ such that the set of arms $(a', b)$ for $a' \neq a_1$ is pulled the minimum number of times in expectation over $P_{b_1}$ (i.e., the expectation when $\ALG$ is applied to the instance $\mathcal{I}_{b_1}$). This means that 
\[\sum_{b \neq b_1} \sum_{a \neq a_1} \mathbb{E}_{P_{b_1}}[\NumPulls{a}{b}{T}] \ge (|\mathcal{B}| - 1) \sum_{a \neq a_1} \mathbb{E}_{P_{b_1}}[\NumPulls{a}{b_m}{T}].\]
We will construct $\delta$ such that the regret is high on at least one of the instances $\mathcal{I}_{b_1}$ and $\mathcal{I}_{b_m}$.

Now, let $\Event$ denote the event that $\sum_{a \neq a_1} \NumPulls{a}{b_1}{T} \le T/2$ (i.e., arms of the form $(a', b_1)$ for $a' \neq a$ are pulled less than $T/2$ times). It is easy to see that the regret satisfies:
\[ R_1(T; \mathcal{I}_{b_1}) \ge \frac{\delta \cdot T}{2} \cdot P_{b_1}[E]\]
\[ R_2(T; \mathcal{I}_{b_m}) \ge \frac{2 \cdot \delta \cdot T}{2} \cdot P_{b_m}[E^c]\]
\[R_1(T; \mathcal{I}_{b_1}) \ge (0.5 + \delta) \cdot \mathbb{E}\left[\sum_{a \neq a_1, b \neq b_1} \NumPulls{a}{b}{T} \right] \ge 0.5 \cdot \mathbb{E}\left[\sum_{a \neq a_1, b \neq b_1} \NumPulls{a}{b}{T}\right]. \]
where $\Event^c$ is the complement of $\Event$. We apply Theorem \ref{thm:bhinequality} to see that:
\begin{align*}
    &2 \cdot R_1(T; \mathcal{I}_{b_1}) + R_2(T; \mathcal{I}_{b_m}) \\
    &= \frac{\delta \cdot T}{2} \cdot P_{b_1}[E] + \frac{2 \cdot \delta \cdot T}{2} \cdot P_{b_m}[E^c] + 0.5 \cdot \mathbb{E}\left[\sum_{a \neq a_1, b \neq b_1} \NumPulls{a}{b}{T} \right] \\
    &\ge \frac{\delta \cdot T}{2} \cdot \left( P_{b_1}[E] +  P_{b_m}[E^c]\right) + 0.5 \cdot \mathbb{E}\left[\sum_{a \neq a_1, b \neq b_1} \NumPulls{a}{b}{T} \right] \\
    &\ge_{(1)} \frac{\delta \cdot T}{2} \text{exp}\left(-KL(P_{b_1},P_{b_m}) \right) + + 0.5 \cdot \mathbb{E}\left[\sum_{a \neq a_1, b \neq b_1} \NumPulls{a}{b}{T} \right] \\
   &\ge_{(2)} \frac{\delta \cdot T}{2} \cdot \frac{1}{2} \text{exp}\left(- \mathbb{E}_{P_{b_1}}\left[\sum_{a \neq a_1} \NumPulls{a}{b_m}{T}\right] \cdot \frac{(2 \delta)^2}{2} \right) + 0.5 \cdot \mathbb{E}\left[\sum_{a \neq a_1, b \neq b_1} \NumPulls{a}{b}{T} \right] \\ 
   &\ge_{(3)} \frac{\delta \cdot T}{2} \cdot \frac{1}{2} \text{exp}\left(- \mathbb{E}_{P_{b_1}}\left[\sum_{a \neq a_1} \NumPulls{a}{b_m}{T} \right] \cdot \frac{(2 \delta)^2}{2} \right) + 0.5 (|\mathcal{B}| - 1) \cdot \mathbb{E}\left[\sum_{b \neq b_1} \NumPulls{a}{b_m}{T} \right] 
\end{align*}
where (1) applies Theorem \ref{thm:bhinequality} and (2) applies Corollary \ref{cor:divergencedecompositiongaussian} and where (3) uses the fact that $\sum_{b \neq b_1} \sum_{a \neq a_1} \mathbb{E}_P[\NumPulls{a}{b}{T}] \ge (|\mathcal{B}| - 1) \sum_{a \neq a_1} \mathbb{E}_P[\NumPulls{a}{b_m}{T}]$. 

We claim that the expression is $\Omega(T^{2/3} (|\mathcal{B}| - 1)^{1/3})$. 
We split into two cases based on the value of $\mathbb{E}\left[\sum_{a \neq a_1} \NumPulls{a}{b_m}{T}\right]$: $\mathbb{E}\left[\sum_{a \neq a_1} \NumPulls{a}{b_m}{T}\right] \ge \Theta(T^{2/3} (|\mathcal{B}| - 1)^{-2/3})$ and $\mathbb{E}\left[\sum_{a \neq a_1} \NumPulls{a}{b_m}{T}\right] \le \Theta(T^{2/3} (|\mathcal{B}| - 1)^{-2/3})$. 
\begin{enumerate}
    \item \textit{Case 1:  $\mathbb{E}\left[\sum_{a \neq a_1} \NumPulls{a}{b_m}{T}\right] \ge \Theta(T^{2/3} (|\mathcal{B}| - 1)^{-2/3})$.} In this case, we see that $0.5 (|\mathcal{B}| - 1) \cdot \mathbb{E}\left[\sum_{b \neq b_1} \NumPulls{a}{b_m}{T}\right] = \Omega(T^{2/3} (|\mathcal{B}| - 1)^{1/3})$. 
    \item \textit{Case 2:  $\mathbb{E}\left[\sum_{a \neq a_1} \NumPulls{a}{b_m}{T}\right] \le \Theta(T^{2/3} (|\mathcal{B}| - 1)^{-2/3})$.} In this case, we can write:
    \[\frac{\delta \cdot T}{2} \cdot \frac{1}{2} \text{exp}\left(- \mathbb{E}_{P_{b_1}}\left[\sum_{a \neq a_1} \NumPulls{a}{b_m}{T}\right] \cdot \frac{(2 \delta)^2}{2} \right) \ge \frac{\delta \cdot T}{2} \cdot \frac{1}{2} \text{exp}\left(- \Theta\left( T^{2/3} (|\mathcal{B}| - 1)^{-2/3} \cdot \delta^2  \right)\right). \]
    In this case, we set $\delta = \Theta(T^{-1/3} (|\mathcal{B}| - 1)^{1/3})$ and the expression becomes $\Omega(T^{2/3} (|\mathcal{B}| - 1)^{1/3})$.  
\end{enumerate}

This proves that $ 2 \cdot R_1(T; \mathcal{I}_{b_1}) + R_2(T; \mathcal{I}_{b_m}) = \Omega(T^{2/3} (|\mathcal{B}| - 1)^{1/3})$.

Since expected regret is nonnegative for these instances (see discussion above), this implies that either $R_1(T; \mathcal{I}_{b_1})  = \Omega(T^{2/3} (|\mathcal{B}| - 1)^{1/3})$ or $R_2(T; \mathcal{I}_{b_m})  = \Omega(T^{2/3} (|\mathcal{B}| - 1)^{1/3})$ as desired. 
\end{proof}

\section{Proofs for Section \ref{sec:benchmarks}}\label{app:proofbenchmarks}

\subsection{Proof of Proposition \ref{prop:decentralizedlower}}\label{appendix:proofetcetcfails}

We prove Proposition \ref{prop:decentralizedlower}.

\decentralizedlower*

This proof holds for $\gamma < 0.1$ (the construction can be generalized to other constant $\gamma$ by adjusting the values of the mean rewards; we present this construction which builds on Table \ref{tab:main_2}). 

\begin{proof}

\begin{table}[ht]
\centering
\begin{tabular}{|c|c|c|}
\hline
      & $b_1$                  & $b_2$                  \\ \hline
$a_1$ & $(0.6, 0.4)$ & $(0.2, 0)$             \\ \hline
$a_2$ & $(0.5, 0.3)$       & $(0.4, 0.2)$ \\ \hline
\end{tabular}
\caption{A single instance, illustrating the $\gamma$-tolerant benchmark - variant of Table \ref{tab:main_2} with $\delta=0.1$ }
\label{tab:main_3}
\end{table}
We take $\Instance^*$ to be the instance $\Instance$ in Table \ref{tab:main_3} (equivalent to Table \ref{tab:main_2} with $\delta = 0.1$). 

The fact that $E' < E$ means that the leader's exploration phase takes place entirely during the follower's exploration phase. Moreover, since the leader's exploration parameter $E' \cdot |\mathcal{B}|$ is divisible by $|\mathcal{B}|$, for every arm $a \in \mathcal{A}$, the follower pulls every arm $b \in \mathcal{B}$ an equal number of times. 
Given that follower explores evenly between the two arms $b_1$ and $b_2$, the leader's expected average reward $\mathbb{E}[\hatMROneArg{1}{a_1}]$ from $a_1$ during the first $E' \cdot |\mathcal{B}|$ rounds is given by $(0.6 + 0.2) / 2 = 0.4$ and the leader's expected average reward $\mathbb{E}[\hatMROneArg{1}{a_2}]$ average reward from $a_2$ is given by $(0.5 +0.4) / 2 = 0.45$. 

The proofs boils down to analyzing the relationship between the distributions $\hatMROneArg{1}{a_1}$ and $\hatMROneArg{1}{a_2}$. Note that we allow $E, E'$ to be arbitrary, so we cannot use standard concentration bounds. Instead, we leverage the symmetry of the distribution of the  empirical mean $\hatMROneArg{1}{a_1}$ (this follows from the fact that $\hatMROneArg{1}{a_1} - \mathbb{E}[\hatMROneArg{1}{a_1}]$ is distributed as a Gaussian). This means that:
\[\mathbb{P}[\hatMROneArg{1}{a_1} > 0.4] = \mathbb{P}[\hatMROneArg{1}{a_1} <  0.4] = 0.5.\]
(The probability $\mathbb{P}[\hatMROneArg{1}{a_1} = \mathbb{E}[\hatMROneArg{1}{a_1}] = 0.4]$ is equal to $0$.)
Similarly, we see that:
\[\mathbb{P}[\hatMROneArg{1}{a_2} >  0.45] = \mathbb{P}[\hatMROneArg{1}{a_2} < 0.45] = 0.5.\]
Because the stochastic rewards have independent randomness, we know that with probability at least 0.25 we have $\hatMROneArg{1}{a_1} < 0.4$ and $\hatMROneArg{1}{a_2} > 0.45$. When this occurs, the leader commits to pulling arm $a_2$. 

Regardless of the follower's choice of action ($b_1$ or $b_2$) in the commit phase, this means that the follower obtains reward at most $0.3$ and the leader obtains reward at most $0.5$. However, recall that we found that the $\gamma$-tolerant benchmark (for $\gamma=0.1$) are $\benchmarkrelaxedstronger{1} = 0.6$ and $\benchmarkrelaxedstronger{2} = 0.4$. This leads to linear regret (at least $0.25 \cd 0.1 \cd T$) for both players, even with respect to the $\gamma$-tolerant benchmark. 
\end{proof}

\subsection{Proof of Theorem \ref{thm:ETCETC}}\label{appendix:proofetcetc}

\etcetc*

In this theorem, we will assume $\gamma = \omega\p{T^{-1/3}\abs{\mathcal{A}}^{1/3} \abs{\mathcal{B}}^{1/3} \cd (\log (T)^{1/3})}$ (see Section \ref{subsec:maxtol} for a discussion of $\gamma$) . 

\paragraph{Notation.} We will use the following notation in the proof. For $a \in \mathcal{A}$ and $b \in \mathcal{B}$, let $\hatMR{2}{a}{b}$ denote the empirical mean of observations that the follower has seen for arm $a$ during the first $E_2 \cdot |\mathcal{A}| \cdot |\mathcal{B}|$ time steps. For $a \in \mathcal{A}$, let $\hatMROneArg{1}{a}$ denote the empirical mean of observations that the leader has seen for arm $a$ during the first time steps $t \in [E_2 \cdot |\mathcal{B}| \cdot |\mathcal{A}| + 1, E_2 \cdot |\mathcal{B}| \cdot |\mathcal{A}| + E_1 \cdot |\mathcal{A}|]$. We denote by $\tilde{b}(a) = \argmax_{b \in \mathcal{B}} \hatMR{2}{a}{b}$ the arm that follower has committed to for rounds $t > E_2 \cdot |\mathcal{A}| \cdot |\mathcal{B}|$ onwards. We denote by $\tilde{a} = \argmax_{ a \in \mathcal{A}} \hatMROneArg{1}{a}$ the arm that the leader has committed to for rounds $t > E_1 \cdot |\mathcal{A}|$.

\paragraph{Clean event.} We define the clean event $\Event := \Event_L \cap \Event_F$ to be the intersection of a clean event $\Event_L$ for the leader and a clean event $\Event_F$ for the follower. Informally speaking, the clean event for the leader is the event that for all arms, the empirical mean reward $\hatMROneArg{1}{a}$ is close to the true reward $\MR{1}{a}{\tilde{b}(a)}$. The event $\Event_L$ is formalized as follows:
\[\forall{a \in \mathcal{A}}: |\hatMROneArg{1}{a} - \MR{1}{a}{\tilde{b}(a)} | \le \frac{10 \sqrt{\log T}}{\sqrt{E_1}}.\]
Similarly, informally speaking, the clean event for the follower is the event that for all arms, the empirical mean reward $\hatMR{2}{a}{b}$ is close to the true reward $\MR{2}{a}{b}$. The event $\Event_F$ is formalized as follows:
\[\forall{a \in \mathcal{A}, b \in \mathcal{B}}: |\hatMR{2}{a}{b} -\MR{2}{a}{b}| \le \frac{10 \sqrt{\log T}}{\sqrt{E_2}}.\]

We prove that the clean event occurs with high probability. 
\begin{lemma}
\label{lemma:cleanETCETC}
Assume the notation above. Let the follower run a separate instantiation of \\$\ExploreThenCommit(E_2, \mathcal{B})$ for every $a \in \mathcal{A}$, and let the leader run $\ExploreThenCommitThrowOut(E_1, E_2 \cdot |\mathcal{B}|, \mathcal{A})$. Then the clean event occurs with probability $\mathbb{P}[\Event] \ge 1 - (|\mathcal{A}| \cdot |\mathcal{B}| + |\mathcal{A}|) T^{-3}$.
\end{lemma}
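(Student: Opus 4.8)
The plan is to decompose the clean event via a union bound, $\mathbb{P}[\Event^c] \le \mathbb{P}[\Event_F^c] + \mathbb{P}[\Event_L^c]$, and to control each of $\Event_F$ and $\Event_L$ by reducing it to a standard Gaussian concentration bound for the average of i.i.d.\ unit-variance samples. Recall that for $n$ i.i.d.\ draws from $\mathcal{N}(\mu,1)$ with sample mean $\hat\mu$, the sub-Gaussian tail gives $\mathbb{P}[|\hat\mu - \mu| > s] \le 2\exp(-n s^2/2)$. Plugging in $s = 10\sqrt{\log T}/\sqrt{n}$ yields $n s^2/2 = 50\log T$, so the deviation probability is at most $2\exp(-50\log T) = 2T^{-50} \le T^{-3}$ for $T \ge 2$. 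Each individual bad event will therefore contribute at most $T^{-3}$, and the stated failure probability $(|\mathcal{A}||\mathcal{B}| + |\mathcal{A}|)T^{-3}$ is exactly the count of bad events ($|\mathcal{A}||\mathcal{B}|$ for the follower, $|\mathcal{A}|$ for the leader).

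First I would handle $\Event_F$. The key observation is a timing/counting fact: during the first $E_2 \cdot |\mathcal{A}| \cdot |\mathcal{B}|$ rounds the leader is in its throw-out phase and pulls each arm $a \in \mathcal{A}$ in round-robin exactly $E_2 \cdot |\mathcal{B}|$ times. For each such arm $a$, the follower runs $\ExploreThenCommit(E_2, \mathcal{B})$ and hence explores $\mathcal{B}$ uniformly, so each pair $(a,b)$ is sampled exactly $E_2$ times within this window. Thus $\hatMR{2}{a}{b}$ is precisely the mean of $E_2$ i.i.d.\ $\mathcal{N}(\MR{2}{a}{b}, 1)$ rewards, and the Gaussian tail bound with $n = E_2$ gives $\mathbb{P}[|\hatMR{2}{a}{b} - \MR{2}{a}{b}| > 10\sqrt{\log T}/\sqrt{E_2}] \le T^{-3}$. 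A union bound over the $|\mathcal{A}| \cdot |\mathcal{B}|$ pairs yields $\mathbb{P}[\Event_F^c] \le |\mathcal{A}| \cdot |\mathcal{B}| \cdot T^{-3}$.

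The delicate part, and the main obstacle, is $\Event_L$, because $\hatMROneArg{1}{a}$ averages the leader's rewards $\RR{1}{a}{b_{t}}{t}$ where the follower's responses $b_t$ are themselves random. Here I would first use the matching of parameters: since the leader throws out exactly $E_2 \cdot |\mathcal{B}|$ rounds per arm, which equals the follower's exploration budget, by the time the leader begins its genuine exploration phase (rounds $E_2 |\mathcal{B}| |\mathcal{A}| + 1$ through $E_2 |\mathcal{B}| |\mathcal{A}| + E_1 |\mathcal{A}|$) the follower has already committed to a fixed arm $\tilde{b}(a)$ for every $a$. I would then condition on the realized commitment $\tilde{b}(a) = b$: within the leader's exploration window, arm $a$ is pulled $E_1$ times and the follower deterministically replies $b$ each time, so the leader's $E_1$ rewards are \emph{fresh} i.i.d.\ draws from $\mathcal{N}(\MR{1}{a}{b}, 1)$, with randomness independent of the history that determined $\tilde{b}(a)$. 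Applying the Gaussian tail bound conditionally gives $\mathbb{P}[|\hatMROneArg{1}{a} - \MR{1}{a}{b}| > 10\sqrt{\log T}/\sqrt{E_1} \mid \tilde{b}(a) = b] \le T^{-3}$ for every $b$, and since this uniform bound does not depend on $b$, the law of total probability integrates it out to $\mathbb{P}[|\hatMROneArg{1}{a} - \MR{1}{a}{\tilde{b}(a)}| > 10\sqrt{\log T}/\sqrt{E_1}] \le T^{-3}$. A union bound over the $|\mathcal{A}|$ arms gives $\mathbb{P}[\Event_L^c] \le |\mathcal{A}| \cdot T^{-3}$, and combining the two bounds completes the proof. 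The crux is the independence argument: making precise that the exploration-phase rewards are conditionally i.i.d.\ given the (random) commitment and independent of the data that produced it, so that the conditional concentration bound transfers to an unconditional one.
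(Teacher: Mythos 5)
Your proposal is correct and follows essentially the same route as the paper's proof: a Chernoff/Gaussian tail bound for each of the $|\mathcal{A}|\cdot|\mathcal{B}|$ follower estimates (each built from exactly $E_2$ samples during the leader's throw-out phase) and for each of the $|\mathcal{A}|$ leader estimates (each built from $E_1$ fresh samples taken after the follower has committed to $\tilde{b}(a)$), followed by union bounds. The only difference is that you spell out the conditional-independence step for $\Event_L$ --- conditioning on the realized commitment $\tilde{b}(a)=b$ and integrating the uniform conditional bound out via total probability --- which the paper simply asserts by noting the estimate is centered at $\MR{1}{a}{\tilde{b}(a)}$ with $E_1$ samples; your version is a welcome tightening of that step, not a different argument.
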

\begin{proof}

First, we consider the follower's clean event $\Event_F$. For each $a \in \mathcal{A}, b \in \mathcal{B}$, the follower has seen $E_2$ samples, so by a Chernoff bound, we have that 
$$P\br{|\hatMR{2}{a}{b} - \MR{2}{a}{b}| \ge \frac{10 \sqrt{\log T}}{\sqrt{E_2}}} \leq  T^{-3}.$$
We union bound over $a \in \mathcal{A}$ and $b \in \mathcal{B}$.

Next, we consider the leader's clean event $G_L$. Note that $\hatMROneArg{1}{a}$ estimate is derived from rewards only after the follower has committed to a best response, so it is drawn from a distribution centered at $\MR{1}{a}{\tilde b(a)}$, with $E_1$ samples. Again by applying a Chernoff bound, we see that 
$$P\br{|\hatMROneArg{1}{a} - \MR{1}{a}{\tilde b(a)}| \ge \frac{10 \sqrt{\log T}}{\sqrt{E_1}}} \leq  T^{-3}.$$
We union bound over $a \in \mathcal{A}$.

Finally, we apply another union bound which leads 
$\mathbb{P}[\Event] \ge 1- (\abs{\mathcal{A}} \cd \abs{\mathcal{B}} + \abs{\mathcal{A}}) \cd T^{-3})$. 
\end{proof}

We also prove the following lower bounds on the leader's utility and follower's utility from the actions $\tilde{a}$ and $\tilde{b}(\tilde{a})$ that they commit to. 
\begin{lemma}
\label{lemma:mainlemmaETCETC}
Assume the notation above. Let the follower run a separate instantiation of \\ $\ExploreThenCommit(E_2, \mathcal{B})$ for every $a \in \mathcal{A}$, and let the leader run $\ExploreThenCommitThrowOut(E_1, E_2 \cdot |\mathcal{B}|, \mathcal{A})$. Suppose that the clean event $\Event$ holds. Then, for some $\epsilon^* = \Theta\left(\max\left(\frac{\sqrt{\log T}}{\sqrt{E_1}}, \frac{\sqrt{\log T}}{\sqrt{E_2}} \right)\right)$, it holds that:
\[\MR{1}{\tilde{a}}{\tilde{b}(\tilde{a})} \ge \max_{a \in \mathcal{A}} \min_{b \in \mathcal{B}_{\epsilon^*}(a)} \MR{1}{a}{b} - \epsilon^*\]
and that
\[\MR{2}{\tilde{a}}{\tilde{b}(\tilde{a})}\ge \min_{a \in \mathcal{A}_{\epsilon^*}} \max_{b \in \mathcal{B}} \MR{2}{a}{b} - \epsilon^*.\]    
\end{lemma}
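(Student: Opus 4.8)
The plan is to deduce both inequalities from a single structural fact valid on the clean event $\Event$: the action $\tilde b(a)$ that the follower commits to for each leader action $a$ lies in the follower's $\epsilon^*$-best-response set $\mathcal{B}_{\epsilon^*}(a)$, and the leader's committed action $\tilde a$ lies in $\mathcal{A}_{\epsilon^*}$. Throughout I would fix $\epsilon^* = \Theta\!\left(\max\!\left(\frac{\sqrt{\log T}}{\sqrt{E_1}}, \frac{\sqrt{\log T}}{\sqrt{E_2}}\right)\right)$, taking the hidden constant large enough to dominate the confidence widths appearing in $\Event_L$ and $\Event_F$. All probabilistic content has already been absorbed into Lemma \ref{lemma:cleanETCETC}, so this argument is entirely deterministic given $\Event$.

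First I would record the follower-side consequence of $\Event_F$. Since $\tilde b(a) = \argmax_{b} \hatMR{2}{a}{b}$, for the true best response $b^*(a)$ we have $\hatMR{2}{a}{\tilde b(a)} \ge \hatMR{2}{a}{b^*(a)}$; applying the two-sided bound of $\Event_F$ to both sides gives $\max_b \MR{2}{a}{b} - \MR{2}{a}{\tilde b(a)} \le 20\sqrt{\log T}/\sqrt{E_2}$, i.e. $\tilde b(a) \in \mathcal{B}_{\epsilon^*}(a)$ for every $a$ (using $\epsilon^* \ge 20\sqrt{\log T}/\sqrt{E_2}$). This is the single fact linking the commit-phase rewards to the benchmark quantities. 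For the leader's bound (first inequality), let $a^\dagger$ attain $\max_a \min_{b \in \mathcal{B}_{\epsilon^*}(a)} \MR{1}{a}{b}$. Because $\tilde a$ maximizes the leader's empirical mean, $\hatMROneArg{1}{\tilde a} \ge \hatMROneArg{1}{a^\dagger}$; converting both empirical means to true means via $\Event_L$ yields $\MR{1}{\tilde a}{\tilde b(\tilde a)} \ge \MR{1}{a^\dagger}{\tilde b(a^\dagger)} - 2\cdot 10\sqrt{\log T}/\sqrt{E_1}$. Since $\tilde b(a^\dagger) \in \mathcal{B}_{\epsilon^*}(a^\dagger)$ from the previous step, $\MR{1}{a^\dagger}{\tilde b(a^\dagger)} \ge \min_{b \in \mathcal{B}_{\epsilon^*}(a^\dagger)} \MR{1}{a^\dagger}{b} = \max_a \min_{b\in \mathcal{B}_{\epsilon^*}(a)}\MR{1}{a}{b}$, and the residual slack is at most $\epsilon^*$, giving the first claim.

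It remains to prove the follower inequality, and here I would first establish $\tilde a \in \mathcal{A}_{\epsilon^*}$, which I expect to be the one step requiring care, since it must be read directly off the somewhat intricate definition of $\mathcal{A}_{\epsilon}$. Concretely, $\tilde b(\tilde a) \in \mathcal{B}_{\epsilon^*}(\tilde a)$ implies $\max_{b\in \mathcal{B}_{\epsilon^*}(\tilde a)} \MR{1}{\tilde a}{b} \ge \MR{1}{\tilde a}{\tilde b(\tilde a)}$, and chaining this with the inequality just proved gives $\max_{b\in \mathcal{B}_{\epsilon^*}(\tilde a)} \MR{1}{\tilde a}{b} \ge \max_a \min_{b \in \mathcal{B}_{\epsilon^*}(a)}\MR{1}{a}{b} - \epsilon^*$, which is exactly the membership condition for $\mathcal{A}_{\epsilon^*}$. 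Given $\tilde a \in \mathcal{A}_{\epsilon^*}$, the follower bound follows immediately: $\MR{2}{\tilde a}{\tilde b(\tilde a)} \ge \max_b \MR{2}{\tilde a}{b} - \epsilon^* \ge \min_{a\in \mathcal{A}_{\epsilon^*}}\max_b \MR{2}{a}{b} - \epsilon^*$, where the first inequality uses $\tilde b(\tilde a) \in \mathcal{B}_{\epsilon^*}(\tilde a)$ and the second uses $\tilde a \in \mathcal{A}_{\epsilon^*}$.

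The main obstacle is therefore purely bookkeeping: one must choose the constant in $\epsilon^*$ so that each of the slack terms $2\cdot 10\sqrt{\log T}/\sqrt{E_1}$ and $20\sqrt{\log T}/\sqrt{E_2}$ is bounded by the common $\epsilon^*$, and one must invoke the $\mathcal{A}_{\epsilon}$ definition in precisely the right form. No concentration arguments are needed beyond the clean event, and no dependence on $|\mathcal{A}|$ or $|\mathcal{B}|$ enters here, since those factors are handled when $E_1, E_2$ are instantiated in the proof of Theorem \ref{thm:ETCETC}.
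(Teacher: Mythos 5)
Your proposal is correct and follows essentially the same route as the paper's proof: establish $\tilde b(a)\in\mathcal{B}_{\epsilon^*}(a)$ for all $a$ from $\Event_F$, convert the leader's empirical comparison to true means via $\Event_L$ to get the first inequality, then chain it with $\tilde b(\tilde a)\in\mathcal{B}_{\epsilon^*}(\tilde a)$ to read off $\tilde a\in\mathcal{A}_{\epsilon^*}$ from the definition and conclude the follower bound. The only cosmetic difference is that you compare against a fixed maximizer $a^\dagger$ while the paper passes through $\max_{a\in\mathcal{A}}\MR{1}{a}{\tilde b(a)}$, which is the same argument.
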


\begin{proof}[Proof of Lemma \ref{lemma:mainlemmaETCETC}]

We assume that the clean event $\Event$ holds. We take $\epsilon^* = \Theta\left((|\mathcal{A}| \cdot |\mathcal{B}| \cdot (\log T))^{1/3} \cdot T^{-1/3}\right)$ with sufficiently high implicit constant. 

First, we show that the follower chooses a near-optimal action for every $a \in \mathcal{A}$: that is, $\MR{2}{a}{\tilde{b}(a)} \ge \max_{b \in \mathcal{B}} \MR{2}{a}{b} - \epsilon^*$.  Since $\Event_F$ holds, for every $a \in \mathcal{A}$ and $b \in \mathcal{B}$, we know that $|\hatMR{2}{a}{b} - \MR{2}{a}{b}| \le \frac{10 \sqrt{\log T}}{\sqrt{E_2}}$. Based on our setting of $E_2$ and because $\tilde{b}(a) = \argmax_{ b \in \mathcal{B}} \hat{v}^2_{a,b}$, it holds that: 
\[ \MR{2}{a}{\tilde{b}(a)} \ge \left(\max_{b \in \mathcal{B}} \MR{2}{a}{b}\right) - \frac{20 \sqrt{\log T}}{\sqrt{E_2}} \ge \left(\max_{b \in \mathcal{B}} \MR{2}{a}{b}\right) - \epsilon^*, \]
as desired. 

Next, we show that the leader chooses a near-optimal action: that is, $\MR{1}{\tilde{a}}{\tilde{b}(\tilde{a})} \ge \max_{a \in \mathcal{A}} \MR{1}{a}{ \tilde{b}(a)} - \epsilon^*$. 
Since $\Event_L$ holds, we know that $|\hatMROneArg{1}{a}  - \MR{1}{a}{\tilde{b}(a)}| \le \frac{10 \sqrt{\log T}}{\sqrt{E_1}}$. Based on our setting of $E_2$ and because $\tilde{a} = \argmax_{a \in \mathcal{A}} \hatMROneArg{1}{a}$, it holds that: 
\[ \MR{1}{\tilde{a}}{\tilde{b}(\tilde{a})} \ge \left(\max_{a \in \mathcal{A}} \MR{1}{a}{\tilde{b}(a)}\right) - \frac{20 \sqrt{\log T}}{\sqrt{E_1}} \ge \left(\max_{a \in \mathcal{A}} \MR{1}{a}{\tilde{b}(a)}\right) - \epsilon^*.\]
as desired.

To bound the leader's utility, observe that $\MR{2}{a}{\tilde{b}(a)} \ge \max_{b \in \mathcal{B}} \MR{2}{a}{b} - \epsilon^*$ implies that $b \in \mathcal{B}_{\epsilon^*}(a)$. This, coupled with the other bound, means that:
\[\MR{1}{\tilde{a}}{\tilde{b}(\tilde{a})} \ge \max_{a \in \mathcal{A}} \MR{1}{a}{\tilde{b}(a)} - \epsilon^* \ge \left(\max_{a \in \mathcal{A}} \min_{b \in \mathcal{B}_{\epsilon^*}(a)} \MR{1}{a}{b}\right) - \epsilon^*.\]

To bound the follower's utility, observe that $\MR{1}{\tilde{a}}{\tilde{b}(\tilde{a})} \ge \max_{a \in \mathcal{A}} \MR{1}{a}{\tilde{b}(a)} - \epsilon^*$ and $\MR{2}{a}{\tilde{b}(a)} \ge \max_{b \in \mathcal{B}} \MR{2}{a}{b} - \epsilon^*$ together imply that
\[\max_{b \in \mathcal{B}_{\epsilon^*}(a)} \MR{1}{\tilde{a}}{b}  \ge \MR{1}{\tilde{a}}{\tilde{b}(\tilde{a})}  \ge \max_{a \in \mathcal{A}} \MR{1}{a}{\tilde{b}(a)} - \epsilon^* \ge \left(\max_{a \in \mathcal{A}} \min_{b \in \mathcal{B}_{\epsilon^*}(a)} \MR{1}{a}{b}\right) - \epsilon^*,\]
which implies that $a \in \mathcal{A}_{\epsilon^*}$. This means that 
\[ \MR{2}{\tilde{a}}{\tilde{b}(\tilde{a})} \ge \left(\max_{b \in \mathcal{B}} \MR{2}{\tilde{a}}{b}\right) - \epsilon^* \ge \min_{a \in \mathcal{A}_{\epsilon^*}} \max_{b \in \mathcal{B}} \MR{2}{a}{b} - \epsilon^*.\] 

\end{proof}

We now prove Theorem \ref{thm:ETCETC}. 
\begin{proof}[Proof of Theorem \ref{thm:ETCETC}]

Assume that the clean event $\Event$ holds. This occurs with probability at least $1 - (|\mathcal{A}| \cdot |\mathcal{B}| + |\mathcal{A}|) T^{-3}$ (Lemma \ref{lemma:cleanETCETC}), so the clean event not occuring counts negligibly towards regret.

First, we consider the first $E_2 \cdot |\mathcal{B}| \cdot |\mathcal{A}| + E_1 \cdot |\mathcal{A}|$ time steps. Each time step results in $O(1)$ regret for both players. Based on the settings of $E_1$ and $E_2$, these phases contribute a regret of:
\[E_2 \cdot |\mathcal{B}| \cdot |\mathcal{A}| + E_1 \cdot |\mathcal{A}| = O \left(|\mathcal{A}|^{1/3} \cdot |\mathcal{B}|^{1/3} \cdot (\log T)^{1/3} \cdot T^{2/3} \right). \]

We focus on $t > E_2 \cdot |\mathcal{B}| \cdot |\mathcal{A}| + E_1 \cdot |\mathcal{A}|$ for the remainder of the analysis. Our main ingredient is Lemma \ref{lemma:mainlemmaETCETC}.
Note that $\epsilon^* = \Theta\left(\max\left(\frac{\sqrt{\log T}}{\sqrt{E_1}}, \frac{\sqrt{\log T}}{\sqrt{E_2}} \right)\right) = \Theta\left((|\mathcal{A}| \cdot |\mathcal{B}| \cdot (\log T))^{1/3} \cdot T^{-1/3}\right) $ based on the settings of $E_1$ and $E_2$. The regret of the leader can be bounded as:
\begin{align*}
  &\benchmarkrelaxedstronger{1} \cdot (T - E_2 \cdot |\mathcal{B}| \cdot |\mathcal{A}| - E_1 \cdot |\mathcal{A}|) - \sum_{t > E_2 \cdot |\mathcal{B}| \cdot |\mathcal{A}| + E_1 \cdot |\mathcal{A}|} \MR{1}{a_t}{b_t}  \\
  &\le (T - E_2 \cdot |\mathcal{B}| \cdot |\mathcal{A}| - E_1 \cdot |\mathcal{A}|) \cdot \left(\max_{a \in \mathcal{A}} \min_{b \in \mathcal{B}_{\epsilon^*}(a)} \MR{1}{a}{b} + \epsilon^*\right) - \sum_{t > E_2 \cdot |\mathcal{B}| \cdot |\mathcal{A}| + E_1 \cdot |\mathcal{A}|} \MR{1}{\tilde{a}}{\tilde{b}(\tilde{a})} \\
  &= (T - E_2 \cdot |\mathcal{B}| \cdot |\mathcal{A}| - E_1 \cdot |\mathcal{A}|) \cdot \epsilon^* + (T - E_2 \cdot |\mathcal{B}| \cdot |\mathcal{A}| - E_1 \cdot |\mathcal{A}|) \left(\max_{a \in \mathcal{A}} \min_{b \in \mathcal{B}_{\epsilon^*}(a)} \MR{1}{a}{b} - \MR{1}{\tilde{a}}{\tilde{b}(\tilde{a})} \right) \\
  &\le_{(A)} 2 \cdot T \cdot \epsilon^* \\
  &\le O\left(T^{2/3} (\log T)^{1/3} |\mathcal{A}|^{1/3} |\mathcal{B}|^{1/3} \right). 
\end{align*}
where (A) follows from Lemma \ref{lemma:mainlemmaETCETC}. The regret of the follower can similarly be bounded as:
\begin{align*}
  &\benchmarkrelaxedstronger{1} \cdot (T - E_2 \cdot |\mathcal{B}| \cdot |\mathcal{A}| - E_1 \cdot |\mathcal{A}|) - \sum_{t > E_2 \cdot |\mathcal{B}| \cdot |\mathcal{A}| + E_1 \cdot |\mathcal{A}|} \MR{2}{a_t}{b_t} \\
  &\le (T - E_2 \cdot |\mathcal{B}| \cdot |\mathcal{A}| - E_1 \cdot |\mathcal{A}|) \cdot \left(\min_{a \in \mathcal{A}_{\epsilon^*}} \max_{b \in \mathcal{B}} \MR{2}{a}{b} + \epsilon^* \right) - \sum_{t > E_2 \cdot |\mathcal{B}| \cdot |\mathcal{A}| + E_1 \cdot |\mathcal{A}|} \MR{2}{\tilde{a}}{\tilde{b}(\tilde{a})} \\
  &= (T - E_2 \cdot |\mathcal{B}| \cdot |\mathcal{A}| - E_1 \cdot |\mathcal{A}|) \cdot \epsilon^* + (T - E_2 \cdot |\mathcal{B}| \cdot |\mathcal{A}| - E_1 \cdot |\mathcal{A}|) \left(\min_{a \in \mathcal{A}_{\epsilon^*}} \max_{b \in \mathcal{B}} \MR{2}{a}{b} - \MR{2}{\tilde{a}}{\tilde{b}(\tilde{a})} \right) \\
  &\le_{(B)} 2 \cdot T \cdot \epsilon^* \\
  &\le O\left(T^{2/3} (\log T)^{1/3} |\mathcal{A}|^{1/3} |\mathcal{B}|^{1/3} \right). 
\end{align*}
where (B) follows from Lemma \ref{lemma:mainlemmaETCETC}.
This proves the desired result.

\end{proof}

\subsection{Proof of Theorem \ref{thm:explorethenucb}}\label{appendix:proofexplorethenucb}

\explorethenucb*

We assume $\gamma = \omega\left(\mathcal{A}|^{1/3} |\mathcal{B}|^{1/3} (\log T)^{1/3} T^{-1/3} \right)$.

\paragraph{Notation.} We will use the following notation in the proof. 
Let $\epsilon^* = \max_{t > E} g(t, T, \mathcal{B})$. Let $\tilde{a} = \argmax_{a \in \mathcal{A}} \min_{\mathcal{B}_{\epsilon^*}(a)} \MR{1}{a}{b}$ be the optimal action for the leader if the follower can worst-case $\epsilon^*$-best-respond to any action.  Let $\hatMROneArgTimeStep{1}{a}{t}$ be the empirical mean specified in \texttt{ExplorethenUCB} at the beginning of time step $t$: this is the empirical mean of all observations that the leader has seen for arm $a$ prior to time step $t$ during the UCB phase (i.e., after time step $E \cdot |\mathcal{A}| + 1$ and prior to time step $t$). Moreover, for each arm $a \in \mathcal{A}$, let $S(a) = \left\{ t > E \cdot |\mathcal{A}| \mid a_{t} = a \right\}$ be the set of time steps where arm $a$ is pulled during the UCB phase, and let $\NumPullsTwoTimesOneArg{a}{E \cdot |\mathcal{A}|}{t} = |\left\{ E \cdot |\mathcal{A}| < t' < t \mid a_{t'} = a \right\}|$ be the number of times that $a$ is pulled during the UCB phase prior to time step $t'$.   

\paragraph{Clean event.} We define the clean event $\Event := \Event_L \cap \Event_F$ to be the intersection of a clean event $\Event_L$ for the leader and a clean event $\Event_F$ for the follower. Informally speaking, the clean event for the leader is the event that for all arms $a\in \mathcal{A}$ and for all time steps $t$, the empirical mean $\hatMROneArgTimeStep{1}{a}{t}$ is close to the true average of the mean rewards across actions taken by $b$ when the leader has chosen action $a$. The event $\Event_L$ is formalized as follows: 
\[\forall a \in \mathcal{A}, t \le T:  \left|\frac{1}{\NumPullsTwoTimesOneArg{a}{E \cdot |\mathcal{A}|}{t}} \sum_{E \cdot |\mathcal{A}| <t' < t \mid a_{t'} = a} \MR{1}{a_{t'}}{b_{t'}} - \hatMROneArgTimeStep{1}{a}{t} \right| \le \frac{10 \sqrt{\log T}}{\sqrt{\NumPullsTwoTimesOneArg{a}{E \cdot |\mathcal{A}|}{t}}}. \]
The clean event $\Event_F$ for the follower is the event that the follower picks an item within the $\epsilon^*$ best response set: $\forall t > E \cdot |\mathcal{A}|: 
b_t \in \mathcal{B}_{\epsilon^*}(a_t)$. 

We first prove that the clean event $\Event$ occurs with high probability.
\begin{lemma}
\label{lemma:cleaneventexplorethenucb}
Assume the notation above.  Let $\ALG_{2}$ be any algorithm with high-probability instantaneous regret $g$ where $g(t, T, \mathcal{B}) = O(E^{-1/2} |\mathcal{B}|^{1/2} (\log T)^{1/2})$ for $t > E$ and $g(t, T, \mathcal{B}) = 1$ for $t \le E$, and let $\ALG_1 = \ExploreThenUCB(E)$.  Then, the event $\Event$ occurs with high probability: $\mathbb{P}[\Event] \ge 1 - T^{-3} (|\mathcal{A}| + 1)$. 
\end{lemma}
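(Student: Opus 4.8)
The plan is to bound the failure probabilities of the two constituent events $\Event_F$ and $\Event_L$ separately and then combine them by a union bound, allocating a budget of $|\mathcal{A}| \cdot T^{-3}$ to the follower event $\Event_F$ and $T^{-3}$ to the leader event $\Event_L$, so that the total failure probability is at most $T^{-3}(|\mathcal{A}|+1)$.

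First I would dispatch $\Event_F$ using the assumed high-probability instantaneous regret bound on $\ALG_2$ directly. The key observation is that the explore phase of $\ExploreThenUCB$ pulls every arm $a \in \mathcal{A}$ exactly $E$ times, so for any $t > E \cdot |\mathcal{A}|$ the arm $a_t$ played at time $t$ satisfies $\NumPullsOneArg{a_t}{t+1} > E$. Since $g(\cdot, T, \mathcal{B})$ is then evaluated at a pull count exceeding $E$, the instantaneous regret guarantee yields $\MR{2}{a_t}{b_t} \ge \max_{b \in \mathcal{B}} \MR{2}{a_t}{b} - g(\NumPullsOneArg{a_t}{t+1}, T, \mathcal{B}) \ge \max_{b \in \mathcal{B}} \MR{2}{a_t}{b} - \epsilon^*$, where $\epsilon^* = \max_{t > E} g(t, T, \mathcal{B})$; this is exactly the statement $b_t \in \mathcal{B}_{\epsilon^*}(a_t)$. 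By the definition of high-probability instantaneous regret this holds for all $t$ simultaneously except with probability $|\mathcal{A}| \cdot T^{-3}$, giving $\mathbb{P}[\Event_F^c] \le |\mathcal{A}| T^{-3}$.

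Next I would establish $\Event_L$ by a Gaussian concentration argument combined with a union bound. The crucial point is that $\Event_L$ controls the deviation of the empirical mean $\hatMROneArgTimeStep{1}{a}{t}$ from the \emph{realized} average of true means $\frac{1}{\NumPullsTwoTimesOneArg{a}{E \cdot |\mathcal{A}|}{t}} \sum \MR{1}{a}{b_{t'}}$, not from a fixed target: subtracting the realized means cancels all dependence on the follower's adaptively chosen actions $b_{t'}$, leaving only the exogenous reward noise $\eta_{t'} = \RR{1}{a_{t'}}{b_{t'}}{t'} - \MR{1}{a_{t'}}{b_{t'}}$, which is i.i.d. $\mathcal{N}(0,1)$ and independent of the action sequence. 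I would then re-index these noise terms per arm, writing the deviation as $\frac{1}{n}\sum_{k=1}^n \zeta_{a,k}$ with $n = \NumPullsTwoTimesOneArg{a}{E \cdot |\mathcal{A}|}{t}$ and $(\zeta_{a,k})_k$ an i.i.d. standard Gaussian sequence (the $k$-th pull of arm $a$ in the UCB phase). For each fixed pull count $n \in [T]$, a standard Gaussian tail bound gives $\mathbb{P}\left[\left|\frac{1}{n}\sum_{k=1}^n \zeta_{a,k}\right| > \frac{10\sqrt{\log T}}{\sqrt{n}}\right] \le 2 T^{-50}$; union bounding over $n \in [T]$ and $a \in \mathcal{A}$ yields $\mathbb{P}[\Event_L^c] \le 2 |\mathcal{A}| T \cdot T^{-50} \le T^{-3}$ for $T$ sufficiently large.

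The main obstacle is precisely this second step: one must argue concentration for the leader even though the leader's reward process is \emph{not} stochastic, since the follower changes its action over time. The resolution is the decoupling described above---because $\Event_L$ compares the empirical mean to the \emph{realized} mean rather than a stationary target, the adaptivity of $b_{t'}$ drops out and only independent Gaussian noise remains, which is exactly what permits re-indexing into an i.i.d. per-arm sequence and applying the fixed-$n$ tail bound with a union over pull counts (alternatively, one could phrase this as a martingale/Azuma bound on $\sum \eta_{t'}$). Combining the two failure bounds then gives $\mathbb{P}[\Event] \ge 1 - \mathbb{P}[\Event_F^c] - \mathbb{P}[\Event_L^c] \ge 1 - T^{-3}(|\mathcal{A}| + 1)$, as desired.
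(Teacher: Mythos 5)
Your proof is correct and follows essentially the same route as the paper's: a union bound splitting the budget as $|\mathcal{A}|\cdot T^{-3}$ for $\Event_F$ (direct from the high-probability instantaneous regret assumption, noting the explore phase gives every arm at least $E$ pulls so $g$ is evaluated past $E$) and $T^{-3}$ for $\Event_L$ (Gaussian/Chernoff concentration with a union bound over arms and pull counts). Your explicit decoupling argument---subtracting the realized means so that only exogenous i.i.d.\ noise remains, then re-indexing per arm---is precisely the content of the paper's one-line appeal to ``the analogue of one of the canonical bandit models in \citep{L20},'' so you have simply spelled out what the paper compresses.
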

\begin{proof}
We first show that $\mathbb{P}[\Event_F] \ge 1 - |\mathcal{A}| \cdot T^{-3}$. A sufficient condition for this event to hold is that: 
\[\forall t > E \cdot |\mathcal{A}|: \MR{2}{a_t}{b_t} \ge \max_{b \in \mathcal{B}} \MR{2}{a_t}{b} - \max_{t > E} g(t, T, \mathcal{B}).\]
Since the exploration phases pulls every arm $a \in \mathcal{A}$ a total of $E$ times, the high-probability instantaneous regret assumption guarantees that this event holds with probability at least $1 - |\mathcal{A}| \cdot T^{-3}$, as desired.

We next show that $\mathbb{P}[\Event_L] \ge 1 - T^{-3}$. This follows from a a Chernoff bound (and using the analogue of one of the canonical bandit models in \cite{L20}) combined with a union bound. 

The lemma follows from another union bound over $\Event_L$ and $\Event_F$. 
\end{proof}

Our main lemma provides, an upper bound on $\frac{1}{\NumPullsTwoTimesOneArg{a'}{E \cdot |\mathcal{A}|}{T} - 1} \sum_{ t \in S(a') \setminus \left\{\max(S(a'))\right\}} \MR{1}{a_t}{b_t}$, which is the average of the mean rewards obtained on  $a'$ across all time steps $t$ where $a'$ is pulled (except for the last round), for each arm $a' \in \mathcal{A}$. In particular, we upper bound this quantity by the worst-case optimal reward under $\epsilon$-best-responses by the follower ($\max_{a \in \mathcal{A}} \min_{b \in \mathcal{B}_{\epsilon^*}(a)} \MR{1}{a}{b}$) minus the twice the size of the confidence set of $a$. 
\begin{lemma}
\label{lemma:UCBbound}
Assume the notation above.  Suppose that the clean event $\Event$ holds. Then it holds that:
\[\frac{1}{\NumPullsTwoTimesOneArg{a'}{E \cdot |\mathcal{A}|}{T+1} - 1} \sum_{ t \in S(a') \setminus \left\{\max(S(a'))\right\}} \MR{1}{a_t}{b_t} \ge \max_{a \in \mathcal{A}} \min_{b \in \mathcal{B}_{\epsilon^*}(a)} \MR{1}{a}{b} - \frac{20 \sqrt{\log T}}{\sqrt{\NumPullsTwoTimesOneArg{a'}{E \cdot |\mathcal{A}|}{T+1}} - 1}. \]
\end{lemma}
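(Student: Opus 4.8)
The plan is to prove Lemma~\ref{lemma:UCBbound} by showing that the UCB rule forces every arm's \emph{empirical} mean (and hence its running average of true mean rewards, via the clean event) to stay above the target value $\max_{a \in \mathcal{A}} \min_{b \in \mathcal{B}_{\epsilon^*}(a)} \MR{1}{a}{b}$ minus a confidence term. First I would establish that the ``benchmark arm'' $\tilde{a} = \argmax_{a \in \mathcal{A}} \min_{b \in \mathcal{B}_{\epsilon^*}(a)} \MR{1}{a}{b}$ always has a high upper confidence bound. Specifically, since the clean event $\Event_F$ guarantees $b_t \in \mathcal{B}_{\epsilon^*}(a_t)$ for every $t > E \cdot |\mathcal{A}|$, whenever $\tilde{a}$ is pulled the follower's response lies in $\mathcal{B}_{\epsilon^*}(\tilde{a})$, so the realized mean reward satisfies $\MR{1}{\tilde{a}}{b_t} \ge \min_{b \in \mathcal{B}_{\epsilon^*}(\tilde{a})} \MR{1}{\tilde{a}}{b} = \max_{a} \min_{b \in \mathcal{B}_{\epsilon^*}(a)} \MR{1}{a}{b}$. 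Averaging and invoking $\Event_L$ shows $\hatMROneArgTimeStep{1}{\tilde{a}}{t} \ge \max_a \min_{b \in \mathcal{B}_{\epsilon^*}(a)} \MR{1}{a}{b} - \alpha_t(\tilde{a})$, so the UCB $\UCBOneArgTimeStep{1}{\tilde{a}}{t} = \hatMROneArgTimeStep{1}{\tilde{a}}{t} + \alpha_t(\tilde{a})$ is at least the target value at every time step.

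Next I would use the UCB selection rule to transfer this lower bound to any arm $a'$ that actually gets pulled. At any time step $t$ where $a'$ is chosen in the UCB phase, by definition of the algorithm $\UCBOneArgTimeStep{1}{a'}{t} \ge \UCBOneArgTimeStep{1}{\tilde{a}}{t} \ge \max_a \min_{b \in \mathcal{B}_{\epsilon^*}(a)} \MR{1}{a}{b}$ from the previous step. Unpacking the UCB definition gives $\hatMROneArgTimeStep{1}{a'}{t} + \alpha_t(a') \ge \max_a \min_{b \in \mathcal{B}_{\epsilon^*}(a)} \MR{1}{a}{b}$, i.e.\ the empirical mean at the time of pulling is at least the target minus one confidence width. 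The delicate point is converting this \emph{pointwise at pull-times} statement about empirical means into a bound on the \emph{final} average of true mean rewards $\frac{1}{N-1}\sum_{t \in S(a') \setminus \{\max S(a')\}} \MR{1}{a_t}{b_t}$, which is why the lemma excludes the last pull and carries an $N-1$ in the denominator: the cleanest way is to apply the pull-time inequality at the \emph{final} time step $\max(S(a'))$, where the empirical mean is computed over exactly the $\NumPullsTwoTimesOneArg{a'}{E \cdot |\mathcal{A}|}{T+1} - 1$ earlier pulls, and then relate that empirical mean back to the true-mean average through $\Event_L$ evaluated at $t = \max(S(a'))$.

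The main obstacle I anticipate is precisely this bookkeeping with the ``off-by-one'' in the pull counts and the exclusion of the last round. The empirical mean $\hatMROneArgTimeStep{1}{a'}{t}$ used to \emph{decide} to pull $a'$ at its last pull-time is based only on the previous pulls (not including the current one), so the average that gets controlled is over $S(a') \setminus \{\max(S(a'))\}$ with $\NumPullsTwoTimesOneArg{a'}{E \cdot |\mathcal{A}|}{T+1} - 1$ terms; I must be careful that the confidence width $\alpha = 10\sqrt{\log T}/\sqrt{|S(a')|}$ is evaluated at the right sample count so that combining the UCB-selection inequality with the $\Event_L$ deviation bound yields the stated $20\sqrt{\log T}/(\sqrt{\NumPullsTwoTimesOneArg{a'}{E \cdot |\mathcal{A}|}{T+1}} - 1)$ (the factor $2$ coming from one confidence width in the UCB comparison and one from passing between empirical and true-mean averages). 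Once the indices are aligned, the remaining steps are routine substitutions, so I would keep the argument organized around the single chain: $\Event_F \Rightarrow$ high UCB for $\tilde{a}$; UCB rule $\Rightarrow$ high empirical mean for pulled $a'$; $\Event_L \Rightarrow$ high true-mean average for $a'$.
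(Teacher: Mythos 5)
Your proposal is correct and matches the paper's proof in all essential respects: both arguments use $\Event_F$ to lower-bound the true mean rewards on the benchmark arm $\tilde{a}$, $\Event_L$ to certify that $\UCBOneArgTimeStep{1}{\tilde{a}}{t}$ stays above $\max_{a \in \mathcal{A}} \min_{b \in \mathcal{B}_{\epsilon^*}(a)} \MR{1}{a}{b}$, the UCB selection rule applied at the final pull time $t^* = \max(S(a'))$, and a second application of $\Event_L$ to transfer from $a'$'s empirical mean back to the average of true means over the $\NumPullsTwoTimesOneArg{a'}{E \cdot |\mathcal{A}|}{T+1} - 1$ earlier pulls, yielding exactly the factor-two confidence width. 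You have also correctly identified the off-by-one bookkeeping (evaluating the pull-time inequality at $t^*$, whose empirical mean excludes the last pull) as the only delicate point, which is precisely how the paper handles the excluded round.
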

\begin{proof}
We assume that the clean event $\Event = \Event_L \cap \Event_F$ holds. Note that $t^* = \max(S(a'))$ denotes the last time step during which $a'$ is chosen. Recalling that $\tilde{a} = \argmax_{a \in \mathcal{A}} \min_{\mathcal{B}_{\epsilon^*}(a)} \MR{1}{a}{b}$, let $S = S(\tilde{a}) \cap [E \cdot |\mathcal{A}| + 1, t^* - 1]$ be the set of time steps during the UCB phase prior to time step $t^*$ where arm $\tilde{a}$ is pulled. We see that at the beginning of time step $t^*$, it holds that:
\begin{align*}
\frac{1}{\NumPullsTwoTimesOneArg{a'}{E \cdot |\mathcal{A}|}{T+1} - 1} \sum_{ t \in S(a') \setminus \left\{t^* \right\}} \MR{1}{a_t}{b_t} 
&\ge_{(1)} \hatMROneArgTimeStep{1}{a'}{t^*} - \frac{10 \sqrt{\log T}}{\sqrt{\NumPullsTwoTimesOneArg{a'}{E \cdot |\mathcal{A}|}{T+1} - 1}} \\ 
&\ge \UCBOneArgTimeStep{1}{a'}{t^*} - \frac{20 \sqrt{\log T}}{\sqrt{\NumPullsTwoTimesOneArg{a'}{E \cdot |\mathcal{A}|}{T+1} - 1}} \\
 &\ge \UCBOneArgTimeStep{1}{\tilde{a}}{t^*} - \frac{20 \sqrt{\log T}}{\sqrt{\NumPullsTwoTimesOneArg{a'}{E \cdot |\mathcal{A}|}{T+1} - 1}} \\
 &= \hatMROneArgTimeStep{1}{\tilde{a}}{t^*} + \frac{10 \sqrt{\log T}}{\sqrt{|S|}} - \frac{20 \sqrt{\log T}}{\sqrt{\NumPullsTwoTimesOneArg{a'}{E \cdot |\mathcal{A}|}{T+1} - 1}} \\
 &\ge_{(2)} \frac{1}{|S|} \sum_{t \in S} \MR{1}{a_t}{b_t}  - \frac{20 \sqrt{\log T}}{\sqrt{\NumPullsTwoTimesOneArg{a'}{E \cdot |\mathcal{A}|}{T+1} - 1}}  \\
 &\ge_{(3)} \max_{a \in \mathcal{A}} \min_{b \in \mathcal{B}_{\epsilon^*}(a)} \MR{1}{a}{b} - \frac{20 \sqrt{\log T}}{\sqrt{\NumPullsTwoTimesOneArg{a'}{E \cdot |\mathcal{A}|}{T+1} - 1}}
\end{align*}
where (1) and (2) use the clean event $\Event_L$. Step (3) uses the clean event $\Event_F$ which guarantees that $b_t \in \mathcal{B}_{\epsilon^*}(a_t)$ for all $t$, which means that for any $t \in S$, it holds that: 
\[\MR{1}{a_t}{b_t} = \MR{1}{\tilde{a}}{b_t} \ge \min_{b \in \mathcal{B}_{\epsilon^*}(a)} \MR{1}{\tilde{a}}{b} =  \max_{a \in \mathcal{A}} \min_{b \in \mathcal{B}_{\epsilon^*}(a)} \MR{1}{a}{b} \]
as desired. 
\end{proof}

Now we are ready to prove Theorem \ref{thm:explorethenucb}. 

\begin{proof}[Proof of Theorem \ref{thm:explorethenucb}]
Assume that the clean event $\Event$ holds. This occurs with probability at least $1 - (1 + |\mathcal{A}|) T^{-3}$ (Lemma \ref{lemma:cleaneventexplorethenucb}), so the clean event not occurring counts negligibly towards regret.

The regret in the explore phase is bounded by $O(1)$ in each round, the total regret from that phase is $O(T^{2/3} |\mathcal{A}|^{1/3}  |\mathcal{B}|^{1/3} (\log T)^{1/3})$ for either player. 

The remainder of the analysis boils down to bounding the regret in the UCB phase. We separately analyze the regret of the leader and the follower. Observe that $\epsilon^* = \max_{t > E} g(t, T, \mathcal{B}) = O\left(\mathcal{A}|^{1/3} |\mathcal{B}|^{1/3} (\log T)^{1/3} T^{-1/3} \right)$ based on the assumption on the follower's algorithm.

\paragraph{Regret for the leader.} We bound the regret as:
\begin{align*}
&\benchmarkrelaxedstronger{1} \cdot (T-E \cd \abs{\mathcal{A}}) - \sum_{t=E \cdot |\mathcal{A}| + 1}^{T} \MR{1}{a_t}{b_t} \\
&\le \sum_{t=E \cdot |\mathcal{A}| + 1}^{T} \left(\epsilon^* + \max_{a \in \mathcal{A}} \min_{b \in \mathcal{B}_{\epsilon^*}(a)} \MR{1}{a}{b} - \MR{1}{a_t}{b_t} \right)  \\
&=  \sum_{a \in \mathcal{A}} \sum_{t \in S(a)}  \left(\epsilon^* + \max_{a \in \mathcal{A}} \min_{b \in \mathcal{B}_{\epsilon^*}(a)} \MR{1}{a}{b} -  \MR{1}{a_t}{b_t}  \right) \\
&\le |\mathcal{A}| + \sum_{a \in \mathcal{A}} \sum_{t \in S(a) \setminus \left\{\max(S(a)) \right\}} \left(\epsilon^* + \max_{a \in \mathcal{A}} \min_{b \in \mathcal{B}_{\epsilon^*}(a)} \MR{1}{a}{b} -  \MR{1}{a_t}{b_t} \right)\\
&\le |\mathcal{A}| + \underbrace{\epsilon^* \cdot T}_{(1)}\\
&+  \underbrace{\sum_{a \in \mathcal{A}} (\NumPullsTwoTimesOneArg{a}{E \cdot |\mathcal{A}|}{T+1} - 1) \cdot \left(\max_{a \in \mathcal{A}} \min_{b \in \mathcal{B}_{\epsilon^*}(a)} \MR{1}{a}{b} -  \frac{1}{\NumPullsTwoTimesOneArg{a}{E \cdot |\mathcal{A}|}{T+1} - 1} \sum_{t \in S(a) \setminus \left\{\max(S(a)) \right\}} \left( \MR{1}{a_t}{b_t} \right) \right)}_{(2)}
\end{align*}
The term $|\mathcal{A}|$ computes negligibly, 
term (1) is equal to $\Theta(\mathcal{A}|^{1/3} |\mathcal{B}|^{1/3} (\log T)^{1/3} T^{2/3})$, and term (2) can be bounded by:
\begin{align*}
  &\sum_{a \in \mathcal{A}} (\NumPullsTwoTimesOneArg{a}{E \cdot |\mathcal{A}|}{T+1} - 1) \cdot \left(\max_{a \in \mathcal{A}} \min_{b \in \mathcal{B}_{\epsilon^*}(a)} \MR{1}{a}{b} -  \frac{1}{\NumPullsTwoTimesOneArg{a}{E \cdot |\mathcal{A}|}{T+1} - 1} \sum_{t \in S(a) \setminus \left\{\max(S(a)) \right\}} \left( \MR{1}{a_t}{b_t} \right) \right) \\
  &\le  \sum_{a \in \mathcal{A}} (\NumPullsTwoTimesOneArg{a}{E \cdot |\mathcal{A}|}{T+1} - 1) \cdot \frac{20 \sqrt{\log T}} {\sqrt{\NumPullsTwoTimesOneArg{a}{E \cdot |\mathcal{A}|}{T+1}} - 1} \\
   &\le O\left(\sqrt{\log T} \cdot \sum_{a \in \mathcal{A}} \sqrt{\NumPullsTwoTimesOneArg{a}{E \cdot |\mathcal{A}|}{T+1} - 1} \right) \\
   &\le O\left(\sqrt{|\mathcal{A}| T \log T} \right),
\end{align*}
where the first inequality uses Lemma \ref{lemma:UCBbound} and the last inequality uses Jensen's inequality. 

\paragraph{Regret for the follower.} Note that $\cup_{a \in \mathcal{A}_{\epsilon^*}} S(a)$ denotes the set of time steps where an action in $\mathcal{A}_{\epsilon^*}$ is chosen. We bound the regret as:
\begin{align*}
   &\benchmarkrelaxedstronger{2} \cdot (T - E \cd \abs{\mathcal{A}}) - \sum_{t=E \cdot |\mathcal{A}| + 1}^{T} \MR{2}{a_t}{b_t}  \\
   &\le \underbrace{\left(\sum_{t=E \cdot |\mathcal{A}| + 1}^T \mathbbm{1}[t \not\in \cup_{a \in \mathcal{A}_{\epsilon^*}} S(a)]\right)}_{(1)} + \underbrace{\sum_{t \in \cup_{a \in \mathcal{A}_{\epsilon^*}} S(a)} \left(\min_{a \in \mathcal{A}_{\epsilon^*}} \max_{b \in \mathcal{B}} \MR{2}{a}{b} - \MR{2}{a_t}{b_t} \right)}_{(2)} + \underbrace{\epsilon^* \cdot |\cup_{a \in \mathcal{A}_{\epsilon^*}} S(a)|}_{(3)} 
\end{align*}
We first bound term (1), which can be rewritten as $\sum_{t=E \cdot |\mathcal{A}| + 1}^T \mathbbm{1}[t \not\in \cup_{a \in \mathcal{A}_{\epsilon^*}} S(a)] = \sum_{a \not\in \mathcal{A}_{\epsilon^*}} \NumPullsTwoTimesOneArg{a}{E \cdot |\mathcal{A}|}{T}$. This counts the number of times that arms outside of $\mathcal{A}_{\epsilon^*}$ are pulled during the UCB phase. The key intuition is when an arm $a_t \not\in \mathcal{A}_{\epsilon^*}$, it holds that: 
\[ \MR{1}{a_t}{b_t} \le \max_{b \in \mathcal{B}_{\epsilon^*}(a')} \MR{1}{a_t}{b} < \max_{a \in \mathcal{A}} \min_{b \in \mathcal{B}_{\epsilon^*}(a)} \MR{1}{a}{b}  - \epsilon^*,\]
where the first inequality uses the fact that $b_t \in \mathcal{B}_{\epsilon^*}(a_t)$ (which follows from the clean event $\Event_F$) and the second inequality uses the fact that $a_t \not\in \mathcal{A}_{\epsilon^*}$. This implies that for any $a' \not\in \mathcal{A}_{\epsilon^*}$, the average reward across all time steps (except for the last time step) where $a'$ is pulled satisfies: 
\[\frac{1}{\NumPullsTwoTimesOneArg{a'}{E \cdot |\mathcal{A}|}{T+1} - 1} \sum_{ t \in S(a') \setminus \left\{\max(S(a'))\right\}} \MR{1}{a_{t'}}{b_{t'}} < \max_{a \in \mathcal{A}} \min_{b \in \mathcal{B}_{\epsilon^*}(a)} \MR{1}{a}{b}  - \epsilon^*.  \]
However, by Lemma \ref{lemma:UCBbound}, we can also lower bound the average reward across all time steps (except for the last time step) where $a'$ is pulled in terms of $\NumPullsTwoTimesOneArg{a'}{E \cdot |\mathcal{A}|}{T+1}$ as follows: 
\[\frac{1}{\NumPullsTwoTimesOneArg{a'}{E \cdot |\mathcal{A}|}{T+1} - 1} \sum_{ t \in S(a') \setminus \left\{\max(S(a'))\right\}} \MR{1}{a_{t'}}{b_{t'}} \ge \max_{a \in \mathcal{A}} \min_{b \in \mathcal{B}_{\epsilon^*}(a)} \MR{1}{a}{b} - \frac{10 \sqrt{\log T}}{\sqrt{\NumPullsTwoTimesOneArg{a'}{E \cdot |\mathcal{A}|}{T+1} - 1}}. \]
Putting these two inequalities together, we see that:
\[\frac{10 \sqrt{\log T}}{\sqrt{\NumPullsTwoTimesOneArg{a'}{E \cdot |\mathcal{A}|}{T+1} - 1}} \ge \epsilon^*, \]
which bounds the number of times that $a'$ is pulled during the UCB phase as follows: 
\[\NumPullsTwoTimesOneArg{a'}{E \cdot |\mathcal{A}|}{T+1} \le \Theta\left(\frac{\log T}{(\epsilon^*)^2} \right) = \Theta\left((\log T)^{1/3} T^{2/3} |\mathcal{A}|^{-2/3} |\mathcal{B}|^{-2/3} \right). \]
This means that:
\[\sum_{t=E \cdot |\mathcal{A}| + 1}^T \mathbbm{1}[t \not\in \cup_{a \in \mathcal{A}_{\epsilon^*}} S(a)] = \sum_{a \not\in \mathcal{A}_{\epsilon}} \NumPullsTwoTimesOneArg{a}{E \cdot |\mathcal{A}|}{T+1} \le \Theta\left((\log T)^{1/3} T^{2/3} |\mathcal{A}|^{1/3} |\mathcal{B}|^{-2/3} \right) \]

Next, we bound term (2):
\begin{align*}
  \min_{a \in \mathcal{A}_{\epsilon^*}} \max_{b \in \mathcal{B}} \MR{2}{a}{b} - \mathbb{E}[\MR{2}{a_t}{b_t}] &\le \sum_{t \in \cup_{a \in \mathcal{A}_{\epsilon^*}} S(a)} \left(\max_{b \in \mathcal{B}} \MR{2}{a_t}{b} - \mathbb{E}[\MR{2}{a_t}{b_t}] \right) \le  |\cup_{a \in \mathcal{A}_{\epsilon^*}} S(a)| \cdot \epsilon^* \\
  &\le T \cdot \epsilon^* \\
  &= \Theta\left((\log T)^{1/3} T^{2/3} |\mathcal{A}|^{1/3} |\mathcal{B}|^{1/3} \right).  
\end{align*}

Finally, we bound term (3) as $\epsilon^* \cdot |S| \le T \cdot \epsilon^* = \Theta\left((\log T)^{1/3} T^{2/3} |\mathcal{A}|^{1/3} |\mathcal{B}|^{1/3} \right)$.

Putting this all together yields the desired bound.
\end{proof}

\section{Proofs for Section \ref{sec:relaxed}}\label{app:relaxed}

\subsection{Alignment and continuity discussion}\label{app:continous}

\begin{table}[]
\centering 
\begin{tabular}{|c|c|c|}
\hline
      & $b_1$        & $b_2$        \\ \hline
$a_1$ & $(1, 0)$     & $(1-x, y)$   \\ \hline
$a_2$ & $(1-2x, 2y)$ & $(1-3x, 3y)$ \\ \hline
\end{tabular}
\caption{Set $x, y \in (0, 1/3)$ to obtain an example where both players have completely inverted ordered preferences over outcomes, but for $x, y > \mathcal{O}(1/T)$ have bounded continuity. }
\label{tab:misalignedcanlearn}
\end{table}

We note that constant $L^*$ still allows for a rich space of disagreement on values. We will formalize our discussion on the distinction between requiring that the leader and the follower have the same relative ordering on every pair of $(a, b)$ outcomes (\textit{ordered alignment}) and that they agree on which pairs of outcomes are \textit{sufficiently different} (\textit{continuity}). In particular, our Lipschitz condition requires continuity, but still allows for arbitrarily misordered alignment. As an example, Table \ref{tab:misalignedcanlearn} gives an example where the leader and the follower have completely inverted preferences over every outcome, but have utility that is $\max\p{\frac{x}{y}, \frac{y}{x}}$ Lipschitz continuous.

\subsection{Proofs and examples for Section \ref{subsec:lipschitz}}\label{appendix:lipschitzproof}

In this section, we prove Theorem \ref{thrm:dpL}, restated below for convenience. 
\dpL*

\paragraph{Notation.} Let $\hatMROneArgTimeStep{1}{a}{t}$ be the empirical mean specified in \texttt{LipschitzUCB} at the beginning of time step $t$, which is the mean of the leader's stochastic rewards $\left\{\RR{1}{a_{t'}}{b_{t'}}{t'} \mid a_{t'} = a, 1 \le t' < t  \right\}$. We also define $\hatMRTimeStep{1}{a}{b}{t}$ to be the mean of the leader's stochastic rewards for the arm $(a,b)$ up through time step $t-1$ (the set given by $\left\{\RR{1}{a_{t'}}{b_{t'}}{t'} \mid a_{t'} = a, b_{t'} = b, 1 \le t' < t  \right\}$). Note that this quantity is not computable by the leader in a $\StrongSetting$, but we nonetheless find it convenient to consider in the analysis. Let $\NumPullsOneArg{a}{t} = \left|1 \le t' < t \mid a_t = a \right|$ be the number of times that $a$ has been chosen prior to time step $t$. Let $\NumPulls{a}{b}{t} = \left|1 \le t' < t \mid a_t = a, b_t = b \right|$ be the number of times that $(a,b)$ has been chosen prior to time step $t$. 
For each arm $a \in \mathcal{A}$, let $b^*(a) = \argmax_{b \in \mathcal{B}} \MR{2}{a}{b}$ be the follower's best response. 

\paragraph{Clean event.} We define the clean event $\Event = \Event_L \cap \Event_F$ to be the intersection of a clean event $\Event_L$ for the leader and a clean event $\Event_F$ for the follower. Informally speaking, the clean event for the leader is the event that for all pairs of arms, the empirical mean reward $\hatMRTimeStep{1}{a}{b}{t}$  is close to the true reward $\MR{1}{a}{b}$. The event $\Event_L$ is formalized as follows:
\[\forall a \in \mathcal{A}, t \le T: |\hatMRTimeStep{1}{a}{b}{t} - \MR{1}{a}{b}| \le \frac{10 \sqrt{\log T}}{\sqrt{\NumPullsOneArg{a}{t}}}.   \]
Informally speaking, the clean event for the follower is the event that the follower satisfies high-probability anytime regret bounds. The event $\Event_F$ is formalized as follows:
\[\forall a\in \mathcal{A}, t \le T:  \sum_{1 \le t' < t \mid a_t = a} (\MR{2}{a}{b^*(a)} - \MR{2}{a_t}{b_t}) \le C' \sqrt{|\mathcal{B}| \NumPullsOneArg{a}{t} \log T} \]

We first prove that the clean event $\Event$ occurs with high probability.
\begin{lemma}
\label{lemma:cleanlipschitzucb}
Assume the setup of Theorem \ref{thrm:dpL} and the notation above. Then the clean event occurs with high probability: $\mathbb{P}[\Event] \ge 1 - T^{-3} (|\mathcal{A}| + 1)$.
\end{lemma}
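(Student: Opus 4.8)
The plan is to control the two constituent events separately and then combine them by a union bound, writing $\mathbb{P}[\Event^c] \le \mathbb{P}[\Event_L^c] + \mathbb{P}[\Event_F^c]$ and aiming for $\mathbb{P}[\Event_F^c] \le \abs{\A} T^{-3}$ together with $\mathbb{P}[\Event_L^c] \le T^{-3}$, which sum to the claimed bound.

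The follower's event $\Event_F$ requires no new probabilistic work: it is precisely the high-probability anytime regret guarantee of \Cref{subsec:assumptionsplayer2} for $\ALG_2$, instantiated with the anytime bound $h(t, T, \B) = C'\sqrt{\abs{\B}\, t \log T}$ assumed in \Cref{thrm:dpL}. Since $b^*(a) = \argmax_{b \in \B}\MR{2}{a}{b}$ gives $\max_{b \in \B}\MR{2}{a}{b} = \MR{2}{a}{b^*(a)}$, the per-arm inequality defining $\Event_F$ coincides exactly with the assumed anytime regret inequality on each arm $a$, so the hypothesis on $\ALG_2$ yields $\mathbb{P}[\Event_F] \ge 1 - \abs{\A} T^{-3}$ directly.

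For the leader's event $\Event_L$ I would argue by Gaussian concentration followed by a union bound over arm pairs and sample counts. The key preliminary step is to pass to the canonical bandit model of \citep{L20} (as set up in \Cref{appendix:tools}): this decouples the randomness generating the leader's rewards from the adaptive choices of $\ALG_1$ and $\ALG_2$, so that the observed rewards on each pair $(a,b)$ form a fixed i.i.d.\ $\mathcal{N}(\MR{1}{a}{b},1)$ stream, and the relevant empirical mean is simply a prefix average of that stream over the number of pulls on which it is based. For a fixed pair $(a,b)$ and a fixed sample count $m \in [T]$, the prefix average of $m$ unit-variance Gaussians deviates from its mean by more than $10\sqrt{\log T}/\sqrt{m}$ with probability at most $2\exp(-50\log T) = 2T^{-50}$; union-bounding over the at most $\abs{\A}\cdot\abs{\B}$ pairs and the $T$ possible counts bounds the failure probability by $2\abs{\A}\abs{\B}\,T^{-49} \le T^{-3}$ for $T$ large relative to the number of arms. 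Combining the two estimates gives $\mathbb{P}[\Event^c] \le T^{-3} + \abs{\A} T^{-3} = (\abs{\A}+1)T^{-3}$, as claimed.

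The only genuinely delicate point is the anytime nature of $\Event_L$: because the bound must hold simultaneously at every $t \le T$, and the count $\NumPullsOneArg{a}{t}$ is itself a random, algorithm-dependent quantity, one cannot simply apply a single tail bound at the realized count. Decoupling via the canonical model and then union-bounding over every possible value $m$ of the count is exactly what makes the ``for all $t$'' statement go through; once that is in place, the Gaussian tail estimate and the generous constant $10$ (chosen so that each per-event tail is $T^{-50}$, leaving ample slack for the union bound) are entirely routine.
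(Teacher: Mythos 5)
Your proof is correct and takes essentially the same route as the paper's: a union bound over $\Event_L$ and $\Event_F$, with $\Event_F$ handled directly by the assumed high-probability anytime regret bound (contributing $|\mathcal{A}|\,T^{-3}$) and $\Event_L$ handled by a Gaussian/Chernoff tail in the canonical bandit model combined with a union bound (contributing $T^{-3}$). The paper compresses this into one terse paragraph; your write-up simply makes explicit the decoupling and the union bound over arm pairs and possible sample counts that the paper delegates to the canonical-model citation.
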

\begin{proof}
We union bound over $\Event_L$ and $\Event_F$. The analysis for $\Event_F$ follows from the high-probability anytime regret bound assumption. The analysis for $\Event_L$ follows from a Chernoff bound (and using the analogue of one of the canonical bandit models in \cite{L20}) combined with a union bound. 
\end{proof}

The following lemma guarantees, for each arm $a \in \mathcal{A}$, that the empirical mean $\hatMROneArgTimeStep{1}{a}{t}$ is close to the mean reward if the follower were to best-respond $\max_{b \in \mathcal{B}} \MR{1}{a}{b}$. Conceptually speaking, this lemma guarantees that the confidence sets for the leader are always ``correct''. 
\begin{lemma}
\label{lemma:UCBconfidenceboundscorrect}
Assume the setup of Theorem \ref{thrm:dpL} and the notation above. Suppose that the clean event $\Event$ holds. Then for any $t \le T$ and $a \in \mathcal{A}$, it holds that: 
\[\left| \hatMROneArgTimeStep{1}{a}{t} - \MR{1}{a}{b^*(a)} \right| \le 
\frac{10 \sqrt{\abs{\mathcal{B}} \log T}}{\sqrt{\NumPullsOneArg{a}{t}}} + C' \cdot L \cdot \frac{\sqrt{|\mathcal{B}| \log T}}{\sqrt{\NumPullsOneArg{a}{t}}}.\] 
\end{lemma}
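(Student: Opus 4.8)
The plan is to express the leader's per-arm empirical mean as a weighted average of its per-pair empirical means and then split the error into a purely stochastic part and a part driven by the follower's suboptimality. Grouping the pulls of arm $a$ by the follower's response, I would write $\hatMROneArgTimeStep{1}{a}{t} = \frac{1}{\NumPullsOneArg{a}{t}} \sum_{b \in \mathcal{B}} \NumPulls{a}{b}{t} \cdot \hatMRTimeStep{1}{a}{b}{t}$; since the weights $\NumPulls{a}{b}{t}/\NumPullsOneArg{a}{t}$ sum to one, inserting $\MR{1}{a}{b}$ as an intermediate term and applying the triangle inequality gives
\[
\left| \hatMROneArgTimeStep{1}{a}{t} - \MR{1}{a}{b^*(a)} \right| \le \underbrace{\frac{1}{\NumPullsOneArg{a}{t}} \sum_{b \in \mathcal{B}} \NumPulls{a}{b}{t} \left| \hatMRTimeStep{1}{a}{b}{t} - \MR{1}{a}{b} \right|}_{\text{stochastic}} + \underbrace{\frac{1}{\NumPullsOneArg{a}{t}} \sum_{b \in \mathcal{B}} \NumPulls{a}{b}{t} \left| \MR{1}{a}{b} - \MR{1}{a}{b^*(a)} \right|}_{\text{follower suboptimality}}.
\]
The two summands will be handled by the clean events $\Event_L$ and $\Event_F$ respectively.

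For the stochastic term, I would invoke $\Event_L$, which controls each per-pair deviation $|\hatMRTimeStep{1}{a}{b}{t} - \MR{1}{a}{b}|$ by a quantity of order $\sqrt{\log T}/\sqrt{\NumPulls{a}{b}{t}}$. Substituting this and cancelling one factor of the pull count leaves $\frac{10\sqrt{\log T}}{\NumPullsOneArg{a}{t}} \sum_{b} \sqrt{\NumPulls{a}{b}{t}}$, and a Cauchy--Schwarz (equivalently Jensen) step together with $\sum_b \NumPulls{a}{b}{t} = \NumPullsOneArg{a}{t}$ bounds $\sum_b \sqrt{\NumPulls{a}{b}{t}} \le \sqrt{|\mathcal{B}| \cdot \NumPullsOneArg{a}{t}}$. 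This produces the first summand of the claim, $\frac{10\sqrt{|\mathcal{B}|\log T}}{\sqrt{\NumPullsOneArg{a}{t}}}$; the extra factor $\sqrt{|\mathcal{B}|}$ is precisely the cost of the follower spreading its pulls of $a$ across up to $|\mathcal{B}|$ distinct responses rather than concentrating on one.

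For the follower-suboptimality term, I would first use the Lipschitz condition to convert a leader utility gap into a follower utility gap: applying the definition of $L^*$ to the two distinct pairs $(a,b)$ and $(a,b^*(a))$ yields $|\MR{1}{a}{b} - \MR{1}{a}{b^*(a)}| \le L^* \cdot (\MR{2}{a}{b^*(a)} - \MR{2}{a}{b})$, where the right-hand side is nonnegative since $b^*(a)$ maximizes the follower's utility. Weighting by $\NumPulls{a}{b}{t}$ and summing over $b$ then turns the sum into the follower's cumulative suboptimality $\sum_{t' < t,\, a_{t'} = a}(\MR{2}{a}{b^*(a)} - \MR{2}{a_{t'}}{b_{t'}})$, which is exactly the object that $\Event_F$ bounds by $C'\sqrt{|\mathcal{B}| \cdot \NumPullsOneArg{a}{t} \cdot \log T}$. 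Dividing by $\NumPullsOneArg{a}{t}$ and using $L \ge L^*$ gives the second summand $C' \cdot L \cdot \frac{\sqrt{|\mathcal{B}|\log T}}{\sqrt{\NumPullsOneArg{a}{t}}}$, and adding the two bounds finishes the proof. I expect the Lipschitz conversion to be the crux of the argument: it is the only place where the coupling between the two players enters, and it is what legitimises treating the leader's follower-contaminated observations as informative about the best-response value $\MR{1}{a}{b^*(a)}$, since bounded follower regret then translates directly into bounded drift of the leader's empirical mean.
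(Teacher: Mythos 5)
Your proposal is correct and follows essentially the same argument as the paper's proof: the same weighted-average decomposition of $\hat{v}_{1,t}(a)$ over the follower's responses, the same use of the clean event $\Event_L$ plus Jensen's inequality to obtain the $10\sqrt{|\mathcal{B}|\log T}/\sqrt{n_t(a)}$ stochastic term, and the same Lipschitz conversion of the leader's utility gap into the follower's cumulative suboptimality, bounded via the anytime-regret event $\Event_F$ with $L \ge L^*$. No gaps; your closing remark that the Lipschitz step is where the two players' coupling enters matches the paper's intuition exactly.
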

\begin{proof}
We observe that: 
\begin{align*}
\left| \hatMROneArgTimeStep{1}{a}{t} - \MR{1}{a}{b^*(a)} \right| &= \left|\left(\frac{1}{\NumPullsOneArg{a}{t}} \sum_{b \in \mathcal{B}} \NumPulls{a}{b}{t} \cdot \hatMR{1}{a}{b} \right) - \MR{1}{a}{b^*(a)}\right| \\
&= \left|\left(\frac{1}{\NumPullsOneArg{a}{t}} \sum_{b \in \mathcal{B}} \NumPulls{a}{b}{t} \cdot \hatMR{1}{a}{b} \right) - \frac{1}{\NumPullsOneArg{a}{t}} \left(\sum_{b \in \mathcal{B}} \NumPulls{a}{b}{t} \cdot \MR{1}{a}{b^*(a)}\right) \right| \\
 &\le \frac{1}{\NumPullsOneArg{a}{t}} \sum_{b \in \mathcal{B}} \NumPulls{a}{b}{t} \cdot \left|  \hatMR{1}{a}{b} - \MR{1}{a}{b^*(a)}\right| \\
 &\le \underbrace{\frac{1}{\NumPullsOneArg{a}{t}} \sum_{b \in \mathcal{B}} \NumPulls{a}{b}{t} \cdot \left|\hatMR{1}{a}{b} - \MR{1}{a}{b}\right|}_{(A)} + \underbrace{\frac{1}{\NumPullsOneArg{a}{t}} \sum_{b \in \mathcal{B}} \NumPulls{a}{b}{t} \cdot \left|\MR{1}{a}{b}  - \MR{1}{a}{b^*(a)}\right|}_{(B)}.
\end{align*}

First, we will bound term (A), which relates the error of the estimate of $\MR{1}{a}{b}$. We see that:
\begin{align*}
\frac{1}{\NumPullsOneArg{a}{t}} \sum_{b \in \mathcal{B}} \NumPulls{a}{b}{t} \cdot \left|\hatMR{1}{a}{b} -  \MR{1}{a}{b}\right| &\le_{(1)} \frac{1}{\NumPullsOneArg{a}{t}} \sum_{b \in \mathcal{B}} \NumPulls{a}{b}{t} \cdot \frac{10\sqrt{\log T}}{\sqrt{\NumPulls{a}{b}{t}}} \\
&= \frac{10 \sqrt{\log T}}{\NumPullsOneArg{a}{t}} \sum_{\sum_{b \in \mathcal{B}}} \sqrt{\NumPulls{a}{b}{t}} \\
&\le_{(2)} \frac{10 \sqrt{|\mathcal{B}| \log T}}{\sqrt{\NumPullsOneArg{a}{t}}}. 
\end{align*}
where (1) uses the clean event $\Event_L$ and (2) uses Jensen's inequality. 

Term (B) represents represents the difference in the leader's utility between the arm chosen by the follower and the follower's best-response. We can bound this as: 
\begin{align*}
\frac{1}{\NumPullsOneArg{a}{t}} \sum_{b \in \mathcal{B}} \NumPulls{a}{b}{t} \cdot \left|\MR{1}{a}{b}  - \MR{1}{a}{b^*(a)}\right| &\le_{(1)} \frac{L^*}{\NumPullsOneArg{a}{t}} \sum_{b \in \mathcal{B}} \NumPulls{a}{b}{t} \cdot \left|\MR{2}{a}{b}  -\MR{2}{a}{b^*(a)} \right| \\
&=_{(2)} \frac{L^*}{\NumPullsOneArg{a}{t}} \sum_{b \in \mathcal{B}} \NumPulls{a}{b}{t} \cdot \left(\MR{2}{a}{b^*(a)}   - \MR{2}{a}{b}\right),     
\end{align*}
where (1) uses the Lipschitz property and (2) uses the fact that $b^*(a)$ is the best arm for the follower, given that the leader pulls arm $a$. Using the clean event $\Event_F$ and that $L \ge L^*$, we see that:
\[\frac{L^*}{\NumPullsOneArg{a}{t}} \sum_{b \in \mathcal{B}} \NumPulls{a}{b}{t} \cdot \left(\MR{2}{a}{b^*(a)}   - \MR{2}{a}{b}\right) = \frac{L^*}{\NumPullsOneArg{a}{t}} \sum_{1 \le t' < t \mid a_t = a} (\MR{2}{a}{b^*(a)} - \MR{2}{a_t}{b_t}) \le C' \cdot L  \frac{\sqrt{|\mathcal{B}|  \log T}}{\sqrt{\NumPullsOneArg{a}{t}}}.
\]
Taken together, these terms give the desired bound.    
\end{proof}

It will also be convenient to bound the following two quantities which surface in our regret analysis. At a conceptual level, $B_1$ captures the sum of the sizes of the confidence sets of the arms pulled by the leader, and the term $B_2$ captures the cumulative suboptimality of the follower relative to the action $a$ that they are provided in each time step.  
\begin{lemma}
\label{lemma:b1b2}
Assume the setup of Theorem \ref{thrm:dpL} and the notation above. Suppose that the clean event $\Event$ holds. Then it holds that:
\begin{align*}
B_1 &:= \sum_{t=1}^T \left(\frac{10 \sqrt{\mathcal{B} \log T}}{\sqrt{\NumPullsOneArg{a_t}{t}}} + C' \cdot L \cdot \frac{\sqrt{|\mathcal{B}| \log T}}{\sqrt{\NumPullsOneArg{a_t}{t}}} \right) \le O\left(L \sqrt{T |\mathcal{A}| |\mathcal{B}| \log T} \right) \\
B_2 &:= \sum_{t=1}^T \left(\MR{2}{a_t}{b^*(a_t)}  - \MR{2}{a_t}{b_t} \right) \le  O\left(\sqrt{T |\mathcal{A}| |\mathcal{B}| \log T} \right) \\
\end{align*}
\end{lemma}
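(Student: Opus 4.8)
The plan is to handle the two bounds separately, in each case grouping the sum over time steps according to which arm $a \in \A$ the leader pulled, reducing to a collection of per-arm contributions, and then recombining these via Cauchy--Schwarz together with the identity $\sum_{a \in \A} \NumPullsOneArg{a}{T+1} = T$.

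For $B_1$, I first observe that the summand depends only on the realized sequence of leader arm pulls and not on the clean event, so the bound is essentially deterministic. Factoring out the constant $(10 + C' L)\sqrt{\abs{\B}\log T}$, it remains to bound $\sum_{t=1}^T 1/\sqrt{\NumPullsOneArg{a_t}{t}}$. I would apply \Cref{lemma:boundsqrtexpression} with $\mathcal{C} = \A$ to get that this sum is $O(\sqrt{T \abs{\A}})$; the only subtlety is the first pull of each arm, where $\NumPullsOneArg{a_t}{t} = 0$. Since there are only $\abs{\A}$ such time steps and each contributes at most $O(1)$ (the reward and the confidence width are effectively capped at the algorithm's initialized value), these rounds add $O(\abs{\A}) = O(\sqrt{T\abs{\A}})$ and are absorbed. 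Using $L \ge L^* \ge 1$ (the latter because, for any fixed pair of outcomes, the two ratios defining $L^*$ over the orderings $(1,2)$ and $(2,1)$ are reciprocals, so their maximum is at least one) together with $C' = O(1)$, the prefactor is $O(L\sqrt{\abs{\B}\log T})$, which yields $B_1 = O(L\sqrt{T\abs{\A}\abs{\B}\log T})$ as claimed.

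For $B_2$, I would rewrite $B_2 = \sum_{a \in \A}\sum_{t \,:\, a_t = a}\big(\MR{2}{a}{b^*(a)} - \MR{2}{a}{b_t}\big)$, using that $\MR{2}{a}{b^*(a)} = \max_{b\in\B}\MR{2}{a}{b}$. For each fixed $a$, the inner sum is precisely the follower's cumulative suboptimality on the subsequence of rounds in which the leader played $a$, so the clean event $\Event_F$, evaluated at the final time step, bounds it by $C'\sqrt{\abs{\B}\,\NumPullsOneArg{a}{T+1}\log T}$. Summing over $a$ and applying Cauchy--Schwarz, $\sum_{a \in \A}\sqrt{\NumPullsOneArg{a}{T+1}} \le \sqrt{\abs{\A}\sum_{a \in \A} \NumPullsOneArg{a}{T+1}} = \sqrt{\abs{\A} T}$, and hence $B_2 \le C'\sqrt{\abs{\B}\log T}\cdot\sqrt{\abs{\A}T} = O(\sqrt{T\abs{\A}\abs{\B}\log T})$.

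Neither bound is conceptually deep; the work is entirely bookkeeping. The trickiest point will be the first-pull handling in $B_1$ --- carving off the $\NumPullsOneArg{a_t}{t}=0$ rounds cleanly so that \Cref{lemma:boundsqrtexpression} applies, and confirming that the $B_1$ prefactor is genuinely $O(L)$ rather than $O(L+1)$, which is exactly where I invoke $L \ge 1$.
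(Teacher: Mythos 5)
Your proof is correct and follows essentially the same route as the paper's: for $B_2$ you group rounds by the leader's arm, invoke the anytime clean event $\Event_F$ once per arm at the final time step, and recombine with Cauchy--Schwarz (the paper cites Jensen's inequality for the identical step $\sum_{a}\sqrt{\NumPullsOneArg{a}{T}} \le \sqrt{|\mathcal{A}|T}$), while for $B_1$ you factor out the $O(L\sqrt{|\mathcal{B}|\log T})$ prefactor and apply Lemma \ref{lemma:boundsqrtexpression}, exactly as the paper does. Your two extra touches---explicitly carving off the $\NumPullsOneArg{a_t}{t}=0$ rounds and justifying $L^* \ge 1$ so the prefactor is genuinely $O(L)$---address details the paper handles only implicitly (via the $+|\mathcal{A}|$ slack in Lemma \ref{lemma:boundsqrtexpression} and by silently absorbing the additive constant into $O(L)$), so they are careful additions rather than a different argument.
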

\begin{proof}

To bound $B_2$, we see that:
\begin{align*}
B_2 &= \sum_{t=1}^T \left(\MR{2}{a_t}{b^*(a_t)}  - \MR{2}{a_t}{b_t} \right) \\
&=  \sum_{a \in \mathcal{A}} \sum_{t \in T \mid a_t = a} \left(\MR{2}{a}{b^*(a)}- \MR{2}{a}{b_t} \right) \\
&\le_{(A)} \sum_{a \in \mathcal{A}} C' \cdot \sqrt{|\mathcal{B}| \cd  \NumPullsOneArg{a}{T} \log T} \\
&= C'  \cdot  \sqrt{|\mathcal{B}|  \log T} \cdot \sum_{a \in \mathcal{A}} \sqrt{\NumPullsOneArg{a}{T}} \\
&\le_{(B)} C'  \cdot  \sqrt{T |\mathcal{A}| |\mathcal{B}| \log T},
\end{align*}
where (A) uses the event $\Event_F$ and (B) uses Jensen's inequality. 

To bound $B_1$, we note that we must upper bound this both with a) the gap of the confidence interval, as well as b) the error on the leader's estimates of their value for arm $a$. Taken together, this yields;
\begin{align*}
B_1 &= \sum_{t=1}^T \left(\frac{10 \sqrt{|\mathcal{B}| \log T}}{\sqrt{\NumPullsOneArg{a_t}{t}}} + C' \cdot L \cdot \frac{\sqrt{|\mathcal{B}| \log T}}{\sqrt{\NumPullsOneArg{a_t}{t}}} \right) \\
&= \sum_{t=1}^T \frac{10 \sqrt{|\mathcal{B}| \log T}}{\sqrt{\NumPullsOneArg{a_t}{t}}} + \sum_{t=1}^T C' \cdot L \cdot \frac{\sqrt{|\mathcal{B}| \log T}}{\sqrt{\NumPullsOneArg{a_t}{t}}} \\
&\le (10 \sqrt{|\mathcal{B}| \log T} + C' \cdot L \sqrt{|\mathcal{B}| \log T}) \sum_{t=1}^T \frac{1}{\sqrt{\NumPullsOneArg{a_t}{t}}} \\
&\le_{(A)} (10 \sqrt{|\mathcal{B} \log T} + C' \cdot L \sqrt{|\mathcal{B}| \log T}) \cdot (2 \cdot \sqrt{T |\mathcal{A}|} + |\mathcal{A}|) \\
&= O \left(L \sqrt{T |\mathcal{A}| |\mathcal{B}| \log T} \right). 
\end{align*}
where (A) follows from Lemma \ref{lemma:boundsqrtexpression}

\end{proof}

We now prove Theorem \ref{thrm:dpL}. 
\begin{proof}[Proof of Theorem \ref{thrm:dpL}]

Assume that clean event $\Event$ holds. This occurs with probability at least $1 - (|\mathcal{A} + 1) T^{-3}$ (Lemma \ref{lemma:cleanlipschitzucb}), so the clean event not occurring counts negligibly towards regret. 

 Moreover, let $(a^*, b^*(a^*))$ be the Stackelberg equilibrium. Let $\alpha_t(a) = \frac{10 \sqrt{\mathcal{B} \log T}}{\sqrt{\NumPullsOneArg{a}{t}}} + C \cdot L \cdot \frac{\sqrt{\log T}}{\sqrt{\NumPullsOneArg{a}{t}}}$ be the confidence bound size at time step $t$ and let $\UCBOneArgTimeStep{1}{a}{t} = \hatMROneArgTimeStep{1}{a}{t} + \alpha_t(a)$ denote the UCB estimate in $\LipschitzUCB(L, C)$ computed during time step $t$ prior to reward at time step $t$ being observed.

We can bound the leader's regret as: 
\begin{align*}
 R_1(T) &= \sum_{t=1}^T  (\MR{1}{a^*}{b^*(a^*)} - \MR{1}{a_t}{b_t})\\
  & 
= \sum_{t=1}^T \left(\MR{1}{a^*}{b^*(a^*)} - \MR{1}{a_t}{b^*(a_t)} \right) + \sum_{t=1}^T \left(\MR{1}{a_t}{b^*(a_t)} - \MR{1}{a_t}{b_t} \right)\\
&\le_{(A)} \sum_{t=1}^T \left(\hatMROneArg{1}{a^*} + \alpha_t(a^*) - \hatMROneArg{1}{a_t} + \alpha_t(a_t) \right) +  \sum_{t=1}^T \left|\MR{1}{a_t}{b^*(a_t)} - \MR{1}{a_t}{b_t} \right| \\
&\le \sum_{t=1}^T \left(\UCBOneArgTimeStep{1}{a^*}{t} - \UCBOneArg{1}{a_t} + 2 \cdot \alpha_t(a_t) \right) + L \cdot \sum_{t=1}^T \left|\MR{2}{a_t}{b^*(a_t)} - \MR{2}{a_t}{b_t}  \right| \\
&\le 2 \cdot \sum_{t=1}^T \alpha_t(a_t) + L \cdot \sum_{t=1}^T \left(\MR{2}{a_t}{b^*(a_t)} - \MR{2}{a_t}{b_t} \right) \\
&= 2 \cdot \sum_{t=1}^T \left(\frac{10 \sqrt{\mathcal{B} \log T}}{\sqrt{\NumPullsOneArg{a_t}{t}}} + C' \cdot L \cdot \frac{\sqrt{|\mathcal{B}| \log T}}{\sqrt{\NumPullsOneArg{a_t}{t}}} \right) + L \cdot B_2 \\
&= 2 \cdot B_1 + L \cdot B_2  \\
&\le_{(B)} O\left(L \sqrt{T |\mathcal{A}| |\mathcal{B}| \log T} \right) 
\end{align*}
where (A) uses Lemma \ref{lemma:UCBconfidenceboundscorrect} and (B) uses Lemma \ref{lemma:b1b2}.  

We also bound the follower's regret as: 
\begin{align*}
 R_2(T) &= \sum_{t=1}^T \left( \MR{2}{a^*}{b^*(a^*)}  - \MR{2}{a_t}{b_t} \right) \\
&= \sum_{t=1}^T \left(\MR{2}{a^*}{b^*(a^*)} - \MR{2}{a^*}{b^*(a_t)}  \right) + \sum_{t=1}^T \left( \MR{2}{a^*}{b^*(a_t)} - \MR{2}{a_t}{b_t} \right) \\
&= \sum_{t=1}^T L \cdot \left|\MR{1}{a^*}{b^*(a^*)} - \MR{1}{a_t}{b^*(a_t)}\right| + B_2 \\
&=_{(A)} \sum_{t=1}^T L \cdot \left(\MR{1}{a^*}{b^*(a^*)} - \MR{1}{a_t}{b^*(a_t)}\right) + B_2 \\
&\le_{(B)} \sum_{t=1}^T L \cdot \left(\hatMROneArgTimeStep{1}{a^*}{t} + \alpha_t(a^*) - \hatMROneArgTimeStep{1}{a_t}{t} + \alpha_t(a^*)\right) + B_2 \\
&= \sum_{t=1}^T L \cdot \left(\UCBOneArgTimeStep{1}{a^*}{t} - \UCBOneArgTimeStep{1}{a_t}{t} + 2 \cdot \alpha_t(a_t)\right) + B_2 \\
&\le \sum_{t=1}^T L \cdot \left(2 \cdot \alpha_t(a_t) \right) + B_2 \\
&= 2 L \cdot \sum_{t=1}^T \left(\frac{10 \sqrt{\mathcal{B} \log T}}{\sqrt{\NumPullsOneArg{a_t}{t}}} + C' \cdot L \cdot \frac{\sqrt{|\mathcal{B}| \log T}}{\sqrt{\NumPullsOneArg{a_t}{t}}} \right) + B_2 \\
&= 2 L \cdot B_1 + B_2  \\
&\le_{(C)} O\left(L^2 \sqrt{T |\mathcal{A}| |\mathcal{B}| \log T} \right)
\end{align*}
where (A) uses the fact that $a^*$ is the action chosen by the leader at the Stackelberg equilibrium
where (B) uses Lemma \ref{lemma:UCBconfidenceboundscorrect} and (C) uses Lemma \ref{lemma:b1b2}.

\end{proof}

\subsection{Proof of Theorem \ref{thm:UCBweaker}}\label{appendix:proofUCBweaker}
We prove Theorem \ref{thm:UCBweaker}, restated below. 
\UCBweaker*

This theorem assumes that $\gamma = \Omega\left( T^{-1/4} \sqrt{|\mathcal{A}|  |\mathcal{B}| \cdot \log T} \right)$.

\paragraph{Notation.} Let $\hatMRTimeStep{1}{a}{b}{t}$ denote the empirical mean of the leader's observed rewards 
\[\left\{ \RR{1}{a}{b}{t'}  \mid 1 \le t' < t, a_t = a, b_t = b \right\}\] 
for $(a,b)$ up to time step $t$. (The leader can observe this information in a $\WeakSetting$.)  Let $\UCBMRTimeStep{1}{a}{b}{t}$ denote the UCB estimate in $\PhasedUCB$ during time step $t$. Let $\NumPullsOneArg{a}{t} = |\left\{1 \le t' < t \mid a_t = a \right\}|$ be the number of times that arm $a$ is pulled before time step $t$. Let $\NumPulls{a}{b}{t} = |\left\{1 \le t' < t \mid a_t = a, b_t = b \right\}|$ be the number of times that arms $(a,b)$ are pulled before time step $t$. Let $C$ be a constant such that $\ActiveArmElimination$ has high-probability instantaneous regret $g(t, T, \mathcal{B}) = C \cdot \sqrt{|\mathcal{B}| \log T / t)}$ (such a constant $C$ exists by \Cref{prop:AAE}). Let $\mathcal{B}_t(a)$ be the computation of the active set at line 3 of $\PhasedUCB$ during time step $t$. Let $s_t(a)$ be the value of the variable $s'(a)$ at the end of the $\ComputeQuantities$ algorithm, when it is called at the beginning of time step $t$ in \texttt{PhasedUCB}. Let $(a^*, b^*)$ be the Stackelberg equilibrium.  

\paragraph{Clean event.} We define the clean event $\Event := \Event_L \cap \Event_F \cap \Event_{L, F}$ to be the intersection of a clean event $\Event_L$ for the leader, a clean event $\Event_F$ for the follower, and a clean event $\Event_{L, F}$ for the follower (using the leader's assessment of the follower). Informally speaking, the clean event $\Event_L$ for the leader is the event that the empirical mean $\hatMR{1}{a}{b}$ is always sufficiently close to the true mean reward $\MR{1}{a}{b}$. 
We formalize the clean event $\Event_L$ as follows: 
\[\forall t \in T, a\in \mathcal{A}, b \in \mathcal{B}: |\hatMRTimeStep{1}{a}{b}{t}  - \MR{1}{a}{b}| \le \frac{10 \sqrt{\log T}}{\sqrt{\NumPulls{a}{b}{t}}}.\]
The clean event $\Event_F$ for the follower is the event that the follower satisfies the high-probability instantaneous regret guarantee:  
\[\forall t \le T: \left|\MR{2}{a_t}{b_t} -\max_{b \in \mathcal{B}} \MR{2}{a_t}{b}  \right| \le C \cdot \frac{\sqrt{|\mathcal{B}| \log T}}{\sqrt{\NumPullsOneArg{a}{t}}}. \]
The final clean event $\Event_{L, F}$ is the event that the active arm set $\mathcal{B}_t(a^*)$ for the Stackelberg action always contains the follower's best-response:
\[\forall t \in T, b \in \mathcal{B}: \argmax_{b \in \mathcal{B}} \MR{2}{a^*}{b}  \in \mathcal{B}_t(a^*).\]

\begin{lemma}
\label{lemma:cleaneventucbweaker}
Assume the setup of Theorem \ref{thm:UCBweaker} and notation above. Then the clean event $\Event$ occurs with high probability: $\mathbb{P}[\Event] \ge 1 - (2 \cdot |\mathcal{A}| + 1) \cdot T^{-3}$.   
\end{lemma}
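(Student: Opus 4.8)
The plan is to bound the failure probability of each of the three constituent events $\Event_L$, $\Event_F$, and $\Event_{L,F}$ separately and then apply a union bound. Specifically, I would show $\mathbb{P}[\Event_L^c] \le T^{-3}$, $\mathbb{P}[\Event_F^c] \le |\mathcal{A}| \cd T^{-3}$, and $\mathbb{P}[\Event_{L,F}^c] \le |\mathcal{A}| \cd T^{-3}$, so that $\mathbb{P}[\Event^c] \le \mathbb{P}[\Event_L^c] + \mathbb{P}[\Event_F^c] + \mathbb{P}[\Event_{L,F}^c] \le (2 \cd |\mathcal{A}| + 1) T^{-3}$, as desired. The two follower-side terms each contribute $|\mathcal{A}| \cd T^{-3}$ because the follower runs one separate instantiation of $\ActiveArmElimination$ per leader action $a \in \mathcal{A}$.

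For $\Event_L$, I would use standard Gaussian concentration exactly as in the proof of Lemma \ref{lemma:cleanlipschitzucb}: fix a pair $(a,b)$ and a sample count $n$, apply a Chernoff bound for the empirical mean of $n$ unit-variance Gaussians (using the analogue of the canonical bandit model in \cite{L20}) to obtain a per-event failure probability of at most $2 T^{-50}$ given the confidence width $\tfrac{10 \sqrt{\log T}}{\sqrt{n}}$, and then union bound over all $(a,b) \in \mathcal{A} \times \mathcal{B}$ and all $n \in [T]$. The large constant $10$ ensures the total failure probability is at most $T^{-3}$ for all relevant $T$.

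The event $\Event_F$ follows immediately from the high-probability instantaneous regret guarantee of the follower's algorithm. Since the follower runs a separate instantiation of $\ActiveArmElimination$ on each $a \in \mathcal{A}$, Proposition \ref{prop:AAE} guarantees that the instantaneous regret bound $g(t, T, \mathcal{B}) = C \cd \sqrt{|\mathcal{B}| \log T / t}$ holds simultaneously for all $t$ and $a$ with probability at least $1 - |\mathcal{A}| \cd T^{-3}$, which is precisely the statement of $\Event_F$.

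The main obstacle is establishing $\Event_{L,F}$: that the follower's best response $\argmax_{b \in \mathcal{B}} \MR{2}{a^*}{b}$ is never eliminated from the reconstructed active set $\mathcal{B}_t(a^*)$. This has two ingredients. First, in a $\WeakSetting$ the leader observes each $b_t$, and since $\ActiveArmElimination$ cycles round-robin through its current active set within each phase, $\ComputeQuantities$ faithfully reconstructs the follower's active set from the observed actions (up to the one-phase lag reflected in its ``previous phase'' bookkeeping). Second, conditioning on the follower's internal concentration event---that every empirical mean $\hatMR{2}{a^*}{b}$ computed inside $\ActiveArmElimination$ lies within $\tfrac{10 \sqrt{\log T}}{\sqrt{M_{s'}}}$ of $\MR{2}{a^*}{b}$---the elimination rule, which retains any $b$ satisfying $\hatMR{2}{a^*}{b} + \tfrac{20 \sqrt{\log T}}{\sqrt{M_{s'}}} \ge \max_{b' } \hatMR{2}{a^*}{b'}$, never removes the optimal arm: the best arm's empirical mean is at most $\tfrac{10 \sqrt{\log T}}{\sqrt{M_{s'}}}$ below its true value, every surviving arm's empirical mean is at most $\tfrac{10 \sqrt{\log T}}{\sqrt{M_{s'}}}$ above the optimal true value, and the $\tfrac{20 \sqrt{\log T}}{\sqrt{M_{s'}}}$ slack absorbs both gaps. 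Union bounding this internal concentration event over the $|\mathcal{A}|$ instantiations of $\ActiveArmElimination$ yields $\mathbb{P}[\Event_{L,F}^c] \le |\mathcal{A}| \cd T^{-3}$. The trickiest part is carefully matching the leader's phase bookkeeping in $\ComputeQuantities$ to the follower's phase transitions in $\ActiveArmElimination$, so that the reconstructed $\mathcal{B}_t(a^*)$ genuinely contains the arm that the elimination argument protects.
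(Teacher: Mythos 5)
Your proposal is correct and follows essentially the same route as the paper's proof: a union bound over $\Event_L$, $\Event_F$, and $\Event_{L,F}$, with a Chernoff bound (via the canonical bandit model) for $\Event_L$, Proposition \ref{prop:AAE} for $\Event_F$, and the standard $\ActiveArmElimination$ guarantee that the optimal arm survives elimination, union-bounded over $\mathcal{A}$, for $\Event_{L,F}$. The paper states these three steps only in sketch form; your proposal fills in the same argument with the correct probability allocation $(1 + |\mathcal{A}| + |\mathcal{A}|)\cdot T^{-3}$.
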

\begin{proof}
We union bound for $\Event_F$, $\Event_L$, and $\Event_{L, F}$. The analysis for $\Event_L$ follows from a Chernoff bound (and using the analogue of one of the canonical bandit models in \cite{L20}) combined with a union bound. The analysis for $\Event_F$ follows from \Cref{prop:AAE}. The analysis for $\Event_{L, F}$ follows from standard properties of $\ActiveArmElimination$ (e.g., see \cite{L20}) combined with a union bound over $\mathcal{A}$. \end{proof}

The first lemma shows that if the follower runs ActiveArmElimination, for every $a \in \mathcal{A}$ and $b \in \mathcal{B}_t(a)$, we can upper and lower bound the number of pulls $\NumPulls{a}{b}{t}$ in terms of the last phase that the follower has completed (as assessed by the leader). 
\begin{lemma}
\label{lemma:boundarmpullsphaseducb}
Assume the setup of Theorem \ref{thm:UCBweaker} and notation above. Then for every time step $t$, and every $a \in \mathcal{A}$ and $b \in \mathcal{B}_t(a)$, it holds that:
\[ \NumPulls{a}{b}{t} \in \left[\sum_{i=1}^{s_t(a)} M_{i}, \sum_{i=1}^{s_t(a) + 1} M_{i} + 1\right]\]
\end{lemma}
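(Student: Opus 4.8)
The plan is to reduce the claim to two structural facts about the follower's instantiation of $\ActiveArmElimination$ (Algorithm \ref{algo:AAE}) on arm $a$: (i) its sequence of active sets is nested and decreasing across phases, since the elimination step only ever removes arms and never reintroduces them; and (ii) within each phase $i$, every currently-active arm is pulled exactly $M_i$ times before the phase advances. Granting these, the bulk of the work is a \emph{correspondence lemma} showing that the leader's reconstruction via $\ComputeQuantities$ (Algorithm \ref{algo:computequantities}) faithfully tracks the follower's phase structure: namely, at the start of time step $t$, the reconstructed phase count $s_t(a)$ equals the number of phases the follower has completed for arm $a$, and the reconstructed active set $\mathcal{B}_t(a)$ equals the follower's active set in its current phase $s_t(a)+1$.

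First I would prove the lower bound, which needs no slack. Fix $t$, $a$, and $b \in \mathcal{B}_t(a)$. By the correspondence lemma together with nestedness (i), the arm $b$ lies in the follower's active set during every phase $1, \dots, s_t(a)$, so by (ii) it was pulled exactly $M_i$ times in each such phase. Summing over completed phases gives $\NumPulls{a}{b}{t} \ge \sum_{i=1}^{s_t(a)} M_i$. For the upper bound, $b$ can additionally be pulled only during the current phase $s_t(a)+1$, in which each active arm is pulled at most $M_{s_t(a)+1}$ times; this yields $\NumPulls{a}{b}{t} \le \sum_{i=1}^{s_t(a)} M_i + M_{s_t(a)+1} + 1 = \sum_{i=1}^{s_t(a)+1} M_i + 1$, where the extra $+1$ is exactly the slack needed to absorb the boundary effect described next.

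The main obstacle, and the reason for the $+1$, is reconciling the fact that $\ComputeQuantities$ declares a new phase using the strict test $n(a,b) > M_{s'_a+1}$, whereas $\ActiveArmElimination$ advances on the equality $n(a,b) = M_{s'+1}$. Consequently the leader registers a phase transition one pull after the follower does: a surviving arm's count reaches $M_{s'+1}$ on the last pull of a phase (no leader trigger yet) and only reaches $M_{s'+1}+1$ on the first pull of the next phase, at which point the leader fires and resets its window marker $t'(a)$. I would establish the correspondence lemma by induction on the phase index, showing that $t'(a)$ coincides with the follower's phase boundary up to this single-pull offset and that $\mathcal{B}_t(a)$ therefore equals the follower's current active set. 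The one boundary pull that the leader attributes to the previous window is precisely what the $+1$ in the upper bound accounts for, so careful bookkeeping of this one-step lag, rather than any deep argument, is the crux of the proof.
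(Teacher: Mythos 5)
Your proposal takes essentially the same route as the paper: the paper's proof is a one-sentence assertion of exactly the facts you set out to establish, namely that the leader's reconstruction via $\ComputeQuantities$ tracks the follower's $\ActiveArmElimination$ phases up to a one-pull lag (``the follower has finished phase $s_t(a)$ by the end of round $t-2$ and is at most one step into phase $s_t(a)+2$''), and your nestedness-plus-exact-pull-count accounting then yields the stated interval just as intended, with the $+1$ absorbing the boundary pull.

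One caveat: your intermediate correspondence lemma is stated too strongly and is false exactly in the lag window you later describe. Because the strict test $n(a,b) > M_{s'_a+1}$ fires only while processing a pull that belongs to the \emph{next} follower phase, there are time steps $t$ at which the follower has already completed $s_t(a)+1$ phases (and may be one pull into phase $s_t(a)+2$) while the leader's counter still reads $s_t(a)$; so ``$s_t(a)$ equals the number of completed phases'' does not hold with equality, and an induction on the phase index would force you to the one-sided form. Likewise, $\mathcal{B}_t(a)$ is the follower's active set in phase $s_t(a)$ (the last window the leader has closed --- the algorithm's own comment calls it the ``active arms in previous phase''), not the active set of the current phase $s_t(a)+1$: after the first trigger one has $s_t(a)=1$ and $\mathcal{B}_t(a)=\mathcal{B}$, even though the follower's phase-$2$ active set may be strictly smaller. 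Neither slip damages your final bounds --- the lower bound only needs $b$ to be active through phase $s_t(a)$, and the upper bound only needs the phase-lag fact, both of which hold under the corrected correspondence --- and your own discussion of the one-pull offset supplies the fix; but the lemma as you stated it should be weakened to match what your induction can actually prove.
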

\begin{proof}
This follows from the implementation of $\ComputeQuantities$ combined with the specification of $\ActiveArmElimination$, which guarantees that the follower has finished phase $s_t(a)$ by the end of round $t-2$ and is at most one step into phase $s_t(a) + 2$. 
\end{proof}

The next lemma guarantees that at every time step $t$, the chosen pair of actions $(a_t, b_t)$ are in the $\epsilon_t$-best-response sets for each player, where $\epsilon_t$ depends on the number of times $\NumPullsOneArg{a_t}{t} $ that arm $a_t$ has been chosen so far. 
\begin{lemma}
\label{lemma:UCBweakermain}
Assume the setup of Theorem \ref{thm:UCBweaker} and notation above. Suppose that the clean event $\Event$ holds. Then for every time step $t$, it holds that for 
\[\MR{1}{a_t}{b_t} \ge \min_{a \in \mathcal{A}_{\epsilon_t}^1} \min_{b \in \mathcal{B}_{\epsilon_t}(a)} \MR{1}{a}{b} \]
\[\MR{2}{a_t}{b_t} \ge \min_{a \in \mathcal{A}_{\epsilon_t}^1} \min_{b \in \mathcal{B}_{\epsilon_t}(a)} \MR{2}{a}{b}.\]
for $\epsilon_t = \Theta(\sqrt{|\mathcal{B}| \cdot \log T / \NumPullsOneArg{a_t}{t}} )$. 
\end{lemma}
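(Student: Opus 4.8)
The plan is to show, on the clean event $\Event = \Event_L \cap \Event_F \cap \Event_{L,F}$, the two structural facts that (a) the follower's played arm satisfies $b_t \in \mathcal{B}_{\epsilon_t}(a_t)$ and (b) the leader's chosen arm satisfies $a_t \in \mathcal{A}_{\epsilon_t}$. Once both hold the conclusion is immediate: for each player $i$,
\[\MR{i}{a_t}{b_t} \ge \min_{b \in \mathcal{B}_{\epsilon_t}(a_t)}\MR{i}{a_t}{b} \ge \min_{a \in \mathcal{A}_{\epsilon_t}}\min_{b \in \mathcal{B}_{\epsilon_t}(a)}\MR{i}{a}{b}.\]
Fact (a) is essentially free. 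The follower clean event $\Event_F$ states exactly that $\MR{2}{a_t}{b_t} \ge \max_{b \in \mathcal{B}}\MR{2}{a_t}{b} - C\sqrt{|\mathcal{B}|\log T / \NumPullsOneArg{a_t}{t}}$, and choosing the implicit constant in $\epsilon_t = \Theta\p{\sqrt{|\mathcal{B}|\log T/\NumPullsOneArg{a_t}{t}}}$ large enough makes this gap at most $\epsilon_t$, i.e. $b_t \in \mathcal{B}_{\epsilon_t}(a_t)$.

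The bulk of the work is fact (b). First I would record the pull-count bookkeeping for $a = a_t$. By Lemma \ref{lemma:boundarmpullsphaseducb} every active arm $b \in \mathcal{B}_t(a_t)$ has been pulled $\NumPulls{a_t}{b}{t} = \Theta(M_{s_t(a_t)})$ times; summing over the at most $|\mathcal{B}|$ active arms across the completed phases gives $\NumPullsOneArg{a_t}{t} = O\p{|\mathcal{B}| \cd M_{s_t(a_t)}}$, while the presence of at least one active arm gives $\NumPullsOneArg{a_t}{t} \ge \Theta(M_{s_t(a_t)})$. The geometric schedule $M_i = \Theta(\log T \cdot 2^{2i})$ makes $\sum_{i \le s}M_i = \Theta(M_s)$, so these bounds yield $\epsilon_t \ge \Theta\p{\sqrt{\log T/M_{s_t(a_t)}}}$. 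Two consequences follow: the standard $\ActiveArmElimination$ guarantee (Proposition \ref{prop:AAE}) that surviving arms are $O\p{\sqrt{\log T/M_{s_t(a_t)}}}$-best-responses gives $\mathcal{B}_t(a_t) \subseteq \mathcal{B}_{\epsilon_t}(a_t)$, and the leader confidence width $\alpha(a_t,b) = 10\sqrt{\log T/\NumPulls{a_t}{b}{t}}$ is itself $O\p{\sqrt{\log T/M_{s_t(a_t)}}} \le \epsilon_t/2$ for every active $b$.

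With this in hand I would run an optimism argument anchored at the Stackelberg action $a^*$ rather than at the arm attaining the relaxed value. Write $V_{\epsilon_t} := \max_{a}\min_{b \in \mathcal{B}_{\epsilon_t}(a)}\MR{1}{a}{b}$. Since $\mathcal{B}_0(a) \subseteq \mathcal{B}_{\epsilon_t}(a)$ and ties are broken toward lowest leader utility, the ordinary Stackelberg value dominates this relaxed value, so $\MR{1}{a^*}{b^*(a^*)} \ge V_{\epsilon_t}$. The clean event $\Event_{L,F}$ guarantees $b^*(a^*) \in \mathcal{B}_t(a^*)$, and $\Event_L$ gives optimism $\UCBMRTimeStep{1}{a^*}{b^*(a^*)}{t} \ge \MR{1}{a^*}{b^*(a^*)} \ge V_{\epsilon_t}$; hence $\max_{b \in \mathcal{B}_t(a^*)}\UCBMRTimeStep{1}{a^*}{b}{t} \ge V_{\epsilon_t}$. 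Because $\PhasedUCB$ selects $a_t$ to maximize $\max_{b \in \mathcal{B}_t(a)}\UCBMRTimeStep{1}{a}{b}{t}$, the witnessing arm $b_{\max} := \argmax_{b \in \mathcal{B}_t(a_t)}\UCBMRTimeStep{1}{a_t}{b}{t}$ obeys $\UCBMRTimeStep{1}{a_t}{b_{\max}}{t} \ge V_{\epsilon_t}$. Inverting the UCB via $\Event_L$ (the $\min(1,\cd)$ clipping only helps, as values lie in $[0,1]$) gives $\MR{1}{a_t}{b_{\max}} \ge \UCBMRTimeStep{1}{a_t}{b_{\max}}{t} - 2\alpha(a_t,b_{\max}) \ge V_{\epsilon_t} - \epsilon_t$. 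Since $b_{\max} \in \mathcal{B}_t(a_t) \subseteq \mathcal{B}_{\epsilon_t}(a_t)$, this shows $\max_{b \in \mathcal{B}_{\epsilon_t}(a_t)}\MR{1}{a_t}{b} \ge V_{\epsilon_t} - \epsilon_t$, which is exactly the defining inequality for $a_t \in \mathcal{A}_{\epsilon_t}$.

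I expect the main obstacle to be the bookkeeping of the second paragraph: making the per-arm count $\NumPulls{a_t}{b}{t}$, the total count $\NumPullsOneArg{a_t}{t}$, the phase index $s_t(a_t)$, the elimination gap, and the confidence width all simultaneously comparable to the single target $\epsilon_t$, which hinges delicately on the geometric schedule (so that $M_{s_t(a_t)} = \Theta(\sum_{i \le s_t(a_t)}M_i)$) and on correctly tracking the $|\mathcal{B}|$ factor relating total to per-arm pulls. A secondary point worth flagging is the choice to anchor the optimism at $a^*$: this is what lets the proof rely on the already-established event $\Event_{L,F}$ (survival of $b^*(a^*)$ in $\mathcal{B}_t(a^*)$) and the inequality $\MR{1}{a^*}{b^*(a^*)} \ge V_{\epsilon_t}$, rather than needing a best-arm-survival guarantee for every $a \in \mathcal{A}$.
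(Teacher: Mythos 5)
Your proposal is correct and follows essentially the same route as the paper's proof: both establish $b_t \in \mathcal{B}_{\epsilon_t}(a_t)$ directly from $\Event_F$, then prove $a_t \in \mathcal{A}_{\epsilon_t}$ by anchoring optimism at the Stackelberg pair $(a^*, b^*)$ via $\Event_{L,F}$, passing through the $\PhasedUCB$ selection rule, inverting the UCB with $\Event_L$, and using Lemma \ref{lemma:boundarmpullsphaseducb} together with the geometric schedule $M_i = \Theta(\log T \cdot 2^{2i})$ to make the confidence width comparable to $\epsilon_t = \Theta\p{\sqrt{|\mathcal{B}|\log T/\NumPullsOneArg{a_t}{t}}}$. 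The only cosmetic difference is that you make the inclusion $\mathcal{B}_t(a_t) \subseteq \mathcal{B}_{\epsilon_t}(a_t)$ explicit via the elimination-gap guarantee, whereas the paper folds this into its final inequality invoking $\Event_F$.
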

\begin{proof}
It suffices to show that $a_t \in \mathcal{A}_{\epsilon_t}$ and $b_t \in \mathcal{B}_{\epsilon_t}(a_t)$.

By the clean event $\Event_F$, it immediately follows that $b_t \in \mathcal{B}_{\epsilon_t}(a_t)$. 

To show that $a_t \in \mathcal{A}_{\epsilon_t}$, it suffices to show that $\max_{b \in \mathcal{B}_{\epsilon_t}(a_t)} \MR{1}{a_t}{b} \ge \max_{a' \in \mathcal{A}} \min_{b' \in \mathcal{B}_{\epsilon}(a')} \MR{1}{a'}{b'}  - \epsilon_t$, which can be written as $\max_{a' \in \mathcal{A}} \min_{b' \in \mathcal{B}_{\epsilon_t}(a')} \MR{1}{a'}{b'}  \le \max_{b \in \mathcal{B}_{\epsilon_t}(a_t)} \MR{1}{a}{b} + \epsilon_t$. To see this, observe that: 
\begin{align*}
 \max_{a' \in \mathcal{A}} \min_{b' \in \mathcal{B}_{\epsilon_t}(a')} \MR{1}{a'}{b'}  &\le  \MR{1}{a^*}{b^*} \\
  &\le_{(A)} \max_{b \in \mathcal{B}'(a^*)} \UCBMRTimeStep{1}{a^*}{b}{t} \\
 &\le \max_{b \in \mathcal{B}'_t(a_t)} \UCBMRTimeStep{1}{a_t}{b}{t}\\
 &\le_{(B)}  \max_{b \in \mathcal{B}'_t(a_t)} \left(\MR{1}{a_t}{b}  + 20 \cdot \sqrt{\frac{\log T}{\NumPulls{a_t}{b}{t}}}\right)  \\
 &\le_{(C)} \max_{b \in \mathcal{B}'_t(a_t)} \left(\MR{1}{a_t}{b}  + 20 \cdot \sqrt{\frac{\log T}{\sum_{i=1}^{s_t(a)} M_{i}}} \right)  \\
  &\le_{(D)} \max_{b \in \mathcal{B}'_t(a_t)} \left(\MR{1}{a_t}{b}  \right) +  \Theta\left(\sqrt{\frac{|\mathcal{B}| \log T}{\NumPullsOneArg{a_t}{t}}} \right)  \\
  &\le  \max_{b \in \mathcal{B}'_t(a_t)} \MR{1}{a_t}{b}  + \epsilon_t \\
 &\le_{(E)} \max_{b \in \mathcal{B}_{\epsilon_t}(a_t)} \UCBMRTimeStep{1}{a_t}{b}{t} + \epsilon_t. 
 \end{align*}
 where (A) uses the event $\Event_{L, F}$, (B) uses the event $\Event_L$, (C) applies the lower bound in Lemma \ref{lemma:boundarmpullsphaseducb}, (D) uses the upper bound in Lemma \ref{lemma:boundarmpullsphaseducb} to see that:
  \[ \NumPullsOneArg{a_t}{t} \le \sum_{b \in \mathcal{B}} \NumPulls{a_t}{b}{t} \le \sum_{b \in \mathcal{B}} \left(\left(\sum_{i=1}^{s_t(a) + 1} M_{i}\right) + 1\right) \le \Theta\left(|\mathcal{B}| \cdot \sum_{i=1}^{s_t(a)} M_{i}\right) \] since every arm is pulled  
  and (E) uses the clean event $\Event_F$. 
\end{proof}

Now, we prove Theorem \ref{thm:UCBweaker}. 

\begin{proof}[Proof of Theorem \ref{thm:UCBweaker}]

Assume that the clean event $\Event$ occurs. This occurs with probability at least $1 - (2 \cd |\mathcal{A}| + 1) \cd T^{-3}$ (Lemma \ref{lemma:cleaneventucbweaker}), so the clean event not occurring counts negligibly towards regret. 

We apply Lemma \ref{lemma:UCBweakermain} to see that at time step $t$, it holds that for $\epsilon_t = \Theta(\sqrt{|\mathcal{B}| \cdot \log T / \NumPullsOneArg{a_t}{t}})$, it holds that 
\[\MR{1}{a_t}{b_t} \ge \min_{a \in \mathcal{A}_{\epsilon_t}} \min_{b \in \mathcal{B}_{\epsilon_t}(a)} \MR{1}{a}{b} \]
\[\MR{2}{a_t}{b_t} \ge \min_{a \in \mathcal{A}_{\epsilon_t}} \min_{b \in \mathcal{B}_{\epsilon_t}(a)} \MR{2}{a}{b}.\]
For the leader, this implies that: 
\begin{align*}
 R_1(T) &= \benchmarkrelaxedweaker{1}  \cdot T  - \sum_{t=1}^T \MR{1}{a_t}{b_t} \\
 &\le \sum_{t=1}^T \left(\epsilon_t + \min_{a \in \mathcal{A}_{\epsilon_t}} \min_{b \in \mathcal{B}_{\epsilon_t}(a)} \MR{1}{a}{b} -  \sum_{t=1}^T \MR{1}{a_t}{b_t}\right) + \sum_{t=1}^T \mathbbm{1}[\epsilon_t > \gamma] \\
 &\le \left(\sum_{t=1}^T \epsilon_t\right) + \sum_{t=1}^T \mathbbm{1}[\epsilon_t > \gamma]. 
\end{align*}
For the follower, this similarly implies that:
\begin{align*}
 R_2(T) &= \benchmarkrelaxedweaker{2}  \cdot T  - \sum_{t=1}^T \MR{2}{a_t}{b_t} \\
 &\le \sum_{t=1}^T \left(\epsilon_t + \min_{a \in \mathcal{A}_{\epsilon_t}} \min_{b \in \mathcal{B}_{\epsilon_t}(a)} \MR{2}{a}{b} -  \sum_{t=1}^T \MR{2}{a_t}{b_t}\right) \\ 
 &\le \left(\sum_{t=1}^T \epsilon_t\right) + \sum_{t=1}^T \mathbbm{1}[\epsilon_t > \gamma]. 
\end{align*}

To bound $ \sum_{t=1}^T \epsilon_t$, we observe that: 
\begin{align*}
 \sum_{t=1}^T \epsilon_t &= \sum_{t=1}^T \Theta\left(\sqrt{\frac{|\mathcal{B}| \cdot \log T}{\NumPullsOneArg{a_t}{t}}} \right) \\
 &= \Theta\left(\sqrt{|\mathcal{B}| \cdot \log T} \cdot \sum_{t=1}^T \frac{1}{\sqrt{\NumPullsOneArg{a_t}{t}}} \right) \\
  &\le_{(A)} O\left(\sqrt{|\mathcal{B}| \cdot \log T} \cdot \sqrt{|\mathcal{A}| \cdot T}\right) \\
\end{align*}
where (A) follows from Lemma \ref{lemma:boundsqrtexpression}. This gives the desired upper bound. 

To bound $\sum_{t=1}^T \mathbbm{1}[\epsilon_t > \gamma]$, based on the setting of $\epsilon_t$, we observe that $\epsilon_t \le \gamma$ when $n_{a_t} = O\left(\frac{|\mathcal{B}| \cdot (\log T)}{\epsilon_t^2} \right)$. This means that $\mathbbm{1}[\epsilon_t > \gamma]$ occurs in at most $\Theta\left(\frac{|\mathcal{A}| \cdot |\mathcal{B}| \cdot (\log T)}{\gamma^2} \right)$ time steps. As long as $\gamma = \Omega\left( T^{-1/4} \sqrt{|\mathcal{A}|  |\mathcal{B}| \cdot \log T} \right)$, this term contributes $O\left(\sqrt{|\mathcal{B}| \cdot \log T} \cdot \sqrt{|\mathcal{A}| \cdot T}\right)$ to regret. 

\end{proof}

\section{Proofs for Section \ref{appendix:regularizers}}\label{appendix:proofsregularizers}

\subsection{Proof of Theorem \ref{thm:ETCETCregularizer}}

\etcetcregularizer*

The proof follows a similar argument to the proof of Theorem \ref{thm:ETCETC} and borrows 
some lemmas from Appendix \ref{appendix:proofetcetc}

\begin{proof}[Proof of Theorem \ref{thm:ETCETCregularizer}]

Assume that the clean event $\Event$ holds. This occurs with probability at least $1 - (|\mathcal{A}| \cdot |\mathcal{B}| + |\mathcal{A}|) T^{-3}$ (Lemma \ref{lemma:cleanETCETC}), so the clean event not occuring counts negligibly towards regret.

First, we consider the first $E_2 \cdot |\mathcal{B}| \cdot |\mathcal{A}| + E_1 \cdot |\mathcal{A}|$ time steps. Each time step results in $O(1)$ regret for both players. Based on the settings of $E_1$ and $E_2$, these phases contribute a regret of:
\[E_2 \cdot |\mathcal{B}| \cdot |\mathcal{A}| + E_1 \cdot |\mathcal{A}| = O \left(|\mathcal{A}|^{1 - \eta} \cdot |\mathcal{B}|^{1 - \eta} \cdot (\log T)^{1 - \eta} (c \cdot T)^{\eta} \right). \]

We focus on $t > E_2 \cdot |\mathcal{B}| \cdot |\mathcal{A}| + E_1 \cdot |\mathcal{A}|$ for the remainder of the analysis. Our main ingredient is Lemma \ref{lemma:mainlemmaETCETC}.
Note that $\epsilon^* = \Theta\left(\max\left(\frac{\sqrt{\log T}}{\sqrt{E_1}}, \frac{\sqrt{\log T}}{\sqrt{E_2}} \right)\right) = \Theta\left((|\mathcal{A}| \cdot |\mathcal{B}| \cdot (\log T))^{\eta/2} \cd (c \cdot T)^{-\eta/2}\right) $ based on the settings of $E_1$ and $E_2$. The regret of the leader can be bounded as:
\begin{align*}
  &\benchmarkrelaxedstronger{1} \cdot (T - E_2 \cdot |\mathcal{B}| \cdot |\mathcal{A}| - E_1 \cdot |\mathcal{A}|) - \sum_{t > E_2 \cdot |\mathcal{B}| \cdot |\mathcal{A}| + E_1 \cdot |\mathcal{A}|} \MR{1}{a_t}{b_t}  \\
  &\le (T - E_2 \cdot |\mathcal{B}| \cdot |\mathcal{A}| - E_1 \cdot |\mathcal{A}|) \cdot \left(\max_{a \in \mathcal{A}} \min_{b \in \mathcal{B}_{\epsilon^*}(a)} \MR{1}{a}{b} + c \cdot (\epsilon^*)^d \right) - \sum_{t > E_2 \cdot |\mathcal{B}| \cdot |\mathcal{A}| + E_1 \cdot |\mathcal{A}|} \MR{1}{\tilde{a}}{\tilde{b}(\tilde{a})} \\
  &= (T - E_2 \cdot |\mathcal{B}| \cdot |\mathcal{A}| - E_1 \cdot |\mathcal{A}|) \cdot c \cdot (\epsilon^*)^d  + (T - E_2 \cdot |\mathcal{B}| \cdot |\mathcal{A}| - E_1 \cdot |\mathcal{A}|) \left(\max_{a \in \mathcal{A}} \min_{b \in \mathcal{B}_{\epsilon^*}(a)} \MR{1}{a}{b} - \MR{1}{\tilde{a}}{\tilde{b}(\tilde{a})} \right) \\
  &\le_{(A)} T \cdot c \cdot (\epsilon^*)^d  + T \cdot \epsilon^* \\
  &\le_{(B)} T \cdot c \cdot (\epsilon^*)^d   \\
  &\le \Theta\left((c \cdot T)^{1-(\eta \cdot d/2)} \cdot (|\mathcal{A}| \cdot |\mathcal{B}| \cdot (\log T))^{\eta \cdot d /2}\right) \\
   &= \Theta\left((c \cdot T)^{\eta} \cdot (|\mathcal{A}| \cdot |\mathcal{B}| \cdot (\log T))^{1 - \eta}\right).
\end{align*}
where (A) follows from Lemma \ref{lemma:mainlemmaETCETC} and (B) uses the fact that $c \ge 1$ and $d \le 1$. The regret of the follower can similarly be bounded as:
\begin{align*}
  &\benchmarkrelaxedstronger{1} \cdot (T - E_2 \cdot |\mathcal{B}| \cdot |\mathcal{A}| - E_1 \cdot |\mathcal{A}|) - \sum_{t > E_2 \cdot |\mathcal{B}| \cdot |\mathcal{A}| + E_1 \cdot |\mathcal{A}|} \MR{2}{a_t}{b_t} \\
  &\le (T - E_2 \cdot |\mathcal{B}| \cdot |\mathcal{A}| - E_1 \cdot |\mathcal{A}|) \cdot \left(\min_{a \in \mathcal{A}_{\epsilon^*}} \max_{b \in \mathcal{B}} \MR{2}{a}{b} + c \cdot (\epsilon^*)^d  \right) - \sum_{t > E_2 \cdot |\mathcal{B}| \cdot |\mathcal{A}| + E_1 \cdot |\mathcal{A}|} \MR{2}{\tilde{a}}{\tilde{b}(\tilde{a})} \\
  &= (T - E_2 \cdot |\mathcal{B}| \cdot |\mathcal{A}| - E_1 \cdot |\mathcal{A}|) \cdot c \cdot (\epsilon^*)^d  + (T - E_2 \cdot |\mathcal{B}| \cdot |\mathcal{A}| - E_1 \cdot |\mathcal{A}|) \left(\min_{a \in \mathcal{A}_{\epsilon^*}} \max_{b \in \mathcal{B}} \MR{2}{a}{b} - \MR{2}{\tilde{a}}{\tilde{b}(\tilde{a})} \right) \\
  &\le_{(B)} T \cdot c \cdot (\epsilon^*)^d + T \cdot \epsilon^* \\
  &\le O\left(T^{2/3} (\log T)^{1/3} |\mathcal{A}|^{1/3} |\mathcal{B}|^{1/3} \right). 
\end{align*}
where (B) follows from Lemma \ref{lemma:mainlemmaETCETC}.
This proves the desired result. 

\end{proof}

\subsection{Proof of Theorem \ref{thm:explorethenucbregularizer}}

\explorethenucbregularizer*

The proof follows a similar argument to the proof of Theorem \ref{thm:ETCETC} and borrows 
some lemmas from Appendix \ref{appendix:proofexplorethenucb}

\begin{proof}[Proof of Theorem \ref{thm:explorethenucbregularizer}]
 Assume that the clean event $\Event$ holds. This occurs with probability at least $1 - (1 + |\mathcal{A}|) T^{-3}$ (Lemma \ref{lemma:cleaneventexplorethenucb}), so the clean event not occurring counts negligibly towards regret.

The regret in the explore phase is bounded by $O(1)$ in each round, the total regret from that phase is $E \cdot |\mathcal{A}| = O((|\mathcal{A}| \cdot |\mathcal{B}| \cdot (\log T))^{1 - \eta} \cdot (c \cdot T)^{\eta})$ for either player. 

The remainder of the analysis boils down to bounding the regret in the UCB phase. We separately analyze the regret of the leader and the follower. Observe that $\epsilon^* = \max_{t > E} g(t, T, \mathcal{B}) = O\left( (|\mathcal{A}| \cdot |\mathcal{B}| \log T)^{\eta / 2} \cdot (c \cdot T)^{- \eta /2} \right)$ based on the assumption on the follower's algorithm.

\paragraph{Regret for the leader.} We bound the regret as:
\begin{align*}
&\benchmarkrelaxedstronger{1} \cdot (T-E \cd \abs{\mathcal{A}}) - \sum_{t=E \cdot |\mathcal{A}|}^{T} \MR{1}{a_t}{b_t} \\
&\le \sum_{t=E \cdot |\mathcal{A}|+1}^{T} \left( c \cdot (\epsilon^*)^d + \max_{a \in \mathcal{A}} \min_{b \in \mathcal{B}_{\epsilon^*}(a)} \MR{1}{a}{b} - \MR{1}{a_t}{b_t} \right)  \\
&=  \sum_{a \in \mathcal{A}} \sum_{t \in S(a)}  \left(c \cdot (\epsilon^*)^d + \max_{a \in \mathcal{A}} \min_{b \in \mathcal{B}_{\epsilon^*}(a)} \MR{1}{a}{b} -  \MR{1}{a_t}{b_t}  \right) \\
&\le |\mathcal{A}| + \sum_{a \in \mathcal{A}} \sum_{t \in S(a) \setminus \left\{\max(S(a)) \right\}} \left(c \cdot (\epsilon^*)^d + \max_{a \in \mathcal{A}} \min_{b \in \mathcal{B}_{\epsilon^*}(a)} \MR{1}{a}{b} -  \MR{1}{a_t}{b_t} \right)\\
&\le |\mathcal{A}| + \underbrace{c \cdot (\epsilon^*)^d \cdot T}_{(1)}\\
&+  \underbrace{\sum_{a \in \mathcal{A}} (\NumPullsTwoTimesOneArg{a}{E \cdot |\mathcal{A}|}{T+1} - 1) \cdot \left(\max_{a \in \mathcal{A}} \min_{b \in \mathcal{B}_{\epsilon^*}(a)} \MR{1}{a}{b} -  \frac{1}{\NumPullsTwoTimesOneArg{a}{E \cdot |\mathcal{A}|}{T+1} - 1} \sum_{t \in S(a) \setminus \left\{\max(S(a)) \right\}} \left( \MR{1}{a_t}{b_t} \right) \right)}_{(2)}
\end{align*}
The term $|\mathcal{A}|$ computes negligibly and 
term (1) is equal to $O((|\mathcal{A}| \cdot |\mathcal{B}| \cdot (\log T))^{\eta \cdot d / 2} \cdot (c \cdot T)^{1 - \eta \cdot d /2}) = O((|\mathcal{A}| \cdot |\mathcal{B}| \cdot (\log T))^{1 - \eta} \cdot (c \cdot T)^{\eta})$. Term (2) can be bounded by by the same argument as Theorem \ref{thm:explorethenucb}, which we repeat for completeness: 
\begin{align*}
  &\sum_{a \in \mathcal{A}} (\NumPullsTwoTimesOneArg{a}{E \cdot |\mathcal{A}|}{T+1} - 1) \cdot \left(\max_{a \in \mathcal{A}} \min_{b \in \mathcal{B}_{\epsilon^*}(a)} \MR{1}{a}{b} -  \frac{1}{\NumPullsTwoTimesOneArg{a}{E \cdot |\mathcal{A}|}{T+1} - 1} \sum_{t \in S(a) \setminus \left\{\max(S(a)) \right\}} \left( \MR{1}{a_t}{b_t} \right) \right) \\
  &\le  \sum_{a \in \mathcal{A}} (\NumPullsTwoTimesOneArg{a}{E \cdot |\mathcal{A}|}{T+1} - 1) \cdot \frac{20 \sqrt{\log T}} {\sqrt{\NumPullsTwoTimesOneArg{a}{E \cdot |\mathcal{A}|}{T+1}} - 1} \\
   &\le O\left(\sqrt{\log T} \cdot \sum_{a \in \mathcal{A}} \sqrt{\NumPullsTwoTimesOneArg{a}{E \cdot |\mathcal{A}|}{T+1} - 1} \right) \\
   &\le O\left(\sqrt{|\mathcal{A}| T \log T} \right),
\end{align*}
where the first inequality uses Lemma \ref{lemma:UCBbound} and the last inequality uses Jensen's inequality. 

\paragraph{Regret for the follower.} Note that $\cup_{a \in \mathcal{A}_{\epsilon^*}} S(a)$ denotes the set of time steps where an action in $\mathcal{A}_{\epsilon^*}$ is chosen. We bound the regret as:
\begin{align*}
   &\benchmarkrelaxedstronger{2} \cdot (T - E \cd \abs{\mathcal{A}}) - \sum_{t=E \cdot |\mathcal{A}|}^{T} \MR{2}{a_t}{b_t}  \\
   &\le \underbrace{\left(\sum_{t=E \cdot |\mathcal{A}|}^T \mathbbm{1}[t \not\in \cup_{a \in \mathcal{A}_{\epsilon^*}} S(a)]\right)}_{(1)} + \underbrace{\sum_{t \in \cup_{a \in \mathcal{A}_{\epsilon^*}} S(a)} \left(\min_{a \in \mathcal{A}_{\epsilon^*}} \max_{b \in \mathcal{B}} \MR{2}{a}{b} - \MR{2}{a_t}{b_t} \right)}_{(2)} + \underbrace{c \cdot (\epsilon^*)^d \cdot |\cup_{a \in \mathcal{A}_{\epsilon^*}} S(a)|}_{(3)} 
\end{align*}

Term (1) can be bounded by a similar argument to Theorem \ref{thm:explorethenucb}, which we repeat for completeness. This term can be rewritten as $\sum_{t=E \cdot |\mathcal{A}|}^T \mathbbm{1}[t \not\in \cup_{a \in \mathcal{A}_{\epsilon^*}} S(a)] = \sum_{a \not\in \mathcal{A}_{\epsilon^*}} \NumPullsTwoTimesOneArg{a}{E \cdot |\mathcal{A}|}{T}$. This counts the number of times that arms outside of $\mathcal{A}_{\epsilon^*}$ are pulled during the UCB phase. The key intuition is when an arm $a_t \not\in \mathcal{A}_{\epsilon^*}$, it holds that: 
\[ \MR{1}{a_t}{b_t} \le \max_{b \in \mathcal{B}_{\epsilon^*}(a')} \MR{1}{a_t}{b} < \max_{a \in \mathcal{A}} \min_{b \in \mathcal{B}_{\epsilon^*}(a)} \MR{1}{a}{b}  - \epsilon^*,\]
where the first inequality uses the fact that $b_t \in \mathcal{B}_{\epsilon^*}(a_t)$ (which follows from the clean event $\Event_F$) and the second inequality uses the fact that $a_t \not\in \mathcal{A}_{\epsilon^*}$. This implies that for any $a' \not\in \mathcal{A}_{\epsilon^*}$, the average reward across all time steps (except for the last time step) where $a'$ is pulled satisfies: 
\[\frac{1}{\NumPullsTwoTimesOneArg{a'}{E \cdot |\mathcal{A}|}{T} - 1} \sum_{ t \in S(a') \setminus \left\{\max(S(a'))\right\}} \MR{1}{a_{t'}}{b_{t'}} < \max_{a \in \mathcal{A}} \min_{b \in \mathcal{B}_{\epsilon^*}(a)} \MR{1}{a}{b}  - \epsilon^*.  \]
However, by Lemma \ref{lemma:UCBbound}, we can also lower bound the average reward across all time steps (except for the last time step) where $a'$ is pulled in terms of $\NumPullsTwoTimesOneArg{a'}{E \cdot |\mathcal{A}|}{T}$ as follows: 
\[\frac{1}{\NumPullsTwoTimesOneArg{a'}{E \cdot |\mathcal{A}|}{T} - 1} \sum_{ t \in S(a') \setminus \left\{\max(S(a'))\right\}} \MR{1}{a_{t'}}{b_{t'}} \ge \max_{a \in \mathcal{A}} \min_{b \in \mathcal{B}_{\epsilon^*}(a)} \MR{1}{a}{b} - \frac{10 \sqrt{\log T}}{\sqrt{\NumPullsTwoTimesOneArg{a'}{E \cdot |\mathcal{A}|}{T} - 1}}. \]
Putting these two inequalities together, we see that:
\[\frac{10 \sqrt{\log T}}{\sqrt{\NumPullsTwoTimesOneArg{a'}{E \cdot |\mathcal{A}|}{T} - 1}} \ge \epsilon^*, \]
which bounds the number of times that $a'$ is pulled during the UCB phase as follows: 
\[\NumPullsTwoTimesOneArg{a'}{E \cdot |\mathcal{A}|}{T} \le \Theta\left(\frac{\log T}{(\epsilon^*)^2} \right) = \Theta\left((|\mathcal{A}| \cdot |\mathcal{B}|)^{-\eta}\cdot (\log T)^{1 - \eta} \cdot (c \cdot T)^{\eta} \right). \]
This means that:
\[\sum_{t=E \cdot |\mathcal{A}|}^T \mathbbm{1}[t \not\in \cup_{a \in \mathcal{A}_{\epsilon^*}} S(a)] = \sum_{a \not\in \mathcal{A}_{\epsilon}} \NumPullsTwoTimesOneArg{a}{E \cdot |\mathcal{A}|}{T} \le \Theta\left((|\mathcal{A}| \cdot \log T)^{1 - \eta} \cdot (|\mathcal{B}|)^{-\eta} \cdot (c \cdot T)^{\eta} \right) \]

Next, we bound term (2):
\begin{align*}
  \min_{a \in \mathcal{A}_{\epsilon^*}} \max_{b \in \mathcal{B}} \MR{2}{a}{b} - \mathbb{E}[\MR{2}{a_t}{b_t}] &\le \sum_{t \in \cup_{a \in \mathcal{A}_{\epsilon^*}} S(a)} \left(\max_{b \in \mathcal{B}} \MR{2}{a_t}{b} - \mathbb{E}[\MR{2}{a_t}{b_t}] \right) \le  |\cup_{a \in \mathcal{A}_{\epsilon^*}} S(a)| \cdot \epsilon^* \\
  &\le T \cdot \epsilon^* \\
  &\le T \cdot c \cdot (\epsilon^*)^d \\
  &= O((|\mathcal{A}| \cdot |\mathcal{B}| \cdot (\log T))^{\eta \cdot d / 2} \cdot (c \cdot T)^{1 - \eta \cdot d /2}) \\
  &\le O((|\mathcal{A}| \cdot |\mathcal{B}| \cdot (\log T))^{1 - \eta} \cdot (c \cdot T)^{\eta}).
\end{align*}

Finally, we bound term (3) as 
\[\epsilon^* \cdot |S| \le T \cdot \epsilon^* \le T \cdot c \cdot (\epsilon^*)^d \le O((|\mathcal{A}| \cdot |\mathcal{B}| \cdot (\log T))^{\eta \cdot d / 2} \cdot (c \cdot T)^{1 - \eta \cdot d /2}) = O((|\mathcal{A}| \cdot |\mathcal{B}| \cdot (\log T))^{1 - \eta} \cdot (c \cdot T)^{\eta}).\]
   
\end{proof}

\section{Proofs for \Cref{sec:algorithmsassumptions}}

\subsection{Proofs for \Cref{subsec:algosassumptions}}\label{appendix:proofsalgosassumptions}

The follower algorithms $\ALG_2$ that we analyze in this section run a separate instantation of a standard bandit algorithm for every $a \in \mathcal{A}$. We show that if $\ALG$ satisfies a high-probability instantaneous (resp. anytime) regret bound, the same high-probability instantaneous (resp. anytime) regret bound is inherited for $\ALG_2$ (recall that in Section \ref{subsec:assumptionsplayer2} we defined high-probability instantaneous regret and high-probability anytime regret for both single-bandit learners which act in isolation and follower algorithms). 
\begin{lemma}
\label{lemma:conversion}
Suppose that the follower algorithm $\ALG_2$ runs a separate instantation, for every $a \in \mathcal{A}$, of an single-bandit learning algorithm $\ALG$ operating on the arms $\mathcal{B}$. If $\ALG$ satisfies high-probability instantaneous regret $g$, then $\ALG_2$ satisfies high-probability instantaneous regret $g$. Similarly, if $\ALG$ also satisfies high-probability anytime regret $h$, then $\ALG_2$ also satisfies high-probability anytime regret $h$.
\end{lemma}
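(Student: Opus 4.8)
The plan is to reduce the follower setting to $|\mathcal{A}|$ independent copies of the single-learner setting, apply the single-learner guarantee to each copy, and finish with a union bound over $a \in \mathcal{A}$. The starting point is the hypothesis that $\ALG_2$ runs a separate, non-communicating instantiation of $\ALG$ for each arm $a \in \mathcal{A}$. By the definition of $\Hist_{2, t, a}$ in \Cref{app:hist}, the instantiation responsible for arm $a$ selects its action using only the record of prior rounds in which the leader played $a$, re-indexed by the pull count $\NumPullsOneArg{a}{\cdot}$. I would first fix an arbitrary $a \in \mathcal{A}$ and consider the standalone single-learner instance $(\mathcal{B}, v_2(a, \cdot))$ on which this instantiation is, in effect, operating.

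The key claim I would establish is a distributional identity: the sequence of (action, reward) pairs produced on arm $a$, listed in the order of its successive pulls, has exactly the law of the trajectory of $\ALG$ run on the standalone instance $(\mathcal{B}, v_2(a, \cdot))$ over horizon $T$. This holds because the $k$-th action played on arm $a$ is a function (plus independent algorithmic randomness) of only the $k-1$ earlier arm-$a$ observations contained in $\Hist_{2, t, a}$, while the corresponding reward is a fresh draw from $\mathcal{N}(v_2(a,b), 1)$ that is independent of the past, of the leader's algorithm $\ALG_1$, and of the instantiations on the other arms. Consequently the interleaving of pulls across arms dictated by $\ALG_1$, however adaptive, does not alter the within-arm conditional dynamics, so the arm-$a$ subsequence is a faithful copy of a single-learner run.

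Granting this identity, the single-learner high-probability instantaneous regret bound of $\ALG$ applied to $(\mathcal{B}, v_2(a, \cdot))$ (with horizon $T$, noting the instantiation is pulled at most $T$ times) gives that, except on an event of probability at most $T^{-3}$, the action $b$ chosen at the $k$-th pull of $a$ satisfies $v_2(a, b) \ge \max_{b' \in \mathcal{B}} v_2(a, b') - g(k, T, \mathcal{B})$ for every $k \in [T]$. Taking a union bound over the $|\mathcal{A}|$ arms yields a failure probability of at most $|\mathcal{A}| \cdot T^{-3}$, and rewriting the per-arm, per-pull statements as a single per-round statement (at round $t$, set $a = a_t$ and $k = \NumPullsOneArg{a_t}{t+1}$) recovers precisely the follower-version instantaneous regret bound. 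The anytime case is verbatim the same argument, with the instantaneous inequality replaced by the cumulative inequality defining $h$, and $g$ replaced by $h$ throughout.

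The only genuinely delicate step is the distributional identity of the second paragraph; everything else is bookkeeping and a union bound. I would make it rigorous via a coupling (or equivalently an induction on the pull count along the natural filtration generated by the arm-$a$ observations), carefully checking that conditioning on the leader's pull pattern introduces no dependence into the arm-$a$ reward draws. This is exactly where the assumed independence of rewards across time steps and players, together with the no-sharing-across-arms hypothesis, is essential. Once that identity is in hand, the lemma follows immediately.
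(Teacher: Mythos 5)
Your proof is correct and follows essentially the same route as the paper's: restrict $\ALG_2$ to the per-arm history $\Hist_{2,t,a}$ (so that $\ALG_2(a_t, \Hist_{2,t}) = \ALG(\Hist_{2,t,a_t})$), apply the single-learner guarantee to each instantiation with failure probability $T^{-3}$, and union bound over $a \in \mathcal{A}$ to obtain $1 - |\mathcal{A}| \cdot T^{-3}$. The only difference is presentational: you make explicit, via a coupling along the pull-count filtration, the distributional identity between the arm-$a$ subsequence and a standalone run of $\ALG$ under adaptive interleaving by the leader, a step the paper's proof asserts implicitly through the history identity.
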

\begin{proof}
We use the following notation in the proof. Let $\NumPullsOneArg{a}{t}$ be the number of times that arm $a$ has been pulled up prior to time step $t$. Following \Cref{app:hist}, the follower's history can be represented as: 
\[\Hist_{2, t} := \left\{(t', a_{t'}, b_{t'}, \RR{2}{a_{t'}}{b_{t'}}{t'}) \mid 1 \le t' < t, a_{t'} = a\right\},\]
and the follower's history on the arm $a \in \mathcal{A}$ can be represented as: 
\[\Hist_{2, t, a} := \left\{(\NumPullsOneArg{a}{t'+1}, b_{t'}, \RR{2}{a_{t'}}{b_{t'}}{t'}) \mid 1 \le t' < t, a_{t'} = a\right\}.\]

Using this notation and by the definition of $\ALG_2$, we see that $\ALG_2(a_t, \Hist_{2,t}) = \ALG(\Hist_{2,t, a_t})$. We use this relationship to analyze the  high-probability instantaneous regret and high-probability anytime regret of $\ALG_2$. 

\paragraph{High-probability instantaneous regret.} 
Let the time horizon be $T$, and suppose that $\ALG$ satisfies high-probability instantaneous regret $g(t, T, \mathcal{B})$ for every $1 \le t \le T$. Using this combined with the fact that $\ALG_2(a_t, \Hist_{2,t}) = \ALG(\Hist_{2,t, a_t})$, we see that for each $a \in \mathcal{A}$: 
\[\mathbb{P}\left[\forall t \in [T] \mid \MR{2}{a_t}{b_{t}} \ge \max_{b \in \mathcal{B}} \MR{2}{a_t}{b} - g(\NumPullsOneArg{a}{t+1}, T)\right] 
\ge 1 - T^{-3}.\]
Taking a union bound over $a \in \mathcal{A}$ demonstrates that:
\[\mathbb{P}\left[\forall t \in [T], a \in \mathcal{A} \mid \MR{2}{a_t}{b_{t}} \ge \max_{b \in \mathcal{B}} \MR{2}{a_t}{b} - g(\NumPullsOneArg{a}{t+1}, T)\right] \ge 1 - |\mathcal{A}| \cdot T^{-3},\]
so $\ALG_2$ satisfies high-probability instantaneous regret $g$. 

\paragraph{High-probability anytime regret.} Let the time horizon be $T$, and suppose that $\ALG$ satisfies high-probability anytime regret $h(t, T, \mathcal{B})$ for every $1 \le t \le T$. Using this combined with the fact that $\ALG_2(a_t, \Hist_{2,t}) = \ALG(\Hist_{2,t, a_t})$, we see that for each $a \in \mathcal{A}$: 
\begin{align*}
\mathbb{P}\left[\forall t \in [T] \mid \sum_{t' \le t \mid a_{t'} = a} \max_{b \in \mathcal{B}} \MR{2}{a}{b} - \sum_{t' \le t \mid a_{t'} = a} \MR{2}{a}{b_{t'}} \le h(\NumPullsOneArg{a}{t+1}, T)\right] &\ge 1 - T^{-3}.   
\end{align*}
Taking a union bound over $a \in \mathcal{A}$ demonstrates that:
\[\mathbb{P}\left[\forall t \in [T], a \in \mathcal{A} \mid \sum_{t' \le t \mid a_{t'} = a} \max_{b \in \mathcal{B}} \MR{2}{a}{b} - \sum_{t' \le t \mid a_{t'} = a} \MR{2}{a}{b_{t'}} \le h(\NumPullsOneArg{a}{t+1}, T)\right] \ge 1 - |\mathcal{A}| \cdot T^{-3},\]
so $\ALG_2$ satisfies high-probability anytime regret $h$. 
\end{proof}

Using Lemma \ref{lemma:conversion}, it suffices to analyze the high-probability instantaneous regret and high-probability anytime regret of the following standard bandit algorithms as single-bandit learners with arms $\mathcal{B}$, mean rewards $v_{2}(b)$, and stochastic rewards $r_{2,t}(b)$. In the proofs, we let $\NumPullsOneArg{b}{t}$ denote the number of times that arm $b$ has been pulled prior to time step $t$. 

\AAE*
\begin{proof}[Proof of \Cref{prop:AAE}]

We first show the high-probability instantaneous regret bound and then deduce the high-probability anytime regret bound. 

\paragraph{High-probability instantaneous regret bound.} By Lemma \ref{lemma:conversion}, it suffices to show the bound for $\ActiveArmElimination$ (using phase lengths $M_i = \Theta(\log(T) \cd 2^{2i})$) as a single-bandit learner with arms $\mathcal{B}$, mean rewards $v_2(b)$, and stochastic rewards $r_{2,t}(b)$. We let $\NumPullsOneArg{b}{t}$ denote the number of times that arm $b$ has been pulled prior to time step $t$ in the current phase. Let $\hatMROneArgTimeStep{2}{b}{t}$ denote the empirical mean reward for arm $b$ over the rewards observed prior to time step $t$ in the \textit{previous} (last completed) phase. Let $\mathcal{B}'_{t, \text{curr}}$ be the set of arms active in the \textit{current} phase, and let $\mathcal{B}'_{t, \text{prev}}$ be the set of arms active in the \textit{previous} (last completed) phase. For each time step $t$, let $s'_t$ denote the index of the \textit{previous} (last completed) phase at time step $t$. 

Let the clean event $\Event$ denote the event that at every time step $t$, it holds that:
\[\forall t \in [T], b \in \mathcal{B}'_{t, \text{prev}} : \abs{v_2(b) - \hatMROneArgTimeStep{2}{b}{t}} \le \frac{10 \sqrt{\log T}}{\sqrt{M_{s'_t}}}. \]
Applying a Chernoff bound and a union bound, it holds that $P[\Event] \ge 1 - T^{-3}$. 

We condition on the clean event $\Event$ for the remainder of the analysis. Let $b^* = \argmax_{b \in \mathcal{B}} v_2(b)$. Using the elimination rule, we can bound the suboptimality of each arm $b \in \mathcal{B}'_{t, \text{curr}}$: 
\begin{align*}
  &\abs{v_2(b^*) - v_2(b)} \\
  &\leq \abs{\hatMROneArgTimeStep{2}{b^*}{t} -  v_2(b^*)} + \abs{\hatMROneArgTimeStep{2}{b}{t} -  v_2(b)} + \abs{\hatMROneArgTimeStep{2}{b^*}{t} -  \hatMROneArgTimeStep{2}{b}{t}} \\
  &\leq 40 \frac{\sqrt{\log(T)}}{\sqrt{M_{s'_t}}} \\
  &\leq \Theta(2^{-s'_t}). 
\end{align*}

It suffices to lower bound $2^{-2 \cdot s'_t}$. We observe that:
\[t \le |\mathcal{B}| \left( M_{s'_t +1} + \sum_{s=1}^{s'_t} M_{s} \right) \le \Theta(\abs{\mathcal{B}} \cd \log(T) \cdot 2^{2 \cdot s'_t}), \]
where the last expression uses the geometric rate of increase of $M_i = \Theta(\log(T) \cdot 2^{2i})$. 
This implies that 
\[2^{-s'_t} = O(\sqrt{\abs{\mathcal{B}} \cd \log T/t}).\]

Altogether, this implies that:
\[v_2(b_t) \ge \max_{b \in \mathcal{B}} v_2(b) - O(\sqrt{|\mathcal{B}| \cd \log T/t}),\]
as desired. 

\paragraph{High-probability anytime regret bound.}
Using Observation \ref{observation:conversion}, it holds that the high-probability anytime regret can be bounded as: 
$$\sum_{t'=1}^t O \left(\sqrt{\frac{\log(T) \cd \abs{\mathcal{B}}}{t'}}\right) = \sqrt{\log(T) \cd \abs{\mathcal{B}}} \cd O \left(\sum_{t'=1}^t \frac{1}{\sqrt{i}}\right) \leq_{(A)} \Theta(\sqrt{\log(T) \cd t \cd \abs{\mathcal{B}}})$$
where (A) follows from an integral bound and Jensen's inequality. This proves the desired bound. 
\end{proof}

\ETC*
\begin{proof}[Proof of \Cref{prop:ETC}]

By Lemma \ref{lemma:conversion}, it suffices to show the instantaneous regret bound for $\ExploreThenCommit$ as a single-bandit learner with arms $\mathcal{B}$, mean rewards $v_2(b)$, and stochastic rewards $r_{2,t}(b)$. We let $\NumPullsOneArg{b}{t}$ denote the number of times that arm $b$ has been pulled prior to time step $t$. Let $\hatMROneArgTimeStep{2}{b}{t}$ denote the empirical mean reward for arm $b$ over the rewards observed prior to time step $t$. 

Let the clean event $\Event$ capture the event that the empirical mean of every arm is close to the true mean whenever $t > E \cdot |\mathcal{B}|$ time steps, that is: 
$$\forall b \in \mathcal{B}, t > E \cdot |\mathcal{B}|: \abs{\hatMROneArgTimeStep{2}{b}{t} - v_2(b)} \leq 10 \cdot \frac{\sqrt{\log(T)}}{\sqrt{E}}$$
Applying a Chernoff bound (and using the analogue of one of the canonical bandit models in \cite{L20}), it holds that $P[\Event] \ge 1 - T^{-3}$. 

Now, conditioning on the clean event $\Event$, we see that after time step $t > E \cdot |\mathcal{B}|$, it holds that:
$$\abs{\hatMROneArgTimeStep{2}{b}{t} - v_2(b)} \leq 10 \frac{\sqrt{\log(T)}}{\sqrt{E}}.$$
Since the algorithm chooses the arm with highest empirical mean from the first  $E \cdot |\mathcal{B}|$ time steps is selected, this means that: 
$$\max_{b \in \mathcal{B}} v_2(b) - v_2(b)  \leq 20 \cd \frac{\sqrt{\log(T)}}{\sqrt{E}}$$
for any $t >  E \cdot |\mathcal{B}|$. 
\end{proof}

\UCB*
\begin{proof}[Proof of \Cref{prop:UCB}]
By Lemma \ref{lemma:conversion}, it suffices to show the anytime regret bound for UCB as a single-bandit learner with arms $\mathcal{B}$, mean rewards $v_2(b)$, and stochastic rewards $r_{2,t}(b)$. We let $\NumPullsOneArg{b}{t}$ denote the number of times that arm $b$ has been pulled prior to time step $t$. Let $\hatMROneArgTimeStep{2}{b}{t}$ denote the empirical mean reward for arm $b$ over the rewards observed prior to time step $t$. 

We define the \emph{clean event} $\Event$ as the true mean being contained within the upper and lower confidence bounds for each arm $a$, that is: 
$$\forall b \in \mathcal{B}, t \le T: \abs{\hatMROneArgTimeStep{2}{b}{t} - v_2(b)} \leq 10 \cd \sqrt{\frac{\log(T)}{\NumPullsOneArg{b}{t}}}.$$ By a Chernoff bound (and using the analogue of one of the canonical bandit models in \cite{L20}) followed by a union bound, we have that $P[\Event] \ge 1- T^{-3}$. 

We condition on $\Event$ for the remainder of the analysis. Since the arm with highest upper confidence bound is always chosen and since $\Event$ holds, the selected arm $b_t$'s true mean $v_2(b_t)$ falls within the $2 \cdot \sqrt{\frac{\log(T)}{\NumPullsOneArg{b_t}{t}}}$ bound. By Lemma \ref{lemma:boundsqrtexpression}, this means that the regret at any time step $t$ for any arm $a \in \mathcal{A}$ is upper bounded by: 
$$10 \cdot \sum_{t'=1}^t \sqrt{\frac{\log(T)}{\NumPullsOneArg{b_{t'}}{t'}}} \leq 10 \cdot \sqrt{\log(T) \cd \abs{\mathcal{B}} \cd t}$$
as desired. 
\end{proof}

\subsection{Proofs for Section \ref{subsec:generalizationfollower}}\label{appendix:proofsalgosgeneralized}

We prove Theorem \ref{thm:explorethenucbgeneralizedg} in Appendix \ref{appendix:proofexplorethenucbgeneralizedg} and we prove Theorem \ref{thrm:dpLgeneralized} in Appendix \ref{appendix:proofdpLgeneralized}.

\subsubsection{Proof of Theorem \ref{thm:explorethenucbgeneralizedg}}\label{appendix:proofexplorethenucbgeneralizedg}

We prove Theorem \ref{thm:explorethenucbgeneralizedg}, following a similar argument to the proof of Theorem \ref{thm:explorethenucb}.

\explorethenucbgeneralized*

We assume $\gamma = \omega\left(|\mathcal{A}|^{c_1/(1+c_1)} |\mathcal{B}|^{c_2/(1+c_1)} (\log T)^{c_3/(1+c_1)} T^{-1/(1+c_1)}\right)$.

\paragraph{Notation and Clean Event.} We use the same notation as in the proof of Theorem \ref{thm:explorethenucb}. We also define the clean event $\Event := \Event_L \cap \Event_F$ to be the same as in the proof of Theorem \ref{thm:explorethenucb}. 

We prove that the clean event $\Event$ occurs with high probability, generalizing Lemma \ref{lemma:cleaneventexplorethenucb}.
\begin{lemma}
\label{lemma:cleaneventexplorethenucbgeneralized}
Assume the notation above.  Let $\ALG_{2}$ be any algorithm with high-probability instantaneous regret $g$ where $g(t, T, \mathcal{B}) = O(E^{-c_1} |\mathcal{B}|^{c_2} (\log T)^{c_3})$ for $t > E$ and $g(t, T, \mathcal{B}) = 1$ for $t \le E$, and let $\ALG_1 = \ExploreThenUCB(E)$.  Then, the event $\Event$ occurs with high probability: $\mathbb{P}[\Event] \ge 1 - T^{-3} (|\mathcal{A}| + 1)$. 
\end{lemma}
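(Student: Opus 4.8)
The plan is to follow the proof of Lemma \ref{lemma:cleaneventexplorethenucb} essentially verbatim, since the clean event $\Event = \Event_L \cap \Event_F$ is defined identically here and the probabilistic guarantees in that lemma depend on $g$ only through the single scalar $\epsilon^* = \max_{t > E} g(t, T, \mathcal{B})$. Changing the rate of $g$ from $E^{-1/2}|\mathcal{B}|^{1/2}(\log T)^{1/2}$ to $E^{-c_1}|\mathcal{B}|^{c_2}(\log T)^{c_3}$ alters the numerical value of $\epsilon^*$ (which matters for the downstream regret analysis in Theorem \ref{thm:explorethenucbgeneralizedg}) but leaves the high-probability bound on $\Event$ structurally unchanged. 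I would therefore decompose the claim into separate bounds on $\Event_F$ and $\Event_L$ and union bound.

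First I would bound $\mathbb{P}[\Event_F] \ge 1 - |\mathcal{A}| \cdot T^{-3}$. Because $\ExploreThenUCB(E)$ pulls each arm $a \in \mathcal{A}$ exactly $E$ times during the exploration phase, every time step $t > E \cdot |\mathcal{A}|$ has $\NumPullsOneArg{a}{t+1} \ge E$ for the played arm $a = a_t$. The high-probability instantaneous regret hypothesis on $\ALG_2$ then guarantees, with probability at least $1 - |\mathcal{A}| \cdot T^{-3}$, that $\MR{2}{a_t}{b_t} \ge \max_{b \in \mathcal{B}} \MR{2}{a_t}{b} - g(\NumPullsOneArg{a}{t+1}, T, \mathcal{B}) \ge \max_{b \in \mathcal{B}} \MR{2}{a_t}{b} - \epsilon^*$, where the last step replaces $g(\NumPullsOneArg{a}{t+1}, T, \mathcal{B})$ by its supremum $\epsilon^*$ over $t > E$. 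This is exactly the statement $b_t \in \mathcal{B}_{\epsilon^*}(a_t)$ defining $\Event_F$, and the argument is insensitive to the exact exponents $c_1, c_2, c_3$.

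Next I would bound $\mathbb{P}[\Event_L] \ge 1 - T^{-3}$ by the same concentration argument as in the original lemma: conditioned on the realized follower actions, the leader's rewards in the UCB phase are independent unit-variance Gaussians centered at $\MR{1}{a_{t'}}{b_{t'}}$, so a Chernoff bound (within the canonical bandit model of \citet{L20}) shows $\hatMROneArgTimeStep{1}{a}{t}$ concentrates within $10\sqrt{\log T}/\sqrt{\NumPullsTwoTimesOneArg{a}{E \cdot |\mathcal{A}|}{t}}$ of the running average of the true means, with failure probability at most $T^{-3}$ after a union bound over $a \in \mathcal{A}$ and $t \le T$. This step involves no dependence on $g$ at all. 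A final union bound over $\Event_L$ and $\Event_F$ then yields $\mathbb{P}[\Event] \ge 1 - T^{-3}(|\mathcal{A}| + 1)$.

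I do not expect any genuine obstacle. The only substantive observation is that both constituent bounds in Lemma \ref{lemma:cleaneventexplorethenucb} are agnostic to the rate of $g$ beyond the quantity $\epsilon^*$, so the generalization requires only reproducing the identical reasoning with the new $\epsilon^*$; the novelty of the exponents $c_1, c_2, c_3$ surfaces only later when $\epsilon^*$ is plugged into the regret bound.
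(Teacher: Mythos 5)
Your proposal is correct and matches the paper's proof essentially verbatim: the paper likewise bounds $\mathbb{P}[\Event_F] \ge 1 - |\mathcal{A}| \cdot T^{-3}$ via the high-probability instantaneous regret assumption (noting the exploration phase guarantees each arm $a \in \mathcal{A}$ is pulled $E$ times, so $g$ is bounded by $\max_{t > E} g(t, T, \mathcal{B})$ thereafter), bounds $\mathbb{P}[\Event_L] \ge 1 - T^{-3}$ via a Chernoff bound in the canonical bandit model plus a union bound, and concludes with a final union bound. Your observation that the argument depends on $g$ only through $\epsilon^*$, with the new exponents $c_1, c_2, c_3$ mattering only downstream in Theorem \ref{thm:explorethenucbgeneralizedg}, is exactly right.
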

\begin{proof}
We first show that $\mathbb{P}[\Event_F] \ge 1 - |\mathcal{A}| \cdot T^{-3}$. A sufficient condition for this event to hold is that: 
\[\forall t > E \cdot |\mathcal{A}|: \MR{2}{a_t}{b_t} \ge \max_{b \in \mathcal{B}} \MR{2}{a_t}{b} - \max_{t > E} g(t, T, \mathcal{B}).\]
Since the exploration phases pulls every arm $a \in \mathcal{A}$ a total of $E$ times, the high-probability instantaneous regret assumption guarantees that this event holds with probability at least $1 - |\mathcal{A}| \cdot T^{-3}$, as desired.

We next show that $\mathbb{P}[\Event_L] \ge 1 - T^{-3}$. This follows from a a Chernoff bound (and using the analogue of one of the canonical bandit models in \cite{L20}) combined with a union bound. 

The lemma follows from another union bound over $\Event_L$ and $\Event_F$. 
\end{proof}

Now we are ready to prove Theorem \ref{thm:explorethenucbgeneralizedg}. 

\begin{proof}[Proof of Theorem \ref{thm:explorethenucbgeneralizedg}]
Assume that the clean event $\Event$ holds. This occurs with probability at least $1 - (1 + |\mathcal{A}|) T^{-3}$ (Lemma \ref{lemma:cleaneventexplorethenucbgeneralized}), so the clean event not occurring counts negligibly towards regret.

The regret in the explore phase is bounded by $O(1)$ in each round, the total regret from that phase is $O(T^{1/(1+c_1)} |\mathcal{A}|^{c_1/(1+c_1)}  |\mathcal{B}|^{c_2/(1+c_1)} (\log T)^{c_3/(1+c_1)})$ for either player. 

The remainder of the analysis boils down to bounding the regret in the UCB phase. We separately analyze the regret of the leader and the follower. Observe that $\epsilon^* = \max_{t > E} g(t, T, \mathcal{B}) = O\left( |\mathcal{B}|^{c_2} (\log T)^{c_3} E^{-c_1} \right) = O\left(|\mathcal{A}|^{c_1/(1+c_1)} |\mathcal{B}|^{c_2/(1+c_1)} (\log T)^{c_3/(1+c_1)} T^{-c_1/(1+c_1)} \right)$ is based on the assumption on the follower's algorithm.

\paragraph{Regret for the leader.} We use a similar analysis as in the proof of Theorem \ref{thm:explorethenucb}, repeating the full analysis for completeness.
\begin{align*}
&\benchmarkrelaxedstronger{1} \cdot (T-E \cd \abs{\mathcal{A}}) - \sum_{t=E \cdot |\mathcal{A}| + 1}^{T} \MR{1}{a_t}{b_t} \\
&\le \sum_{t=E \cdot |\mathcal{A}| + 1}^{T} \left(\epsilon^* + \max_{a \in \mathcal{A}} \min_{b \in \mathcal{B}_{\epsilon^*}(a)} \MR{1}{a}{b} - \MR{1}{a_t}{b_t} \right)  \\
&=  \sum_{a \in \mathcal{A}} \sum_{t \in S(a)}  \left(\epsilon^* + \max_{a \in \mathcal{A}} \min_{b \in \mathcal{B}_{\epsilon^*}(a)} \MR{1}{a}{b} -  \MR{1}{a_t}{b_t}  \right) \\
&\le |\mathcal{A}| + \sum_{a \in \mathcal{A}} \sum_{t \in S(a) \setminus \left\{\max(S(a)) \right\}} \left(\epsilon^* + \max_{a \in \mathcal{A}} \min_{b \in \mathcal{B}_{\epsilon^*}(a)} \MR{1}{a}{b} -  \MR{1}{a_t}{b_t} \right)\\
&\le |\mathcal{A}| + \underbrace{\epsilon^* \cdot T}_{(1)}\\
&+  \underbrace{\sum_{a \in \mathcal{A}} (\NumPullsTwoTimesOneArg{a}{E \cdot |\mathcal{A}|}{T+1} - 1) \cdot \left(\max_{a \in \mathcal{A}} \min_{b \in \mathcal{B}_{\epsilon^*}(a)} \MR{1}{a}{b} -  \frac{1}{\NumPullsTwoTimesOneArg{a}{E \cdot |\mathcal{A}|}{T+1} - 1} \sum_{t \in S(a) \setminus \left\{\max(S(a)) \right\}} \left( \MR{1}{a_t}{b_t} \right) \right)}_{(2)}
\end{align*}
The term $|\mathcal{A}|$ computes negligibly, 
term (1) is equal to $\Theta(\mathcal{A}|^{c_1/(1+c_1)} |\mathcal{B}|^{c_2/(1+c_1)} (\log T)^{c_3/(1+c_1)} T^{1/(1+c_1)})$, and term (2) can be bounded by:
\begin{align*}
  &\sum_{a \in \mathcal{A}} (\NumPullsTwoTimesOneArg{a}{E \cdot |\mathcal{A}|}{T} - 1) \cdot \left(\max_{a \in \mathcal{A}} \min_{b \in \mathcal{B}_{\epsilon^*}(a)} \MR{1}{a}{b} -  \frac{1}{\NumPullsTwoTimesOneArg{a}{E \cdot |\mathcal{A}|}{T+1} - 1} \sum_{t \in S(a) \setminus \left\{\max(S(a)) \right\}} \left( \MR{1}{a_t}{b_t} \right) \right) \\
  &\le  \sum_{a \in \mathcal{A}} (\NumPullsTwoTimesOneArg{a}{E \cdot |\mathcal{A}|}{T+1} - 1) \cdot \frac{20 \sqrt{\log T}} {\sqrt{\NumPullsTwoTimesOneArg{a}{E \cdot |\mathcal{A}|}{T+1}} - 1} \\
   &\le O\left(\sqrt{\log T} \cdot \sum_{a \in \mathcal{A}} \sqrt{\NumPullsTwoTimesOneArg{a}{E \cdot |\mathcal{A}|}{T+1} - 1} \right) \\
   &\le O\left(\sqrt{|\mathcal{A}| T \log T} \right),
\end{align*}
where the first inequality uses Lemma \ref{lemma:UCBbound} and the last inequality uses Jensen's inequality. 

\paragraph{Regret for the follower.} We  modify the analysis from the proof of Theorem \ref{thm:explorethenucb}. We bound the regret as:
\begin{align*}
   &\benchmarkrelaxedstronger{2} \cdot (T - E \cd \abs{\mathcal{A}}) - \sum_{t=E \cdot |\mathcal{A}| + 1}^{T} \MR{2}{a_t}{b_t}  \\
   &\le \underbrace{\sum_{t = E |\mathcal{A}|+1}^T \left(\min_{a \in \mathcal{A}_{\epsilon_t}} \max_{b \in \mathcal{B}} \MR{2}{a}{b} - \MR{2}{a_t}{b_t} \right)}_{(1)} +  \underbrace{\sum_{t = E |\mathcal{A}|+1}^T \epsilon_t}_{(2)} 
\end{align*}
where
\[
\epsilon_t =
\begin{cases}
    1 & \text{ if } \NumPullsTwoTimesOneArg{a_t}{E \cdot |\mathcal{A}|}{t} = 1 \\
\max\left(\epsilon^*, 20 \frac{\sqrt{\log T}}{\sqrt{\NumPullsTwoTimesOneArg{a_t}{E \cdot |\mathcal{A}|}{t}}} \right) & \text{ else }.
\end{cases}
\]

We bound term (1). We first show that $a_t \in \mathcal{A}_{\epsilon_t}$: 
\begin{align*}
\max_{b \in \mathcal{B}_{\epsilon_t}(a_t)} \MR{1}{a_t}{b} &\ge \max_{b \in \mathcal{B}_{\epsilon^*}(a_t)} \MR{1}{a_t}{b} \\
&\ge_{(A)} \frac{1}{\NumPullsTwoTimesOneArg{a_t}{E \cdot |\mathcal{A}|}{t}} \sum_{E \cdot |\mathcal{A}| <t' < t \mid a_{t'} = a_t} \MR{1}{a_{t'}}{b_{t'}} \\
&\ge_{(B)} \hatMROneArgTimeStep{1}{a_t}{t} - \frac{10 \sqrt{\log T}}{\sqrt{\NumPullsTwoTimesOneArg{a_t}{E \cdot |\mathcal{A}|}{t}}} \\
&= \UCBOneArgTimeStep{1}{a_t}{t} - \frac{20 \sqrt{\log T}}{\sqrt{\NumPullsTwoTimesOneArg{a_t}{E \cdot |\mathcal{A}|}{t}}} \\
&= \max_{a \in \mathcal{A}} \left(\UCBOneArgTimeStep{1}{a}{t}\right) - \frac{20 \sqrt{\log T}}{\sqrt{\NumPullsTwoTimesOneArg{a_t}{E \cdot |\mathcal{A}|}{t}}}  \\
&\ge_{(C)}   \max_{a \in \mathcal{A}} \left(\frac{1}{\NumPullsTwoTimesOneArg{a_t}{E \cdot |\mathcal{A}|}{t}} \sum_{E \cdot |\mathcal{A}| <t' < t \mid a_{t'} = a} \MR{1}{a_{t'}}{b_{t'}} \right) - \frac{20 \sqrt{\log T}}{\sqrt{\NumPullsTwoTimesOneArg{a_t}{E \cdot |\mathcal{A}|}{t}}}  \\
&\ge_{(D)}   \max_{a \in \mathcal{A}} \min_{b \in \mathcal{B}_{\epsilon^*}(a)}  \MR{1}{a}{b} - \frac{20 \sqrt{\log T}}{\sqrt{\NumPullsTwoTimesOneArg{a_t}{E \cdot |\mathcal{A}|}{t}}}  \\
&\ge   \max_{a \in \mathcal{A}} \min_{b \in \mathcal{B}_{\epsilon_t}(a)}  \MR{1}{a}{b} - \epsilon_t.
\end{align*}
where (A) and (D) uses the event $\Event_F$, and (B) and (C) use the event $\Event_L$. Applying $\Event_F$ again, this implies that: 
\begin{align*}
 \min_{a \in \mathcal{A}_{\epsilon_t}} \max_{b \in \mathcal{B}} \MR{2}{a}{b} - \MR{2}{a_t}{b_t} &\le \MR{2}{a_t}{b} - \MR{2}{a_t}{b_t} \\  
 &\le \epsilon^*.
\end{align*}
Putting this all together, term (1) is bounded by 
\[\sum_{t = E |\mathcal{A}|+1}^T \left(\min_{a \in \mathcal{A}_{\epsilon_t}} \max_{b \in \mathcal{B}} \MR{2}{a}{b} - \MR{2}{a_t}{b_t} \right) \le \epsilon^* \cdot T = \Theta(\mathcal{A}|^{c_1/(1+c_1)} |\mathcal{B}|^{c_2/(1+c_1)} (\log T)^{c_3/(1+c_1)} T^{1/(1+c_1)}),\] as desired.

We next bound term (2) as follows: 
\begin{align*}
  \sum_{t = E |\mathcal{A}|+1}^T \epsilon_t 
  &= |\mathcal{A}| +\sum_{a \in \mathcal{A}} \sum_{t \in S(a) \setminus \min(S(a))} \max\left(\epsilon^*, 20 \frac{\sqrt{\log T}}{\sqrt{\NumPullsTwoTimesOneArg{a}{E \cdot |\mathcal{A}|}{t}}} \right)  \\
   &\le |\mathcal{A}| + \epsilon^* \cdot T + 20 \sqrt{\log T} \cdot \sum_{a \in \mathcal{A}} \sum_{t \in S(a) \setminus \min(S(a))} \frac{1}{\sqrt{\NumPullsTwoTimesOneArg{a}{E \cdot |\mathcal{A}|}{t}}}  \\
   &\le_{(A)} |\mathcal{A}| + \epsilon^* \cdot T + O \left(\sqrt{T |\mathcal{A}| \log T } \right)\\
   &\le  \Theta(\mathcal{A}|^{c_1/(1+c_1)} |\mathcal{B}|^{c_2/(1+c_1)} (\log T)^{c_3/(1+c_1)} T^{1/(1+c_1)}) + O \left(\sqrt{T |\mathcal{A}| \log T } \right),
\end{align*}
where (A) uses Lemma \ref{lemma:boundsqrtexpression}.

Putting this all together yields the desired bound.
\end{proof}

\subsubsection{Proof of Theorem \ref{thrm:dpLgeneralized}}\label{appendix:proofdpLgeneralized}

We prove Theorem \ref{thrm:dpLgeneralized}, following a similar approach to the proof of Theorem \ref{thrm:dpL}.

\dpLgeneralized*

\paragraph{Notation.} We use the same notation as in the proof of Theorem \ref{thrm:dpL}.

\paragraph{Clean event.} We again define the clean event $\Event = \Event_L \cap \Event_F$ to be the intersection of a clean event $\Event_L$ for the leader and a clean event $\Event_F$ for the follower. The event $\Event_L$ is the same as in the proof of Theorem \ref{thrm:dpL}. The event $\Event_F$ is formalized as follows:
\[\forall a\in \mathcal{A}, t \le T:  \sum_{1 \le t' < t \mid a_t = a} (\MR{2}{a}{b^*(a)} - \MR{2}{a_t}{b_t}) \le C' (\NumPullsOneArg{a}{t})^{c_1} |\mathcal{B}|^{c_2} (\log T)^{c_3} \]

We first generalize Lemma \ref{lemma:cleanlipschitzucb}. 
\begin{lemma}
\label{lemma:cleanlipschitzucbgeneralized}
Assume the setup of Theorem \ref{thrm:dpL} and the notation above. Then the clean event occurs with high probability: $\mathbb{P}[\Event] \ge 1 - T^{-3} (|\mathcal{A}| + 1)$.
\end{lemma}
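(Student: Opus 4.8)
The plan is to follow exactly the structure of the proof of Lemma \ref{lemma:cleanlipschitzucb}, since the only change between the two lemmas is that the follower's clean event $\Event_F$ now references the generalized anytime regret rate $h(t,T,\mathcal{B}) = C' \cdot t^{c_1} \cdot |\mathcal{B}|^{c_2} \cdot (\log T)^{c_3}$ rather than the specific form $C'\sqrt{|\mathcal{B}| t \log T}$, while the leader's event $\Event_L$ is \emph{identical}. First I would decompose $\Event = \Event_L \cap \Event_F$, bound each failure probability separately, and then combine them by a union bound.

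For $\Event_L$, I would invoke the standard Gaussian concentration argument. For a fixed arm $a$ and a fixed time $t$, the empirical mean $\hatMRTimeStep{1}{a}{b}{t}$ is an average of $\NumPullsOneArg{a}{t}$ i.i.d. unit-variance Gaussian rewards centered at $\MR{1}{a}{b}$, so a Chernoff/Gaussian tail bound gives $|\hatMRTimeStep{1}{a}{b}{t} - \MR{1}{a}{b}| \le 10\sqrt{\log T}/\sqrt{\NumPullsOneArg{a}{t}}$ except with tiny probability. To handle the quantifier over all $t \le T$ and all $a \in \mathcal{A}$ simultaneously, I would use the canonical bandit model device from \citet{L20} to reason over the fixed per-arm reward sequences and then union bound over the $|\mathcal{A}|$ arms and $T$ time steps, absorbing the polynomial factors into the exponent to conclude $\mathbb{P}[\Event_L^c] \le T^{-3}$.

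For $\Event_F$, I would apply the high-probability anytime regret hypothesis on $\ALG_2$ directly. By assumption $\ALG_2$ satisfies a high-probability anytime regret bound with rate $h(t,T,\mathcal{B}) = C' t^{c_1} |\mathcal{B}|^{c_2} (\log T)^{c_3}$, which by definition holds for all $t \in [T]$ and all $a \in \mathcal{A}$ with probability at least $1 - |\mathcal{A}|\, T^{-3}$. Rewriting $\max_{b \in \mathcal{B}} \MR{2}{a}{b} = \MR{2}{a}{b^*(a)}$ then yields precisely $\Event_F$, so $\mathbb{P}[\Event_F^c] \le |\mathcal{A}|\, T^{-3}$. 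Combining the two pieces via a union bound gives $\mathbb{P}[\Event] \ge 1 - (|\mathcal{A}| + 1)\, T^{-3}$, as claimed.

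Since every ingredient is either a direct invocation of the assumed anytime regret guarantee or a textbook Gaussian concentration bound, there is no real obstacle here; the only point requiring minor care is matching the indexing convention ($\NumPullsOneArg{a}{t}$ in the statement of $\Event_F$ versus $\NumPullsOneArg{a}{t+1}$ in the definition of the anytime regret bound), which is harmless because $h$ is increasing in its first argument.
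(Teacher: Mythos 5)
Your proposal is correct and follows essentially the same route as the paper's (very terse) proof: a union bound over $\Event_L$ and $\Event_F$, with $\Event_L$ handled by a Chernoff/Gaussian tail bound via the canonical bandit model of \citet{L20} plus union bounds, and $\Event_F$ obtained as a direct instantiation of the assumed high-probability anytime regret guarantee with rate $h(t,T,\mathcal{B}) = C' t^{c_1} |\mathcal{B}|^{c_2} (\log T)^{c_3}$. The only cosmetic quibble is that $\hatMRTimeStep{1}{a}{b}{t}$ is an average of $\NumPulls{a}{b}{t}$ (not $\NumPullsOneArg{a}{t}$) samples, so the union bound for $\Event_L$ should also range over $b \in \mathcal{B}$, but this mirrors the paper's own statement of the clean event and does not affect the conclusion.
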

\begin{proof}
We union bound over $\Event_L$ and $\Event_F$. The analysis for $\Event_F$ follows from the high-probability anytime regret bound assumption. The analysis for $\Event_L$ follows from a Chernoff bound (and using the analogue of one of the canonical bandit models in \cite{L20}) combined with a union bound. 
\end{proof}

The following lemma generalizes Lemma \ref{lemma:UCBconfidenceboundscorrect}. 
\begin{lemma}
\label{lemma:UCBconfidenceboundscorrectgeneralized}
Assume the setup of Theorem \ref{thrm:dpLgeneralized} and the notation above. Suppose that the clean event $\Event$ holds. Then for any $t \le T$ and $a \in \mathcal{A}$, it holds that: 
\[\left| \hatMROneArgTimeStep{1}{a}{t} - \MR{1}{a}{b^*(a)} \right| \le 
\frac{10 \sqrt{\abs{\mathcal{B}} \log T}}{\sqrt{\NumPullsOneArg{a}{t}}} + C' \cdot L \cdot (\NumPullsOneArg{a}{t})^{c_1-1} \cdot |\mathcal{B}|^{c_2} (\log T)^{c_3} .\] 
\end{lemma}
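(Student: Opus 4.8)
The plan is to follow the proof of Lemma \ref{lemma:UCBconfidenceboundscorrect} essentially verbatim, changing only the single step where the follower's anytime regret bound is invoked. First I would write the leader's aggregate empirical mean as the weighted average $\hatMROneArgTimeStep{1}{a}{t} = \frac{1}{\NumPullsOneArg{a}{t}} \sum_{b \in \mathcal{B}} \NumPulls{a}{b}{t} \cdot \hatMR{1}{a}{b}$ of the per-pair empirical means, rewrite $\MR{1}{a}{b^*(a)}$ as the same weighted average of the constant $\MR{1}{a}{b^*(a)}$, and apply the triangle inequality to split $\abs{\hatMROneArgTimeStep{1}{a}{t} - \MR{1}{a}{b^*(a)}}$ into a term (A) measuring the statistical error of each $\hatMR{1}{a}{b}$ relative to $\MR{1}{a}{b}$, and a term (B) measuring the leader-utility gap $\abs{\MR{1}{a}{b} - \MR{1}{a}{b^*(a)}}$ between the follower's played arm and its best response.

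Term (A) is unchanged from the non-generalized proof: conditioning on $\Event_L$ bounds each $\abs{\hatMR{1}{a}{b} - \MR{1}{a}{b}}$ by $10\sqrt{\log T}/\sqrt{\NumPulls{a}{b}{t}}$, and Jensen's inequality over the weights $\NumPulls{a}{b}{t}$ collapses the resulting sum $\sum_{b} \sqrt{\NumPulls{a}{b}{t}}$ to obtain $10\sqrt{\abs{\mathcal{B}} \log T}/\sqrt{\NumPullsOneArg{a}{t}}$. This reproduces the first summand of the claimed bound exactly.

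For term (B) I would first use the Lipschitz property (with $L \ge L^*$) to pass from leader-utility gaps to follower-utility gaps, and use that $b^*(a)$ maximizes the follower's utility for fixed $a$ to drop the absolute values, obtaining the telescoped expression $\frac{L}{\NumPullsOneArg{a}{t}} \sum_{1 \le t' < t \mid a_{t'} = a} (\MR{2}{a}{b^*(a)} - \MR{2}{a_{t'}}{b_{t'}})$. The one new ingredient is to invoke the \emph{generalized} clean event $\Event_F$, whose anytime regret bound is now $C' (\NumPullsOneArg{a}{t})^{c_1} \abs{\mathcal{B}}^{c_2} (\log T)^{c_3}$ in place of $C'\sqrt{\abs{\mathcal{B}} \NumPullsOneArg{a}{t} \log T}$. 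Dividing by $\NumPullsOneArg{a}{t}$ and tracking the exponent via $(\NumPullsOneArg{a}{t})^{c_1}/\NumPullsOneArg{a}{t} = (\NumPullsOneArg{a}{t})^{c_1 - 1}$ yields $C' \cdot L \cdot (\NumPullsOneArg{a}{t})^{c_1 - 1} \abs{\mathcal{B}}^{c_2} (\log T)^{c_3}$, matching the second summand. Summing (A) and (B) gives the lemma.

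Since this is a faithful recapitulation of Lemma \ref{lemma:UCBconfidenceboundscorrect} with only the functional form of $\Event_F$ altered, there is no genuinely hard step; the main thing to be careful about is the exponent bookkeeping in (B), ensuring the powers of $\abs{\mathcal{B}}$ and $\log T$ carried out of $\Event_F$ are precisely $c_2$ and $c_3$ so that the stated bound is reproduced, and confirming the high-probability conditioning is justified by Lemma \ref{lemma:cleanlipschitzucbgeneralized}.
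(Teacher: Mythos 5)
Your proposal matches the paper's proof of this lemma step for step: the same weighted-average decomposition into terms (A) and (B), the same use of $\Event_L$ plus Jensen's inequality for (A), and the same Lipschitz-plus-generalized-$\Event_F$ argument for (B), with the exponent bookkeeping $(\NumPullsOneArg{a}{t})^{c_1}/\NumPullsOneArg{a}{t} = (\NumPullsOneArg{a}{t})^{c_1-1}$ handled exactly as in the paper. This is correct and essentially identical to the paper's argument.
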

\begin{proof}
The proof follows similarly to the proof of Lemma \ref{lemma:UCBconfidenceboundscorrect}. 
We observe that: 
\begin{align*}
\left| \hatMROneArgTimeStep{1}{a}{t} - \MR{1}{a}{b^*(a)} \right| &= \left|\left(\frac{1}{\NumPullsOneArg{a}{t}} \sum_{b \in \mathcal{B}} \NumPulls{a}{b}{t} \cdot \hatMR{1}{a}{b} \right) - \MR{1}{a}{b^*(a)}\right| \\
&= \left|\left(\frac{1}{\NumPullsOneArg{a}{t}} \sum_{b \in \mathcal{B}} \NumPulls{a}{b}{t} \cdot \hatMR{1}{a}{b} \right) - \frac{1}{\NumPullsOneArg{a}{t}} \left(\sum_{b \in \mathcal{B}} \NumPulls{a}{b}{t} \cdot \MR{1}{a}{b^*(a)}\right) \right| \\
 &\le \frac{1}{\NumPullsOneArg{a}{t}} \sum_{b \in \mathcal{B}} \NumPulls{a}{b}{t} \cdot \left|  \hatMR{1}{a}{b} - \MR{1}{a}{b^*(a)}\right| \\
 &\le \underbrace{\frac{1}{\NumPullsOneArg{a}{t}} \sum_{b \in \mathcal{B}} \NumPulls{a}{b}{t} \cdot \left|\hatMR{1}{a}{b} - \MR{1}{a}{b}\right|}_{(A)} + \underbrace{\frac{1}{\NumPullsOneArg{a}{t}} \sum_{b \in \mathcal{B}} \NumPulls{a}{b}{t} \cdot \left|\MR{1}{a}{b}  - \MR{1}{a}{b^*(a)}\right|}_{(B)}.
\end{align*}

The bound of term (A) proceeds the same as before, and repeat the proof for completeness: 
\begin{align*}
\frac{1}{\NumPullsOneArg{a}{t}} \sum_{b \in \mathcal{B}} \NumPulls{a}{b}{t} \cdot \left|\hatMR{1}{a}{b} -  \MR{1}{a}{b}\right| &\le_{(1)} \frac{1}{\NumPullsOneArg{a}{t}} \sum_{b \in \mathcal{B}} \NumPulls{a}{b}{t} \cdot \frac{10\sqrt{\log T}}{\sqrt{\NumPulls{a}{b}{t}}} \\
&= \frac{10 \sqrt{\log T}}{\NumPullsOneArg{a}{t}} \sum_{\sum_{b \in \mathcal{B}}} \sqrt{\NumPulls{a}{b}{t}} \\
&\le_{(2)} \frac{10 \sqrt{|\mathcal{B}| \log T}}{\sqrt{\NumPullsOneArg{a}{t}}}. 
\end{align*}
where (1) uses the clean event $\Event_L$ and (2) uses Jensen's inequality. 

The bound of term (B) proceeds similarly, with some minor modifications:
\begin{align*}
\frac{1}{\NumPullsOneArg{a}{t}} \sum_{b \in \mathcal{B}} \NumPulls{a}{b}{t} \cdot \left|\MR{1}{a}{b}  - \MR{1}{a}{b^*(a)}\right| &\le_{(1)} \frac{L^*}{\NumPullsOneArg{a}{t}} \sum_{b \in \mathcal{B}} \NumPulls{a}{b}{t} \cdot \left|\MR{2}{a}{b}  -\MR{2}{a}{b^*(a)} \right| \\
&=_{(2)} \frac{L^*}{\NumPullsOneArg{a}{t}} \sum_{b \in \mathcal{B}} \NumPulls{a}{b}{t} \cdot \left(\MR{2}{a}{b^*(a)}   - \MR{2}{a}{b}\right),     
\end{align*}
where (1) uses the Lipschitz property and (2) uses the fact that $b^*(a)$ is the best arm for the follower, given that the leader pulls arm $a$. Using the clean event $\Event_F$ and that $L \ge L^*$, we see that:
\begin{align*}
\frac{L^*}{\NumPullsOneArg{a}{t}} \sum_{b \in \mathcal{B}} \NumPulls{a}{b}{t} \cdot \left(\MR{2}{a}{b^*(a)}   - \MR{2}{a}{b}\right) &= \frac{L^*}{\NumPullsOneArg{a}{t}} \sum_{1 \le t' < t \mid a_t = a} (\MR{2}{a}{b^*(a)} - \MR{2}{a_t}{b_t}) \\
&\le C' \cdot L\cdot  (\NumPullsOneArg{a}{t})^{c_1-1} |\mathcal{B}|^{c_2} (\log T)^{c_3}.     
\end{align*}
Taken together, these terms give the desired bound.    
\end{proof}

We next generalize Lemma \ref{lemma:b1b2}.  
\begin{lemma}
\label{lemma:b1b2generalized}
Assume the setup of Theorem \ref{thrm:dpLgeneralized} and the notation above. Suppose that the clean event $\Event$ holds. Then it holds that:
\begin{align*}
B_1 &:= \sum_{t=1}^T \left(\frac{10 \sqrt{\mathcal{B} \log T}}{\sqrt{\NumPullsOneArg{a_t}{t}}} + C' \cdot L \cdot |\mathcal{B}|\frac{\sqrt{|\mathcal{B}| \log T}}{\sqrt{\NumPullsOneArg{a_t}{t}}} \right) \le O \left(\sqrt{|\mathcal{A}| |\mathcal{B}| T \log T} + L \cdot |\mathcal{A}|^{1-c_1} |\mathcal{B}|^{c_2} (\log T)^{c_3} T^{c_1} \right) \\
B_2 &:= \sum_{t=1}^T \left(\MR{2}{a_t}{b^*(a_t)}  - \MR{2}{a_t}{b_t} \right) \le  O\left(|\mathcal{A}|^{1-c_1} |\mathcal{B}|^{c_2} \cdot (\log T)^{c_3}  T^{c_1} \right) \\
\end{align*}
\end{lemma}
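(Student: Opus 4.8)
The plan is to follow the same template as the proof of Lemma~\ref{lemma:b1b2}, replacing the two special-case exponents ($1/2$) by the general exponents coming from the generalized anytime-regret bound; the only genuinely new ingredient is a power-sum estimate for $\sum_a (\NumPullsOneArg{a}{T})^{c_1}$ when $c_1 \in (0,1)$. I would prove $B_2$ first, as it is the cleaner of the two, and then reuse the same estimate for the second summand of $B_1$. Throughout I condition on the clean event $\Event$, whose high-probability guarantee is already supplied by Lemma~\ref{lemma:cleanlipschitzucbgeneralized}.

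For $B_2$, first I would regroup the sum over time steps by the leader's action, writing $B_2 = \sum_{a \in \mathcal{A}} \sum_{t \le T \mid a_t = a}\p{\MR{2}{a}{b^*(a)} - \MR{2}{a}{b_t}}$. On the clean event $\Event_F$ (which now encodes the generalized bound $\sum_{t'<t \mid a_{t'}=a}\p{\MR{2}{a}{b^*(a)} - \MR{2}{a}{b_{t'}}} \le C' (\NumPullsOneArg{a}{t})^{c_1}\abs{\mathcal{B}}^{c_2}(\log T)^{c_3}$), each inner sum is at most $C'(\NumPullsOneArg{a}{T+1})^{c_1}\abs{\mathcal{B}}^{c_2}(\log T)^{c_3}$. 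It then remains to bound $\sum_{a}(\NumPullsOneArg{a}{T+1})^{c_1}$. Since $c_1 < 1$ the map $x \mapsto x^{c_1}$ is concave, so Jensen's inequality gives $\sum_a (\NumPullsOneArg{a}{T+1})^{c_1} \le \abs{\mathcal{A}} \p{\tfrac{1}{\abs{\mathcal{A}}}\sum_a \NumPullsOneArg{a}{T+1}}^{c_1} \le \abs{\mathcal{A}}^{1-c_1} T^{c_1}$, using $\sum_a \NumPullsOneArg{a}{T+1} \le T$. This yields the claimed bound on $B_2$.

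For $B_1$ I would split the defining sum into its two summands. The first, $\sum_t 10\sqrt{\abs{\mathcal{B}}\log T}/\sqrt{\NumPullsOneArg{a_t}{t}}$, is handled exactly as in Lemma~\ref{lemma:b1b2}: rewriting $\sum_{t}\tfrac{1}{\sqrt{\NumPullsOneArg{a_t}{t}}} = \sum_a \sum_{n=1}^{\NumPullsOneArg{a}{T+1}} n^{-1/2}$ and invoking Lemma~\ref{lemma:boundsqrtexpression} gives $O(\sqrt{\abs{\mathcal{A}}\abs{\mathcal{B}} T \log T})$. For the second summand --- which I would take to be the Lipschitz confidence term $C' L (\NumPullsOneArg{a_t}{t})^{c_1-1}\abs{\mathcal{B}}^{c_2}(\log T)^{c_3}$ supplied by Lemma~\ref{lemma:UCBconfidenceboundscorrectgeneralized} --- the same regrouping gives $\sum_t (\NumPullsOneArg{a_t}{t})^{c_1-1} = \sum_a \sum_{n=1}^{\NumPullsOneArg{a}{T+1}} n^{c_1-1}$. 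Because $c_1-1 \in (-1,0)$, an integral bound gives $\sum_{n=1}^{N} n^{c_1-1} = O(N^{c_1})$, and reusing the Jensen estimate above yields $\sum_a (\NumPullsOneArg{a}{T+1})^{c_1} \le \abs{\mathcal{A}}^{1-c_1} T^{c_1}$, producing the term $O\p{L\abs{\mathcal{A}}^{1-c_1}\abs{\mathcal{B}}^{c_2}(\log T)^{c_3} T^{c_1}}$. Adding the two summands gives the stated bound on $B_1$.

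The main obstacle is purely the bookkeeping around the exponent $c_1$: one must check that $c_1 \in (0,1)$ is exactly what makes both estimates valid --- concavity of $x^{c_1}$ for the Jensen step and convergence of the integral bound for $\sum_n n^{c_1-1}$ (which fails as $c_1 \to 0$) --- and that the two distinct exponents appearing in $B_1$ ($-1/2$ in the statistical confidence term and $c_1-1$ in the Lipschitz term) are each summed with the matching estimate. No concentration argument is needed beyond conditioning on $\Event$, so the work is essentially reducing the two power-sums to $\abs{\mathcal{A}}^{1-c_1}T^{c_1}$ and $\sqrt{\abs{\mathcal{A}}T}$ respectively.
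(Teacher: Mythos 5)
Your proposal is correct and follows essentially the same route as the paper's proof: regroup each sum by the leader's arm, apply the generalized clean event $\Event_F$ (for $B_2$) or the integral bound $\sum_{n=1}^{N} n^{c_1-1} = O(N^{c_1})$ (for the Lipschitz term of $B_1$), reduce both to $\sum_{a}(\NumPullsOneArg{a}{T+1})^{c_1}$ handled by Jensen's inequality with concavity of $x^{c_1}$, and dispatch the statistical term of $B_1$ via Lemma \ref{lemma:boundsqrtexpression}. You also correctly read the second summand in the displayed definition of $B_1$ as the generalized confidence width $C' L (\NumPullsOneArg{a_t}{t})^{c_1-1}\abs{\mathcal{B}}^{c_2}(\log T)^{c_3}$ from Lemma \ref{lemma:UCBconfidenceboundscorrectgeneralized} (the lemma statement's expression appears to be a typo), which is exactly what the paper's own proof bounds.
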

\begin{proof}

To bound $B_2$, we see that:
\begin{align*}
B_2 &= \sum_{t=1}^T \left(\MR{2}{a_t}{b^*(a_t)}  - \MR{2}{a_t}{b_t} \right) \\
&=  \sum_{a \in \mathcal{A}} \sum_{t \in T \mid a_t = a} \left(\MR{2}{a}{b^*(a)}- \MR{2}{a}{b_t} \right) \\
&\le_{(A)} \sum_{a \in \mathcal{A}} C' (\NumPullsOneArg{a}{T})^{c_1} |\mathcal{B}|^{c_2} \cdot (\log T)^{c_3} \\
&= C' |\mathcal{B}|^{c_2} \cdot (\log T)^{c_3}  \cdot \sum_{a \in \mathcal{A}}  (\NumPullsOneArg{a}{T+1})^{c_1}  \\
&\le_{(B)}  O\left(|\mathcal{A}|^{1-c_1} |\mathcal{B}|^{c_2} \cdot (\log T)^{c_3}  T^{c_1}\right)
\end{align*}
where (A) uses the event $\Event_F$ and (B) uses Jensen's inequality. 

To bound $B_1$:
\begin{align*}
B_1 &= \sum_{t=1}^T \left(\frac{10 \sqrt{|\mathcal{B}| \log T}}{\sqrt{\NumPullsOneArg{a_t}{t}}} +C' \cdot L \cdot |\NumPullsOneArg{a_t}{t}|^{c_1-1} |\mathcal{B}|^{c_2} (\log T)^{c_3} \right) \\
&=_{(A)} O \left(\sqrt{|\mathcal{A}| |\mathcal{B}| T \log T} \right) +C' \cdot L \cdot |\mathcal{B}|^{c_2} (\log T)^{c_3} \cdot \sum_{t=1}^T |\NumPullsOneArg{a_t}{t}|^{c_1-1}  \\
&=_{(A)} O \left(\sqrt{|\mathcal{A}| |\mathcal{B}| T \log T} \right) +C' \cdot L \cdot |\mathcal{B}|^{c_2} (\log T)^{c_3} \cdot \sum_{a\in \mathcal{A}} \sum_{t \mid a_t = a} |\NumPullsOneArg{a_t}{t}|^{c_1-1}  \\
&\le_{(B)} O \left(\sqrt{|\mathcal{A}| |\mathcal{B}| T \log T} + L \cdot |\mathcal{B}|^{c_2} (\log T)^{c_3} \cdot \sum_{t \mid a_t = a} |\NumPullsOneArg{a}{T+1}|^{c_1} \right)  \\
&\le_{(C)} O \left(\sqrt{|\mathcal{A}| |\mathcal{B}| T \log T} + L \cdot |\mathcal{A}|^{1-c_1} |\mathcal{B}|^{c_2} (\log T)^{c_3} T^{c_1} \right).
\end{align*}
where (A) follows from Lemma \ref{lemma:boundsqrtexpression}, (B) follows from an integral bound, and (C) follows from Jensen's inequality.

\end{proof}

We now prove Theorem \ref{thrm:dpL}. 
\begin{proof}[Proof of Theorem \ref{thrm:dpLgeneralized}]

Assume that clean event $\Event$ holds. This occurs with probability at least $1 - (|\mathcal{A} + 1) T^{-3}$ (Lemma \ref{lemma:cleanlipschitzucbgeneralized}), so the clean event not occurring counts negligibly towards regret. 

 Moreover, let $(a^*, b^*(a^*))$ be the Stackelberg equilibrium. Let 
 \[\alpha_t(a) = \frac{10 \sqrt{\mathcal{B} \log T}}{\sqrt{\NumPullsOneArg{a}{t}}} + C' \cdot L \cdot |\NumPullsOneArg{a}{t}|^{c_1-1} |\mathcal{B}|^{c_2} (\log T)^{c_3}\] be the confidence bound size at time step $t$ and let $\UCBOneArgTimeStep{1}{a}{t} = \hatMROneArgTimeStep{1}{a}{t} + \alpha_t(a)$ denote the UCB estimate in $\LipschitzUCBGeneralized(L, C)$ computed during time step $t$ prior to reward at time step $t$ being observed.

We can bound the leader's regret as: 
\begin{align*}
 R_1(T) &= \sum_{t=1}^T  (\MR{1}{a^*}{b^*(a^*)} - \MR{1}{a_t}{b_t})\\
  & 
= \sum_{t=1}^T \left(\MR{1}{a^*}{b^*(a^*)} - \MR{1}{a_t}{b^*(a_t)} \right) + \sum_{t=1}^T \left(\MR{1}{a_t}{b^*(a_t)} - \MR{1}{a_t}{b_t} \right)\\
&\le_{(A)} \sum_{t=1}^T \left(\hatMROneArg{1}{a^*} + \alpha_t(a^*) - \hatMROneArg{1}{a_t} + \alpha_t(a_t) \right) +  \sum_{t=1}^T \left|\MR{1}{a_t}{b^*(a_t)} - \MR{1}{a_t}{b_t} \right| \\
&\le \sum_{t=1}^T \left(\UCBOneArgTimeStep{1}{a^*}{t} - \UCBOneArg{1}{a_t} + 2 \cdot \alpha_t(a_t) \right) + L \cdot \sum_{t=1}^T \left|\MR{2}{a_t}{b^*(a_t)} - \MR{2}{a_t}{b_t}  \right| \\
&\le 2 \cdot \sum_{t=1}^T \alpha_t(a_t) + L \cdot \sum_{t=1}^T \left(\MR{2}{a_t}{b^*(a_t)} - \MR{2}{a_t}{b_t} \right) \\
&= 2 \cdot \sum_{t=1}^T \left(\frac{10 \sqrt{\mathcal{B} \log T}}{\sqrt{\NumPullsOneArg{a_t}{t}}} + C' \cdot L \cdot |\NumPullsOneArg{a_t}{t}|^{c_1-1} |\mathcal{B}|^{c_2} (\log T)^{c_3} \right) + L \cdot B_2 \\
&= 2 \cdot B_1 + L \cdot B_2  \\
&\le_{(B)} O \left(\sqrt{|\mathcal{A}| |\mathcal{B}| T \log T} + L \cdot |\mathcal{A}|^{1-c_1} |\mathcal{B}|^{c_2} (\log T)^{c_3} T^{c_1} \right)
\end{align*}
where (A) uses Lemma \ref{lemma:UCBconfidenceboundscorrectgeneralized} and (B) uses Lemma \ref{lemma:b1b2generalized}.  

We also bound the follower's regret as: 
\begin{align*}
 R_2(T) &= \sum_{t=1}^T \left( \MR{2}{a^*}{b^*(a^*)}  - \MR{2}{a_t}{b_t} \right) \\
&= \sum_{t=1}^T \left(\MR{2}{a^*}{b^*(a^*)} - \MR{2}{a^*}{b^*(a_t)}  \right) + \sum_{t=1}^T \left( \MR{2}{a^*}{b^*(a_t)} - \MR{2}{a_t}{b_t} \right) \\
&= \sum_{t=1}^T L \cdot \left|\MR{1}{a^*}{b^*(a^*)} - \MR{1}{a_t}{b^*(a_t)}\right| + B_2 \\
&=_{(A)} \sum_{t=1}^T L \cdot \left(\MR{1}{a^*}{b^*(a^*)} - \MR{1}{a_t}{b^*(a_t)}\right) + B_2 \\
&\le_{(B)} \sum_{t=1}^T L \cdot \left(\hatMROneArgTimeStep{1}{a^*}{t} + \alpha_t(a^*) - \hatMROneArgTimeStep{1}{a_t}{t} + \alpha_t(a^*)\right) + B_2 \\
&= \sum_{t=1}^T L \cdot \left(\UCBOneArgTimeStep{1}{a^*}{t} - \UCBOneArgTimeStep{1}{a_t}{t} + 2 \cdot \alpha_t(a_t)\right) + B_2 \\
&\le \sum_{t=1}^T L \cdot \left(2 \cdot \alpha_t(a_t) \right) + B_2 \\
&= 2 L \cdot \sum_{t=1}^T \left(\frac{10 \sqrt{\mathcal{B} \log T}}{\sqrt{\NumPullsOneArg{a_t}{t}}} + C' \cdot L \cdot |\NumPullsOneArg{a_t}{t}|^{c_1-1} |\mathcal{B}|^{c_2} (\log T)^{c_3} \right) + B_2 \\
&= 2 L \cdot B_1 + B_2  \\
&\le_{(C)} O \left(L \sqrt{|\mathcal{A}| |\mathcal{B}| T \log T} + L^2 \cdot |\mathcal{A}|^{1-c_1} |\mathcal{B}|^{c_2} (\log T)^{c_3} T^{c_1} \right)
\end{align*}
where (A) uses the fact that $a^*$ is the action chosen by the leader at the Stackelberg equilibrium
where (B) uses Lemma \ref{lemma:UCBconfidenceboundscorrectgeneralized} and (C) uses Lemma \ref{lemma:b1b2generalized}.

\end{proof}

\end{document}